\documentclass[11pt]{article}

\usepackage[margin=1in]{geometry}
\usepackage{amsthm,amsmath,amsfonts,amssymb}
\usepackage[numbers]{natbib}
\usepackage[normalem]{ulem}
\usepackage[subscriptcorrection]{newtxmath}
\usepackage{multicol}
\usepackage{paracol}

\usepackage{setspace}
\newtheorem{lemma}{Lemma}
\newtheorem{assumption}{Assumption}
\newtheorem{theorem}{Theorem}
\newtheorem{proposition}{Proposition}
\newtheorem{corollary}{Corollary}
\newtheorem{definition}{Definition}
\newtheorem{remark}{Remark}
\def\T{\top}

\def\idop{{\mathrm 1}}
\newcommand{\E}{\mathbb{E}} 
\def\cov{{\mathrm{cov}}}
\def\pr{\mathbb{P}} 

\def\tr{\mathrm{tr}}

\def\diag{\mathrm{diag}}
\DeclareMathOperator*{\argmin}{arg\,min}
\DeclareMathOperator*{\argmax}{arg\,max}
\def\rank{\mathrm{rank}}
\def\span{\mathrm{span}}
\def\row{\mathrm{row}}

\newcommand{\bA}{\mathbf{A}}

\newcommand{\bB}{\mathbf{B}}
\newcommand{\bC}{\mathbf{C}}
\newcommand{\bD}{\mathbf{D}}

\newcommand{\bG}{\mathbf{G}}
\newcommand{\bH}{\mathbf{H}}
\newcommand{\be}{\mathbf{e}}
\newcommand{\bX}{\mathbf{X}}
\newcommand{\bx}{\mathbf{x}}

\newcommand{\bY}{\mathbf{Y}}
\newcommand{\by}{\mathbf{y}}
\newcommand{\bL}{\mathbf{L}}
\newcommand{\bMM}{\mathbf{M}}

\newcommand{\bW}{\mathbf{W}}

\newcommand{\bZ}{\mathbf{Z}}

\newcommand{\bI}{\mathbf{I}}
\newcommand{\bE}{\mathbf{E}}
\newcommand{\bU}{\mathbf{U}}
\newcommand{\bu}{\mathbf{u}}
\newcommand{\bV}{\mathbf{V}}

\newcommand{\bR}{\mathbf{R}}
\newcommand{\bS}{\mathbf{S}}

\newcommand{\bT}{\mathbf{T}}
\newcommand{\bO}{\mathbf{O}}

\newcommand{\bQ}{\mathbf{Q}}

\newcommand{\bSigma}{\boldsymbol{\Sigma}}

\newcommand{\bvarepsilon}{\boldsymbol{\varepsilon}}
\newcommand{\bxi}{\boldsymbol{\xi}}
\newcommand{\bXi}{\boldsymbol{\Xi}}
\newcommand{\bDelta}{\boldsymbol{\Delta}}

\newcommand{\bzero}{\mathbf{0}}

\newcommand{\op}{\mathrm{op}}
\def\F{{\mathrm{F}}}
\def\blk{{\mathrm{blk}}}
\def\Sign{{\mathrm{Sign}}}

\def\range{{\mathrm{range}}}
\def\vec{{\mathrm{vec}}}

\def\PT{P_{\mathcal{T}}}
\def\PTC{P_{\mathcal{T}^\perp}}
\def\PI{P_{\mathcal{I}}}
\def\PIC{P_{\mathcal{I}^c}}

\def\PU{P_{\bU}}

\def\PV{P_{\bA}} 
\def\PVC{P_{\bA^\perp}} 
\def\PA{P_{\bA}}
\def\PAC{P_{\bA^\perp}}

\def\Pg{P_{g}}

\def\Pomega{P_{\omega}}
\def\veg{\varepsilon_{g}}

\def\pL{\mathbf{L}^\prime}
\def\pS{\mathbf{S}^\prime}
\def\pU{\mathbf{U}^\prime}
\def\pV{\mathbf{V}^\prime}
\def\pVT{\mathbf{V}^{\prime\top}}
\def\pSigma{\mathbf{\Sigma}^\prime}
\def\pPT{P_{\mathcal{T}(\pL)}}
\def\pPTC{P_{\mathcal{T}(\pL)^{\perp}}}
\def\pPU{P_{\mathbf{U}^\prime}}
\def\pPUC{P_{\mathbf{U}^{\prime\perp}}}
\def\pPV{P_{\mathbf{V}^\prime}}
\def\pPVC{P_{\mathbf{V}^{\prime\perp}}}

\def\hL{\widehat{\mathbf{L}}}
\def\hS{\widehat{\mathbf{S}}}
\def\hD{\widehat{\mathbf{D}}}
\def\hU{\widehat{\mathbf{U}}}
\def\hV{\widehat{\mathbf{A}}} 
\def\hSigma{\widehat{\mathbf{\Sigma}}}

\def\hPV{P_{\hV}}
\def\hPVC{P_{\hV^\perp}}
\def\hR{\widehat{\mathbf{R}}}
\def\hW{\widehat{\bW}}
\def\tW{\widetilde{\bW}}

\def\EV{\mathcal{E}_{\bA}}
\def\EVC{\mathcal{E}_{\bA^\perp}}

\usepackage[ruled]{algorithm2e}
\newlength{\alglabelwidth}
\usepackage{xcolor}
\usepackage{graphicx}

\usepackage{comment}

\graphicspath{{./pic/}}

\usepackage[colorlinks,citecolor=blue,urlcolor=blue,linkcolor=blue]{hyperref}

\title{Federated LoRA Fine-Tuning for LLMs via\\ Collaborative Alignment}
\author{Shuaida He \quad Liwen Chen \quad Long Feng\thanks{Correspondence to: lfeng@hku.hk}\\
School of Computing \& Data Science, The University of Hong Kong}
\date{}

\begin{document}

\maketitle

\begin{abstract}
Low-rank adaptation (LoRA) has emerged as a powerful tool for parameter-efficient fine-tuning of large language models (LLMs). This paper studies LoRA under a federated learning setting, enabling collaborative fine-tuning across clients while preserving parameter efficiency. We focus on a highly heterogeneous regime in which clients share only partial structure and a substantial subset may be contaminated.
We propose Collaborative Low-rank Alignment and Identifiable Recovery (CLAIR), a contamination-aware framework that relies only on preliminary local estimators. Its formulation applies broadly, from linear regression to neural network and LLM modules, whenever local adaptation can be represented by matrix-valued updates. CLAIR recovers the shared LoRA subspace and detects contaminated clients via a structured low-rank plus block-sparse decomposition.
We prove exact recovery of the shared LoRA subspace in the noiseless case, stable recovery under preliminary estimation error, and consistent collaborative-set recovery under mild separation conditions.
We further quantify the gain from CLAIR refinement: it reduces off-subspace estimation error through cross-client averaging while preserving client-specific variation within the shared LoRA subspace, thus improves over local fine-tuning whenever this oracle gain outweighs the costs of subspace estimation and benign-client heterogeneity.
Empirically, we demonstrate the benefits of CLAIR by fine-tuning a Transformer architecture on a text-copying task. The results show accurate contamination detection and improved benign-client performance compared with local fine-tuning and non-robust federated averaging.
\end{abstract}

\noindent\textbf{Keywords:} Low-rank representations; multi-task learning; robust aggregation; federated learning; parameter-efficient fine-tuning.

\clearpage

\section{Introduction}\label{sec:introduction}
Large language models (LLMs), such as ChatGPT \citep{wu2023brief}, Gemini \citep{comanici2025gemini}, and Llama \citep{grattafiori2024llama}, have become central to modern artificial intelligence, achieving strong performance in text generation, reasoning, and decision-making \citep{ji2026overview}. Built on Transformer architectures with attention mechanisms~\citep{vaswani2017attention,sheen2024implicit},
these models often contain millions to billions of parameters and are pretrained on massive data to learn general-purpose representations.
In contrast, local downstream adaptation typically targets narrower tasks or domains and relies on substantially smaller samples.
Full parameter fine-tuning is then computationally expensive and statistically inefficient for individual clients or organizations.
This motivates parameter-efficient fine-tuning (PEFT), which freezes most pretrained weights and updates only a small set of task-specific parameters \citep{houlsby2019parameter,xu2026parameter}.

Low-Rank Adaptation \citep[LoRA;][]{hu2022lora} is a prominent example of this principle. For a pretrained weight matrix $\bW_0 \in \mathbb{R}^{q\times p}$, LoRA parameterizes the adapted weight as
\begin{equation}\label{eq:original-lora}
	\bW=\bW_0 +\bDelta,\quad \bDelta=\bB\bA,
\end{equation}
where $\bDelta$ is a low-rank update, or adapter,  with $\bB\in \mathbb{R}^{q\times r}$, $\bA \in \mathbb{R}^{r \times p}$, and $r \ll \min(p,q)$.
By representing adaptation through low-dimensional factors, LoRA greatly reduces the number of trainable parameters while retaining substantial flexibility. The resulting adapters are compact, easy to store and transfer, and can be switched across tasks while sharing the same pretrained backbone $\bW_0$.
This makes LoRA well-suited to efficient fine-tuning on small local datasets, and particularly useful in multi-task and multi-client settings.
Consequently, LoRA has become a standard approach for adapting large pretrained models and has been widely used across a broad range of downstream applications~\citep{houlsby2019parameter,liu2025lora}.

Although LoRA was introduced as an engineering device for PEFT, it reflects a broader statistical principle:  exploiting intrinsic structures, such as low-rankness, to improve statistical efficiency.
This principle underlies classical methods such as matrix completion~\citep{candes2012exact,klopp2019structured},  principal component analysis (PCA) and robust PCA \citep{candes2011robust},  reduced-rank regression \citep{yuan2007dimension}, mixture models \citep{doss2023optimal}, and tensor regression \citep{zhou2013tensor}.
At the same time, the factorized formulation of LoRA provides a flexible model for shared structure across related learning problems, connecting naturally to representation based multi-task learning, in which multiple tasks share a latent representation while retaining task-specific coefficients \citep{duan2023adaptive,niu2024collaborative,tian2025learning}; fine-tuning-based meta-learning \citep{chua2021fine}; and linear model transfer learning, where a target problem borrows structure from related source problems \citep{gu2025robust}.

Low-rank adaptation can benefit substantially from collaboration across groups or clients through federated learning \citep[FL, ][]{mcmahan2017communication}, which mitigates local data scarcity by pooling information at the model level. Federated learning enables collaborative model improvement while preserving data governance, as each client trains locally on private data and shares only model updates or parameters.
Combining LoRA with federated aggregation methods such as FedAvg has shown broad applicability across downstream tasks~\citep{zhang2024towards,bian2025lora}.
This paradigm is particularly attractive in sensitive or regulated domains, such as healthcare, where LLMs could be customized to analyze institution-specific medical text while complying with strict privacy requirements.

Federated deployment of LoRA raises several issues beyond the centralized PEFT.
The first is to determine which object should be shared across clients.
Specifically, for client $k$, write the locally adapted weight as $\bW^{(k)}=\bW_0+\bB^{(k)}\bA^{(k)}$.
Then, in centralized LoRA, only the refined estimator $\hW^{(k)}$ is of primary interest.
In a federated system, however, aggregation must identify the component of $\hW^{(k)}$ that is transferable across clients.
For example, product-level methods construct a stacked adapter whose product represents a weighted average of the client updates $\bDelta^{(k)}$, thereby treating the full adapters as directly comparable matrices \citep{wang2024flora}.
Factor-level methods impose stronger structural alignment, typically by sharing, fixing, or selectively aggregating one LoRA factor, either $\bA^{(k)}$ or $\bB^{(k)}$~\citep{sun2024improving,tian2024hydralora,guo2024selective}.
Expert-level methods allocate domain-specific LoRA experts and allow clients select subsets through an adaptive mixture-of-experts (MoE) mechanism, so that the transferable units are selected expert adapters \citep{wang2025adaptive}.
Despite their empirical success, there remains limited theoretical understanding of which shared structure is identifiable and why its aggregation, even in basic FedAvg-type procedures, improves estimation.
This motivates a formal treatment of the shared LoRA component as the collaborative structure underlying federated fine-tuning.

The second challenge is to handle statistical heterogeneity across clients.
Heterogeneity, arising from distributional shifts, task differences, or other sources, determines whether collaboration is beneficial: when clients are sufficiently aligned, methods such as FedAvg can improve efficiency, whereas under severe heterogeneity, purely local training may be preferable~\citep{chen2023minimax}.
This difficulty is compounded by the decentralized nature of federated learning, where some clients may be outlying, weakly related, or even adversarial~\citep{kumar2023impact}.
We refer to such clients as contaminated, in contrast to benign clients whose LoRA adapters share a common structure and can benefit from collaborative refinement.
In practice, even a small contaminated fraction can distort the estimated shared structure and induce negative transfer for benign clients \citep{guo2025robust,wang2025adaptive}.
Thus, the central inferential task is to separate benign variation around the shared low-rank adapter structure from contamination that violates it. We formalize this task as collaborative-set recovery: adaptively identifying the benign clients for collaboration while excluding contaminated ones.

Moreover, existing federated LoRA methods typically require client updates to be represented relative to the same backbone and parameterization. This assumption is natural when the server controls a shared open backbone $\bW_0$, or when clients report adapters $\{\bDelta^{(k)}\}_{k=1}^K$ relative to a common checkpoint. However, direct adapter-level aggregation is not readily applicable when the server observes only locally adapted weights or preliminary local estimators $\{\hW^{(k)}\}_{k=1}^K$, without access to the underlying base model or the corresponding adapter decompositions.
This issue is particularly relevant when clients are initialized from similar but non-identical base models within a common parameter space, since LoRA adapters are tied to their underlying backbones \citep{wang2024textit,smith2017federated,fallah2020personalized}.
Therefore,~\eqref{eq:original-lora} should be viewed as an ideal population representation rather than an observed decomposition, raising a basic identifiability question: when neither the base model nor the adapters is separately observed, what shared low-rank structure is identifiable from preliminary local estimators alone?

To address these issues, we propose a contamination-aware framework for collaborative federated fine-tuning of LLMs via low-rank adaptation. The framework accommodates settings where (i) the underlying foundation model $\bW_0$ is unknown or inaccessible, (ii) model structure is only partially shared across clients, and (iii) client heterogeneity and contamination are present, with the contaminated subset allowed to grow with the number of clients.
Its formulation applies broadly, from linear regression to neural network and LLM modules, whenever local adaptation can be represented by matrix-valued updates.

Conceptually, our framework converts federated LoRA fine-tuning into a structured decomposition problem, whose identifiable components encode transferable local knowledge and the client contamination pattern.  This is achieved through a pairwise contrast construction that cancels the unobserved backbone while preserving client-pair information.
The resulting decomposition takes a low-rank plus block-sparse form, which resembles robust PCA \citep{candes2011robust,hsu2011robust}. On the other hand, its geometry differs fundamentally due to the constraints induced by the federated LoRA formulation.
Consequently, classical exact-recovery arguments do not apply directly. We address this challenge with a tractable convex program followed by subspace projection, which simultaneously recovers the shared structure and filters out contaminated clients, providing the basis for client-specific collaborative refinement.

Theoretically, we establish exact and stable recovery guarantees for the shared structure and prove consistency of contaminated client detection under suitable identifiability conditions.
A central contribution is an explicit characterization of when collaborative fine-tuning improves upon purely local estimation, together with a quantification of the resulting mean-squared error gain.
In contrast to existing empirical studies on federated LoRA aggregation \citep{wang2024flora,guo2024selective,bian2025lora}, our analysis identifies the source of client-level improvement: the refined estimator preserves the local-specific error in the shared adapter row space, while reducing the orthogonal-complement error through averaging over collaborating benign clients. This yields a transparent gain-to-cost condition under which collaborative fine-tuning outperforms local fine-tuning, and also clarifies when negative transfer may occur.

Empirically, we evaluate CLAIR in a controlled sequence-copying experiment with Transformer models locally fine-tuned via LoRA. The experiment illustrates the two statistical roles characterized by the theory: detecting a contaminated client through contrasts among locally adapted models, and improving prediction accuracy for benign clients relative to local fine-tuning and non-robust FedAvg baselines.

\subsection{Notation}
Denote $[m] =\{1, 2, \dots, m\}$ and $|\mathcal{S}|$ as the cardinality of a finite set $\mathcal{S}$. For a sub-Gaussian random variable $X\in \mathbb{R}$, define $\|X\|_{\psi_2} = \sup_{p\geq 1}(\E|X|^p)^{\frac{1}{p}}/\sqrt{p}$.
Let $\bI_p$ denote the $p\times p$ identity matrix.
For $p \geq r$, define $\mathbb{O}_{p,r} = \{\bV\in \mathbb{R}^{p \times r}: \bV^\T \bV = \bI_r\}$.
For a matrix $\bA\in \mathbb{R}^{m\times n}$, denote its Frobenius, spectral, and nuclear norms by $\|\bA\|_{\F}$, $\|\bA\|_{\op}$, and $\|\bA\|_*$, respectively. Denote its column and row spaces by $\mathrm{col}(\bA)$ and $\row(\bA)$, and its vectorization by $\vec(\bA)$.
We use $\langle\bA,\bB\rangle=\tr(\bA^\T\bB)$ for the Frobenius inner product, and $\cov(\bZ)$ for the covariance matrix of a random vector $\bZ$.
We use $\otimes$ to denote the Kronecker product.
For a row-block matrix $\bX$ with blocks $\{\bX^{(g)}\}$, let $\Pg(\bX) := (\be_{g}\otimes \bI_q) (\be_{g}\otimes \bI_q)^\T (\bX)$ denote the zero-padded projection that keeps block $g$ and zeros out all other blocks, and define $\|\bX\|_{\blk,1}:=\sum_g\|\bX^{(g)}\|_{\F}$.
For a symmetric matrix $\bB \in \mathbb{R}^{m \times m}$, let its eigenvalues be ordered as $\lambda_1(\bB) \geq \cdots \geq \lambda_m (\bB)$. For symmetric matrices $\bA$ and $\bB$ of the same dimension, $\bA\preceq\bB$ means $\bB-\bA$ is positive semidefinite.
Unless otherwise stated, the singular value decomposition of a rank-$r$ matrix $\bL$ refers to its compact SVD $\bL=\bU\bSigma\bV^\T$, where $\bU\in\mathbb{O}_{m,r}$, $\bV\in\mathbb{O}_{n,r}$, and $\bSigma$ contains the positive singular values.
For deterministic sequences $\{a_n\}$ and $\{b_n\}$, write $a_n=O(b_n)$ if $|a_n|\le C|b_n|$ for some constant $C>0$ and all sufficiently large $n$, and write $a_n\asymp b_n$ if both $a_n=O(b_n)$ and $b_n=O(a_n)$.

\section{Problem setup}\label{sec:problem-setup}
Consider regression models distributed across $K$ data clients.
For client $k\in[K]$, the model is specified as
\begin{equation}\label{eq:model-k}
		\by^{(k)} = f^{(k)}\bigl(\bx^{(k)},\bvarepsilon^{(k)}; \bW^{(k)}\bigr),
\end{equation}
where $\by^{(k)}$ is the response vector, $\bx^{(k)}$ the covariate vector, $\bvarepsilon^{(k)}$ the random noise, $f^{(k)}$ a general (linear or nonlinear) function, and $\bW^{(k)} \in \mathbb{R}^{q\times p}$ the target parameter matrix. We assume
\begin{equation}\label{eq:lora}
		\bW^{(k)}=\bW_0 +\bDelta^{(k)},\quad \bDelta^{(k)}=\bB^{(k)}\bA^{(k)},
\end{equation}
where $\bW_0$ is an unknown global base model shared across clients, and $\bDelta^{(k)}$ captures client-specific heterogeneity.
Here, $\bB^{(k)}\in \mathbb{R}^{q\times r}$, $\bA^{(k)} \in \mathbb{R}^{r \times p}$, and rank $r < \min(p,q)$.  By constraining $\bDelta^{(k)}$ to be low-rank, model \eqref{eq:model-k} together with \eqref{eq:lora} naturally leads to a federated LoRA formulation. With only the decomposition in (\ref{eq:lora}), the pair $(\bW_0,\bDelta^{(k)})$ is clearly non-identifiable. However, our aim is not to identify $(\bW_0,\bDelta^{(k)})$, but rather to refine a local estimator of $\bW^{(k)}$ and obtain an improved one, as discussed in detail later.

With a suitable choice of $f^{(k)}$, model~\eqref{eq:model-k} is highly expressive and covers a broad class of settings,  including linear regression,  multiple-index models with nonlinear links, and neural networks with additional trainable parameters beyond $\bW^{(k)}$.
In particular, in modern LLM fine-tuning, $\bW^{(k)}$ denotes a locally adapted module for client $k$, obtained from the pretrained backbone $\bW_0$ via a low-rank adapter $\bDelta^{(k)}$.
Such modules typically include transformer weights, such as the query, key, and value projections in self-attention \citep{vaswani2017attention}.
The heterogeneity in $\bDelta^{(k)}$ reflects differences in datasets, downstream tasks, or reasoning objectives, etc.

In federated learning, each client reports a local estimator of $\bW^{(k)}$ based on its private dataset $\mathcal{D}^{(k)}=\{(\bx_i^{(k)},\by_i^{(k)})\}_{i=1}^{n_k}$.
We call the resulting estimator $\hW^{(k)}$ the preliminary estimator for client $k$.
It may be obtained either by local full training, such as stochastic gradient descent with a client-specific loss, or by fine-tuning from $\bW_0$ through estimation of $\bDelta^{(k)}$.
Since $\bW_0$ is often inaccessible or unidentifiable in practice, and the shared row space $\operatorname{row}(\bA)$ is also unknown, our subsequent procedure relies only on the collection of preliminary estimators $\{\hW^{(k)}\}_{k=1}^K$.

Our goal is to develop a unified framework for collaborative fine-tuning under the federated LoRA paradigm, without imposing structural assumptions on $f^{(k)}$ or requiring access to $\bW_0$, and to quantify the statistical gain of the resulting estimators of  $\{\bW^{(k)}\}_{k=1}^{K}$. A natural question is therefore: which clients can collaborate, and what shared information yields mutual benefit?

We answer this question by introducing the collaborative set of benign clients.
Formally, assume there exist a subset of clients $\mathcal{C}\subseteq[K]$ and a matrix $\bA^\T\in\mathbb{O}_{p,r}$ such that, for some small $\delta>0$,
\[
	\max_{k\in\mathcal{C}}\ \min_{\bO\in\mathbb{O}_{r,r}} \|\bO^\T\bA^{(k)}-\bA\|_{\F}\le \delta.
\]
This formulation accounts for the rotational non-identifiability of LoRA factorization. Indeed, for any $\bO^{(k)}\in\mathbb{O}_{r,r}$, $\bB^{(k)}\bA^{(k)} = (\bB^{(k)}\bO^{(k)})(\bO^{(k),\T}\bA^{(k)})$, so $\bA^{(k)}$ is identifiable only through its row space.
We may therefore assume, without loss of generality, that $\bA^{(k),\T}\in\mathbb{O}_{p,r}$.
For each $k\in\mathcal{C}$, after rotating $\bA^{(k)}$ by an optimizer and reusing the same notation, the preceding condition becomes
\begin{equation}\label{eq:subset-A}
	\max_{k\in\mathcal{C}}\|\bA^{(k)}-\bA\|_{\F}\le \delta.
\end{equation}
Thus, clients in $\mathcal{C}$ have adapter row spaces uniformly close to the common row space $\operatorname{row}(\bA)$, whereas clients in $\mathcal{C}^c:=[K]\setminus \mathcal{C}$ are treated as contaminated.
The set $\mathcal{C}$ therefore identifies the clients that share a compatible adapter structure and can benefit from collaborative estimation.
We write $\eta:=|\mathcal{C}|/K$ as the collaborative proportion, with larger $\eta$ indicating stronger cross-client homogeneity in the federated network.

\begin{remark}[Interpretation of the shared LoRA factor]\label{rem:shared-A}
	The shared row space condition~\eqref{eq:subset-A} places the collaborative structure in the right LoRA factor $\bA^{(k)}$. This choice is consistent with prior federated LoRA formulations~\citep{guo2024selective} and supported by empirical evidence on adapter sharing~\citep{wang2025adaptive}, and has a natural representation-learning interpretation.
	Specifically, consider a single-layer network with activation $\sigma$,
	\[
		\bY^{(k)} = \sigma\{\bW_0\bX^{(k)}+\bB^{(k)}(\bA^{(k)}\bX^{(k)})\},
	\]
	where $\bX^{(k)}\in\mathbb{R}^{p\times n}$ denotes the input representations.
	Then for a fixed LoRA factorization, $\bA^{(k)}$ maps the input to an $r$-dimensional adapter bottleneck,  whereas $\bB^{(k)}$ maps this bottleneck to the output coordinates.
	Thus, sharing the row space of $\bA^{(k)}$ amounts to sharing an input-side representation, while allowing client-specific output coefficients.
	When $q=1$ and $\bW_0=\bzero$, this includes representation-based multi-task learning as a special case~\citep{tian2025learning}; when $q>1$, it gives a reduced-rank or multi-response regression structure~\citep{yuan2007dimension}.
	Related shared-representation and task-specific structures also appear in adaptive multi-task and transfer learning~\citep{chua2021fine,duan2023adaptive}.
	A related input-output asymmetry appears in domain adaptation, where domain-specific feature maps may be combined with a shared or nearly shared classifier or output head \citep{bousmalis2016domain,tzeng2017adversarial}.
	This provides an analogue of a formulation based on shared left factors $\bB^{(k)}$, although our main analysis focuses on the shared right-factor row space in~\eqref{eq:subset-A}; the dual formulation can be handled analogously.
\end{remark}

\section{Methodology}\label{sec:method}

This section develops Collaborative Low-rank Alignment and Identifiable Recovery (CLAIR), a data-driven procedure for federated LoRA fine-tuning. CLAIR constructs a canonical matrix decomposition from pairwise differences of preliminary local estimators, thereby eliminating the common backbone and separating the shared row-space signal of benign clients from orthogonal contamination. This decomposition defines the estimands that underlie both active-set recovery and collaborative refinement.

\subsection{The CLAIR procedure}
We start by circumventing the need for accessing the shared global parameter  $\bW_0$  while still leveraging the similar structure encoded in $\{\bW^{(k)}\}_{k=1}^K$.
For distinct clients $j\neq k$, regard client $j$ as a reference for client $k$.
For benign pairs  $(j,k)\in\mathcal{C}$, we have the approximate relation
\[
	\bW^{(k)} \approx \bW^{(j)} - (\bB^{(j)} - \bB^{(k)})\bA,
\]
where $\bA$ acts as the ``center'' of $\{\bA_j:  j\in\mathcal{C}\}$.
Consequently, instead of estimating the unidentified global parameter $\bW_0$ in \eqref{eq:model-k}, we refine $\bW^{(k)}$ by adapting the reference model $\bW^{(j)}$ with $(\bB^{(j)} - \bB^{(k)})\bA$.
Aggregating such pairwise adaptations over multiple reference clients integrates local information and yields a refined estimator of $\bW^{(k)}$.

To formalize this construction, let
$\mathcal{G}:=\{(j,k)\in [K]\times [K]: 1\leq j<k\leq K\}$ denote the set of unordered client pairs,  indexed by the convention $j<k$, with cardinality $G=|\mathcal{G}|=\binom{K}{2}$.
For $g=(j,k)\in\mathcal{G}$, we use $g$ and $(j,k)$ interchangeably to label the corresponding block, and define the pairwise difference of preliminary estimates by
\begin{equation}\label{Dhat}
\hD^{(j,k)}:=\hW^{(j)}-\hW^{(k)} \in \mathbb{R}^{q\times p}.
\end{equation}
Stacking all these blocks vertically in a fixed order yields the aggregated difference matrix
\begin{equation}\label{Dhat2}
\hD:=\bigl[ \{\hD^{(1,2)}\}^\T,\ldots, \{\hD^{(1,K)}\}^\T,\{\hD^{(2,3)}\}^\T,\ldots, \{\hD^{(K-1,K)}\}^\T
\bigr]^\T \in \mathbb{R}^{Gq\times p}.
\end{equation}
This construction collects each pairwise contrast and preserves the client-pair indexing of each row block. The particular stacking order does not affect the subsequent analysis, and we fix one ordering of $\mathcal{G}$ throughout.

Given $\hD$, we solve the penalized convex program
\begin{equation}\label{eq:problem-penalized}
	(\hL,\hS)\ \in\ \arg\min_{\bL,\bS}  \frac{1}{2}\sum_{g\in\mathcal{G}}\omega_g\|\Pg(\hD-\bL-\bS)\|_{\F}^2 + \lambda_L \|\bL\|_* + \lambda_S \|\bS\|_{\blk,1},
\end{equation}
where $\lambda_L>0$, $\lambda_S>0$ are regularization parameters and $\omega_g>0$ are block-specific weights.
While a detailed justification of~\eqref{eq:problem-penalized} is deferred to the next subsection, we briefly note that $\hL$ estimates the low-rank shared signal, whereas $\hS$ captures blockwise sparse contamination.
We use $\hS$ for active-set identification and $\hL$ for collaborative fine-tuning.
Specifically, denote
the singular value decomposition (SVD) of $\hL$ and the corresponding projection onto the row space as 
\[\hL=\hU\hSigma\hV^\T, \quad P_{\hV}:=\hV\hV^\T.\]
Here, we deliberately denote the right singular vectors of $\hL$ by $\hV$ rather than the conventional $\widehat{\bV}$, to emphasize that $P_{\hV}$ is an estimate of $\PA$.
With $\hPVC:=\bI-\hPV$, we further define the plug-in estimators
\[
\hL_{\bA}:=\hD\hPV,\qquad \hS_{\bA^\perp}:=\hD\hPVC.
\]

For distinct $j,k\in[K]$, write
$P_{j,k}:=P_{(\min\{j,k\},\,\max\{j,k\})}$ and $\hS_{\bA^\perp}^{(j,k)}:=P_{j,k}(\hS_{\bA^\perp})$,
so that $\hS_{\bA^\perp}^{(j,k)}$ denotes the block associated with the unordered pair $\{j,k\}$.
The collaborative set is then estimated by
\begin{equation}\label{eq:A-hat}
	\widehat{\mathcal{C}}_\alpha:=\Bigl\{k \in [K]: \frac{1}{K-1}\sum_{j\neq k} \idop\{\|\hS_{\bA^\perp}^{(j,k)}\|_{\F} \le \tau_n\}\ge \alpha \Bigr\},
\end{equation}
where tuning parameters $\tau_n \ge 0$ and $\alpha \in [0.5,1)$.

For each $k \in \widehat{\mathcal{C}}_\alpha$, define the refined estimator
\begin{equation}\label{eq:Wk-lora}
	\tW^{(k)}=\frac{1}{|\widehat{\mathcal{C}}_\alpha|}\sum_{j\in\widehat{\mathcal{C}}_\alpha}\Bigl(\hW^{(j)} -\hL_{\bA}^{(j,k)}\Bigr),
\end{equation}
where, for all $j,k\in[K]$, the pairwise correction term $\hL_{\bA}^{(j,k)}$ is defined antisymmetrically: $\hL_{\bA}^{(j,k)}=(\hD\hPV)^{(j,k)}$ if $j<k$, $\hL_{\bA}^{(j,k)}=-(\hD\hPV)^{(k,j)}$ if $j>k$, and $\hL_{\bA}^{(j,k)}=\bzero$ if $j=k$.
Thus, $\hW^{(j)}-\hL_{\bA}^{(j,k)}$ is obtained by correcting client $j$'s preliminary estimator toward client $k$.

Equation~\eqref{eq:A-hat} estimates the collaborative set  $\widehat{\mathcal{C}}_\alpha$ by a majority-voting rule. For each retained client, \eqref{eq:Wk-lora} transports the reference estimators toward the target client through the estimated shared component and averages these pairwise adaptations over $\widehat{\mathcal{C}}_\alpha$.
The next subsection explains why this procedure targets the identifiable component of the federated LoRA signal.

\subsection{Canonical estimands and oracle interpretation}\label{sec:construct-D}
We now introduce the latent population objects that motivate the preceding procedure.
For each $g=(j,k)\in \mathcal G$, define
\[
\bD^{(j,k)}:=\bW^{(j)}-\bW^{(k)} \in \mathbb{R}^{q\times p}
\]
and the stacked population contrast matrix
\begin{equation}\label{eq:D}
	\bD:=
	\left[ \{\bD^{(1,2)}\}^\T,\ldots, \{\bD^{(1,K)}\}^\T,\{\bD^{(2,3)}\}^\T,\ldots \{\bD^{(K-1,K)}\}^\T
	\right]^\T \in \mathbb{R}^{Gq\times p}.
\end{equation}
By direct algebra,
\[
\bD^{(j,k)} = (\bB^{(j)} -\bB^{(k)})\bA + [\bB^{(j)} (\bA^{(j)} - \bA) - \bB^{(k)} (\bA^{(k)} - \bA)].
\]
For each $g=(j,k)$, define
\begin{equation}\label{eq:L0}
\bL_0^{(j,k)}:=\begin{cases}
	(\bB^{(j)} - \bB^{(k)})\bA, &\text{if } j, k \in \mathcal{C}, \\
	\bzero_{q\times p},& \text{otherwise},
\end{cases}
\quad
\bS_0^{(j,k)} := \begin{cases}
	\bzero_{q\times p}, & \text{if } j, k \in \mathcal{C}, \\
	\bB^{(j)}\bA^{(j)}-\bB^{(k)}\bA^{(k)}, & \text{otherwise},
\end{cases}
\end{equation}
and
\[
\bE_0^{(j,k)} := \begin{cases}
	\bB^{(j)} (\bA^{(j)} - \bA) - \bB^{(k)} (\bA^{(k)} - \bA), & \text{if } j, k \in \mathcal{C}, \\
	\bzero_{q\times p}, & \text{otherwise}.
\end{cases}
\]
Stacking blockwise yields the decomposition
\[
\bD= \bL_0 + \bS_0 + \bE_0,
\]
where $\bL_0,\bS_0,\bE_0 \in \mathbb{R}^{Gq\times p}$ are formed by vertically concatenating $\{\bL_0^{(j,k)}\}, \{\bS_0^{(j,k)}\},  \{\bE_0^{(j,k)}\}$ across $(j,k)\in \mathcal{G}$.

We analyze the structural roles of $\bL_0$, $\bS_0$, and $\bE_0$ separately.
First, $\bL_0$ is a low-rank matrix satisfying
$\operatorname{rank} (\bL_0)\le r \ll \min\{Gq,p\}$ as $\bA$ is of rank $r$.
Without loss of generality, we may further assume that
\begin{equation}\label{eq:rank-r}
	\rank(\bL_0)=r.
\end{equation}
This condition holds whenever the active clients exhibit sufficient heterogeneity in their left LoRA factors;  for example, it is implied by the existence of a pair $(j_0,k_0)\in\mathcal{C}$ such that $\rank(\bB^{(j_0)}-\bB^{(k_0)})=r$.
Under~\eqref{eq:rank-r}, the row space of $\bL_0$ coincides with the shared LoRA row space spanned by $\bA$, thus we have
\begin{equation}\label{L0}
 \bL_0=\bL_0\PA.
\end{equation}

Next, $\bS_0$ is blockwise sparse, with support determined by the contaminated clients.
Define
\[
\mathcal{I}:=\{(j,k)\in\mathcal{G}: j\in\mathcal{C}^c\ \text{or}\ k \in \mathcal{C}^c\}
\]
as the index set of client pairs involving at least one contaminated client, and let $s:=|\mathcal{I}|$.
The complement $\mathcal{I}^c$ then consists of benign pairs, for which the shared low-rank structure is retained.
Let
$\PI:=\sum_{g\in \mathcal{I}}\Pg$, $\PIC:=\sum_{g\in \mathcal{I}^c}\Pg$,
and define the corresponding block-support subspace
$\mathcal{S}_{\mathcal{I}}:=\{\bX\in\mathbb{R}^{Gq\times p}:\PI(\bX)=\bX\}$.
By construction, we have $P_g(\bS_0)=\bzero$ for all $g\in \mathcal{I}^c$ and thus
\begin{equation}\label{S0}
  \PI(\bS_0)=\bS_0.
\end{equation}

For $\bE_0$, a direct calculation shows that for all distinct $j,k\in [K]$, the corresponding block satisfies
$\|\bE_0^{(j,k)}\|_{\F}\le \delta(\|\bB^{(j)}\|_{\F}+\|\bB^{(k)}\|_{\F})$,
where the maximal approximation error $\delta$ is defined in \eqref{eq:subset-A}. Thus, $\bS_0$ isolates structured heterogeneity associated with contaminated client pairs, whereas $\bE_0$ collects the residual approximation error arising from imperfect alignment of the active clients' row-space factors.

With the constraints (\ref{L0}) and (\ref{S0}), we rewrite the decomposition as
\begin{equation}\label{eq:decomp-original}
	\bD_0= \bL_0 + \bS_0,\quad \bL_0=\bL_0\PA, \quad \PI(\bS_0)=\bS_0.
\end{equation}
We emphasize the two constraints because, as shown later, the identifiable objects are only $(P_A,\mathcal{I})$, rather than the individual components $(\bL_0,\bS_0)$. In practice, the approximation error is typically nonzero, the observed matrix $\hD$ introduced in (\ref{Dhat}) admits the following noisy decomposition:
\begin{equation}\label{eq:decomp-original-noisy}
	\hD= \bL_0 + \bS_0 + \bE,\quad \bL_0=\bL_0\PA, \quad \PI(\bS_0)=\bS_0, \quad  \bE=\bE_0+\bE_1.
\end{equation}
Here, $\bE_0$ is the deterministic approximation error defined above, while $\bE_1$ is induced by the preliminary estimation errors.
Specifically, with
\[
\bXi_k:=\hW^{(k)}-\bW^{(k)}\in\mathbb{R}^{q\times p},
\qquad k\in[K],
\]
the $(j,k)$th block of $\bE_1$ is
\[
\bE_1^{(j,k)}=\bXi_j-\bXi_k .
\]

\begin{remark}[Role of $\bE_1$]\label{remark:E1}
	For each client $k$, the local model follows \eqref{eq:model-k}.
	Given a preliminary estimator $\hW^{(k)}$, obtained from either a linear or nonlinear local fit, our procedure first constructs the stacked matrix of pairwise differences in (\ref{Dhat2}). The term $\bE_1$ collects the resulting stochastic discrepancies in these contrasts, including finite-sample estimation error in $\hW^{(k)}$, optimization error from local training, model misspecification in $f^{(k)}$, and observation noise.

	Under standard regularity conditions, these discrepancies may have sub-Gaussian behavior at the stacked-matrix level. For example, this holds in linear models with sub-Gaussian covariates and noise, and in certain nonlinear models with bounded gradients and sub-Gaussian residual fluctuations.
	The subsequent analysis does \emph{not} require a fully specified form of $f^{(k)}$, and it only uses tail control for the residuals, such as a sub-Gaussian or, more generally, finite Orlicz-norm condition.
	This formulation covers linear regression, index models, and one-layer neural adaptations commonly used in local fine-tuning.
\end{remark}

With the decomposition in (\ref{eq:decomp-original-noisy}), it is natural to consider the noisy recovery problem
\begin{equation}\label{eq:problem-noisy}
\begin{aligned}
    \min_{\bL,\bS}\  &\lambda_L\|\bL\|_{*} + \lambda_S \|\bS\|_{\blk,1} \cr \text{s.t.}\  & \bL=\bL \PA,\  \PI(\bS)=\bS,\ \omega_g \|\Pg(\widehat{\bD} -\bL -\bS)\|_{\F}^2 \leq \veg^2, \ \forall g=(j,k)\in \mathcal{G},
\end{aligned}
\end{equation}
 where $\lambda_L, \lambda_S>0$ are regularization parameters, $\omega_g>0$ are block-specific weights, and $\varepsilon_g:=\|\Pg(\bE)\|_{\F}$ is the blockwise noise level, which combines the deterministic approximation error $\bE_0$ and the preliminary estimation error $\bE_1$ for client pair $g$.

Problem~\eqref{eq:problem-noisy}  is an oracle formulation: it assumes knowledge of the row-space projector $\PA$, the contamination support projector $\PI$, and the blockwise noise levels $\{\varepsilon_g:g\in\mathcal{G}\}$.
These quantities are unavailable in practice.
Accordingly, the implemented estimator of our approach is based on the penalized program~\eqref{eq:problem-penalized}, which serves as a fully data-driven surrogate for~\eqref{eq:problem-noisy}. In the theoretical analysis, we prove that the surrogate in \eqref{eq:problem-penalized}
 automatically yields a solution that matches the oracle problem in \eqref{eq:problem-noisy}.
Before analyzing this surrogate, however, we first clarify what is identifiable from the underlying decomposition, even in the noiseless setting.

Although the decomposition $\bD_0 = \bL_0 + \bS_0$ arises naturally in the federated LoRA setting and has a clear practical interpretation, the pair $(\bL_0,\bS_0)$ is generally not identifiable. The obstruction is geometric: a row-block sparse component may also lie in the shared row space, allowing part of the signal to be shifted between the low-rank and sparse components without changing their sum.
We formalize this ambiguity as follows.
\begin{definition}[Shiftable subspace]\label{def:shiftable-subspace}
	Let $\mathcal{V}_{\bA}:=\{\bX\in\mathbb{R}^{Gq\times p}:\bX=\bX\PA\}$ and $\mathcal{S}_{\mathcal{I}}:=\{\bX\in\mathbb{R}^{Gq\times p}:\PI(\bX)=\bX\}$.
	The shiftable subspace is defined as
	\begin{equation*}
		\mathcal{M}:=\mathcal{S}_{\mathcal{I}}\cap \mathcal{V}_{\bA}
		=\{\bMM\in\mathbb{R}^{Gq\times p}:\PI(\bMM)=\bMM,\ \bMM=\bMM\PA\}.
	\end{equation*}
\end{definition}
The presence of $\mathcal{M}$ creates an intrinsic non-identifiability.
Any $\bMM\in\mathcal{M}$ can be shifted between the low-rank and sparse components without altering feasibility:
$\bD_0=(\bL_0+\bMM)+(\bS_0-\bMM)$.
Thus, the raw pair $(\bL_0,\bS_0)$ is not the appropriate estimand.
To remove this ambiguity, we introduce a canonical representation that is invariant to shifts in $\mathcal{M}$. Specifically, write
\begin{equation}\label{eq:decomp-canonical}
	\hD=\bL_{\bA} + \bS_{\bA^\perp} + \bE, \qquad \bL_{\bA}:=\bD_0\PA,\qquad \bS_{\bA^\perp}:=\bD_0\PAC,
\end{equation}
where $\PAC:=\bI-\PA$.
Then, the projected components $ \bL_{\bA}$ and $\bS_{\bA^\perp}$ are unchanged under every admissible shift in $\mathcal{M}$, thus serving as primary targets for subsequent collaborative refinement and contamination identification.

To clarify the role of $\bS_{\bA^\perp}$ in collaborative-set recovery, observe first that $\bS_{\bA^\perp}^{(j,k)}=\bzero$ for every benign pairs $j,k\in\mathcal C$.
For pairs of contaminated clients, $j,k\in\mathcal C^c$, the corresponding block may be zero or nonzero: it can vanish when the two contaminated effects cancel, or when their difference lies entirely within the shared row space.
We therefore impose a mixed-pair separation condition $\|\bS_{\bA^\perp}^{(j,k)}\|_{\F}>0$ for all  $j\in\mathcal C$ and $k\in\mathcal C^c$.
Under this condition, every benign client has at least $|\mathcal C|-1$ pairwise blocks satisfying
$\bS_{\bA^\perp}^{(j,k)}=\bzero$, while every contaminated client has at most $|\mathcal C^c|-1$ such zero blocks.
Consequently, when $|\mathcal{C}|>|\mathcal{C}^c|$, equivalently $\eta>1/2$, the collaborative-set is identified by the population majority zero-block rule
\[
    \mathcal{C}=\biggl\{k \in [K]: \frac{1}{K-1}\sum_{j\neq k} \idop\Bigl\{\|\bS_{\bA^\perp}^{(j,k)}\|_{\F}=0 \Bigr\}\ge \frac{K\eta-1}{K-1} \biggr\}.
\]

The estimated quantities $\hL_{\bA}=\hD\hPV$ and $\hS_{\bA^\perp}=\hD\hPVC$ introduced in the preceding subsection are precisely plug-in estimators of these canonical targets, which leads directly to the active-set rule \eqref{eq:A-hat} and the refined estimator \eqref{eq:Wk-lora}.

\section{Computation}\label{sec:computation}
This section describes how to solve the penalized problem~\eqref{eq:problem-penalized}. The objective admits an efficient proximal gradient implementation \citep{beck2017first,feng2022projected}.
Denote the loss and penalty function as
\[
\ell(\bL,\bS):=\frac{1}{2}\sum_{g\in \mathcal{G}}\omega_g\|\Pg(\hD-\bL-\bS)\|_{\F}^2, \qquad h(\bL,\bS):=\lambda_L \|\bL\|_* + \lambda_S \|\bS\|_{\blk,1},
\]
respectively.
Define the linear operator $P_\omega:=\sum_{g\in \mathcal{G}}\omega_g\Pg$.
Since each $\Pg$ is an orthogonal projection, $\Pomega$ is self-adjoint and positive semi-definite.
Writing $\bR:=\hD-\bL-\bS$, we have
$\ell(\bL,\bS)=\frac{1}{2}\langle\bR,\ P_\omega\bR \rangle$. Direct differentiation yields
\begin{equation}\label{eq:grad-f}
	\nabla_{\bL} \ell(\bL,\bS)= -\Pomega\bR,\qquad
	\nabla_{\bS} \ell(\bL,\bS)= -\Pomega\bR.
\end{equation}

We apply the proximal gradient method in the product space equipped with the norm $\|(\bL,\bS)\|^2:=\|\bL\|_{\F}^2+\|\bS\|_{\F}^2$.
Given the $m$th iterate $(\hL^m,\hS^m)$ and a step size $t_m>0$ for $m\in [M-1]$,  define the proximal function
\[
\begin{aligned}
	\psi_m(\bL,\bS \mid \hL^m,\hS^m)
	&:= \ell(\hL^m,\hS^m)
	+ \langle \nabla_{\bL} \ell(\hL^m,\hS^m),\ \bL-\hL^m\rangle
	+ \langle \nabla_{\bS} \ell(\hL^m,\hS^m),\ \bS-\hS^m\rangle \\
	&\quad + \frac{1}{2t_m}\Big(\|\bL-\hL^m\|_{\F}^2+\|\bS-\hS^m\|_{\F}^2\Big)
	+ h(\bL,\bS).\nonumber
\end{aligned}
\]
The next iterate is obtained from:
\begin{equation}\label{eq:pg-step}
	(\hL^{m+1},\hS^{m+1}) = \argmin_{\bL,\bS}\ \psi_m(\bL,\bS\mid \hL^m,\hS^m).
\end{equation}
Denote the residual
\[
\bR^m := \hD - \hL^m - \hS^m
\]
and use~\eqref{eq:grad-f}, the problem~\eqref{eq:pg-step} is then equivalent to
\[
\min_{\bL,\bS} \ \lambda_L\|\bL\|_* + \frac{1}{2t_m}\|\bL-(\hL^m + t_m \Pomega\bR^m)\|_{\F}^2 + \lambda_S\|\bS\|_{\blk,1} + \frac{1}{2t_m}\|\bS-(\hS^m + t_m \Pomega\bR^m)\|_{\F}^2,
\]
which separates in $\bL$ and $\bS$.
Consequently,
\begin{equation}\label{eq:pg-closed-form}
	\hL^{m+1}=\mathcal{SVT}\big(\hL^m +t_m \Pomega\bR^m;\ t_m\lambda_L\big), \qquad     \hS^{m+1}=\mathcal{BST}\big(\hS^m + t_m \Pomega\bR^m;\ t_m\lambda_S\big).
\end{equation}
Here, the Singular Value Thresholding operator is defined by
\[
\mathcal{SVT}(\bMM;\tau)=\bU\diag((\sigma_i-\tau)_+)\bV^\T
\]
for $\bMM=\bU\diag(\sigma_i)\bV^\T$, and the Block Soft Thresholding operator acts on each row-block $g$ as
\[
\Pg\bigl(\mathcal{BST}(\bMM;\tau)\bigr) =\Big(1-\frac{\tau}{\|\Pg(\bMM)\|_{\F}}\Big)_+ \Pg(\bMM),\quad g\in\mathcal{G}.
\]

Since the projections $\{\Pg\}_{g\in\mathcal G}$ are mutually orthogonal and $\Pomega=\sum_{g\in\mathcal G}\omega_g\Pg$ is block-diagonal under  the row-block decomposition,
$\|\Pomega\|_{\op} =\max_{g\in\mathcal G}\omega_g$.
The gradient of $\ell$ on the product space is therefore Lipschitz with constant
$L_\ell = 2\|\Pomega\|_{\op} = 2\max_g\omega_g$.
Thus, one may use any constant step size $t \le 1/L_\ell$, or select $t_m$ adaptively by backtracking.

\begin{algorithm}[t]
	\caption{Proximal gradient for CLAIR}
	\label{alg}

	\noindent \textbf{Input:}\hspace{0.5em} $\hD\in\mathbb{R}^{Gq\times p}$, $\lambda_L$, $\lambda_S$, steps $M$, step size $t$, and weights $\{\omega_g\}_{g\in\mathcal{G}}$, thresholds $\alpha$ and $\tau_n$.

	\noindent \textbf{Initialization}:\hspace{0.5em}  $\hL^0 = \bzero_{Gq\times p}$, $\hS^0 = \bzero_{Gq\times p}$.

	\For{$m = 0$ \KwTo $M-1$}{
		$\bR^m=\hD-\hL^m-\hS^m$\;
		$\hL^{m+1} =  \mathcal{SVT}\big(\hL^m +t \Pomega\bR^m;\ t\lambda_L\big)$\;
		$\hS^{m+1}=\mathcal{BST}\big(\hS^m + t \Pomega\bR^m;\ t\lambda_S\big)$\;
	}

	\BlankLine
	Compute  $\hL^M = \hU \hSigma \hV^\top$, $P_{\hV}:=\hV\hV^\T$, and $\hPVC:=\bI-\hPV$\;

	Compute $\hL_{\bA}=\hD\hPV$ and $\hS_{\bA^\perp}=\hD\hPVC$\;

	Compute $\widehat{\mathcal{C}}_\alpha$ using~\eqref{eq:A-hat} and $\tW^{(k)}$ using \eqref{eq:Wk-lora}\;
	\BlankLine
	\noindent \textbf{Output}:\hspace{0.5em}   $\widehat{\mathcal{C}}_\alpha$ and $\{\tW^{(k)}: k\in\widehat{\mathcal{C}}_\alpha\}$.
\end{algorithm}

Algorithm~\ref{alg} summarizes the proximal-gradient solver for~\eqref{eq:problem-penalized} and the subsequent plug-in steps used by CLAIR.
The choice of the majority threshold $\alpha$ in~\eqref{eq:A-hat} is discussed in Remark~\ref{remark:A-hat}, which is justified by the support-recovery theory developed below.
\begin{remark}[Choice of the majority threshold $\alpha$]\label{remark:A-hat}
For collaborative-set recovery, an $\alpha$ in the range of
    \[
	\frac{|\mathcal{C}^c|-1}{K-1}<\alpha\le \frac{|\mathcal{C}|-1}{K-1}.
	\]
    is sufficient to guarantee the consistency of benign-set recovery, as shown in Theorem~\ref{th:set-recovery}. When $K$ is large, this requirement becomes approximately $\alpha\in(1-\eta,\eta]$, where $\eta=|\mathcal{C}|/K$ denotes the proportion of benign clients. Therefore, a non-empty admissible range for $\alpha$ requires $\eta>1/2$, meaning that benign clients must form a majority.
However, this requirement is rather conservative,as it accounts for the unconstrained worst-case scenario in which all contaminated/outlier clients exhibit identical behavior and thereby form another cluster. When contaminated clients are sufficiently heterogeneous, the condition $\eta>1/2$ may be further relaxed; See also the discussion after Theorem \ref{th:set-recovery}.

\end{remark}

\section{Recovery analysis}\label{sec:recovery}
This section studies the recovery properties of CLAIR.
We first clarify the identifiability issue and analyze exact recovery for~\eqref{eq:decomp-original} in the idealized noiseless setting, where the approximation error $\bE_0$ and preliminary estimation error $\bE_1$ are zero.
We then consider the practical noisy setting, where $\hD$ contains both $\bE_0$ and $\bE_1$, and establish stable row-space recovery and consistent collaborative set detection.

\subsection{Identifiability}
Although the low-rank plus block-sparse form in~\eqref{eq:decomp-original} suggests a structured robust PCA-type decomposition, the inherent geometry is different.
The row-block sparse component in \eqref{eq:decomp-original}  may partially align with the shared LoRA row space, allowing the signal to be shifted between the low-rank and sparse components without changing their sum, as illustrated by Definition~\ref{def:shiftable-subspace}.
Consequently, the raw pair $(\bL_0,\bS_0)$ defined in~\eqref{eq:L0} is generally not identifiable, even in the noiseless setting.

To formalize this obstruction, we introduce the relevant notation.
Let $\bL_0=\bU\bSigma\bV^\T$ be a rank-$r$ singular value decomposition of $\bL_0$, where
$\bU\in\mathbb{R}^{Gq\times r}$ and $\bV\in\mathbb{R}^{p\times r}$.
We write
$\mathcal{U}:= \{\bU\bX^\T:\bX\in\mathbb{R}^{p\times r}\}$ and $\mathcal{V}:= \{\bX\bV^\T:\bX\in\mathbb{R}^{Gq\times r}\}$ for the associated column and row space subspaces, respectively.
The tangent space at $\bL_0$ is
\[
\mathcal{T}(\bL_0) := \{\bU\bX_1^\T+\bX_2\bV^\T: \bX_1\in\mathbb{R}^{p\times r},\ \bX_2\in\mathbb{R}^{Gq\times r}\},
\]
with projection
\begin{equation}\label{eq:PT}
	\PT(\bMM) := \bU\bU^\T\bMM +\bMM\bV\bV^\T -\bU\bU^\T\bMM\bV\bV^\T
\end{equation}
for any $\bMM\in \mathbb{R}^{Gq\times p}$. Let $\PTC$ denote the projector onto the orthogonal complement $\mathcal{T}^\perp$.

Classical robust PCA arguments typically rely on transversality between the tangent space $\mathcal{T}(\bL_0)$ and the sparse support subspace $\mathcal{S}_{\mathcal I}$.
This condition fails in the present setting.
Indeed, if $\mathcal{I}\neq\emptyset$, take any nonzero $\bW$ satisfying $\PI(\bW)=\bW$ and set $\bH:=\bW\bV^\T$.
Then $\bH\in\mathcal{T}(\bL_0)\cap\mathcal{S}_{\mathcal I}$ and $\bH\neq\bzero$.
This nontrivial intersection is the geometric obstruction underlying the shift ambiguity.
Consequently, classical exact-decomposition arguments \citep{candes2011robust,chandrasekaran2011rank} do not apply directly.

We address this by handling the column and row spaces of $\bL_0$ separately. We impose rank-sparsity conditions through the column space $\mathcal{U}$, and handle row-space distinguishability by projecting each sparse block onto $\mathcal{V}$ and $\mathcal{V}_{\perp}$.
We first show that $\mathcal{U}$ and the block-sparse subspace $\mathcal{S}_{\mathcal I}$ are transverse under the following condition.

\begin{lemma}\label{lem:rho}
Define $\rho:=\|\PI\PU\PI\|_{\op}$,
	where $\PU$ denotes the orthogonal projector onto $\mathcal{U}$.
	If $\rho<1$, then $\mathcal{U}\cap\mathcal{S}_{\mathcal{I}}=\{0\}$ and $\PA$ is identifiable from the decomposition~\eqref{eq:decomp-original}.
\end{lemma}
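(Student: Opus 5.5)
The argument splits into two parts: the transversality claim, which is short, and identifiability of $\PA$, which requires showing that every admissible decomposition of $\bD_0$ has the same row-space projector.

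\emph{Transversality.} Take $\bX\in\mathcal{U}\cap\mathcal{S}_{\mathcal I}$. Then $\PU\bX=\bX$ and $\PI\bX=\bX$, so
\[
\PI\PU\PI\bX=\PI\PU\bX=\PI\bX=\bX .
\]
This gives $\|\bX\|_{\F}=\|\PI\PU\PI\bX\|_{\F}\le\rho\|\bX\|_{\F}$. Since $\rho<1$, we get $\bX=\bzero$. More quantitatively, for any $\bX$,
\[
\|\PI\PU\bX\|_{\F}^2=\langle \PU\PI\PU\bX,\bX\rangle\le\rho\|\PU\bX\|_{\F}^2 .
\]
Hence $\|\PIC\PU\bX\|_{\F}^2\ge(1-\rho)\|\PU\bX\|_{\F}^2$: no nonzero element of $\mathcal{U}$ is supported on the contaminated blocks alone.

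\emph{Identifiability of $\PA$.} Suppose $\bD_0=\bL_0+\bS_0=\bL_1+\bS_1$ is a second decomposition of the form~\eqref{eq:decomp-original}, with the same $\mathcal I$ and with $\bL_1$ of rank $r$ and row-space projector $P_1$, so that $\bL_1=\bL_1P_1$ and $\PI\bS_1=\bS_1$. Apply $\PIC$ to both sides. Because $\PIC\bS_0=\PIC\bS_1=\bzero$, this gives
\[
\PIC\bL_0=\PIC\bL_1=\PIC\bD_0 .
\]
So the benign blocks of $\bD_0$ are determined by the data alone, and they satisfy $\PIC\bD_0=(\PIC\bL_0)\PA$. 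It therefore suffices to show $\row(\PIC\bL_0)=\row(\bL_0)$, i.e.\ $\rank(\PIC\bL_0)=r$. Then $\row(\PIC\bD_0)=\row(\bA)$, which means $\PA$ is a function of $\bD_0$ and $\mathcal I$ only. In addition $\row(\bL_1)\supseteq\row(\PIC\bL_1)=\row(\bA)$, and since $\rank(\bL_1)=r$ this forces $P_1=\PA$.

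\emph{Rank preservation.} Write $\PIC\bL_0=(\PIC\bU)\bSigma\bV^\T$. Because $\bSigma\bV^\T$ has full row rank $r$, we need $\PIC\bU$ to have rank $r$. Suppose instead $\PIC\bU\bx=\bzero$ for some $\bx\neq\bzero$. Then $\bU\bx\in\mathcal{S}_{\mathcal I}$ is a nonzero vector of $\mathrm{col}(\bU)$. Here is the mismatch to handle: $\mathcal U=\{\bU\bX^\T\}$ is the space of matrices with column space in $\mathrm{col}(\bU)$, while $\mathcal{S}_{\mathcal I}$ is a space of $Gq\times p$ matrices. I will bridge the two by taking any nonzero $\by\in\mathbb{R}^p$ and forming $\bMM:=\bU\bx\by^\T$. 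This $\bMM$ lies in $\mathcal U$, and its rows outside the blocks in $\mathcal I$ vanish, so $\PI\bMM=\bMM$. Thus $\bMM\in\mathcal{U}\cap\mathcal{S}_{\mathcal I}\setminus\{\bzero\}$, contradicting the transversality step. Hence $\rank(\PIC\bL_0)=r$, which completes the argument.

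\emph{Main obstacle.} The delicate point is the precise meaning of ``identifiable.'' I take it to mean that $\PA$ is the same for every decomposition consistent with~\eqref{eq:decomp-original} with the support $\mathcal I$ held fixed. If competing supports $\mathcal I'$ must also be allowed, I would assume the analogous $\rho'<1$ holds for the alternative decomposition and run the argument on $\mathcal I\cup\mathcal I'$; this would require the benign majority to keep $\mathcal{I}\cup\mathcal{I}'$ from covering all blocks. I also note that the $\mathcal M$-shifts of Definition~\ref{def:shiftable-subspace} leave the benign blocks unchanged and keep the low-rank part inside $\mathcal{V}_{\bA}$, so they are consistent with the uniqueness of $\PA$ shown above even though $(\bL_0,\bS_0)$ themselves are not unique.
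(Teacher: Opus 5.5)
Your proof is correct. The transversality half matches the paper's argument exactly. The paper's proof of this lemma shows $\bMM=\PI\PU\PI\bMM$ for $\bMM\in\mathcal{U}\cap\mathcal{S}_{\mathcal I}$, concludes $\bMM=\bzero$ from $\rho<1$, and stops; it leaves the identifiability of $\PA$ as an assertion.

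Your second half supplies that missing piece, and it differs from how the paper handles the analogous step elsewhere (Lemma~\ref{supp:lem:pPV} and the end of the proof of Theorem~\ref{th:PV-noiseless}). There the paper also projects onto the benign blocks, $\PIC(\pL)=\PIC\bD_0$. However, it gets $\PIC\bL_0=\bL_0$ for free from the block structure in \eqref{eq:L0}, since $\bL_0^{(j,k)}=\bzero$ for every $(j,k)\in\mathcal{I}$. It then only needs the rank condition \eqref{eq:rank-r}, and $\rho$ plays no role. In fact, for that model-induced $\bL_0$ one has $\PI\bU=\bzero$, so $\rho=0$ and the hypothesis holds automatically.

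You instead treat $\bL_0$ as any matrix satisfying \eqref{eq:decomp-original}, which may have nonzero $\mathcal I$-blocks. You derive $\rank(\PIC\bL_0)=r$ from transversality via the rank-one lift $\bU\bx\by^\T\in\mathcal{U}\cap\mathcal{S}_{\mathcal I}$. That is the step where $\rho<1$ actually does work. Your version is therefore the more general and self-contained reading of the lemma, while the paper's route is shorter but relies on the specific construction of $\bL_0$.

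Your identifiability conclusion needs the competing low-rank part to have rank at most $r$, with $\mathcal I$ held fixed. This matches the oracle setting \eqref{eq:problem-oracle}. Your speculative extension to a different support $\mathcal I'$ is not needed, and it is not established by what you wrote.
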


The condition $\rho<1$ rules out overlap between the column space and the contaminated block support, but it does not by itself identify the raw pair $(\bL_0,\bS_0)$.  The remaining ambiguity lies in the shiftable subspace $\mathcal{M}$.
We therefore work with the projected canonical components
\[
(\bL_{\bA},\bS_{\bA^\perp}) := (\bD_0\PA,\bD_0\PAC).
\]
These components are invariant to all shifts in $\mathcal{M}$.
Indeed, for any feasible decomposition satisfying~\eqref{eq:decomp-original}, we have $\bL\PAC=\bzero$.
Hence,
\[
\bS\PAC=\bD_0\PAC, \quad \bL+\bS\PA=\bD_0\PA.
\]
Thus, the projected pair $(\bL_{\bA},\bS_{\bA^\perp})$ is uniquely determined by $\bD_0$, even though $(\bL_0,\bS_0)$ itself may not be.

The central object is therefore the projector $\PA$, which encodes the shared row space of the benign low-rank adapters.
Once $\PA$ is identified, the canonical decomposition~\eqref{eq:decomp-canonical} is determined.
Consequently, accurate recovery of $\PA$ from $\bD_0$, or from its noisy counterpart $\hD$, is essential for both contaminated-client identification and collaborative fine-tuning.
We turn to the recovery of $\PA$ in the next subsection.

\subsection{Exact recovery}
This section studies the ideal noiseless decomposition, which provides a benchmark for recovery analysis and isolates the role of structural conditions such as rank-sparsity incoherence.

When $\delta=0$, all benign clients share the same matrix $\bA$, and hence $\bE_0=\bzero$.
Suppose further that $\bE_1=\bzero$. The observed contrast matrix then admits the noiseless decomposition~\eqref{eq:decomp-original}.
To recover the low-rank and block-sparse components in~\eqref{eq:decomp-original}, it is natural to consider the constrained program
\begin{equation}\label{eq:problem-oracle}
	\min_{\bL,\bS}\  \lambda_L\|\bL\|_{*} + \lambda_S \|\bS\|_{\blk,1} \quad\text{s.t.}\quad \bL+\bS=\bD_0,\quad \bL=\bL \PA,\quad \PI(\bS)=\bS.
\end{equation}
The constraint $\bL=\bL\PA$ restricts the low-rank component to the shared row space, whereas $\PI(\bS)=\bS$ imposes the prescribed block support.
Since both $\PA$ and $\mathcal I$ are unknown in practice, we refer to~\eqref{eq:problem-oracle} as the noiseless oracle problem.

The following assumption formally underlies the common row-space constraint.
\begin{assumption}[Active clients share a common space $\mathcal{V}_{\bA}$ ] 
	\label{ass:common-A}
	There exists an active client set $\mathcal{C} \subseteq [K]$ with $|\mathcal{C}|=\eta K$ and  $\eta \in (0,1]$ such that Condition~\eqref{eq:subset-A} and the rank condition~\eqref{eq:rank-r} hold.
\end{assumption}

Recall that the separation condition $\rho<1$  ensures that contamination in the column space can be distinguished from the sparse block components.
To make this conceptual condition verifiable, we introduce concrete assumptions motivated by the federated LoRA setting, together with their practical interpretations.

\begin{assumption}[No client-pair dominates $\mathcal{U}$] 
	\label{ass:separation}
	There exists a constant $\mu\ge 1$ such that
		\begin{equation*}\label{eq:block-IC}
			\max_{g}\|\Pg\PU\|_{\F}^2\le \frac{\mu r}{G}.
			\tag{Block-IC}
		\end{equation*}
\end{assumption}

Assumption~\ref{ass:separation} imposes a blockwise incoherence (Block-IC) condition on the column space $\mathcal U$.
Since $\sum_{g\in\mathcal{G}}\|\Pg\PU\|_{\F}^2=r$, the average blockwise overlap is $r/G$.
Condition~\eqref{eq:block-IC} requires that no single block exceed $\mu$ times this average, thereby preventing $\mathcal{U}$ from being overly concentrated on a small number of client-pair blocks.
Consequently, for any contaminated set $\mathcal I$ with $|\mathcal I|=s$, one has $\rho\le s\mu r/G$, hence separation holds whenever $s\mu r/G<1$.
This is a worst-case condition, since it ignores cancellations or misalignment across corrupted blocks. Combined with an upper bound on $s$, Assumption~\ref{ass:separation} therefore yields a sufficient condition for $\rho<1$, as formalized in Lemma~\ref{lem:rho-worst-case}.

\begin{lemma}\label{lem:rho-worst-case}
	Under Assumption~\ref{ass:common-A}-\ref{ass:separation}, if $\eta > \sqrt{1-\frac{1}{\mu r}+\frac{1}{4(K-1)}}$, then $\rho<1$.
\end{lemma}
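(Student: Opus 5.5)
The plan is to combine the worst-case bound $\rho\le s\mu r/G$ stated after Assumption~\ref{ass:separation} with an exact count of the contaminated pairs $s$. Then I will reduce the condition $s\mu r/G<1$ to an elementary inequality in $\eta$ and $K$.

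\textbf{Step 1 (bounding $\rho$ by block overlaps).} Since $\PI$ and $\PU$ are orthogonal projectors,
\[
\|\PI\PU\PI\|_{\op}=\|\PU\PI\|_{\op}^2\le\|\PU\PI\|_{\F}^2 .
\]
The blocks $\{\Pg\}_{g\in\mathcal I}$ are mutually orthogonal and sum to $\PI$. Hence
\[
\|\PU\PI\|_{\F}^2=\sum_{g\in\mathcal I}\|\Pg\PU\|_{\F}^2\le s\,\frac{\mu r}{G}
\]
by \eqref{eq:block-IC}. Concretely, $\|\Pg\PU\|_{\F}^2=\|(\be_g\otimes\bI_q)^\T\bU\|_{\F}^2$, so the identity is just a row-block split of $\|\bU\|_{\F}^2$ restricted to $\mathcal I$. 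It therefore suffices to show $s\mu r/G<1$.

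\textbf{Step 2 (counting $s$).} The pairs with both clients benign number $\binom{|\mathcal C|}{2}$, and $|\mathcal C|=\eta K$ by Assumption~\ref{ass:common-A}. So
\[
s=G-\tfrac{\eta K(\eta K-1)}{2},\qquad G=\tfrac{K(K-1)}{2},
\]
and therefore
\[
\frac{s}{G}=1-\frac{\eta K(\eta K-1)}{K(K-1)}.
\]
Write $c:=1-1/(\mu r)\in[0,1)$. The target $s\mu r/G<1$ is equivalent to $\eta K(\eta K-1)>c\,K(K-1)$. Dividing by $K$, this becomes
\[
K\eta^2-\eta-c(K-1)>0 .
\]

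\textbf{Step 3 (the elementary inequality).} The key rearrangement is
\[
K\eta^2-\eta-c(K-1)=(K-1)(\eta^2-c)+(\eta^2-\eta).
\]
By hypothesis $\eta^2>c+\frac{1}{4(K-1)}$, so the first term exceeds $1/4$. For $\eta\in(0,1]$, $\eta^2-\eta=(\eta-\tfrac12)^2-\tfrac14\ge-\tfrac14$. The sum is therefore strictly positive, which gives $s\mu r/G<1$ and hence $\rho<1$.

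The only step needing care is Step 3. A naive comparison using $\eta^2K^2$ against $cK^2$ loses a term of order $\eta-c$, which can be as large as $1/(\mu r)$. The split above is what makes the $\frac{1}{4(K-1)}$ slack in the hypothesis exactly sufficient: it absorbs the worst case $\eta^2-\eta=-\tfrac14$.
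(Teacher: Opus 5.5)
Your proof is correct and follows the paper's argument: bound $\rho\le s\mu r/G$ via (Block-IC), count $s=G-\binom{\eta K}{2}$ exactly, and absorb the correction term $\eta(1-\eta)/(K-1)$ using $\eta(1-\eta)\le 1/4$, which is your Step~3 after dividing through by $K-1$. The only cosmetic difference is in Step~1: you bound $\|\PU\PI\|_{\op}^2$ by a single Frobenius norm, whereas the paper applies the triangle inequality to the operator norms of the individual blocks $g\in\mathcal I$; both give the same bound $s\mu r/G$.
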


In practice, both $\PA$ and $\mathcal{I}$ are unknown.
We therefore consider the unconstrained noiseless program
\begin{equation}\label{eq:problem-original}
	\min_{\bL,\bS}\ \lambda_L\|\bL\|_{*} + \lambda_S \|\bS\|_{\blk,1} \quad\text{s.t.}\quad \bL+\bS=\bD_0.
\end{equation}
The following exact recovery result shows that, although the optimizer of~\eqref{eq:problem-original} may not be unique, every optimal low-rank component recovers the same shared row space $\mathcal{V}_{\bA}$.
\begin{theorem}\label{th:PV-noiseless}
	Assume $\delta=0$, and suppose the noiseless decomposition $\bD_0 = \bL_0 + \bS_0$ satisfies Assumption~\ref{ass:common-A}-\ref{ass:separation}.
	Let constants $a,b\in(0,1)$ satisfy $\sqrt{\mu rs/G}<ab/(1+a)^2$.
	If
	\begin{equation}\label{eq:lambda-ratio}
		\frac{\lambda_S}{\lambda_L}\in\biggl(
		\frac{\sqrt{\mu r/G}}{b-(1+a)\sqrt{\mu rs/G}},
		\ \frac{a}{(1+a)\sqrt{s}}
		\biggr),
	\end{equation}
	then, for \emph{any} optimal solution $(\hL,\hS)$ of~\eqref{eq:problem-original} with compact SVD $\hL=\hU\hSigma\hV^\T$,
	the row-space projector $P_{\hV}:=\hV\hV^\T$ is uniquely determined and satisfies
    \begin{equation}\label{thm1}
    P_{\hV}=P_{\bA}.
    \end{equation}
\end{theorem}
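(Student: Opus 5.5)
The plan is a dual-certificate argument in the style of \citet{chandrasekaran2011rank}, modified so that it certifies only the \emph{shift-invariant} part of the decomposition rather than the raw pair $(\bL_0,\bS_0)$. Every feasible point of~\eqref{eq:problem-original} has the form $(\bL_0+\bH,\bS_0-\bH)$. The target is a structural statement about every optimal perturbation $\bH$:
\[
\PIC(\bH)=\bzero,\qquad \bH\PAC=\bzero,\qquad\text{that is,}\qquad \bH\in\mathcal{M}.
\]
Once this holds, \eqref{thm1} follows directly.
\begin{itemize}
\item By~\eqref{eq:L0}, $\bL_0$ vanishes on the blocks in $\mathcal{I}$. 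Hence $\PIC(\hL)=\PIC(\bL_0)=\bL_0$, which has rank $r$ by~\eqref{eq:rank-r}, so $\rank(\hL)\ge r$.
\item $\hL\PAC=\bL_0\PAC+\bH\PAC=\bzero$, so $\row(\hL)\subseteq\row(\bA)$, which has dimension $r$.
\end{itemize}
Together these give $\row(\hL)=\row(\bA)$, i.e.\ $P_{\hV}=\PA$, for every optimizer. This also explains why only the projector, and not $(\hL,\hS)$ itself, is unique.

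The key step is to construct a dual matrix $\bY$ adapted to the shift ambiguity. Write $\bL_0=\bU\bSigma\bV^\T$ with $\bV\bV^\T=\PA$. I would look for $\bY=\lambda_L(\bU\bV^\T+\bW)$ with $\PT(\bW)=\bzero$ and $\|\bW\|_{\op}\le b<1$. The block conditions are:
\begin{itemize}
\item on each $g\in\mathcal{I}$, the row-space part $\Pg(\bY)\PA$ is left free;
\item on each $g\in\mathcal{I}$, the orthogonal part satisfies $\|\Pg(\bY)\PAC\|_{\F}\le\lambda_S$, and more precisely $\Pg(\bY)\PAC=\lambda_S\,\Pg(\bS_0)\PAC/\|\Pg(\bS_0)\|_{\F}$ where that block is nonzero;
\item on each $g\in\mathcal{I}^c$, $\|\Pg(\bY)\|_{\F}<\lambda_S$ strictly.
\end{itemize}
Because $\bU$ is supported on $\mathcal{I}^c$-blocks, $\bU\bV^\T$ contributes nothing on $\mathcal{I}$. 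So $\bW$ is built on $\mathcal{I}$ as the least-squares (minimal-norm) correction $\bW=\PTC\bigl(\sum_{k\ge0}(\PI\PT\PI)^k\bigr)(\cdot)$, which is well defined when $\rho$-type quantities are below $1$. Its size is controlled by Assumption~\ref{ass:separation} through $\|\Pg\PU\|_{\F}^2\le\mu r/G$ and $|\mathcal{I}|=s$:
\begin{itemize}
\item the operator norm is bounded by $\lambda_S\sqrt{s}(1+a)/\lambda_L$, which is below $a$ given the upper end of~\eqref{eq:lambda-ratio};
\item the $\mathcal{I}^c$-blocks are bounded by $\lambda_L\sqrt{\mu r/G}+(1+a)\lambda_S\sqrt{\mu rs/G}$, which is below $b\lambda_S$ given the lower end of~\eqref{eq:lambda-ratio}.
\end{itemize}
The hypothesis $\sqrt{\mu rs/G}<ab/(1+a)^2$ is exactly what makes this interval nonempty. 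Lemma~\ref{lem:rho-worst-case} and Lemma~\ref{lem:rho} supply the transversality $\mathcal{U}\cap\mathcal{S}_{\mathcal I}=\{0\}$ used when inverting on $\mathcal{I}$.

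Given $\bY$, compare objectives using subgradients:
\[
\lambda_L\|\bL_0+\bH\|_*+\lambda_S\|\bS_0-\bH\|_{\blk,1}\ \ge\ \text{value at }(\bL_0,\bS_0)+(\lambda_L-\|\PTC\bY\|_{\op})\|\PTC\bH\|_*+\sum_{g\in\mathcal{I}^c}(\lambda_S-\|\Pg\bY\|_{\F})\|\Pg\bH\|_{\F}.
\]
The row-space freedom on $\mathcal{I}$ blocks costs nothing, since $\bS_0-\bH$ may absorb any $\mathcal{M}$-component. Optimality then forces $\PIC(\bH)=\bzero$ and $\PTC(\bH)=\bzero$. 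With $\bH$ supported on $\mathcal{I}$ and $\bU$ vanishing there, $\PT(\bH)=\bH\PA$, so $\PTC(\bH)=\bH\PAC=\bzero$. This is exactly $\bH\in\mathcal{M}$.

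The main obstacle is the certificate step. Exact subgradient matching $\Pg(\bY)=\lambda_S\Pg(\bS_0)/\|\Pg(\bS_0)\|_{\F}$ on $\mathcal{I}$ is generally impossible, because $\mathcal{T}\cap\mathcal{S}_{\mathcal I}\ne\{0\}$. The remedy I would try first is to certify the canonical representative instead: choose, among the shifts in $\mathcal{M}$, the $\bS$ minimizing $\|\bS\|_{\blk,1}$, so that its blocks are $\bS_0\PAC$ plus an optimally chosen $\PA$-part. The first-order optimality of that choice supplies precisely the free $\PA$-components of $\Pg(\bY)$ needed for consistency. Verifying the strict inequalities for this modified $\bY$ under~\eqref{eq:lambda-ratio} is where the detailed norm bookkeeping lies.
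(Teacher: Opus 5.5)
Your endgame is correct and matches the paper: once every optimizer satisfies $\hL-\bL_0\in\mathcal{M}$, the bounds $\rank(\PIC\hL)=\rank(\bL_0)=r$ and $\row(\hL)\subseteq\row(\bA)$ give $P_{\hV}=\PA$. The gap is in the certificate step.

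\textbf{The certificate at $\bL_0$ cannot work.} With $\bY=\lambda_L(\bU\bV^\T+\bW)$ built at $\bL_0$ and $\PT(\bW)=\bzero$, you have $\bW\bV=\bzero$. Hence $\bY\PA=\lambda_L\bU\bV^\T$, which vanishes on every block $g\in\mathcal{I}$ because $\bU$ does. So the $\PA$-part of $\Pg(\bY)$ is not ``left free''; it is forced to be $\bzero$. A block-norm subgradient at $\bS_0$, however, requires it to equal $\lambda_S\Pg(\bS_0)\PA/\|\Pg(\bS_0)\|_{\F}$. Matching only the $\PAC$-part does not give a subgradient, so your displayed comparison inequality is unjustified. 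It is in fact generally false, since $(\bL_0,\bS_0)$ is typically not optimal.

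The same obstruction breaks your inversion. At $\bL_0$ one has $\PI\PT\PI(\bX)=\bX\PA$ on $\mathcal{S}_{\mathcal I}$, so $\|\PI\PT\PI\|_{\op}=1$ and your Neumann series diverges. Lemmas~\ref{lem:rho}--\ref{lem:rho-worst-case} cannot rescue this: $\rho=\|\PI\PU\PI\|_{\op}=0$ trivially, because $\bL_0$ vanishes on $\mathcal{I}$.

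\textbf{Your remedy selects the wrong representative.} Any oracle-feasible pair that admits a certificate is optimal for \eqref{eq:problem-original}, hence also for the oracle problem \eqref{eq:problem-oracle}. So the base point must minimize the \emph{joint} objective $\lambda_L\|\bL_0+\bMM\|_*+\lambda_S\|\bS_0-\bMM\|_{\blk,1}$ over $\bMM\in\mathcal{M}$, not $\|\bS_0-\bMM\|_{\blk,1}$ alone. After a nonzero shift, the column factor of $\bL_0+\bMM$ acquires nonzero $\mathcal{I}$-blocks. The nuclear-norm condition then pins the $\PA$-part of $\PI(\bY)$ to a nonzero value $\lambda_L\PI\pU\pVT$. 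Block-norm-only optimality, by contrast, supplies a subgradient whose $\PA$-part on $\mathcal{I}$ is zero, so the two conditions conflict.

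\textbf{The missing step.} You need the first-order identity at an optimizer $(\pL,\pS)$ of \eqref{eq:problem-oracle}. Using the normal cones $\{\bY:\bY\PA=\bzero\}$ and $\{\bY:\PI\bY=\bzero\}$, the KKT conditions give some $\bG\in\partial\|\pS\|_{\blk,1}$ with $\lambda_L\PI\pU\pVT=\lambda_S\PI(\bG\pPV)$. This is exactly what reconciles the two requirements on the $\PA$-part. The certificate is then as follows.
\begin{itemize}
\item Set $\bQ=\lambda_L\pU\pVT+\pPTC(\bZ)$ with $\bZ=(\bI-\PI\pPU\PI)^{-1}\lambda_S\PI(\bG\pPVC)$. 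The correction touches only the $\pPVC$-part and inverts only on the column side, so $\pPTC\bZ=\pPUC\bZ$.
\item The inversion is valid because the same identity gives $\rho'=\|\PI\pPU\PI\|_{\op}\le\lambda_S\sqrt{s}/\lambda_L<a/(1+a)$. This is where your $(1+a)$ factors actually come from.
\item For the $\mathcal{I}^c$ bound you must transfer (Block-IC) from $\bU_0$ to $\pU$. From $\PIC\pL=\bL_0$ one gets $\|\Pg\pU\|_{\F}\le\|\Pg\bU_0\|_{\F}$ for $g\in\mathcal{I}^c$. After that, your two norm bounds go through exactly as you wrote them.
\end{itemize}

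\textbf{The last step also changes.} Relative to this base point, write $\hL=\pL+\bDelta$. Since $\pU$ no longer vanishes on $\mathcal{I}$, your identity $\PT(\bH)=\bH\PA$ is unavailable. Instead, $\pPTC(\bDelta)=\bzero$ and $\PIC(\bDelta)=\bzero$ place $\bDelta\pPVC$ in $\{\pU\bX^\T:\bX\in\mathbb{R}^{p\times r}\}\cap\mathcal{S}_{\mathcal I}$. That intersection is $\{\bzero\}$ because $\rho'<1$.
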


Theorem~\ref{th:PV-noiseless} establishes exact recovery of the shared row-space projector in a nonidentifiable low-rank plus block-sparse decomposition induced by federated LoRA contrasts. The main challenge is to show that this row-space conclusion remains valid after the oracle constraints on $\PA$ and $\mathcal I$ are removed in program~\eqref{eq:problem-original}.
We address this by constructing a dual certificate at an arbitrary oracle optimizer and showing that it certifies optimality for~\eqref{eq:problem-original}.
Thus, the oracle problem~\eqref{eq:problem-oracle} is used only as a proof tool to reveal the identifiable target.

The regularizer requirement~\eqref{eq:lambda-ratio} ensures that the dual certificate satisfies the required subgradient conditions.
Its upper bound keeps the correction on contaminated support within the nuclear-norm subgradient,  while its lower bound makes the block penalty large enough to control leakage on uncontaminated blocks.
With such a certificate, any optimizer of~\eqref{eq:problem-original} must agree with the oracle optimizer except possibly along the nonidentifiable shift directions in $\mathcal{M}$.
The separation condition rules out any shift that changes the component orthogonal to the oracle row space, and the identity $\PIC(\hL)=\bL_0$ ensures that the recovered row space contains the true one.
Together, these facts imply $P_{\hV}=P_{\bA}$ for every optimizer, which is precisely the quantity needed for active-set detection and collaborative refinement. 

\subsection{Stable recovery}\label{sec:stable-recovery}
Now we consider the noisy decomposition problem with nonzero approximation error $\bE_0$ and estimation error $\bE_1$. We begin by characterizing the behavior of these noises.

\begin{assumption}\label{ass:bounded-B}
	There exists a constant $0<B<\infty$ such that $\max_k\|\bB^{(k)}\|_{\F}\le B$.
\end{assumption}

\begin{assumption}\label{ass:hete-B-moment}
	Let  $\bar{\bB}:=|\mathcal{C}|^{-1}\sum_{k\in\mathcal{C}}\bB^{(k)}$ and
	\[
	\bSigma_B:= \frac{1}{|\mathcal{C}|}\sum_{k\in\mathcal{C}}(\bB^{(k)}-\bar{\bB})^{\T}(\bB^{(k)}-\bar{\bB}) \in\mathbb R^{r\times r}.
	\]
	There exists constants $0<\kappa_0\le \kappa_1<\infty$ such that $\kappa_0 \le \lambda_{r}(\bSigma_B) \le  \lambda_{1}(\bSigma_B)\le \kappa_1.$
\end{assumption}

Assumption~\ref{ass:bounded-B} complements the common structure condition in Assumption~\ref{ass:common-A}, and further yields a uniform bound on the approximation error $\bE_0$.
Assumption~\ref{ass:hete-B-moment} imposes sufficient heterogeneity in the client-specific factors $\{\bB^{(k)}\}$ to identify the shared row space $\mathcal{V}_{\bA}$, and may be regarded as a moment-type strengthening of the rank condition~\eqref{eq:rank-r}.

As discussed in Remark~\ref{remark:E1},  the term $\bE_1$ captures preliminary estimation error in $\hW^{(k)}$, optimization error from local training, model misspecification of $f^{(k)}$, and other stochastic perturbations. For concreteness, Assumption~\ref{ass:E1} imposes a sub-Gaussian condition on such preliminary estimation errors. More general tail assumptions can be handled by analogous arguments.
\begin{assumption}\label{ass:E1}
	Assume $\{\bXi_k\}_{k=1}^K$ are independent zero mean random matrices, especially each $\bXi_k$ is sub-Gaussian in the sense that
	\[
	\sup_{\|\bB\|_{\F}=1}\ \bigl\|\langle \bB,\bXi_k\rangle\bigr\|_{\psi_2}\ \le \tau_k,\quad \tau_k:= \frac{\sigma_k}{\sqrt{n_k}}
	\]
	for some constant $\tau_k>0$, where $\sigma_k>0$ and $n_k$ is the sample size of client $k$. Moreover, denote \[\tau:=\max_{k\in[K]} \tau_k.\]
\end{assumption}

\begin{theorem}[Asymptotic noisy recovery]\label{th:PV-noisy}
	Suppose the noisy decomposition $\hD=\bL_0+\bS_0+\bE$, with $\bE=\bE_0+\bE_1$, satisfies Assumption~\ref{ass:common-A}--\ref{ass:hete-B-moment}. Assume further that $\bE_1$ with the blockwise representation $\bE_1^{(j,k)} = \bXi_j -\bXi_k$ for $(j,k)\in\mathcal{G}$ satisfies Assumption~\ref{ass:E1}.
	Let $(\hL,\hS)$ be any optimal solution of the penalized problem~\eqref{eq:problem-penalized}, and let $\hPV$ be an orthogonal projector onto a dominant $r$-dimensional right singular subspace of $\hL$.
	Let $a,b\in(0,1)$ satisfy $\sqrt{\mu rs/G}<ab/(1+a)^2$ and
	\begin{equation}
		\frac{\lambda_S}{\lambda_L}\in\biggl( \frac{\sqrt{\mu r/G}}{b-(1+a)\sqrt{\mu rs/G}},\ \frac{a}{(1+a)\sqrt{s}}\biggr).
	\end{equation}
	Suppose also that, $\lambda_L\asymp K^{-1/2}$, $\delta=O(K^{-1/2})$, $\omega_{g}\asymp K^{-1}$ for $g\in\mathcal{G}$, and $K\tau^2=O(1)$.
	Then, when $K\rightarrow\infty$, we have
\begin{equation}\label{th2-1}
    \|\hPV-\PV\|_{\op}=O_{\pr}(K^{-1/2}).
\end{equation}

\end{theorem}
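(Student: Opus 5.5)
The plan is to combine a deterministic perturbation analysis of the penalized program~\eqref{eq:problem-penalized} around the oracle decomposition with a Davis--Kahan/Wedin argument for the row space. We first build the oracle reference point. Take the oracle solution $(\bar\bL,\bar\bS)$ of the constrained program~\eqref{eq:problem-noisy}, or simply the truth $(\bL_0,\bS_0)$. Reusing the dual certificate from the proof of Theorem~\ref{th:PV-noiseless}, which exists under the same ratio condition on $\lambda_S/\lambda_L$ and the same inequality $\sqrt{\mu rs/G}<ab/(1+a)^2$, we then show that any optimizer $(\hL,\hS)$ of~\eqref{eq:problem-penalized} satisfies
\[
\|\PIC(\hL-\bL_0)\|_\F+\|(\hL-\bL_0)\PAC\|_\F\lesssim \frac{\|P_\omega\bE\|_{\F}}{\lambda_L}\cdot\lambda_L+\text{(leakage terms)}.
\]
The component of the error lying along the shiftable subspace $\mathcal M$ is left uncontrolled, just as in the noiseless case.

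Concretely, we write $\hL=\bL_0+\bDelta_L$ and $\hS=\bS_0+\bDelta_S$ and use the basic inequality from optimality of the penalized objective. The weighted quadratic loss with $\omega_g\asymp K^{-1}$ yields a curvature term of order $K^{-1}\|\bDelta_L+\bDelta_S\|_\F^2$ together with a stochastic cross term $\langle P_\omega\bE,\bDelta_L+\bDelta_S\rangle$. We bound the dual norms of the noise, namely $\|P_\omega\bE\|_{\op}$ for the nuclear penalty and $\max_g\omega_g\|P_g\bE\|_\F$ for the block penalty. For $\bE_0$, Assumption~\ref{ass:bounded-B} and $\delta=O(K^{-1/2})$ give $\|P_g\bE_0\|_\F\le 2B\delta$. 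For $\bE_1$, we use the structure $\bXi_j-\bXi_k$: the stacked matrix equals $(\bC\otimes\bI_q)\bXi$, where $\bC$ is the $G\times K$ incidence matrix with $\|\bC\|_{\op}=\sqrt K$, and $\bXi$ stacks the $K$ independent sub-Gaussian blocks. This gives $\|\bE_1\|_{\op}\lesssim\sqrt K\cdot\tau\sqrt{K+p+q}$ with high probability. With $\omega_g\asymp K^{-1}$ and $K\tau^2=O(1)$, we get $\|P_\omega\bE\|_{\op}=O_\pr(K^{-1/2})$, at least treating $p,q$ as fixed. We then verify that this is dominated by $\lambda_L\asymp K^{-1/2}$ for a suitable constant, and similarly for the block dual norm, so that the standard decomposable-regularizer argument applies.

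The next step transfers the error bound to the row space. Since $\PIC(\bL_0)$ consists of the benign blocks $(\bB^{(j)}-\bB^{(k)})\bA$, Assumption~\ref{ass:hete-B-moment} gives
\[
\PIC(\bL_0)^\T\PIC(\bL_0)=\bA^\T\Bigl(\sum_{j<k\in\mathcal C}(\bB^{(j)}-\bB^{(k)})^\T(\bB^{(j)}-\bB^{(k)})\Bigr)\bA=|\mathcal C|^2\bA^\T\bSigma_B\bA,
\]
so $\sigma_r(\PIC\bL_0)\ge|\mathcal C|\sqrt{\kappa_0}\asymp K$. We apply Wedin's $\sin\Theta$ theorem to the matrix $\hL$ restricted to the benign blocks, combined with the control of $\hL\PAC$. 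The row space of $\hL$ contains the row space of $\PIC\hL$, which is close to $\PA$ with gap $K$, and the $\PAC$ component is small. This yields
\[
\|\hPV-\PA\|_{\op}\lesssim \frac{\|\PIC(\hL-\bL_0)\|_{\op}+\|\hL\PAC\|_{\op}}{K}.
\]
Hence it suffices to show that the numerator is $O_\pr(K^{1/2})$, which is exactly the scale of the unweighted noise $\|\bE\|_{\op}\asymp K^{1/2}$ after undoing the $K^{-1}$ weights.

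I expect the main obstacle to be the first step: controlling $\hL\PAC$ and $\PIC(\hL-\bL_0)$ in operator norm at the sharp $K^{1/2}$ scale, rather than in a Frobenius norm that picks up extra $\sqrt r$ or $\sqrt G$ factors, while the error along $\mathcal M$ remains unidentified. To handle this, I would first try to mimic the noiseless certificate argument with a perturbed certificate $\bG+P_\omega\bE/\lambda_L$. The ratio condition leaves slack $1-b$ and $a$ in the subgradient inequalities, and this slack absorbs the noise once $\lambda_L$ is a large constant multiple of $\|P_\omega\bE\|_{\op}$. The result is a restricted strong convexity inequality on the quotient by $\mathcal M$, using $\rho<1$ from Lemma~\ref{lem:rho-worst-case} and the transversality of Lemma~\ref{lem:rho}.
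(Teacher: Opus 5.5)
You have chosen well-targeted quantities: $\PIC(\hL-\bL_0)$ and $\hL\PAC$ both annihilate $\mathcal M$, and a $\sin\Theta$ step with gap $\sigma_r(\bL_0)\ge|\mathcal C|\sqrt{\kappa_0}$ is in the spirit of the paper. However, the step that is supposed to deliver $\|\hL\PAC\|=O_{\pr}(\sqrt K)$ fails under the theorem's hypotheses.

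\textbf{Why the domination step fails.} You want $\lambda_L$ and $\lambda_S$ to dominate the dual norms $\|\Pomega\bE\|_{\op}$ and $\max_g\omega_g\|\Pg(\bE)\|_{\F}$, so that a decomposable-regularizer cone argument applies. Take the admissible case $K\tau^2\asymp1$.
\begin{itemize}
\item Nuclear part: $\|\Pomega\bE_1\|_{\op}$ is itself of exact order $K^{-1/2}$. Domination therefore needs a large constant in $\lambda_L$, which the hypothesis $\lambda_L\asymp K^{-1/2}$ does not give you.
\item Block part: domination is impossible when $s\asymp G\asymp K^2$. The ratio condition forces $\lambda_S<a\lambda_L/\{(1+a)\sqrt s\}\asymp K^{-3/2}$. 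In contrast, $\max_g\omega_g\|\Pg(\bE_1)\|_{\F}\asymp K^{-1}\max_{j<k}\|\bXi_j-\bXi_k\|_{\F}$ is of order $K^{-3/2}\sqrt{\log K}$ for Gaussian $\bXi_k$.
\end{itemize}
So no cone condition is available. The ``restricted strong convexity on the quotient by $\mathcal M$'' that your last paragraph relies on is asserted, not established, and the loss only has curvature in the direction $\bDelta_L+\bDelta_S$.

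\textbf{The reference point.} The certificate is built at an optimizer $(\pL,\pS)$ of the noiseless oracle program~\eqref{eq:problem-oracle}, and its construction uses oracle optimality. It generally does not exist at the truth $(\bL_0,\bS_0)$. Since $\bL_0$ vanishes on $\mathcal I$, any $\bQ$ with $\PT(\bQ)=\lambda_L\bU\bV^\T$ satisfies $\PI(\bQ)\bV=\lambda_L\PI\bU=\bzero$. That would force $\bS_0^{(g)}\PA=\bzero$ on every nonzero contaminated block, which typically fails.

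\textbf{The missing idea.} Neither domination nor restricted strong convexity is needed. The paper combines the basic inequality for~\eqref{eq:problem-penalized} with the certificate inequality at $(\pL,\pS)$. It then absorbs $\langle\bQ,\hR\rangle$ into $\tfrac12\langle\hR,\Pomega\hR\rangle$ by Cauchy--Schwarz, where $\hR=\hD-\hL-\hS$. With $\bDelta_L=\hL-\pL$ and $\bDelta_S=\hS-\pS$, this yields the slow-rate leakage bound
\[
(1-a)\lambda_L\|\pPTC\bDelta_L\|_*+(1-b)\lambda_S\|\PIC\bDelta_S\|_{\blk,1}\le\tfrac12\|\Pomega^{1/2}\bE-\Pomega^{-1/2}\bQ\|_{\F}^2=O_{\pr}(1),
\]
valid for any admissible constants.

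The paper then bounds $\hL\pPVC=\bDelta_L\pPVC$ directly in Frobenius norm. Right multiplication by $\pPVC$ annihilates $\mathcal M$, and this is how the shift ambiguity is disposed of.
\begin{itemize}
\item Decoupling: the $\pPU$- and $\PI$-components of $\bDelta\pPVC$, with $\bDelta=\bDelta_L+\bDelta_S$, are separated through $\|\pPU\PI\|_{\op}^2=\|\PI\pPU\PI\|_{\op}\le a/(1+a)$.
\item Noise: it enters through $\bDelta=\bE-\hR$. The KKT bounds $\|\hR\|_{\op}\le\lambda_L/\omega_{\min}$ and $\|\PI\hR\|_{\F}\le\sqrt s\,\lambda_S/\omega_{\min}$ are both $O(\sqrt K)$.
\item Off-support term: $\pPU\PIC\bDelta_S\pPVC$ is controlled by the block incoherence of $\pU$ on $\mathcal I^c$ times the leakage bound.
\end{itemize}
Every piece is $O_{\pr}(\sqrt K)$ with $p,q,r$ fixed. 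Frobenius norms therefore cost only constants, and the operator-norm sharpening you were worried about is unnecessary. Davis--Kahan with $\|\hL\|_{\op}=O_{\pr}(K)$ and $\sigma_r(\pL)\gtrsim K$ then gives the $K^{-1/2}$ rate.
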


The preceding noisy analysis shows that the exact-recovery phenomenon is stable under the perturbations arising in federated LoRA, namely imperfect approximation of the common row space and preliminary local estimation error.
The rate $\|\hPV-\PV\|_{\op}=O_{\pr}(K^{-1/2})$, showing that increasing the number of clients sharpens estimation of the shared adaptation subspace, even when the proportion of contaminated clients remains fixed.

\subsection{Collaborative-set recovery}\label{sec:S-estimate}
In this subsection, we show that the estimator
\[
\widehat{\mathcal{C}}_\alpha:=\Bigl\{k \in [K]: \frac{1}{K-1}\sum_{j\neq k} \idop\{\|\hS_{\bA^\perp}^{(j,k)}\|_{\F} \le \tau_n\}\ge \alpha \Bigr\}
\]
given by~\eqref{eq:A-hat} recovers the benign client set $\mathcal{C}$ under a mild signal-gap condition.
Recall that for any benign block $g=(j,k)\in\mathcal{G}$ with $j,k\in\mathcal{C}$, the block $\bD_0^{(g)}$ lies in the row space of $\bA$, and hence $\bS_{\bA^\perp}^{(j,k)}=\bzero$ for every benign pair $j\neq k$.
However, for $(j,k)$ involving at least one contaminated client, the behavior of $\bS_{\bA^\perp}^{(j,k)}$ is not automatically determined, as discussed at the end of Section~\ref{sec:construct-D}.
For this, we define the minimum mixed-pair signal level
\[
	\beta_{\min}:=\min_{\substack{j\in\mathcal{C},\,k\in\mathcal{C}^c}}\|\bS_{\bA^\perp}^{(j,k)}\|_{\F},
\]
and require it to dominate the uniform projected estimation error
\[
	\varepsilon_{\sup}:=\max_{g\in\mathcal{G}}\|\Pg(\hS_{\bA^\perp}-\bS_{\bA^\perp})\|_{\F}.
\]
The collaborative set recovery argument then follows.
We first establish a high probability bound for $\varepsilon_{\sup}$.
\begin{lemma}\label{lem:epsilon-max}
	Suppose the assumptions of Theorem~\ref{th:PV-noisy} hold. Then
	\[
	\varepsilon_{\sup} =O_{\pr}\Bigl(\sqrt{\frac{\log K}{K}}\Bigr).
	\]
\end{lemma}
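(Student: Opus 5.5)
We bound $\varepsilon_{\sup}$ by splitting the blockwise error into a subspace-estimation part and a noise part. For each $g=(j,k)\in\mathcal{G}$, since $\hS_{\bA^\perp}=\hD\hPVC$ and $\hD=\bD_0+\bE$ with $\bS_{\bA^\perp}=\bD_0\PAC$, we write
\[
\Pg(\hS_{\bA^\perp}-\bS_{\bA^\perp})
=\Pg(\bD_0)\,(\hPVC-\PAC)+\Pg(\bE)\,\hPVC
=\Pg(\bD_0)\,(\PA-\hPV)+\Pg(\bE)\,\hPVC .
\]
Taking Frobenius norms blockwise gives
\[
\varepsilon_{\sup}\le \max_g\|\Pg(\bD_0)\|_{\F}\,\|\hPV-\PA\|_{\op}+\max_g\|\Pg(\bE)\|_{\F}.
\]
It therefore suffices to bound three quantities: the size of the population blocks, the subspace error, and the uniform noise level.

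For the first term, $\Pg(\bD_0)=\bB^{(j)}\bA^{(j)}-\bB^{(k)}\bA^{(k)}$ (or its benign analogue). With $\bA^{(k),\T}\in\mathbb{O}_{p,r}$ and Assumption~\ref{ass:bounded-B}, this block is at most $2B$ uniformly in $g$. Theorem~\ref{th:PV-noisy} gives $\|\hPV-\PA\|_{\op}=O_{\pr}(K^{-1/2})$, so the first term is $O_{\pr}(K^{-1/2})$, which is of smaller order than $\sqrt{\log K/K}$.

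For the noise term, write $\bE=\bE_0+\bE_1$. The deterministic part obeys $\|\bE_0^{(j,k)}\|_{\F}\le 2\delta B=O(K^{-1/2})$ because $\delta=O(K^{-1/2})$. For the stochastic part, $\|\bE_1^{(j,k)}\|_{\F}\le\|\bXi_j\|_{\F}+\|\bXi_k\|_{\F}$, so it suffices to control $\max_{k\in[K]}\|\bXi_k\|_{\F}$. This is where the $\sqrt{\log K}$ factor arises.

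We expect this maximal inequality to be the main obstacle. Assumption~\ref{ass:E1} gives sub-Gaussian control only of one-dimensional projections $\langle\bB,\bXi_k\rangle$, not of the Frobenius norm. The first approach is a standard $\epsilon$-net argument on the unit sphere of $\mathbb{R}^{q\times p}$. This gives $\pr(\|\bXi_k\|_{\F}>C\tau_k(\sqrt{qp}+t))\le e^{-t^2}$, treating $q,p,r$ as fixed while $K\to\infty$, consistent with the asymptotic regime of Theorem~\ref{th:PV-noisy}.

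A union bound over $k\in[K]$ with $t\asymp\sqrt{\log K}$ then yields $\max_k\|\bXi_k\|_{\F}=O_{\pr}(\tau\sqrt{\log K})$. Since $K\tau^2=O(1)$, we have $\tau=O(K^{-1/2})$, and hence $\max_g\|\Pg(\bE_1)\|_{\F}=O_{\pr}(\sqrt{\log K/K})$.

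If the dimension dependence were to be tracked, one would instead bound $\|\bXi_k\hPVC\|_{\F}$ directly. However, $\hPVC$ depends on all $\bXi$'s, so the crude bound $\|\Pg(\bE)\hPVC\|_{\F}\le\|\Pg(\bE)\|_{\F}$ is the safer route. Combining the three pieces gives $\varepsilon_{\sup}=O_{\pr}(K^{-1/2}+K^{-1/2}+\sqrt{\log K/K})=O_{\pr}(\sqrt{\log K/K})$.
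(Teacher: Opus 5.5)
Your proposal is correct and follows the paper's proof essentially step for step. It uses the same split $\|\Pg(\bD_0)\|_{\F}\,\|\hPV-\PA\|_{\op}+\|\Pg(\bE)\|_{\F}$, the same uniform bounds $2B$ and $2B\delta$, Theorem~\ref{th:PV-noisy} for the subspace error, and a union bound giving $O_{\pr}(\tau\sqrt{\log K})$ for the stochastic part. The only cosmetic difference is how you control the maximum: you bound $\max_k\|\bXi_k\|_{\F}$ via an $\epsilon$-net and a union over the $K$ clients, whereas the paper applies its Frobenius-norm tail bound directly to each difference $\bXi_j-\bXi_k$ and unions over the $G\le K^2$ pairs; both yield the same rate.
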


The following theorem combines this uniform error control with a mixed-pair signal gap to establish collaborative-set recovery in the noisy setting.

\begin{theorem}[Collaborative-set recovery]\label{th:set-recovery}
    Assume $|\mathcal C|=\eta K$ with $\eta>1/2$.
    Suppose
    \[
        \frac{|\mathcal C^c|-1}{K-1}<\alpha \le	\frac{|\mathcal C|-1}{K-1}
    \]
    and $\beta_{\min}>0$.

    (i) Under the assumptions of Theorem~\ref{th:PV-noiseless}, suppose further $0\le \tau_n <\beta_{\min}$,  then  $\widehat{\mathcal C}_{\alpha}=\mathcal C$.

    (ii) Under the assumptions of Theorem~\ref{th:PV-noisy}, suppose further $c\sqrt{\log K/ K} < \tau_n < \beta_{\min} - c\sqrt{\log K/ K} $ for sufficiently large $K$ and some constant $c>0$.  If $\sqrt{\log K/K}/ \beta_{\min} \to 0$, then $			\pr(\widehat{\mathcal C}_{\alpha}=\mathcal C)\to1.$
\end{theorem}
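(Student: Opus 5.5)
The argument is a counting argument on the indicator votes in \eqref{eq:A-hat}. It needs only a uniform blockwise closeness of $\hS_{\bA^\perp}$ to $\bS_{\bA^\perp}$. That closeness is exact in part (i) and holds with high probability in part (ii).

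\textbf{Population structure.} The key facts are the two consequences of \eqref{eq:decomp-original} and the definition of $\beta_{\min}$ recorded at the end of Section~\ref{sec:construct-D}. First, $\bS_{\bA^\perp}^{(j,k)}=\bzero$ whenever $j,k\in\mathcal C$, since then $\bD_0^{(j,k)}=\bL_0^{(j,k)}=\bL_0^{(j,k)}\PA$. Second, $\|\bS_{\bA^\perp}^{(j,k)}\|_{\F}\ge\beta_{\min}$ whenever exactly one of $j,k$ lies in $\mathcal C$. Pairs of two contaminated clients are left unconstrained.

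\textbf{Deterministic reduction.} Suppose $\varepsilon_{\sup}<\tau_n$ and $\varepsilon_{\sup}\le\beta_{\min}-\tau_n$, with at least one of these strict so that the mixed-pair inequality below is strict. By the triangle inequality blockwise:
\begin{itemize}
\item[(a)] for benign pairs, $\|\hS_{\bA^\perp}^{(j,k)}\|_{\F}\le\varepsilon_{\sup}\le\tau_n$;
\item[(b)] for mixed pairs, $\|\hS_{\bA^\perp}^{(j,k)}\|_{\F}\ge\beta_{\min}-\varepsilon_{\sup}>\tau_n$.
\end{itemize}
Hence a benign $k$ receives at least $|\mathcal C|-1$ votes out of $K-1$. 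Its vote fraction is therefore $\ge(|\mathcal C|-1)/(K-1)\ge\alpha$, so $k\in\widehat{\mathcal C}_\alpha$. A contaminated $k$ can only collect votes from the other contaminated clients, so its fraction is $\le(|\mathcal C^c|-1)/(K-1)<\alpha$, and $k\notin\widehat{\mathcal C}_\alpha$. This gives $\widehat{\mathcal C}_\alpha=\mathcal C$.

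\textbf{Part (i).} Theorem~\ref{th:PV-noiseless} gives $\hPV=\PA$. In the noiseless case $\hD=\bD_0$, so $\hS_{\bA^\perp}=\bD_0\PAC=\bS_{\bA^\perp}$ exactly and $\varepsilon_{\sup}=0$. The reduction then needs only $0\le\tau_n<\beta_{\min}$; the benign bound $0\le\tau_n$ is non-strict, which is allowed.

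\textbf{Part (ii).} By Lemma~\ref{lem:epsilon-max}, $\varepsilon_{\sup}=O_{\pr}(\sqrt{\log K/K})$. Choosing $c$ to be the constant from that $O_{\pr}$ statement, the event $\{\varepsilon_{\sup}\le c\sqrt{\log K/K}\}$ has probability tending to one. The assumed window $c\sqrt{\log K/K}<\tau_n<\beta_{\min}-c\sqrt{\log K/K}$ places both conditions of the reduction, strictly, on this event. The window is nonempty for large $K$ because $\sqrt{\log K/K}/\beta_{\min}\to0$. It follows that $\pr(\widehat{\mathcal C}_\alpha=\mathcal C)\to1$.

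\textbf{Main obstacle.} The only delicate point is the quantifier on $c$. The $O_{\pr}$ bound in Lemma~\ref{lem:epsilon-max} provides, for each $\epsilon>0$, some $M_\epsilon$ with $\pr(\varepsilon_{\sup}>M_\epsilon\sqrt{\log K/K})<\epsilon$. Getting probability tending to one therefore requires either that $c$ can be taken large enough, or that the margin can grow.

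I would handle this by reading the assumption as "$c$ sufficiently large", equivalently by letting $c$ grow slowly, which the condition $\sqrt{\log K/K}/\beta_{\min}\to0$ permits. Alternatively, I would note that the proof of Lemma~\ref{lem:epsilon-max} should in fact give a tail bound $\pr(\varepsilon_{\sup}>c\sqrt{\log K/K})\to0$ for a fixed large $c$. This would come from a sub-Gaussian union bound over the $G$ blocks, combined with the $O_{\pr}(K^{-1/2})$ control of $\|\hPV-\PA\|_{\op}$ in Theorem~\ref{th:PV-noisy}. Either reading closes the argument.
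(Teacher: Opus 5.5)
Your proof is correct and is essentially the paper's argument: the same deterministic vote-counting reduction under $\varepsilon_{\sup}\le\tau_n<\beta_{\min}-\varepsilon_{\sup}$, exactness $\varepsilon_{\sup}=0$ via Theorem~\ref{th:PV-noiseless} in part (i), and Lemma~\ref{lem:epsilon-max} in part (ii). In your statement of the reduction the strict inequality is on the wrong side, though your applications in (i) and (ii) use the correct version.

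Your concern about the constant $c$ is legitimate, and the paper glosses over it. Your second resolution is the right one. Take $t\asymp\log K$ in the explicit tail bounds from the proofs of Lemma~\ref{lem:epsilon-max} and Theorem~\ref{th:PV-noisy}. This gives $\pr(\varepsilon_{\sup}>c\sqrt{\log K/K})\to0$ for any fixed sufficiently large $c$, which is how the hypothesis on $c$ must be read.
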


The mixed-pair separation condition $\beta_{\min}>0$ is mild: it requires only that benign--contaminated pairs leave a nonzero orthogonal signal, while imposing no restriction on pairs of contaminated clients.
The majority condition $\eta>1/2$ ensures that the admissible interval for $\alpha$ is nonempty and prevents contaminated clients from passing the zero-block counting threshold.
As discussed earlier, this condition can be further relaxed by imposing additional assumptions on pairs of contaminated clients, such as requiring them to be sufficiently heterogeneous.

\section{Error analysis}\label{sec:error-analysis}
We now quantify the statistical gain of collaborative fine-tuning by analyzing the CLAIR refined estimator $\tW^{(k)}$.  Recall from~\eqref{eq:Wk-lora} that, for each $k \in \widehat{\mathcal{C}}_\alpha$,
\[
\tW^{(k)}=\frac{1}{|\widehat{\mathcal{C}}_\alpha|}\sum_{j\in\widehat{\mathcal{C}}_\alpha}\Bigl(\hW^{(j)}+\hL_{\bA}^{(k,j)}\Bigr).
\]
Define the local mean squared error
\[
	\mathcal{E}^{(k)}:=\E\|\hW^{(k)}-\bW^{(k)}\|_{\F}^2 =\E\|\bXi_k\|_{\F}^2,
\]
and its projected counterparts in the space of $\bA^\perp$ and $\bA$ as
\begin{equation}\label{EVC} \EVC^{(k)}:=\E\|\bXi_k\PVC\|_{\F}^2, \quad \EV^{(k)}:=\E\|\bXi_k\PV\|_{\F}^2.
\end{equation}
Further let $\theta_k$ and $1-\theta_k$ denote the respective ratios of $\EVC$ and $\EV$ to $\mathcal{E}^{(k)}$, i.e.,
\begin{equation}\label{eq:error-ratio}
		\theta_k:=	\frac{\EVC^{(k)}}{\mathcal{E}^{(k)}}, \quad 1-\theta_k=\frac{\EV^{(k)}}{\mathcal{E}^{(k)}}
	\end{equation}
When $\bXi_k$ is evenly distributed over the space spanned by $\PV\cup\PVC$, we have
\[
\theta_k=O\left( \frac{p-r}{p} \right), \quad 1-\theta_k=O\left( \frac{r}{p} \right),
\]
In later analysis, we will show that $\EVC^{(k)}$ corresponds to the noise reducible by CLAIR, whereas $\EV^{(k)}$ is not reducible. Consequently, in the LoRA setting where $r \ll p$, CLAIR can yield substantial estimation gains.

We now introduce a quantity that measures the oracle CLAIR gain for client $k$, along with its ratio to $\EVC^{(k)}$:
\begin{equation}\label{HFL}
  \mathcal{H}^{(k)}:=\EVC^{(k)}-\frac{1}{|\mathcal{C}|^2}\sum_{j\in\mathcal{C}}\EVC^{(j)},\quad \phi_k:=\frac{ \mathcal{H}^{(k)}}{ \EVC^{(k)}}.
\end{equation}
Here, the quantity $\mathcal{H}^{(k)}$ measures the oracle collaborative gain
in the orthogonal complement of the shared row space, as will be illustrated in Theorem~\ref{th:oracle-mse-decomp}. Furthermore, $\phi_k$ measures the fraction of the total reducible noise captured by this oracle gain; clearly, $\phi_k < 1$.
Although $\mathcal{H}^{(k)}$ and $\phi_k$ are not necessarily positive, such as in the case of a benign client with an unusually small orthogonal complement error, they are typically on the order of $\EVC^{(k)}$. 
In particular, if $\{\bXi_j\}_{j\in\mathcal{C}}$ are identically distributed, then
\[
    \mathcal{H}^{(k)}= \frac{|\mathcal{C}|-1}{|\mathcal{C}|} \EVC^{(k)}, \quad \phi_k=\frac{|\mathcal{C}|-1}{|\mathcal{C}|}.
\]

We now state the estimation gain of CLAIR under an oracle setting where $\hPV=\PV$ and $\delta=0$.
In practice, estimating $\PV$ incurs an additional cost and introduces extra misalignment error, which will be discussed in Theorem~ \ref{prop:noisy-gain} and \ref{th:refined-W-order}.
\begin{theorem}[Oracle CLAIR gain]\label{th:oracle-mse-decomp}
    Assume the conditions of Theorem ~\ref{th:PV-noisy} hold with $\delta=0$ and assume $\hPV=\PV$.
	Fix $k\in\mathcal{C}$, then the exact oracle MSE gain of the refined estimator relative to the local estimator is
	\begin{equation}\label{eq:oracle-gain-general}
		\E\|\tW^{(k)}-\bW^{(k)}\|_{\F}^2 =(1-\phi_k\theta_k) \	\E\|\hW^{(k)}-\bW^{(k)}\|_{\F}^2.
	\end{equation}
    In particular, if $\{\bXi_j\}_{j\in\mathcal{C}}$ are identically distributed and evenly distributed over the space $\PV \cup \PVC$, then, for any $k \in \mathcal{C}$, we have
    \[
    \E\|\tW^{(k)}-\bW^{(k)}\|_{\F}^2 =\left[1-\frac{(|\mathcal{C}|-1)(p-r)}{|\mathcal{C}|p}\right] \	\E\|\hW^{(k)}-\bW^{(k)}\|_{\F}^2.
    \]
\end{theorem}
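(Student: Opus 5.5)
Under the oracle conditions, I will treat the collaborative set as known, $\widehat{\mathcal C}_\alpha=\mathcal C$. This is the oracle counterpart of Theorem~\ref{th:set-recovery}; it makes the averaging set deterministic, so the expectation in~\eqref{eq:oracle-gain-general} is taken over the independent errors $\{\bXi_j\}$ alone. The plan is then to compute the refined estimator in closed form, observe that it splits into a part in $\row(\bA)$ and a part in its orthogonal complement, and take expectations term by term.

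\textbf{Step 1: closed form of $\tW^{(k)}$.} Fix $k\in\mathcal C$. By the antisymmetric convention, for every $j\in\mathcal C$ we have $\hL_{\bA}^{(j,k)}=(\hW^{(j)}-\hW^{(k)})\PA$ when $\hPV=\PA$; this also holds for $j=k$, where both sides vanish. Hence
\[
\hW^{(j)}-\hL_{\bA}^{(j,k)}=\hW^{(j)}\PAC+\hW^{(k)}\PA .
\]
Averaging over $j\in\mathcal C$ gives
\[
\tW^{(k)}=\hW^{(k)}\PA+\frac{1}{|\mathcal C|}\sum_{j\in\mathcal C}\hW^{(j)}\PAC .
\]

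\textbf{Step 2: the error.} Since $\delta=0$, each $\bW^{(j)}$ with $j\in\mathcal C$ equals $\bW_0+\bB^{(j)}\bA$, and $\bA\PAC=\bzero$. Therefore $\bW^{(j)}\PAC=\bW_0\PAC=\bW^{(k)}\PAC$ for all $j\in\mathcal C$; this is the only point where the LoRA structure is used. Writing $\bW^{(k)}=\bW^{(k)}\PA+\bW^{(k)}\PAC$, the error becomes
\[
\tW^{(k)}-\bW^{(k)}=\bXi_k\PA+\frac{1}{|\mathcal C|}\sum_{j\in\mathcal C}\bXi_j\PAC .
\]

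\textbf{Step 3: expectations.} The two summands are Frobenius-orthogonal, because $\langle \bX\PA,\bY\PAC\rangle=\tr(\PA\bX^\T\bY\PAC)=0$. So the MSE splits with no cross term:
\[
\E\|\tW^{(k)}-\bW^{(k)}\|_{\F}^2=\EV^{(k)}+\E\Bigl\|\frac{1}{|\mathcal C|}\sum_{j\in\mathcal C}\bXi_j\PAC\Bigr\|_{\F}^2 .
\]
By Assumption~\ref{ass:E1} the $\bXi_j$ are independent with zero mean, so cross terms in the second expectation vanish and it equals $|\mathcal C|^{-2}\sum_{j\in\mathcal C}\EVC^{(j)}$. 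Using $\EV^{(k)}=\mathcal E^{(k)}-\EVC^{(k)}$ and the definition~\eqref{HFL}, the MSE is
\[
\mathcal E^{(k)}-\mathcal H^{(k)}=\mathcal E^{(k)}-\phi_k\EVC^{(k)}=(1-\phi_k\theta_k)\,\mathcal E^{(k)},
\]
which is~\eqref{eq:oracle-gain-general}.

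\textbf{Step 4: the special case.} If the $\bXi_j$ are identically distributed, then $\phi_k=(|\mathcal C|-1)/|\mathcal C|$, as computed after~\eqref{HFL}. If in addition the error is evenly spread across directions, then $\theta_k=(p-r)/p$ exactly, since the mass in $\PAC$ is proportional to its dimension. Substituting these two values gives the displayed formula.

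The main obstacle is the random set $\widehat{\mathcal C}_\alpha$, which depends on the $\bXi_j$. Without conditioning on $\widehat{\mathcal C}_\alpha=\mathcal C$, the independence and zero-mean step in Step 3 would fail. I would address this explicitly by stating the oracle convention $\widehat{\mathcal C}_\alpha=\mathcal C$, matching $\hPV=\PA$. If that is not accepted, I would instead argue on the event $\{\widehat{\mathcal C}_\alpha=\mathcal C\}$, whose probability tends to one by Theorem~\ref{th:set-recovery}; the identity would then hold only asymptotically rather than exactly.
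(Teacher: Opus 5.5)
Your proposal is correct and follows the paper's route. The paper also reduces the refined estimator to $\tW^{(k)}=\hW^{(k)}\hPV$ plus the benign average of $\hW^{(j)}\hPVC$, working on the event $\{\widehat{\mathcal{C}}_\alpha=\mathcal{C}\}$. It then uses $\delta=0$ to remove the $\PAC$-component of $\bW^{(j)}-\bW^{(k)}$, and splits the MSE into $\EV^{(k)}+|\mathcal{C}|^{-2}\sum_{j\in\mathcal{C}}\EVC^{(j)}$ by Frobenius orthogonality, independence and zero mean. Your explicit treatment of $\widehat{\mathcal{C}}_\alpha=\mathcal{C}$ as part of the oracle is more careful than the paper, which uses its decomposition (valid only on that event) without comment.
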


Theorem \ref{th:oracle-mse-decomp} provides three insights. First, the gain comes entirely from reducible noise in the orthogonal complement of the shared row space. 
Second, averaging over benign clients yields a $1/|\mathcal{C}|$ variance-reduction effect,
making the benefit of a larger collaborative set explicit. Third, a lower LoRA rank $r$ leads to a larger CLAIR gain.

Now we consider the practical setting with an additional noisy decomposition cost. Define the cost for client $k$ and its ratio to the total local risk $\mathcal{E}^{(k)}$ as follows:
\begin{equation} \mathcal{E}_{\mathrm{cost}}^{(k)}:=\E\|- \widebar{\bE}_{0}\PVC +\bigl(\hW^{(k)}-\widebar{\bW}\bigr)(\hPV-\PV)\|_{\F}^2, \quad \varphi_k:=	\frac{\mathcal{E}_{\mathrm{cost}}^{(k)}}{\mathcal{E}^{(k)}}.
	\end{equation}
where $\widebar{\bW}:=\sum_{j\in\mathcal{C}}\hW^{(j)}/|\mathcal{C}|$ and $\widebar{\bE}_{0}:=\sum_{j\in\mathcal{C}}\bE_{0}^{(k,j)}/|\mathcal{C}|$ with $\bE_{0}^{(j,k)} = -\bE_{0}^{(k,j)} $ and $\bE_{0}^{(k,k)}=\bzero$.
Here, $\mathcal{E}_{\mathrm{cost}}^{(k)}$ represents the additional cost of estimating $\PA$, consisting of a benign misalignment bias $- \widebar{\bE}_{0}\PVC$ and a subspace estimation error controlled by $\hPV-\PV$.
We next show that improvement of the CLAIR estimator depends on a tradeoff between the collaborative gain $\mathcal{H}^{(k)}$ and the noisy-decomposition cost $\mathcal{E}_{\mathrm{cost}}^{(k)}$.
\begin{theorem}[Noisy CLAIR gain]\label{prop:noisy-gain}
	Suppose assumptions of Theorem~\ref{th:set-recovery}-(ii) hold, $|\mathcal{C}|\ge 2$, and Assumptions~\ref{ass:bounded-B} and~\ref{ass:E1} are satisfied.
	Fix $k\in\mathcal{C}$, suppose $\phi_k>0$.
    Further assume that there exists a certain constant $\zeta_k\in(0,\phi_k)$ such that \begin{equation}\label{eq:refine-condition2}
		 \varphi_k<\left(\sqrt{1-\zeta_k\theta_k} - \sqrt{1-\phi_k\theta_k}\right)^2.
	\end{equation}
	Then the MSE of the CLAIR can be controlled by
\begin{equation}\label{eq:noisy-gain}
	\E\|\tW^{(k)}-\bW^{(k)}\|_{\F}^2
		\le (1-\zeta_k\theta_k) \ \E\|\hW^{(k)}-\bW^{(k)}\|_{\F}^2.
	\end{equation}
\end{theorem}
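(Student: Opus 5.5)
The plan is to decompose the refinement error $\tW^{(k)}-\bW^{(k)}$ into an oracle part, whose second moment is given exactly by Theorem~\ref{th:oracle-mse-decomp}, plus a cost part whose second moment is $\mathcal{E}_{\mathrm{cost}}^{(k)}$. Then apply Minkowski's inequality in $L^2$ and use condition~\eqref{eq:refine-condition2} to close the bound.

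\textbf{Step 1: reduce to $\widehat{\mathcal C}_\alpha=\mathcal C$.} By Theorem~\ref{th:set-recovery}(ii), $\pr(\widehat{\mathcal C}_\alpha=\mathcal C)\to1$. On this event, \eqref{eq:Wk-lora} can be rewritten algebraically. Using antisymmetry, $\hL_{\bA}^{(k,j)}=(\hW^{(k)}-\hW^{(j)})\hPV$. Hence
\[
\tW^{(k)}=\frac{1}{|\mathcal C|}\sum_{j\in\mathcal C}\bigl(\hW^{(j)}+(\hW^{(k)}-\hW^{(j)})\hPV\bigr)=\widebar{\bW}\hPVC+\hW^{(k)}\hPV .
\]

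\textbf{Step 2: split the error.} Write $\hPV=\PV+(\hPV-\PV)$. This gives
\[
\tW^{(k)}-\bW^{(k)}=\bigl[\widebar{\bW}\PVC+\hW^{(k)}\PV-\bW^{(k)}\bigr]+\bigl(\hW^{(k)}-\widebar{\bW}\bigr)(\hPV-\PV).
\]
For $j\in\mathcal C$, the population contrasts satisfy $\bW^{(j)}-\bW^{(k)}=\bL_0^{(j,k)}+\bE_0^{(j,k)}$ with $\bL_0^{(j,k)}\PVC=\bzero$. Therefore
\[
\widebar{\bW}\PVC=\bW^{(k)}\PVC+\frac{1}{|\mathcal C|}\sum_{j\in\mathcal C}\bXi_j\PVC-\widebar{\bE}_0\PVC ,
\]
with the sign convention $\bE_0^{(j,k)}=-\bE_0^{(k,j)}$. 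The first bracket then equals $\bigl[\bXi_k\PV+|\mathcal C|^{-1}\sum_{j}\bXi_j\PVC\bigr]-\widebar{\bE}_0\PVC$.

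Call $T_1:=\bXi_k\PV+|\mathcal C|^{-1}\sum_j\bXi_j\PVC$ the oracle term. The remainder $T_2:=-\widebar{\bE}_0\PVC+(\hW^{(k)}-\widebar{\bW})(\hPV-\PV)$ is exactly the quantity whose second moment defines $\mathcal{E}_{\mathrm{cost}}^{(k)}$.

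\textbf{Step 3: compute $\E\|T_1\|_\F^2$.} The two pieces of $T_1$ are orthogonal in Frobenius inner product, because $\PV\PVC=\bzero$. The $\bXi_j$ are independent and mean zero (Assumption~\ref{ass:E1}). Hence
\[
\E\|T_1\|_\F^2=\EV^{(k)}+|\mathcal C|^{-2}\sum_j\EVC^{(j)}=\mathcal E^{(k)}-\mathcal H^{(k)}=(1-\phi_k\theta_k)\mathcal E^{(k)},
\]
which is the computation of Theorem~\ref{th:oracle-mse-decomp}.

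\textbf{Step 4: combine.} Minkowski's inequality gives
\[
\bigl(\E\|T_1+T_2\|_\F^2\bigr)^{1/2}\le\sqrt{(1-\phi_k\theta_k)\mathcal E^{(k)}}+\sqrt{\varphi_k\mathcal E^{(k)}} .
\]
Condition~\eqref{eq:refine-condition2} says $\sqrt{\varphi_k}<\sqrt{1-\zeta_k\theta_k}-\sqrt{1-\phi_k\theta_k}$. The right-hand side is therefore strictly below $\sqrt{(1-\zeta_k\theta_k)\mathcal E^{(k)}}$, and squaring gives \eqref{eq:noisy-gain}.

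\textbf{Main obstacle.} The hardest part is making Step 1 rigorous in expectation rather than only on the high-probability event. On the complement $\{\widehat{\mathcal C}_\alpha\neq\mathcal C\}$ the algebra changes, since the average runs over a different set. There are two options, and I would try the first.

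The first is to interpret the statement with $\mathcal E_{\mathrm{cost}}^{(k)}$ evaluated through the same indicator decomposition. The strict inequality in \eqref{eq:refine-condition2} leaves slack. One bounds $\E[\|\tW^{(k)}-\bW^{(k)}\|_\F^2\idop\{\widehat{\mathcal C}_\alpha\ne\mathcal C\}]$ by Cauchy--Schwarz, using sub-Gaussian fourth moments (Assumption~\ref{ass:E1}), boundedness of the $\bB^{(k)}$ (Assumption~\ref{ass:bounded-B}), and $\|\hPV\|_{\op}\le1$. This contribution vanishes since the probability of the bad event tends to zero, so it is absorbed by the slack for $K$ large.

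The second, fallback option is to state the bound conditionally on the event $\{\widehat{\mathcal C}_\alpha=\mathcal C\}$.
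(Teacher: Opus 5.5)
Your Steps 1--4 reproduce the paper's proof essentially line for line. The paper rewrites $\tW^{(k)}=\hW^{(k)}\hPV+\widebar{\bW}\hPVC$ on $\{\widehat{\mathcal C}_\alpha=\mathcal C\}$ (Proposition~\ref{supp:prop:W-tilde-decomp}). It splits the error into the same $\bT_k=\bXi_k\PV+\widebar{\bXi}\PVC$ and $\bR_k=-\widebar{\bE}_{0}\PVC+(\hW^{(k)}-\widebar{\bW})(\hPV-\PV)$, and uses $\E\|\bT_k\|_\F^2=(1-\phi_k\theta_k)\mathcal E^{(k)}$ and $\E\|\bR_k\|_\F^2=\varphi_k\mathcal E^{(k)}$. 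It then closes with the Cauchy--Schwarz/Minkowski bound and \eqref{eq:refine-condition2}. The paper never treats the complement event. It states the identity on the event and then writes $\E\|\tW^{(k)}-\bW^{(k)}\|_\F^2=\E\|\bT_k+\bR_k\|_\F^2$ as though the identity held almost surely. So the obstacle you flag is genuine, and the paper shares it.

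Neither of your remedies works as described, however.

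\emph{Your first option.} Off the event, the estimator may average in contaminated clients, so its error contains terms $(\bW^{(j)}-\bW^{(k)})\hPVC$ of order $B$ that do not shrink with $\tau$. If $k\notin\widehat{\mathcal C}_\alpha$, then $\tW^{(k)}$ is not even defined. Cauchy--Schwarz therefore gives a bad-event contribution of order $B^2\,\pr(\widehat{\mathcal C}_\alpha\neq\mathcal C)^{1/2}$. The available slack is $\{(1-\zeta_k\theta_k)-(\sqrt{1-\phi_k\theta_k}+\sqrt{\varphi_k})^2\}\mathcal E^{(k)}$. Condition \eqref{eq:refine-condition2} does not bound this below. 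Moreover, $\mathcal E^{(k)}\lesssim pq\tau^2$ can be tiny, since the statement has no lower bound on $\mathcal E^{(k)}$; that only comes with Assumption~\ref{ass:cov-comparable}. Theorem~\ref{th:set-recovery}(ii) gives no rate, and the statement is not asymptotic in $K$, so ``absorbed for $K$ large'' is not available.

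\emph{Your second option.} Conditioning fails for a different reason. The event is a function of the $\bXi_j$'s, so conditionally they are no longer independent and mean zero, and Step~3 breaks. The crude fallback $\E[\,\cdot\mid\text{event}]\le\E[\,\cdot\,]/\pr(\text{event})$ costs a factor larger than one, which again needs slack.

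\emph{What is actually proved.} Your argument, like the paper's, rigorously yields the restricted bound
\[
\E\bigl[\|\tW^{(k)}-\bW^{(k)}\|_\F^2\,\idop\{\widehat{\mathcal C}_\alpha=\mathcal C\}\bigr]=\E\bigl[\|\bT_k+\bR_k\|_\F^2\,\idop\{\widehat{\mathcal C}_\alpha=\mathcal C\}\bigr]\le\E\|\bT_k+\bR_k\|_\F^2\le(1-\zeta_k\theta_k)\,\mathcal E^{(k)} .
\]
This needs no slack, because the integrand is nonnegative; you restrict to the event rather than condition on it. Equivalently, the theorem holds exactly for the oracle-set estimator $\hW^{(k)}\hPV+\widebar{\bW}\hPVC$. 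An unrestricted MSE statement would need three extra inputs:
\begin{itemize}
\item a convention for $\tW^{(k)}$ off the event;
\item a rate for $\pr(\widehat{\mathcal C}_\alpha\neq\mathcal C)$;
\item a quantitative margin in \eqref{eq:refine-condition2}, large enough that $B^2\,\pr(\widehat{\mathcal C}_\alpha\neq\mathcal C)^{1/2}$ is dominated by that margin times $\mathcal E^{(k)}$.
\end{itemize}
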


\begin{remark}\label{remark:SNR}
Condition~\eqref{eq:refine-condition2} can be interpreted as a signal-to-cost requirement.
   Here, a larger $\theta_k$ corresponds to a greater fraction of reducible local risk, while a larger $\phi_k$ indicates that the oracle gain accounts for a larger share of the total reducible risk. In contrast, a smaller $\varphi_k$ lowers the non-oracle cost. Thus, CLAIR benefits client $k$ when the cost $\varphi_k$ is dominated by the squared gain
$\left(\sqrt{1-\zeta_k\theta_k} - \sqrt{1-\phi_k\theta_k}\right)^2$.
In the following theorem, we show that the cost requirement \eqref{eq:refine-condition2} is trivial under a balanced benign error covariance scenario.

\end{remark}

Under a balanced benign error covariance scenario, the requirement (\ref{eq:refine-condition2}) is automatically satisfied, yielding an explicit CLAIR gain in Theorem \ref{th:refined-W-order}.

\begin{assumption}[Balanced benign error covariance]\label{ass:cov-comparable}
	Let $\bSigma_j:=\cov(\vec(\bXi_j))$ for each $j\in\mathcal C$ and $\tau$ be defined as in Assumption \ref{ass:E1}.
	Assume that there exist constants $0<\kappa_{-}\le \kappa_{+}<\infty$ such that
	\[
		\kappa_{-} \tau^2\bI_{pq} \preceq \bSigma_j \preceq \kappa_{+} \tau^2 \bI_{pq}, \qquad j\in\mathcal C.
	\]
\end{assumption}

\begin{theorem}[Noisy CLAIR gain under balanced benign error]\label{th:refined-W-order}
	Suppose assumptions of Theorem~\ref{th:set-recovery}-(ii) hold. Further assume Assumption~\ref{ass:cov-comparable} and Assumption ~\ref{ass:bounded-B} hold $B=O(\sqrt{qr})$.  Suppose $|\mathcal C|=\eta K$ for some fixed $\eta\in(0,1]$. When $r\ll p$ and $\{\bXi_j\}_{j\in\mathcal{C}}$ are evenly distributed over the space $\PV \cup \PVC$, we have  (\ref{eq:refine-condition2}) holds automatically. Moreover, the MSE of the CLAIR can be controlled by
\begin{equation}\label{eq:noisy-gain-1}
	\E\|\tW^{(k)}-\bW^{(k)}\|_{\F}^2
		\le \left(\frac{c_0}{\eta K}+\frac{c_1 r}{p-r}\right)   \E\|\hW^{(k)}-\bW^{(k)}\|_{\F}^2,
	\end{equation}
   where $c_0$ and $c_1$ are certain positive constants. 
\end{theorem}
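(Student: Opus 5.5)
We reduce Theorem~\ref{th:refined-W-order} to Theorem~\ref{prop:noisy-gain} by estimating the three ratios $\theta_k$, $\phi_k$ and $\varphi_k$ explicitly under Assumption~\ref{ass:cov-comparable}.

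\textbf{Step 1: the local risk and $\theta_k$.} Assumption~\ref{ass:cov-comparable} gives $\mathcal{E}^{(k)}=\tr(\bSigma_k)\asymp pq\tau^2$. The projected risks are traces of $\bSigma_k$ compressed to $\bI_q\otimes \PVC$ and $\bI_q\otimes\PV$. Hence
\[
\EVC^{(k)}\in[\kappa_-,\kappa_+]\,q(p-r)\tau^2, \qquad \EV^{(k)}\in[\kappa_-,\kappa_+]\,qr\tau^2 .
\]
With the evenness hypothesis this yields $1-\theta_k\asymp r/p$ and $\theta_k\asymp (p-r)/p$.

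\textbf{Step 2: the oracle gain $\phi_k$.} By the same bounds,
\[
\frac{1}{|\mathcal C|^2}\sum_{j\in\mathcal C}\EVC^{(j)}\le \frac{\kappa_+}{\kappa_-}\cdot\frac{\EVC^{(k)}}{\eta K}.
\]
Therefore $\phi_k\ge 1-c/(\eta K)$, which is positive for large $K$, and
\[
1-\phi_k\theta_k\le (1-\theta_k)+\frac{c}{\eta K}\lesssim \frac{r}{p-r}+\frac{1}{\eta K}.
\]
This is the oracle part of \eqref{eq:noisy-gain-1}.

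\textbf{Step 3: the cost ratio $\varphi_k$ (main obstacle).} We bound $\mathcal{E}_{\mathrm{cost}}^{(k)}\le 2\E\|\widebar{\bE}_0\PVC\|_\F^2+2\E\|(\hW^{(k)}-\widebar\bW)(\hPV-\PV)\|_\F^2$ and treat the two terms separately.
\begin{itemize}
\item \emph{Misalignment term.} Since $\|\bE_0^{(k,j)}\|_\F\le 2\delta B$, this term is $O(\delta^2B^2)=O(qr/K)$, using $\delta=O(K^{-1/2})$ and $B=O(\sqrt{qr})$. Dividing by $\mathcal{E}^{(k)}\asymp pq\tau^2$ requires $\tau^2\gtrsim 1/K$, i.e.\ that the local noise is not negligible relative to $\delta$. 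I would use $K\tau^2\asymp 1$ as the working regime and note this scaling explicitly. This gives a contribution $O(r/p)$.
\item \emph{Subspace term.} This is the delicate part. Write $\hW^{(k)}-\widebar\bW=(\bW^{(k)}-\bar\bW_{\rm pop})+(\bXi_k-\bar\bXi)$.
  \begin{itemize}
  \item The signal part equals $(\bB^{(k)}-\bar\bB)\bA$ plus an $O(\delta B)$ remainder. Multiplied by $\hPV-\PV$, only the component $\PV(\hPV-\PV)=\PV\hPV\hPVC$ survives, which has norm $\lesssim\|\hPV-\PV\|_\op$ in the $\PVC$ direction.
  \item The noise part is at most $\|\bXi_k-\bar\bXi\|_\F\|\hPV-\PV\|_\op$.
  \end{itemize}
  The difficulty is passing from the $O_\pr(K^{-1/2})$ rate of Theorem~\ref{th:PV-noisy} to an expectation bound. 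I would use a truncation: on the high-probability event of Theorem~\ref{th:set-recovery}(ii) and Theorem~\ref{th:PV-noisy} use the rate, and off it use the trivial bound $\|\hPV-\PV\|_\op\le 1$ together with sub-Gaussian moment bounds (Assumption~\ref{ass:E1}) and Cauchy--Schwarz. This needs the failure probability to decay faster than the polynomial factors, so I would strengthen the stated $O_\pr$ statements to explicit tail bounds from their proofs. The result is $\E\|\cdots\|_\F^2\lesssim qB^2/K+ pq\tau^2/K$, so $\varphi_k\lesssim r/p+1/K$.
\end{itemize}

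\textbf{Step 4: verify \eqref{eq:refine-condition2} and conclude.} Choose $\zeta_k$ so that $1-\zeta_k\theta_k$ is a large constant multiple $M$ of $1-\phi_k\theta_k$. Then the right side of \eqref{eq:refine-condition2} is $(\sqrt M-1)^2(1-\phi_k\theta_k)\gtrsim (\sqrt M-1)^2\bigl(r/p+1/(\eta K)\bigr)$. Since Steps 2 and 3 show $\varphi_k$ and $1-\phi_k\theta_k$ are of the same order, this dominates $\varphi_k$ once $M$ is a large enough constant, and $\zeta_k\in(0,\phi_k)$ is guaranteed because $r\ll p$ keeps $M(1-\phi_k\theta_k)<1$. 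Theorem~\ref{prop:noisy-gain} then gives the MSE bound with factor $1-\zeta_k\theta_k=M(1-\phi_k\theta_k)\le c_0/(\eta K)+c_1r/(p-r)$, which is \eqref{eq:noisy-gain-1}.
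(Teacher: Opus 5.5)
Your proposal is correct and follows the paper's proof essentially step for step. The shared ingredients are: bounds from Assumption~\ref{ass:cov-comparable} on $1-\theta_k$ and $1-\phi_k$; a split of $\mathcal{E}_{\mathrm{cost}}^{(k)}$ into an $O(B^2\delta^2)$ misalignment part and a subspace part, the latter controlled by turning the explicit tail bound behind Theorem~\ref{th:PV-noisy} into a moment bound (the paper packages this as a fourth-moment bound on $\|\hPV-\PV\|_{\op}$ combined with Cauchy--Schwarz); and the same strengthening $K\tau^2\asymp 1$ of the stated hypothesis $K\tau^2=O(1)$ that the paper's proof also invokes.

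The only variation is the closing step. The paper reads \eqref{eq:noisy-gain-1} directly off $(\sqrt{1-\phi_k\theta_k}+\sqrt{\varphi_k})^2\le 2(1-\phi_k\theta_k+\varphi_k)$ and checks \eqref{eq:refine-condition2} separately, so it needs only upper bounds. Your choice $1-\zeta_k\theta_k=M(1-\phi_k\theta_k)$ also needs the lower bound $1-\phi_k\theta_k\gtrsim r/p+1/(\eta K)$, which does follow from Assumption~\ref{ass:cov-comparable} but should be stated.

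There are also two harmless slips: $\PV(\hPV-\PV)=-\PV\hPVC$, not $\PV\hPV\hPVC=\bzero$; and the subspace term is $O(B^2/K)$, not $O(qB^2/K)$.
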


Theorem \ref{th:refined-W-order} suggests that, even in the noisy case with estimation error, CLAIR successfully removes nearly all the error in the subspace $\PVC$ when $K$ is large. As for the irreducible component on $\PV$, this error is much smaller under the LoRA setting with $r \ll p$. Together, these results demonstrate the benefits of CLAIR.

	\section{Simulation}\label{sec:simulation}
		We illustrate the behavior of CLAIR in a multiple-response linear model. For client $k$, data are generated from
		\[
		\bY^{(k)}=\bW^{(k)}\bX^{(k)}+\bE^{(k)},
		\]
		where $\bX^{(k)}\in\mathbb{R}^{p\times n}$ has independent standard normal entries. The common backbone $\bW_0\in\mathbb{R}^{q\times p}$ and the common row factor $\bA\in\mathbb{R}^{r\times p}$ are sampled entrywise from $\mathrm{Unif}[-1,1]$, with $r=2$. For each benign client, $\bW^{(k)}=\bW_0+0.8\,\bB^{(k)}\bA$, where $\bB^{(k)}\in\mathbb{R}^{q\times r}$ is again sampled entrywise from $\mathrm{Unif}[-1,1]$.

		We allow $40\%$ of clients to be contaminated by replacing the low-rank client-specific adaptation by an unstructured perturbation, with entries drawn from $\mathrm{Unif}[-1,1]$ and scaled by  $c/\sqrt{q(p-r)}$, where the replicate-level signal $c$ is drawn uniformly from $\{3,4,5,6\}$. Thus $|\mathcal{C}|=0.6K$ and the expected orthogonal Frobenius contamination strength is comparable across dimensions. The noise covariance across the $q$ response coordinates follows an $\mathrm{AR}(1)$ structure with correlation parameter $0.25$ and scale $1$.

		We consider $(p,q)\in\{(10,10),(20,20),(50,50)\}$ and $K\in\{5,10,20\}$. The per-client sample size is set to $n=100,150,300$ for $p=10,20,50$, respectively, and each configuration is repeated over $100$ independent Monte Carlo replicates.
		Local estimators are computed by ordinary least squares. CLAIR uses pair weights $\omega_g=1/K$, majority threshold $\alpha=0.5$, and a largest-gap data-adaptive threshold $\tau_n$.
	The regularization parameters are set as $\lambda_L=c_1K^{-1/2}$ and $\lambda_S=c_2K^{-3/2}$, where the constants $c_1$ and $c_2$ are tuned on values of $(p,q,n)$.

		 We compare local OLS, CLAIR, FedAvg over all clients, and oracle FedAvg over the true benign clients, with oracle FedAvg leaving contaminated clients at their local OLS estimates. Table~\ref{tab:sim1-mse} reports the mean squared Frobenius coefficient error $\|\hW^{(k)}-\bW^{(k)}\|_{\F}^{2}$ for each method.

	\begin{table}[!ht]
		\centering
			\caption{Mean Frobenius-squared error averaged over clients and $100$ replicates, with $|\mathcal{C}^c|=0.4K$ contaminated clients.}
			\label{tab:sim1-mse}
			\begin{tabular}{ccccrrrr}
				\hline
				$(p,q)$ & $n$ & $K$ & $|\mathcal{C}^c|$ & Local & CLAIR & FedAvg (Oracle) & FedAvg \\ \hline
				&  & 5 & 2 & 1.121 & 1.109 & 6.102 & 15.262 \\
				$(10,10)$ & $100$ & 10 & 4 & 1.117 & 0.722 & 7.457 & 17.094 \\
				&  & 20 & 8 & 1.126 & 0.648 & 8.156 & 18.144 \\
				&  &  &  &  &  &  &  \\
				&  & 5 & 2 & 3.085 & 2.055 & 24.715 & 35.575 \\
				$(20,20)$ & $150$ & 10 & 4 & 3.093 & 1.735 & 29.977 & 38.623 \\
				&  & 20 & 8 & 3.097 & 1.581 & 32.617 & 41.660 \\
				&  &  &  &  &  &  &  \\
				&  & 5 & 2 & 10.052 & 6.070 & 148.871 & 179.855 \\
				$(50,50)$ & $300$ & 10 & 4 & 10.054 & 5.146 & 183.551 & 201.606 \\
				&  & 20 & 8 & 10.041 & 4.678 & 200.977 & 212.679 \\ \hline
			\end{tabular}
		\end{table}

		This experiment is deliberately challenging for naive model averaging. In all settings, FedAvg over all clients has much larger coefficient error than local OLS, as contaminated clients are averaged into the global estimator. Oracle FedAvg also performs poorly because the benign clients have client-specific low-rank perturbations, so averaging their coefficient matrices introduces substantial personalization bias. In contrast, CLAIR improves over local OLS in every reported setting.

	To further evaluate CLAIR, Table~\ref{tab:sim1-set} reports the collaborative-set recovery accuracy, $(\mathrm{TP}+\mathrm{TN})/K$, and the contaminated-client recall, $\mathrm{TN}/|\mathcal{C}^c|$, where $\mathrm{TP}$ denotes the number of benign clients correctly retained and $\mathrm{TN}$ denotes the number of contaminated clients correctly excluded. The results show that CLAIR accurately 
        identifies collaborative sets across a range of settings. In addition, Figure~\ref{fig:sim1-PV}
 presents the convergence of $\hPV$ as $K$ increases under different $(p,q,n)$ regimes, further illustrating the asymptotic behavior of CLAIR in estimating $\PA$.

	\begin{table}[!ht]
		\centering
			\caption{Collaborative-set recovery results with $|\mathcal{C}^c|=0.4K$ contaminated clients.}
			\label{tab:sim1-set}
				\begin{tabular}{cccrr}
					\hline
					$(p,q)$ & $n$ & $K$ & Accuracy & Contam. recall \\ \hline
					&  & 5 & 0.980 & 0.990 \\
					$(10,10)$ & $100$ & 10 & 1.000 & 1.000 \\
					&  & 20 & 1.000 & 1.000 \\
					&  &  &  &  \\
					&  & 5 & 1.000 & 1.000 \\
					$(20,20)$ & $150$ & 10 & 1.000 & 1.000 \\
					&  & 20 & 1.000 & 1.000 \\
					&  &  &  &  \\
					&  & 5 & 0.916 & 0.790 \\
					$(50,50)$ & $300$ & 10 & 0.920 & 0.800 \\
					&  & 20 & 0.900 & 0.750 \\ \hline
				\end{tabular}
		\end{table}

\begin{figure}[!ht]
		\centering
		\includegraphics[width=0.5\textwidth]{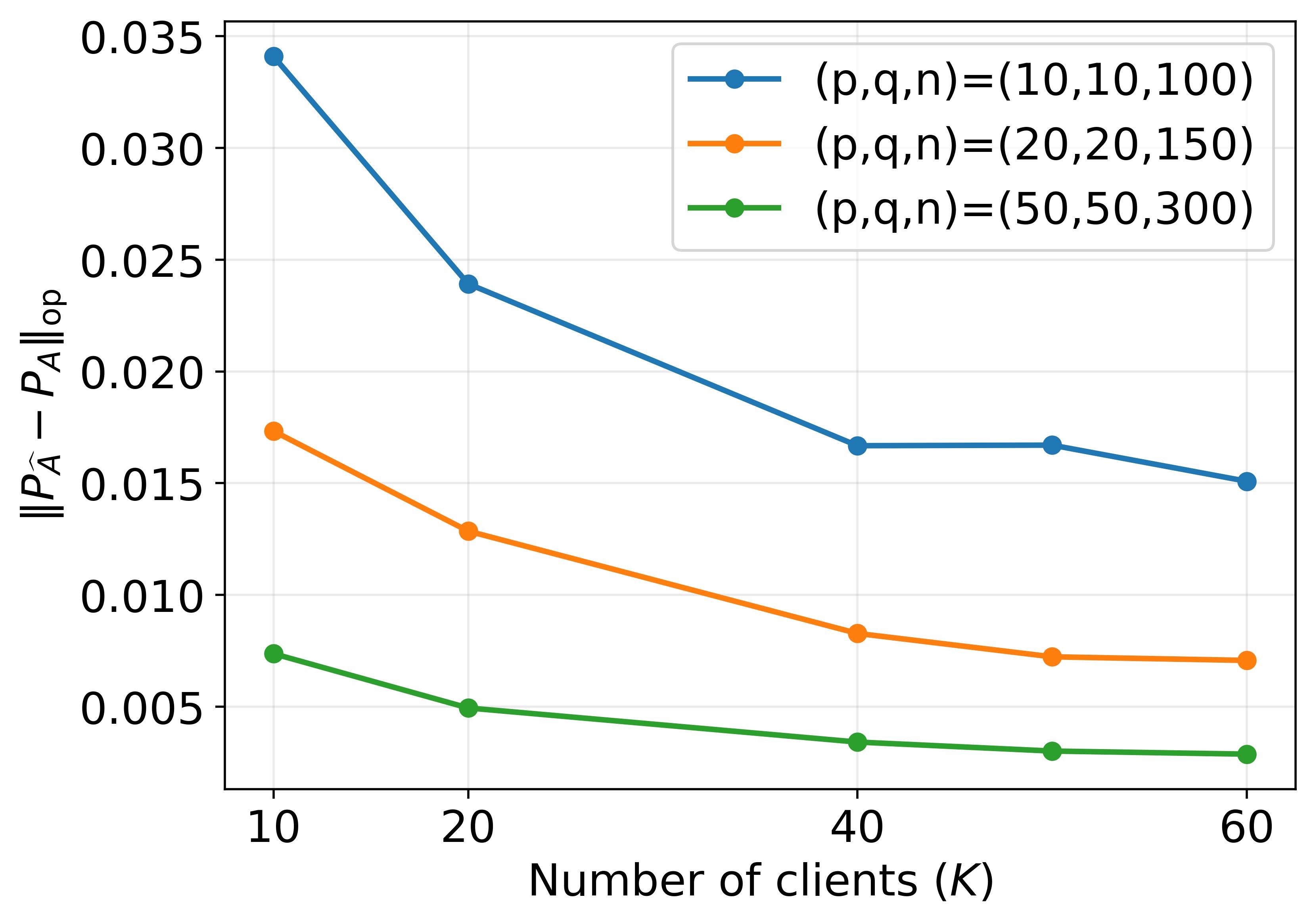}
		\caption{Estimation error of $\hPV$ compared to $K$ across $(p,q,n)$ regimes.}
		\label{fig:sim1-PV}
	\end{figure}

\section{Experiments on sequence copying tasks}\label{sec:copying}
We conduct a federated LoRA fine-tuning experiment with Transformer models, assigning each client a copying-based reasoning task following~\citep{song2025out}. 	This controlled design allows client heterogeneity, low-rank adaptation, and client-level contamination to be directly examined.
The experiment evaluates whether CLAIR can identify contaminated clients and improve benign-client prediction through selective collaborative refinement.

\subsection{Task and model}
In a copying task, the input sequence $\bx=(u_1,\ldots,u_T)$ contains a segment $\bu^\#$ of length $L<T/2$ that appears twice.
The model must use the earlier occurrence of this segment to predict the next tokens. For example, after observing ``[A], [l], [i], [c], [e]'' in $\bu^\#$, a model prompted with ``[A], [l]'' should generate ``[i], [c], [e]'' successively under next-token prediction.
Although simple to state, the task requires content-based retrieval and positional alignment, making it a controlled benchmark for a primitive form of algorithmic reasoning in LLMs.

In this experiment, all sequences have length $T=64$ and are generated over a discrete vocabulary
\[
    \mathcal A=\{a,\ldots,z,A,\ldots,Z,\&\},
\]
consisting of $52$ letter tokens and one padding token ``$\&$''.
Given a segment $\bu^\#=(u_1,\ldots,u_L)$ sampled from a client-specific distribution $P_x$ on $\mathcal A$, we construct a sequence
\[
    \bx=(*,\ \bu^\#,\ *,\ \bu^\#,\ *),
\]
where the two copies of $\bu^\#$ are placed randomly and the remaining positions $*$ are filled with random tokens.
Varying the segment length $L$ and sampling distribution $P_x$ across client-specific copying tasks naturally induces heterogeneity and can create contaminated clients.

In this experiment, each client $k$ is assigned a specific copying task. Following a standard parameter-efficient fine-tuning pipeline, we first pretrain an attention-based Transformer $f_0$ on a base copying task and then fine-tune it for each client using LoRA on local sequences $\{\bx_i^{(k)}\}_{i=1}^{n_k}$.
The Transformer contains two attention blocks, each with projection matrices $\{\bW_q,\bW_k,\bW_v,\bW_o\}$, and LoRA is applied to all eight projections.
A subsequent collaboration step, implemented either by a FedAvg-based procedure or by CLAIR, is applied separately to the client-specific adapted weights associated with each projection matrix.

The model is trained autoregressively by next-token prediction from prefixes.
The architecture and training protocol follow standard Transformer practice; implementation details are provided in the Appendix.

\subsection{Experimental settings}\label{sec:exp-settings}
We consider a federated network of nine benign clients and one contaminated client, and explore two experimental regimes: a homogeneous one where all benign clients share the same copying task, and a heterogeneous one where they follow the same copying rule but differ in token distribution $P_x$ and copy length $L$.
In both regimes, the contaminated client outputs a model adapted from a mismatched copying task compared to benign clients.

In the homogeneous regime, all benign clients share a common power-law sampling distribution \[P_x(X=a;t) = \frac{a^{-t}}{ \sum_{\ell=1}^{52}\ell^{-t}}\] for all $a\in [52]$, with exponent $t=1.1$, and uses a fixed copy length $L=16$.
In the heterogeneous regime, each benign client has a client-specific power-law exponent drawn from $[0.95,1.6]$ and a copy length drawn from $\{10,\ldots,26\}$ independently in each replicate.
In both regimes, each client trains LoRA adapter locally on $N_{\mathrm{train}}=2000$ generated sequences, using batch size $50$, one epoch, learning rate $0.001$, and LoRA rank $r=3$.

We compare CLAIR with several baselines: local LoRA, FedAvg over all clients, and oracle FedAvg over the benign client set. Importantly, oracle FedAvg relies on the true collaborative set and is therefore unavailable in practice; it is included only as an ideal contamination-free benchmark.
For CLAIR, we set $(\lambda_L,\lambda_S)=(0.5,0.4)$ in the homogeneous regime and $(0.5,0.2)$ in the heterogeneous regime, with collaborative-set detection thresholds $0.5$ and $0.01$, respectively, and voting threshold $\alpha=0.5$.
Each method is evaluated using the masked next-token error, which excludes the first copy and the first three positions of the second copy, as in \citep{song2025out}. Specifically, on the test sequences   $\mathcal{D}^{\mathrm{test}}:=\{\bx_i=(x_{i,1},\ldots,x_{i,T})\}_{i=1}^{N_{\mathrm{test}}}$, let $m_i$ be the starting position of the second copied segment and let $L_i$ be its length. Given the predictive distribution $\widehat{p}_t(\cdot\mid \bx_{i,<t})$ given by model $\widehat{f}$, define $\widehat{x}_{i,t}= \argmax_{a\in\mathcal A} \widehat{p}_t(a\mid \bx_{i,<t}).$
The masked next-token error is then
\[
	\operatorname{Err}_{\mathrm{mask}}= \frac{1}{\sum_{i=1}^{N_{\mathrm{test}}} |\mathcal M_i|} \sum_{i=1}^{N_{\mathrm{test}}} \sum_{t\in\mathcal M_i} \idop\{\widehat{x}_{i,t}\ne x_{i,t}\},
	\quad
	\mathcal M_i=\{m_i+3,\ldots,m_i+L_i-1\}.
\]
Equivalently, the reported masked next-token accuracy is
\[
	\operatorname{Acc}_{\mathrm{mask}} = 1-\operatorname{Err}_{\mathrm{mask}}.
\]

\subsection{Results}\label{sec:exp-results}
CLAIR identifies the contaminated client exactly in both regimes.
Layerwise recovery is exact for all eight attention matrices in the homogeneous regime, but less uniform in the heterogeneous regime, where benign client variation can itself induce layerwise contrasts.
We therefore report client-level CLAIR results using the estimated collaborative sets $\widehat{\mathcal C}$ rather than oracle contamination labels.

\begin{figure}[!ht]
    \centering
    \includegraphics[width=\textwidth]{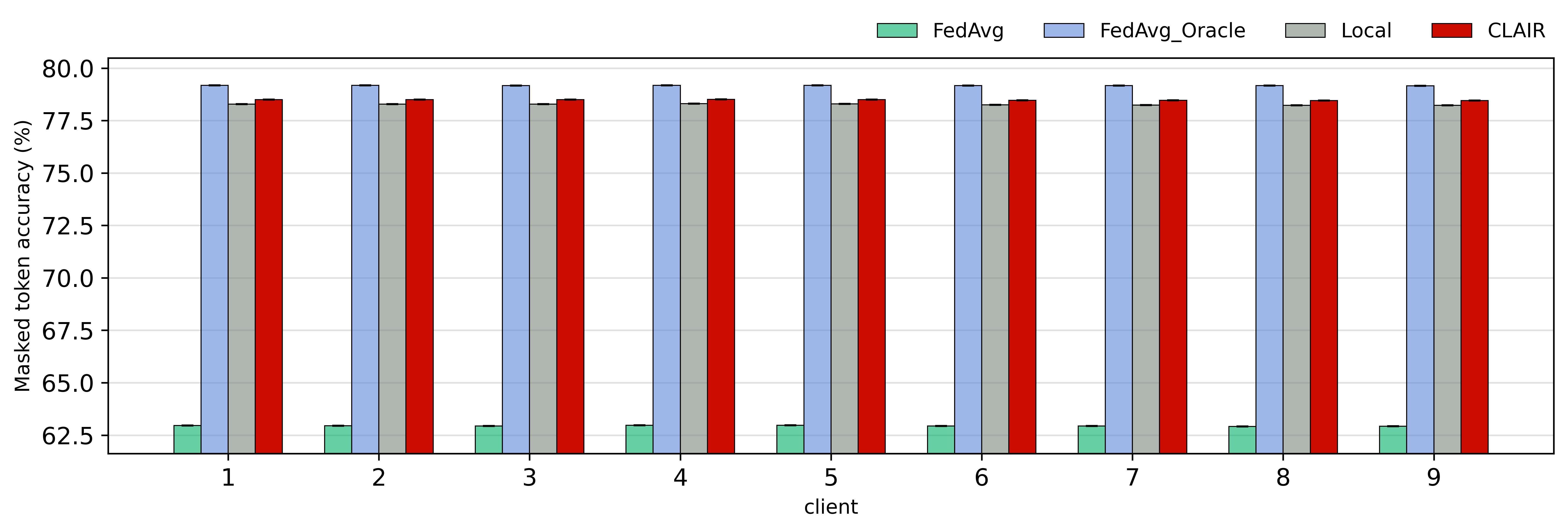}
    \includegraphics[width=\textwidth]{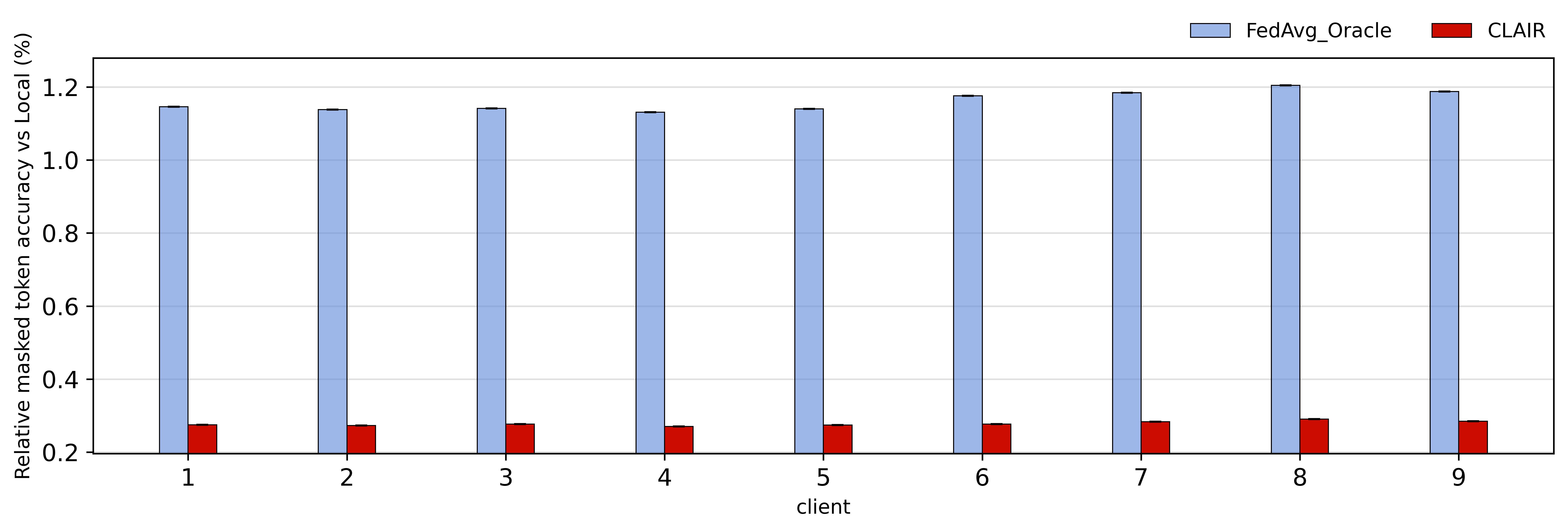}
    \caption{Homogeneous copying experiment evaluated on the common copying task.  The top panel reports client-level masked next-token accuracy averaged over $100$ replicates; the bottom panel reports relative accuracy change with respect to local LoRA fine-tuning.}
    \label{fig:sim2-homo}
\end{figure}

\begin{figure}[!ht]
    \centering
    \includegraphics[width=\textwidth]{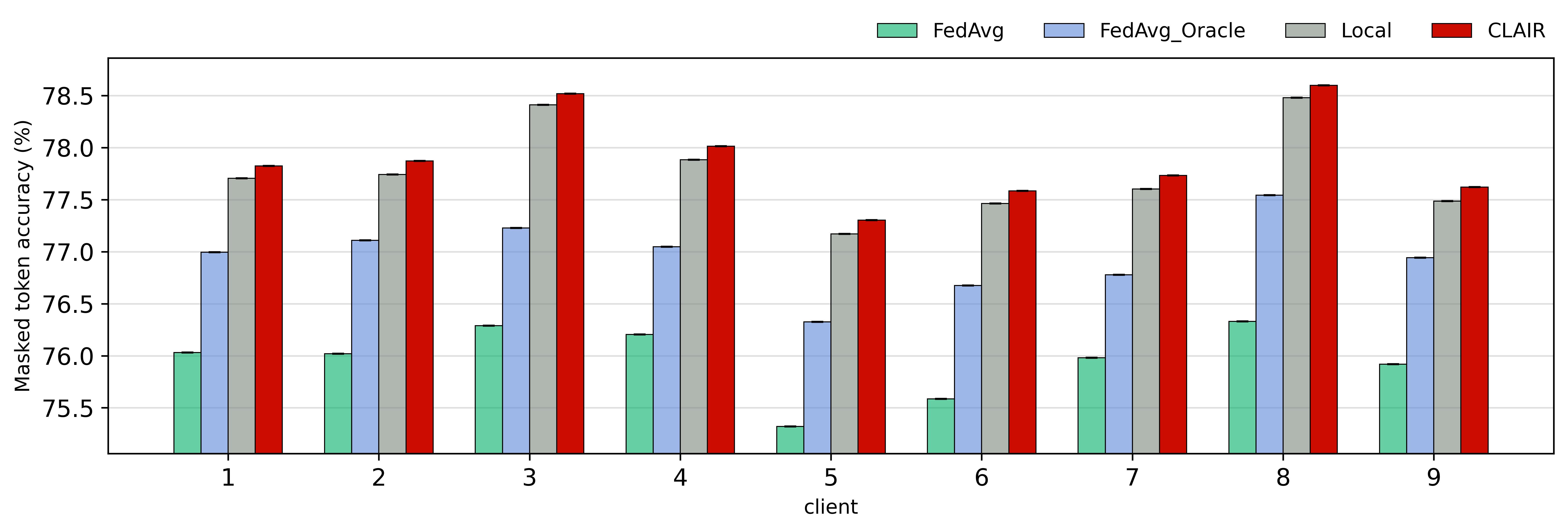}
    \includegraphics[width=\textwidth]{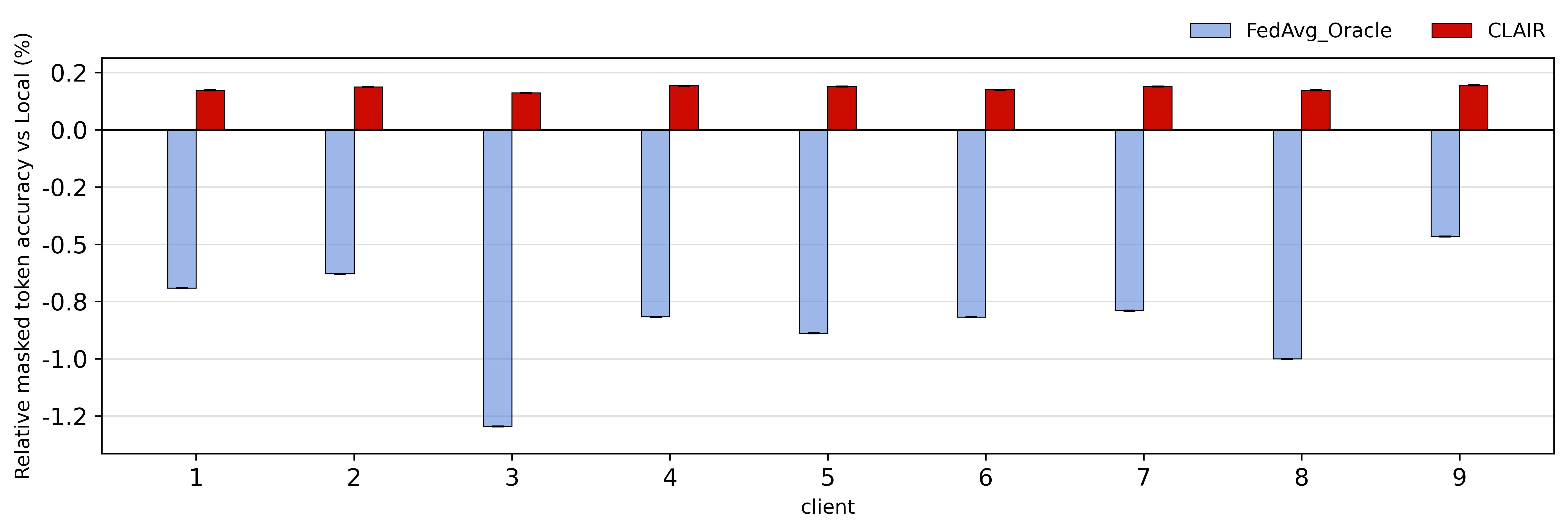}
    \caption{Heterogeneous copying experiment evaluated on client-specific tasks. The top panel reports client-level masked next-token accuracy averaged over $100$ replicates; the bottom panel reports relative accuracy change with respect to local LoRA fine-tuning.}
    \label{fig:sim2-hete}
\end{figure}

\begin{table}[!ht]
    \caption{Average masked next-token accuracy (\%). Results are averaged over $9$ benign clients and $100$ replicates.}
    \begin{tabular}{lrrrr}
        \hline
        & Local & FedAvg & FedAvg (Oracle) & CLAIR \\ \hline
        Homogeneous & 78.27 & 62.94 & \textbf{79.17} & 78.48 \\
        Heterogeneous & 77.77 & 75.96 & 76.96 & \textbf{77.90} \\ \hline
    \end{tabular}
    \label{tab:copying}
\end{table}

Table~\ref{tab:copying} summarizes the averaged results across benign clients and $100$ Monte Carlo replicates. In the homogeneous regime, oracle FedAvg achieves the highest accuracy, $79.17\%$, as expected when all benign clients share the same data distribution; CLAIR nevertheless yields a modest improvement over local LoRA. In the heterogeneous regime, CLAIR successfully avoids negative knowledge transfer and attains the highest average own-client accuracy. These results show that removing the contaminated client alone is insufficient to eliminate the personalization cost incurred by averaging heterogeneous benign clients.

To illustrate client-wise performance,  Figures~\ref{fig:sim2-homo} and~\ref{fig:sim2-hete} further report the masked accuracy and relative accuracy with respect to local LoRA for the nine benign clients.
Figure~\ref{fig:sim2-homo} evaluates clients under the common distribution $P_x$ in the homogeneous setting, whereas Figure~\ref{fig:sim2-hete} uses each client's own evaluation distribution in the heterogeneous setting.
These results highlight two complementary advantages of CLAIR: robustness against contaminated clients and client-specific collaborative refinement.

\section{Summary}\label{sec:summary}
We develop CLAIR as a contamination-aware framework for collaborative federated LoRA fine-tuning when the common backbone is unknown and only preliminary local estimators are available.
We explain why CLAIR can improve over local fine-tuning without directly averaging incompatible adapters.
The MSE analysis further provides insights that collaboration is beneficial only when the oracle variance reduction from averaging benign clients dominates the costs caused by row-space heterogeneity, subspace estimation, and noisy decomposition. Thus, the theory gives both a positive mechanism for federated LoRA gains and a diagnostic condition under which negative transfer may occur.

Future work should relax several structural restrictions.
One direction is to replace the single-majority collaborative set by multiple or overlapping collaborative groups, allowing layer-specific ranks and client-dependent adapter subspaces. This extension would connect naturally with mixture-of-experts federated LoRA, where clients may select different expert adapters rather than share one global subspace.
Another direction is to relax the current preliminary estimator assumption by deriving error conditions from the dynamics of deep learning fine-tuning, including stochastic optimization error, nonlinear representation drift, and module-wise interactions. These problems warrant further investigation.

\clearpage
\section*{Supplementary Material}
\addcontentsline{toc}{section}{Supplementary Material}
\setcounter{section}{0}
\setcounter{subsection}{0}
\setcounter{subsubsection}{0}
\setcounter{equation}{0}
\setcounter{figure}{0}
\setcounter{table}{0}
\setcounter{lemma}{0}
\setcounter{assumption}{0}
\setcounter{theorem}{0}
\setcounter{proposition}{0}
\setcounter{corollary}{0}
\setcounter{definition}{0}
\setcounter{remark}{0}
\renewcommand{\thesection}{S\arabic{section}}
\renewcommand{\thelemma}{S\arabic{lemma}}
\renewcommand{\theassumption}{S\arabic{assumption}}
\renewcommand{\thetheorem}{S\arabic{theorem}}
\renewcommand{\theproposition}{S\arabic{proposition}}
\renewcommand{\thecorollary}{S\arabic{corollary}}
\renewcommand{\thedefinition}{S\arabic{definition}}
\renewcommand{\theremark}{S\arabic{remark}}
\renewcommand{\thefigure}{S\arabic{figure}}
\renewcommand{\thetable}{S\arabic{table}}
\renewcommand{\theHsection}{supp.\arabic{section}}
\renewcommand{\theHsubsection}{supp.\arabic{section}.\arabic{subsection}}
\renewcommand{\theHsubsubsection}{supp.\arabic{section}.\arabic{subsection}.\arabic{subsubsection}}
\renewcommand{\theHequation}{supp.\arabic{section}.\arabic{equation}}
\renewcommand{\theHfigure}{supp.\arabic{figure}}
\renewcommand{\theHtable}{supp.\arabic{table}}
\renewcommand{\theHlemma}{supp.\arabic{lemma}}
\renewcommand{\theHassumption}{supp.\arabic{assumption}}
\renewcommand{\theHtheorem}{supp.\arabic{theorem}}
\renewcommand{\theHproposition}{supp.\arabic{proposition}}
\renewcommand{\theHcorollary}{supp.\arabic{corollary}}
\renewcommand{\theHdefinition}{supp.\arabic{definition}}
\renewcommand{\theHremark}{supp.\arabic{remark}}
\numberwithin{equation}{section}

\section{Proofs for exact recovery (Section 5.2)}
\subsection{Notation}
We first introduce tools for characterizing the set of low-rank and block-wise sparse matrices.
Let $\bL=\bU\Sigma \bV^\T$ be the singular value decomposition, where $\bU\in\mathbb{R}^{Gq\times r}$, $\bV\in \mathbb{R}^{p\times r}$, and $\bSigma\in \mathbb{R}^{r\times r}$ is diagonal.
The column and row spaces of $\bL$ are defined as
\[
\mathcal{U}=\{\bU \bX^\T \mid \bX\in \mathbb{R}^{p\times r}\},\quad  \mathcal{V}=\{\bX\bV^\T \mid \bX\in \mathbb{R}^{Gq\times r}\},
\]
respectively.
The tangent space at $\bL$ is
\[
\mathcal{T}(\bL)=\{\bU \bX_1^\T + \bX_2\bV^\T \mid \bX_1\in \mathbb{R}^{p\times r}, \bX_2\in \mathbb{R}^{Gq\times r}\}.
\]
Let $\PT$ denote the projection operator onto the tangent space $\mathcal{T}$, given by
\begin{equation}\label{supp:eq:PT}
	\PT(\bMM):=\bU\bU^\T \bMM + \bMM\bV\bV^\T - \bU\bU^\T\bMM\bV\bV^\T
\end{equation}
for any $\bMM\in \mathbb{R}^{Gq\times p}$, and letting $\PTC$ denote the projector onto the orthogonal complement $\mathcal{T}^\perp$.
The subdifferential of the nuclear norm at $\bL$ is
\[
\partial \|\bL\|_* = \{\bU\bV^\T + \bMM: \bU^\T\bMM=\bzero,\ \bMM\bV=\bzero,\ \|\bMM\|_2\le 1\}.
\]
Recall from Section~3.2 of the main paper that $\bX^{(g)}$ denotes the $q\times p$ block indexed by $g=(j,k)\in\mathcal{G}$, while $\Pg(\bX)$ is the corresponding zero-padded projection.
Define the blockwise sign operator by
\[
\begin{aligned}
    \Sign_{\mathcal I}(\bS)
    :=
    \Bigl\{
    \bMM\in\mathbb R^{Gq\times p}:\
    &\PI(\bMM)=\bMM,\  \bMM^{(g)}=\frac{\bS^{(g)}}{\|\bS^{(g)}\|_{\F}}
    \quad \text{for all } g\in  \operatorname{supp}_{\blk}(\bS),\\
    &\|\bMM^{(g)}\|_{\F}\le 1
    \quad \text{for all } g\in \mathcal I\setminus  \operatorname{supp}_{\blk}(\bS)
    \Bigr\},
\end{aligned}
\]
where $\operatorname{supp}_{\blk}(\bS):=\bigl\{g\in \mathcal{G}:\bS^{(g)} \neq\bzero\bigr\}$.
Define the dual block norm $\|\bZ\|_{\blk,\infty}=\underset{g\in \mathcal{G}}{\max}\|\bZ^{(g)}\|_{\F}$.
The subdifferential of the block $\ell_1$ norm is
\[
\partial\|\bS\|_{\blk,1}= \Bigl\{\bMM\in\mathbb R^{Gq\times p}: \PI(\bMM)\in \Sign_{\mathcal I}(\bS),\ \|\PIC(\bMM)\|_{\blk,\infty}\le 1 \Bigr\}.
\]

\subsection{Separation condition}
\begin{proof}[Proof of Lemma 1] 
	For any $\bMM\in\mathcal{U}\cap\mathcal{S}_{\mathcal{I}}$, we have $\PU\bMM=\bMM$ and $\PI\bMM=\bMM$.
	Hence,
	\[
	\bMM=\PI\PU\PI\bMM
	\]
	and  $\|\bMM\|_{\F}\le \rho\|\bMM\|_{\F}$.
	With $\rho<1$, this forces $\|\bMM\|_{\F}=0$ thus  $\bMM=0$.
\end{proof}

\begin{proof}[Proof of Lemma 2]
	Condition~(Block-IC) in Assumption~2 of the main paper  posits a uniform blockwise bound
	\[
		\|\Pg\PU\|_{\F}^2 \le \frac{\mu r}{G}
	\]
	for all $g\in\mathcal{G}$ with some $\mu\ge 1$.
	Then we have
	\[
		\rho =\|\PU\PI\PU\|_{\op} \le \sum_{g\in\mathcal{I}} \|\PU\Pg\PU\|_{\op} =\sum_{g\in\mathcal{I}}\|\Pg\PU\|_{\op}^2   \le s \cdot \frac{\mu r}{G},
	\]
	where $s=|\mathcal{I}|$.
	A sufficient condition for ensuring $\rho<1$ is therefore
	\[\mu r s/G < 1.\]
	Recall that the contaminated index set $\mathcal{I}$ contains all pairs $g=(j,k)$ such that at least one of the two client $j$ or $k$ is an outlier.
	Writing $|\mathcal{C}| = \eta K$ for the number of active clients, a simple counting argument gives
	\[
	s= G - \binom{\eta K}{2} = G\Bigl(1 - \eta^2 + \frac{\eta(1-\eta)}{K-1}\Bigr).
	\]
	Thus
	\[
	\frac{s}{G} = 1-\eta^2 + \frac{\eta(1-\eta)}{K-1} \le 1-\eta^2 +\frac{1}{4(K-1)},
	\]
	and
	\[
	\eta > \sqrt{1-\frac{1}{\mu r}+\frac{1}{4(K-1)}}
	\]
	implies $\rho<1$.
\end{proof}

\subsection{Supporting results}
Before proving Theorem~1 of the main paper,
we first examine several consequences of the oracle constraints in (23) 
The next lemma justifies the identification of the row space induced by an oracle optimizer, a fact used repeatedly in subsequent arguments.
\begin{lemma}\label{supp:lem:pPV}
	Let $(\pL,\pS)$ be an optimal solution of (23).
    Suppose $\delta=0$ and Assumption~1 of the main paper holds, then
	\[
	\rank(\pL)=r,\quad \row(\pL)=\row(\bA).
	\]
	Consequently, for a compact SVD $\pL=\pU\pSigma\pVT$, we have $\pPV=\PA$.
\end{lemma}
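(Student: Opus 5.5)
The plan is to prove two inclusions, $\row(\pL)\subseteq\row(\bA)$ and $\row(\bA)\subseteq\row(\pL)$. The first comes straight from feasibility in (23). The second needs the optimality structure together with the fact that the uncontaminated blocks of $\bD_0$ are pinned down.

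\emph{Upper inclusion.} Feasibility in (23) gives $\pL=\pL\PA$. Hence $\row(\pL)\subseteq\row(\PA)=\row(\bA)$ and $\rank(\pL)\le r$.

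\emph{Lower inclusion.} Apply $\PIC$ to the equality constraint $\pL+\pS=\bD_0$. Since $\PI(\pS)=\pS$, we have $\PIC(\pS)=\bzero$, and therefore
\[
\PIC(\pL)=\PIC(\bD_0)=\PIC(\bL_0)+\PIC(\bS_0)=\PIC(\bL_0).
\]
Now use $\delta=0$, so that $\bE_0=\bzero$ and the noiseless decomposition holds exactly. The benign blocks of $\bL_0$ are exactly the nonzero blocks, because $\bL_0^{(j,k)}=\bzero$ whenever $(j,k)\in\mathcal I$. It follows that $\PIC(\bL_0)=\bL_0$, and hence $\PIC(\pL)=\bL_0$.

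The rows of $\PIC(\pL)$ are a subset of the rows of $\pL$ (the others are zeroed out). So
\[
\row(\bL_0)=\row(\PIC(\pL))\subseteq\row(\pL).
\]
By the rank condition (rank-r) in Assumption~1 together with (L0), $\row(\bL_0)=\row(\bA)$, which is $r$-dimensional. This gives $\row(\bA)\subseteq\row(\pL)$ and $\rank(\pL)\ge r$.

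\emph{Conclusion.} The two inclusions give $\row(\pL)=\row(\bA)$ and $\rank(\pL)=r$. For a compact SVD $\pL=\pU\pSigma\pVT$, the columns of $\pV$ form an orthonormal basis of $\row(\pL)=\row(\bA)$. Because $\bA^\T\in\mathbb{O}_{p,r}$, we get $\pPV=\pV\pVT=\bA^\T\bA=\PA$. Optimality of $(\pL,\pS)$ is never actually used, so the conclusion holds for every feasible point of (23).

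The only delicate step is confirming that $\PIC(\bL_0)=\bL_0$ and that $\row(\bL_0)$ is all of $\row(\bA)$. The first is immediate from the definition (L0), since the $\mathcal I$ blocks of $\bL_0$ are zero. The second is exactly assumption (rank-r), so I expect no real obstacle beyond careful bookkeeping of block supports.
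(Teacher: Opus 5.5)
Your proof is correct and follows essentially the same route as the paper: feasibility gives $\row(\pL)\subseteq\row(\bA)$, and applying $\PIC$ to the equality constraint gives $\PIC(\pL)=\bL_0$, whose $r$-dimensional row space forces equality. The paper phrases that second step as $\rank(\pL)\ge\rank(\PIC\pL)=r$ plus a dimension count, which is equivalent to your row-space containment, and your remark that only oracle feasibility (not optimality) is used applies to the paper's argument as well.
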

\begin{proof}[Proof of Lemma~\ref{supp:lem:pPV}]
	Since $\delta=0$, we have $\bD_0=\bL_0+\bS_0$. The oracle feasibility conditions imply
	\[
	\pL+\pS=\bD_0,\qquad \pL=\pL\PA,\qquad \PI(\pS)=\pS.
	\]
	Hence $\PIC(\pS)=\bzero$, and because $\PI(\bS_0)=\bS_0$,
	\[
	\PIC(\pL)=\PIC(\bD_0-\pS)=\PIC\bD_0=\PIC\bL_0=\bL_0.
	\]
	Therefore
	\[
	\rank(\pL)\ge \rank(\PIC\pL)=\rank(\bL_0)=r,
	\]
	where the last equality uses Condition (12). 
	On the other hand, $\pL=\pL\PA$ implies $\row(\pL)\subseteq\row(\bA)$, so
	\[
	\rank(\pL)\le \rank(\PA)=r.
	\]
	Thus $\rank(\pL)=r$. Since $\row(\pL)\subseteq\row(\bA)$ and both spaces have dimension $r$, we conclude that $\row(\pL)=\row(\bA)$. Consequently $\pPV=\PA$.
\end{proof}

We relate the feasible solutions of the oracle problem to the model-induced noiseless pair as follows.
In particular, the lemma implies that, when the shiftable subspace $\mathcal M\neq\{0\}$, the oracle feasible set is non-singleton, so the oracle optimizer need not be unique.

\begin{lemma}\label{supp:lem:oracle-feasible-class}
    Let $(\bL_0,\bS_0)$ satisfy (15). 
    A pair $(\pL,\pS)$ is feasible for the oracle problem (23)  if and only if there exists $\bMM\in\mathcal{M}$ (shiftable subspace) such that
    \begin{equation}\label{supp:eq:oracle-feasible-shift}
        \pL=\bL_0+\bMM,\qquad \pS=\bS_0-\bMM.
    \end{equation}
\end{lemma}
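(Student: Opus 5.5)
The claim is a characterization of the oracle feasible set, so I will prove the two inclusions separately. Both directions use only linearity of the constraints in (23) together with the structural identities $\bL_0=\bL_0\PA$ and $\PI(\bS_0)=\bS_0$ from (15), and the fact that $\bD_0=\bL_0+\bS_0$.

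\emph{Sufficiency.} Suppose $\bMM\in\mathcal M$, and set $\pL=\bL_0+\bMM$ and $\pS=\bS_0-\bMM$. The sum is unchanged, since $\pL+\pS=\bL_0+\bS_0=\bD_0$. By Definition~\ref{def:shiftable-subspace}, $\bMM=\bMM\PA$, so
\[
\pL\PA=\bL_0\PA+\bMM\PA=\bL_0+\bMM=\pL.
\]
Likewise $\PI(\bMM)=\bMM$, so by linearity of $\PI$ we get $\PI(\pS)=\PI(\bS_0)-\PI(\bMM)=\bS_0-\bMM=\pS$. Hence $(\pL,\pS)$ satisfies every constraint of (23).

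\emph{Necessity.} Now let $(\pL,\pS)$ be feasible for (23), and define $\bMM:=\pL-\bL_0$. Because both pairs sum to $\bD_0$, we also have $\bMM=\bS_0-\pS$. It remains to show $\bMM\in\mathcal M$, which requires two identities.

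\begin{itemize}
\item Using the first expression, $\bMM\PA=\pL\PA-\bL_0\PA=\pL-\bL_0=\bMM$, by the feasibility constraint on $\pL$ and by (15).
\item Using the second expression, $\PI(\bMM)=\PI(\bS_0)-\PI(\pS)=\bS_0-\pS=\bMM$, by (15) and the feasibility constraint on $\pS$.
\end{itemize}

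Thus $\bMM\in\mathcal S_{\mathcal I}\cap\mathcal V_{\bA}=\mathcal M$, and the pair has the form (\ref{supp:eq:oracle-feasible-shift}).

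The only delicate point is bookkeeping: one must use the two different expressions for $\bMM$ to verify the two membership conditions. The row-space identity is checked through $\pL-\bL_0$, and the block-support identity through $\bS_0-\pS$. This works because both constraint sets $\mathcal V_{\bA}$ and $\mathcal S_{\mathcal I}$ are linear subspaces, and $(\bL_0,\bS_0)$ itself is a feasible point of (23).
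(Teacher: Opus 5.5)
Your proof is correct and follows the same route as the paper's: set $\bMM:=\pL-\bL_0=\bS_0-\pS$ and check $\bMM\PA=\bMM$ and $\PI(\bMM)=\bMM$ directly from the feasibility constraints and (15). For the converse, verify that $(\bL_0+\bMM,\bS_0-\bMM)$ satisfies all three oracle constraints by linearity.
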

\begin{proof}[Proof of Lemma~\ref{supp:lem:oracle-feasible-class}]
	First, let $(\pL,\pS)$ be oracle feasible, so $\pL+\pS=\bD_0$, $\pL=\pL\PA$, and $\PI(\pS)=\pS$.
	Set
	\[
	\bMM:=\pL-\bL_0=(\bD_0-\pS)-(\bD_0-\bS_0)=\bS_0-\pS.
	\]
	Then
	\[
	\PI\bMM=\PI(\pL-\bL_0)=\PI(\bS_0-\pS)=\bS_0-\pS=\bMM
	\]
	since $\PI\bS_0=\bS_0$.
	Also,
	\[
	\bMM\PA=(\pL-\bL_0)\PA=\pL-\bL_0=\bMM.
	\]
	Hence $\bMM\in\mathcal{M}$, and $\pL=\bL_0+\bMM,\ \pS=\bS_0-\bMM$.

	Conversely, if $\bMM\in\mathcal{M}$ and $(\pL,\pS)$ are given by \eqref{supp:eq:oracle-feasible-shift}, then
	\[
	\pL+\pS=\bD_0,\qquad  \pL=\pL\PA
	\]
	since $\bL_0=\bL_0\PA$ and $\bMM=\bMM\PA$, and $\PI(\pS)=\pS$ since $\bS_0$ and $\bMM$ are supported on $\mathcal{I}$.
	Thus $(\pL,\pS)$ is oracle feasible.
\end{proof}

The next lemma derives a first-order identity for the oracle problem, linking the low-rank subgradient term to a projected block-norm subgradient term. It serves as a technical tool in dual-certificate construction.

\begin{lemma}\label{supp:lem:PIUVT}
    Let  $(\pL,\pS)$ be an optimal solution to (23), with compact SVD $\pL=\pU\pSigma\bV^{\prime\T}$.
    Then there exists a matrix $\bH\in \partial\|\pS\|_{\blk,1}$ such that
    \[
        \lambda_L\PI\pU\pVT = \lambda_S \PI(\bH\pPV).
    \]
\end{lemma}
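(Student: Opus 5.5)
The plan is to turn the constrained oracle problem (23) into an unconstrained problem over the shiftable subspace $\mathcal{M}$. Then I read off the first-order optimality condition, and use Lemma~\ref{supp:lem:pPV} to simplify the nuclear-norm subgradient.

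\textbf{Step 1 (reparametrization).} By Lemma~\ref{supp:lem:oracle-feasible-class}, the oracle feasible set is exactly $\{(\bL_0+\bMM,\bS_0-\bMM):\bMM\in\mathcal{M}\}$. Hence $(\pL,\pS)$ is optimal for (23) if and only if $\bMM^\prime:=\pL-\bL_0$ minimizes
\[
f(\bMM):=\lambda_L\|\bL_0+\bMM\|_*+\lambda_S\|\bS_0-\bMM\|_{\blk,1}
\]
over the linear subspace $\mathcal{M}$. Both terms are finite convex functions on all of $\mathbb{R}^{Gq\times p}$. So the subdifferential sum and chain rules hold without any qualification condition. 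Minimizing a convex function over a subspace then gives the condition $0\in P_{\mathcal{M}}\bigl(\partial f(\bMM^\prime)\bigr)$. Concretely, there exist $\bG\in\partial\|\pL\|_*$ and $\bH\in\partial\|\pS\|_{\blk,1}$ with
\[
P_{\mathcal{M}}(\lambda_L\bG-\lambda_S\bH)=\bzero.
\]

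\textbf{Step 2 (identify $P_{\mathcal{M}}$).} $\PI$ acts by left multiplication with a diagonal block selector, and right multiplication by $\PA$ acts on columns. The two operators therefore commute, and both are orthogonal projections in the Frobenius inner product. Their product $\bX\mapsto\PI(\bX)\PA$ is thus the orthogonal projection onto $\mathcal{S}_{\mathcal{I}}\cap\mathcal{V}_{\bA}=\mathcal{M}$. The condition from Step 1 becomes
\[
\lambda_L\PI(\bG)\PA=\lambda_S\PI(\bH)\PA=\lambda_S\PI(\bH\PA).
\]

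\textbf{Step 3 (simplify the nuclear-norm subgradient).} Write $\bG=\pU\pVT+\bW$ with $\pU^{\T}\bW=\bzero$, $\bW\pV=\bzero$ and $\|\bW\|_{\op}\le1$. Lemma~\ref{supp:lem:pPV} (which uses $\delta=0$ and Assumption~1) gives $\pPV=\PA$. It follows that $\bW\PA=\bW\pV\pVT=\bzero$ and $\pU\pVT\PA=\pU\pVT$. Hence $\PI(\bG)\PA=\PI(\pU\pVT)$. Substituting this, and $\PA=\pPV$ on the right-hand side, yields
\[
\lambda_L\PI\pU\pVT=\lambda_S\PI(\bH\pPV),
\]
which is the claim.

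\textbf{Main obstacle.} The only delicate point is justifying the optimality condition over the subspace in Step 1, including the exact form of the subdifferential of a composition with an affine map. Because both norms are finite-valued, the standard Moreau--Rockafellar sum rule applies directly, so I expect this to go through cleanly. The rest is bookkeeping, relying on the commuting projections and on $\pPV=\PA$.
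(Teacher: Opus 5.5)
Your proof is correct, but it derives the first-order condition by a different route than the paper.

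\textbf{The paper's route.} It keeps the constraint $\bL+\bS=\bD_0$ in a Lagrangian, with the two subspace constraints entered as indicator functions. It then takes a multiplier $\bQ$ with $\bQ\in\lambda_L\partial\|\pL\|_*+\mathcal{V}_{\bA}^\perp$ and $\bQ\in\lambda_S\partial\|\pS\|_{\blk,1}+\mathcal{S}_{\mathcal{I}}^\perp$. Finally it right-multiplies by $\pPV$ and applies $\PI$, which removes the two normal-space terms one at a time.

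\textbf{Your route.} You eliminate the equality constraint using Lemma~\ref{supp:lem:oracle-feasible-class}. You then minimize a finite convex function over the subspace $\mathcal{M}$, which gives $\lambda_L\bG-\lambda_S\bH\in\mathcal{M}^\perp$. This is exactly what you get by eliminating $\bQ$ from the paper's two inclusions, since $\mathcal{V}_{\bA}^\perp+\mathcal{S}_{\mathcal{I}}^\perp=\mathcal{M}^\perp$.

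\textbf{What your version buys.} It is more careful at the one delicate point. The paper takes the existence of the KKT multiplier $\bQ$ for granted; this does hold, because the constraint is affine and the objective's domain is a subspace. Your argument needs only the sum rule and the linear chain rule for finite-valued convex functions. It also uses the observation that $\bX\mapsto\PI(\bX)\PA$ is the orthogonal projector onto $\mathcal{M}$, which is correct because the two projections commute.

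\textbf{What the paper's version buys.} Its last step needs a weaker hypothesis. It right-multiplies by $\pPV$ and uses only $\PA\pPV=\pPV$, which follows from feasibility ($\pL=\pL\PA$). Your Step~3 right-multiplies by $\PA$ instead, so you need $\pPV=\PA$ from Lemma~\ref{supp:lem:pPV}, and hence the rank condition \eqref{eq:rank-r}. That condition is available wherever the lemma is used, so this is not a gap.

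You can still drop it: multiply your Step~2 identity on the right by $\pPV$. Then $\PA\pPV=\pPV$, $\bW\pPV=\bzero$ and $\pU\pVT\pPV=\pU\pVT$ give $\lambda_L\PI\pU\pVT=\lambda_S\PI(\bH\pPV)$ directly, with no appeal to Lemma~\ref{supp:lem:pPV}.
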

\begin{proof}[Proof of Lemma~\ref{supp:lem:PIUVT}]
	Recall that $\mathcal{V}=\{\bX:\bX=\bX\PA\}$ and $\mathcal{S}:=\{\bX:\PI\bX=\bX\}$, and their normal spaces are given by $\mathcal{V}^\perp:=\{\bY:\bY\PA=0\}$ and $\mathcal{S}^\perp:=\{\bY:\PI\bY=0\}$, respectively.
	We can write Lagrangian for the oracle problem~(23) as follows:
	\[
	\mathcal{L}(\bL, \bS, \bQ) = \lambda_L\|\bL\|_* + \lambda_S \|\bS\|_{\blk,1} +  \idop_{\mathcal{V}}(\bL) + \idop_{\mathcal{S}}(\bS) - \langle \bQ, \bL + \bS - \bD_0 \rangle.
	\]
	Take the subdifferential with respect to $\bL$ and $\bS$ respectively, and use that fact that $(\pL,\pS)$ is optimal, we have
	\[
	\bQ \in \partial \lambda_L\|\pL\|_* + \partial \idop_{\mathcal{V}}(\pL)
	\]
	and
	\[
	\bQ \in \partial \lambda_S\|\pS\|_{\blk,1} + \partial \idop_{\mathcal{S}}(\pS).
	\]
	Note that
	\[
	\partial \idop_{\mathcal{V}}(\pL) = \mathcal{N}_{\mathcal{V}}(\pL) = \mathcal{V}^\perp,
	\]
	where $\mathcal{N}_{\mathcal{V}}(\pL):= \{\bY: \langle \bY, \bZ - \pL \rangle \le 0, \forall \bZ \in \mathcal{V}\}$ is the normal cone of $\mathcal{V}$ at $\bL$, and the second equality holds because $\mathcal{V}$ is a linear subspace.
	Similarly, we have
	\[
	\partial \idop_{\mathcal{S}}(\pS) = \mathcal{S}^\perp.
	\]
	Thus
	\[
	\bQ \in \partial \lambda_L\|\pL\|_* + \mathcal{V}^\perp,\qquad \bQ \in \partial \lambda_S\|\pS\|_{\blk,1} + \mathcal{S}^\perp.
	\]

	Therefore, there exists $\bG\in \partial\|\pL\|_* $ with normal complement $\bY_{\mathcal{V}} \in \mathcal{V}^\perp$ and $\bH\in\partial \|\pS\|_{\blk,1}$ with normal complement $\bY_{\mathcal{S}}\in \mathcal{S}^\perp$ such that
	\[
	\bQ=\lambda_L\bG + \bY_{\mathcal{V}}= \lambda_S\bH + \bY_{\mathcal{S}}.
	\]
	Using the nuclear norm subgradient structure, we may take
	\[
	\bG:=\pU\pVT + \bW,\quad \bW \in\mathcal{T}(\pL)^\perp,\ \|\bW\|_{\op}\le 1.
	\]
	Crucially, we have $\bW\pPV=0$.
	Also, oracle feasibility gives $\pL=\pL\PA$, hence $\row(\pL)\subseteq\row(\bA)$ and therefore $\pPV=\PA\pPV$. Since $\bY_{\mathcal{V}}\in \mathcal{V}^\perp$, we have $\bY_{\mathcal{V}}\PA=0$, so $\bY_{\mathcal{V}}\pPV=\bY_{\mathcal{V}}\PA\pPV=0$.
	Hence,
	\[
	\bQ\pPV = \lambda_L\pU\pVT = \lambda_S\bH\pPV + \bY_{\mathcal{S}}\pPV.
	\]
	Applying $\PI$, we further have
	\[
	\lambda_L\PI\pU\pVT = \lambda_S\PI\bH\pPV + \PI\bY_{\mathcal{S}}\pPV.
	\]
	Since $\bY_{\mathcal{S}}\in \mathcal{S}^\perp$ means $\PI\bY_{\mathcal{S}}=0$,
	we get
	\[
	\lambda_L\PI\pU\pVT = \lambda_S\PI\bH\pPV.
	\]
	This proves the result.
\end{proof}

Lemma~\ref{supp:lem:oracle-to-original} gives the conditions that a dual certificate must satisfy, employing the optimal solution pair of the oracle problem (23). 

\begin{lemma}\label{supp:lem:oracle-to-original}
    Let  $(\pL,\pS)$ be an optimal solution to (23), with compact SVD $\pL=\pU\pSigma\bV^{\prime\T}$. Suppose there exists $\bQ\in \mathbb{R}^{Gq\times p}$ and constants $a,b \in (0,1)$ such that
    \begin{equation}\label{supp:eq:KKT}
        \left\{
        \begin{aligned}
            &\pPT(\bQ) = \lambda_L\pU\pVT,\\
            &\|\pPTC(\bQ)\|_{\op} \le a\lambda_L, \\
            &\PI(\bQ)/\lambda_S \in  \Sign_{\mathcal{I}}(\pS), \\
            &\|\PIC(\bQ)\|_{\blk,\infty} \le b\lambda_S.
        \end{aligned}
        \right.
    \end{equation}
    Then $(\pL,\pS)$ is also an optimal solution of problem~(24).  
\end{lemma}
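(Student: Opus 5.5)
To prove Lemma~\ref{supp:lem:oracle-to-original}, I would use a standard subgradient-inequality argument. The certificate $\bQ$ built at the oracle optimizer should bound the objective of (24) from below along every feasible direction. Since (24) only requires $\bL+\bS=\bD_0$, any feasible pair can be written as $(\pL+\bDelta,\pS-\bDelta)$ for some $\bDelta\in\mathbb{R}^{Gq\times p}$. It therefore suffices to show
\[
\lambda_L\|\pL+\bDelta\|_*+\lambda_S\|\pS-\bDelta\|_{\blk,1}\ \ge\ \lambda_L\|\pL\|_*+\lambda_S\|\pS\|_{\blk,1}
\quad\text{for all }\bDelta .
\]

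For the nuclear norm I would pick a subgradient $\bG=\pU\pVT+\bW_0$ of $\|\cdot\|_*$ at $\pL$. Here $\bW_0\in\mathcal{T}(\pL)^\perp$ has $\|\bW_0\|_{\op}\le 1$ and is chosen, by duality of $\|\cdot\|_{\op}$ and $\|\cdot\|_*$, so that $\langle\bW_0,\bDelta\rangle=\|\pPTC\bDelta\|_*$. This gives
\[
\lambda_L\|\pL+\bDelta\|_*\ge\lambda_L\|\pL\|_*+\lambda_L\langle\pU\pVT,\bDelta\rangle+\lambda_L\|\pPTC\bDelta\|_* .
\]

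For the block norm I would similarly take $\bH$ with $\PI\bH=\PI(\bQ)/\lambda_S\in\Sign_{\mathcal I}(\pS)$. On $\mathcal I^c$, where $\pS$ vanishes, I would choose blocks $\bH^{(g)}=-\bDelta^{(g)}/\|\bDelta^{(g)}\|_{\F}$. The block condition is valid there because $\pS$ vanishes off $\mathcal I$. It gives
\[
\lambda_S\|\pS-\bDelta\|_{\blk,1}\ge\lambda_S\|\pS\|_{\blk,1}-\langle\PI\bQ,\bDelta\rangle+\lambda_S\|\PIC\bDelta\|_{\blk,1}.
\]

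Next I would add the two inequalities and decompose $\bQ=\pPT\bQ+\pPTC\bQ$ and $\bQ=\PI\bQ+\PIC\bQ$. By the first condition, $\lambda_L\langle\pU\pVT,\bDelta\rangle=\langle\pPT\bQ,\bDelta\rangle=\langle\bQ,\bDelta\rangle-\langle\pPTC\bQ,\bDelta\rangle$. Likewise, $\langle\PI\bQ,\bDelta\rangle=\langle\bQ,\bDelta\rangle-\langle\PIC\bQ,\bDelta\rangle$. The $\langle\bQ,\bDelta\rangle$ terms cancel, and the excess objective is at least
\[
\lambda_L\|\pPTC\bDelta\|_*-\langle\pPTC\bQ,\bDelta\rangle+\lambda_S\|\PIC\bDelta\|_{\blk,1}+\langle\PIC\bQ,\bDelta\rangle .
\]

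Hölder-type dualities control the remaining cross terms. First, $|\langle\pPTC\bQ,\bDelta\rangle|=|\langle\pPTC\bQ,\pPTC\bDelta\rangle|\le a\lambda_L\|\pPTC\bDelta\|_*$. Second, $|\langle\PIC\bQ,\bDelta\rangle|\le \|\PIC\bQ\|_{\blk,\infty}\|\PIC\bDelta\|_{\blk,1}\le b\lambda_S\|\PIC\bDelta\|_{\blk,1}$. The excess is therefore at least
\[
(1-a)\lambda_L\|\pPTC\bDelta\|_*+(1-b)\lambda_S\|\PIC\bDelta\|_{\blk,1}\ \ge 0,
\]
which proves optimality of $(\pL,\pS)$ for (24).

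The main subtlety is the choice of subgradients: the sign of $\bDelta$ is flipped for the $\bS$ term, so the block subgradient on $\mathcal I^c$ must be aligned with $-\bDelta$. I would also note that the strict slack $(1-a),(1-b)>0$ is not needed for this lemma. It will be needed later to restrict any other optimizer to directions with $\pPTC\bDelta=\bzero$ and $\PIC\bDelta=\bzero$, which is what yields $P_{\hV}=\PA$ in Theorem~\ref{th:PV-noiseless}.
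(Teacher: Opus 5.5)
Your proof is correct, but it takes a more laborious route than the paper's. The paper observes that, because $a,b<1$, the certificate itself is a common subgradient: $\bQ/\lambda_L\in\partial\|\pL\|_*$ and $\bQ/\lambda_S\in\partial\|\pS\|_{\blk,1}$. Adding the two subgradient inequalities at any feasible $(\bL,\bS)$ leaves only the term $\langle\bQ,(\bL+\bS)-(\pL+\pS)\rangle$. That term vanishes because both pairs sum to $\bD_0$, so no $\bDelta$-dependent subgradients or H\"older bounds are needed.

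Your route instead picks subgradients aligned with $\pPTC\bDelta$ and $-\PIC\bDelta$ and controls the cross terms by duality. This yields the quantitative margin $(1-a)\lambda_L\|\pPTC\bDelta\|_*+(1-b)\lambda_S\|\PIC\bDelta\|_{\blk,1}$. That margin, and essentially your argument, is the paper's very next result, Lemma~\ref{supp:lem:subgradient-ineq}. It is what Theorem~\ref{th:PV-noiseless} actually uses to force $\pPTC\bDelta=\bzero$ and $\PIC\bDelta=\bzero$ for any other optimizer.

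So you have merged two lemmas into one. This is longer than necessary for optimality alone, but nothing is wasted. One minor detail: for blocks $g\in\mathcal I^c$ with $\bDelta^{(g)}=\bzero$, simply set $\bH^{(g)}=\bzero$.
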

\begin{proof}[Proof of Lemma~\ref{supp:lem:oracle-to-original}]
    Since $a<1$, we have $\|\pPTC(\bQ)\|_{\op}<\lambda_L$ and $\pPT(\bQ)=\lambda_L\pU\pVT$, hence
    \[
    \bQ/\lambda_L \in \partial\|\pL\|_*.
    \]
    Also,  $\|\PIC(\bQ/\lambda_S)\|_{\blk,\infty} \le b<1$ and $\PI(\bQ/\lambda_S ) \in \Sign_{\mathcal{I}}(\pS)$ implies that
    \[
    \bQ/\lambda_S\in \partial\|\pS\|_{\blk,1}.
    \]
    Take any feasible $(\bL,\bS)$ with $\bL+\bS=\bD_0$.
    By the subgradient inequality for convex functions,
    \[
    \|\bL\|_* \ge \|\pL\|_* + \langle\frac{1}{\lambda_L}\bQ, \bL-\pL\rangle
    \]
    and
    \[
    \|\bS\|_{\blk,1} \ge \|\pS\|_{\blk,1} + \langle\frac{1}{\lambda_S}\bQ, \bS-\pS\rangle.
    \]
    Adding gives
    \[
    \lambda_L\|\bL\|_* + \lambda_S\|\bS\|_{\blk,1} \ge \lambda_L\|\pL\|_* + \lambda_S\|\pS\|_{\blk,1}  + \langle\bQ, (\bL+\bS)-(\pL+\pS)\rangle.
    \]
    Since both pairs satisfy $\bL+\bS=\bD_0$, the inner product term is zero. Hence $(\pL,\pS)$ is optimal for problem~(24).
\end{proof}

\begin{lemma}\label{supp:lem:subgradient-ineq}
	Let $(\pL,\pS)$ be an optimal solution of~(24),
    with compact SVD $\pL=\pU\pSigma\pVT$. Suppose there exists $\bQ\in \mathbb{R}^{Gq\times p}$ and constants $a,b \in (0,1)$ such that ~\eqref{supp:eq:KKT} holds at $(\pL,\pS)$.
	For any perturbation $\bDelta$, we have
	\[
	\begin{aligned}
		\|\pL+\bDelta\|_* \ge \|\pL\|_* + \langle \bQ/\lambda_L, \bDelta\rangle + (1-a)\|\pPTC\bDelta\|_*,\\
		\|\pS - \bDelta\|_{\blk,1} \ge \|\pS\|_{\blk,1} - \langle\bQ/\lambda_S,\bDelta\rangle + (1-b)\|\PIC\bDelta\|_{\blk,1}.
	\end{aligned}
	\]
	In particular, if $(\pL+\bDelta,\pS-\bDelta)$ is also optimal for~(24), then $\pPTC(\bDelta)=0$ and $\PIC(\bDelta)=0$.
\end{lemma}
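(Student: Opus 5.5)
We prove the two inequalities separately, using the decomposition of $\bQ$ supplied by \eqref{supp:eq:KKT} together with the duality between the nuclear and operator norms, and between $\|\cdot\|_{\blk,1}$ and $\|\cdot\|_{\blk,\infty}$. Then we add them and use optimality to force the two leakage terms to vanish.

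\emph{Nuclear-norm inequality.} By duality, choose $\bW$ with $\|\bW\|_{\op}\le 1$ and $\langle \bW,\pPTC\bDelta\rangle=\|\pPTC\bDelta\|_*$. Replace $\bW$ by $\pPTC\bW$; this is still a contraction, because $\pPTC(\bX)=(\bI-\pPU)\bX(\bI-\pPV)$. Then $\pU\pVT+\pPTC\bW\in\partial\|\pL\|_*$, so
\[
\|\pL+\bDelta\|_*\ge \|\pL\|_*+\langle \pU\pVT,\bDelta\rangle+\|\pPTC\bDelta\|_*.
\]
Write $\bQ/\lambda_L=\pU\pVT+\pPTC(\bQ)/\lambda_L$, using the first line of \eqref{supp:eq:KKT}. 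This gives $\langle \pU\pVT,\bDelta\rangle=\langle\bQ/\lambda_L,\bDelta\rangle-\langle \pPTC(\bQ)/\lambda_L,\pPTC\bDelta\rangle$. The last term is at most $a\|\pPTC\bDelta\|_*$ by the second line of \eqref{supp:eq:KKT} and Hölder's inequality. Combining the two displays gives the first inequality.

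\emph{Block-norm inequality.} The argument is the same. Since $\PI(\bQ)/\lambda_S\in\Sign_{\mathcal I}(\pS)$, the matrix $\bH:=\PI(\bQ)/\lambda_S+\bZ$ lies in $\partial\|\pS\|_{\blk,1}$ whenever $\bZ$ is supported on $\mathcal I^c$ with $\|\bZ\|_{\blk,\infty}\le1$. The support condition holds because $\PI(\pS)=\pS$ for an oracle-feasible $\pS$.

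Choose $\bZ^{(g)}=-\bDelta^{(g)}/\|\bDelta^{(g)}\|_{\F}$ for $g\in\mathcal I^c$, so that $\langle \bZ,-\bDelta\rangle=\|\PIC\bDelta\|_{\blk,1}$. The subgradient inequality at $\pS$ in direction $-\bDelta$ then gives
\[
\|\pS-\bDelta\|_{\blk,1}\ge\|\pS\|_{\blk,1}-\langle \PI\bQ/\lambda_S,\bDelta\rangle+\|\PIC\bDelta\|_{\blk,1}.
\]
Next, substitute $-\langle\PI\bQ,\bDelta\rangle=-\langle\bQ,\bDelta\rangle+\langle\PIC\bQ,\PIC\bDelta\rangle$. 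The correction term satisfies $\langle\PIC\bQ,\PIC\bDelta\rangle\ge -b\lambda_S\|\PIC\bDelta\|_{\blk,1}$, by the fourth line of \eqref{supp:eq:KKT} and blockwise Cauchy--Schwarz. This yields the second inequality.

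\emph{The case of two optimal solutions.} Multiply the first inequality by $\lambda_L$ and the second by $\lambda_S$, then add. The terms $\pm\langle\bQ,\bDelta\rangle$ cancel, leaving
\[
F(\pL+\bDelta,\pS-\bDelta)\ge F(\pL,\pS)+\lambda_L(1-a)\|\pPTC\bDelta\|_*+\lambda_S(1-b)\|\PIC\bDelta\|_{\blk,1},
\]
where $F$ is the objective of (24). If both pairs are optimal, the two objective values are equal. Since $a,b<1$ and $\lambda_L,\lambda_S>0$, both leakage terms must then vanish.

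The only delicate step is checking that the added pieces $\pPTC\bW$ and $\bZ$ keep the subgradients valid. This reduces to the standard characterizations of $\partial\|\cdot\|_*$ and $\partial\|\cdot\|_{\blk,1}$ recalled in the Notation subsection.
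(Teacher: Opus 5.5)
Your proposal is correct and follows essentially the same route as the paper. The nuclear-norm half is identical: a tailored subgradient $\pU\pVT+\bW$ with $\bW\in\mathcal{T}(\pL)^\perp$ norming $\pPTC\bDelta$, followed by H\"older with the $a\lambda_L$ bound. In the block half you build one global subgradient with a tailored $\mathcal I^c$ component, whereas the paper splits $\|\pS-\bDelta\|_{\blk,1}$ exactly over $\mathcal I$ and $\mathcal I^c$; the difference is only cosmetic. One small point: $\PI(\pS)=\pS$ is not literally among the lemma's hypotheses (it assumes optimality for (24), not oracle feasibility), but it follows from the third line of \eqref{supp:eq:KKT}, since $\Sign_{\mathcal I}(\pS)$ is empty unless $\pS$ is block-supported on $\mathcal I$.
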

\begin{proof}[Proof of Lemma~\ref{supp:lem:subgradient-ineq}]
	Write $\bQ/\lambda_L=\pU\pVT + \bZ$ with $\bZ:=\pPTC(\bQ/\lambda_L)$ and $\|\bZ\|_{\op}\le a$.
	Fix any perturbation $\bDelta$. By the duality between nuclear and operator norms, there exists $\bW\in\mathcal{T}(\pL)^\perp$ such that
	\[
	\|\bW\|_{\op}=1,\quad \langle\bW,\pPTC\bDelta\rangle = \|\pPTC\bDelta\|_*.
	\]
	Then $\pU\pVT + \bW \in \partial\|\pL\|_*$.
	By the subgradient inequality,
	\[
	\begin{aligned}
		\|\pL+\bDelta\|_* & \ge \|\pL\|_* + \langle\pU\pVT + \bW,\bDelta\rangle\\
		&\ge \|\pL\|_* + \langle\pU\pVT,\bDelta\rangle + \|\pPTC\bDelta\|_*
	\end{aligned}
	\]
	Note that
	\[
	\begin{aligned}
		\langle \bQ/\lambda_L,\bDelta \rangle &= \langle\pU\pVT, \bDelta\rangle + \langle\bZ,\pPTC\bDelta\rangle \\
		& \le  \langle\pU\pVT, \bDelta\rangle + a\cdot\|\pPTC\bDelta\|_*.
	\end{aligned}
	\]
	Plugging in gives
	\[\begin{aligned}
		\|\pL+\bDelta\|_* &\ge \|\pL\|_* + \langle\pU\pVT,\bDelta\rangle +\|\pPTC\bDelta\|_*\\
		&\ge \|\pL\|_* +\langle\bQ/\lambda_L,\bDelta\rangle + (1-a)\|\pPTC\bDelta\|_*.
	\end{aligned}
	\]

    For the sparse term, note that $\pS=\PI(\pS)$, so
    \[
        \|\pS-\bDelta\|_{\blk,1} =\|\PI(\pS-\bDelta)\|_{\blk,1}+\|\PIC(\bDelta)\|_{\blk,1}.
    \]
    Since $\PI(\bQ)/\lambda_S\in \Sign_{\mathcal{I}}(\pS)$, convexity on the support subspace gives
    \[
        \|\PI(\pS-\bDelta)\|_{\blk,1}\ge\|\pS\|_{\blk,1}
    -\langle\PI(\bQ)/\lambda_S, \PI(\bDelta)\rangle.
    \]
    Hence
    \[
        \|\pS-\bDelta\|_{\blk,1}\ge\|\pS\|_{\blk,1}-\langle\bQ/\lambda_S,\bDelta\rangle
        +\langle\PIC(\bQ)/\lambda_S,\PIC(\bDelta)\rangle
        +\|\PIC(\bDelta)\|_{\blk,1}.
    \]
    By block duality,
    \[
        |\langle \PIC(\bQ)/\lambda_S,\PIC(\bDelta)\rangle|\le
    \|\PIC(\bQ)/\lambda_S\|_{\blk,\infty}\cdot
    \|\PIC(\bDelta)\|_{\blk,1}
    \le b\|\PIC(\bDelta)\|_{\blk,1}.
    \]
    Therefore
    \[
        \|\pS-\bDelta\|_{\blk,1} \ge \|\pS\|_{\blk,1} - \langle\bQ/\lambda_S,\bDelta\rangle + (1-b)\|\PIC\bDelta\|_{\blk,1}.
    \]
	Finally, multiplying the regularization parameters and adding two inequalities leads to
	\[
	   \lambda_L\|\pL+\bDelta\|_* + \lambda_S\|\pS-\bDelta\|_{\blk,1} \ge \lambda_L\|\pL\|_* + \lambda_S\|\pS\|_{\blk,1} + \lambda_L(1-a)\|\pPTC\bDelta\|_* + \lambda_S(1-b)\|\PIC\bDelta\|_{\blk,1}.
	\]
	Thus, if $(\pL+\bDelta,\pS-\bDelta)$ is also optimal for~(24), then objective values are equal, so the nonnegative remainder must be zero:
	\[\pPTC(\bDelta)=0,\quad  \PIC(\bDelta)=0.\]
	Proof completed.
\end{proof}

The following lemma transfers the row-block incoherence from $\bU_0$ to $\pU$ on $\mathcal{I}^c$, which plays a role similar to Lemma 10 in \citep{xu2012robust}.
\begin{lemma}[Incoherence transfer]\label{supp:lem:transfer-incoherence}
	Let $(\pL,\pS)$ be an optimal solution to~(23), 
    with SVD $\pL=\pU\pSigma\bV^{\prime\T}$. Let $(\bL_0,\bS_0)$ be the model-induced pair, that is, $\PIC\bL_0=\bL_0$, $\PI\bS_0=\bS_0$ and $\rank(\bL_0)=r$, with SVD $\bL_0=\bU_0\bSigma_0\bV_0^{\T}$. Then $\|\Pg\pU\|_{\F} \le \|\Pg\bU_0\|_{\F}$ for every $g\in\mathcal{I}^c$.
	In particular, if (Block-IC) holds for $\bU_0$, then
	\[
	\max_{g\in \mathcal{I}^c}\|\Pg\pU\|_{\F}^2 \le \frac{\mu r}{G}.
	\]
\end{lemma}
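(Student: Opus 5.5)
The plan is to compare the two column projectors on a benign block $g\in\mathcal I^c$ through a common row-space basis, and to show that the extra rows of $\pL$ on $\mathcal I$ can only shrink the leverage carried by the benign rows.

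\emph{Step 1: structure of $\pL$.} From the oracle feasibility argument in the proof of Lemma~\ref{supp:lem:pPV}, we have $\PIC(\pL)=\PIC\bD_0=\bL_0$. Also $\rank(\pL)=r$ and $\row(\pL)=\row(\bA)=\row(\bL_0)$, the last equality by \eqref{eq:rank-r} and \eqref{L0}. Hence $\bV_0$ is an orthonormal basis of $\row(\pL)$. I would write
\[
\pL=\bY\bV_0^\T,\qquad \bY:=\pL\bV_0\in\mathbb R^{Gq\times r},
\]
where $\bY$ has full column rank $r$ and $\mathrm{col}(\bY)=\mathrm{col}(\pL)$. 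Consequently $P_{\pU}=\pU\pU^\T=\bY(\bY^\T\bY)^{-1}\bY^\T$. Since $\PIC(\pL)=\bL_0=\bU_0\bSigma_0\bV_0^\T$, we get $\PIC\bY=\bU_0\bSigma_0$ (the rows of $\bU_0$ on $\mathcal I$ vanish because $\PIC\bL_0=\bL_0$). Writing $\bZ:=\PI\bY$, this gives $\bY=\bU_0\bSigma_0+\bZ$ with the two pieces supported on disjoint row blocks.

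\emph{Step 2: Gram comparison.} By the disjoint supports,
\[
\bY^\T\bY=\bSigma_0\bU_0^\T\bU_0\bSigma_0+\bZ^\T\bZ=\bSigma_0^2+\bZ^\T\bZ\succeq\bSigma_0^2\succ 0,
\]
and therefore $(\bY^\T\bY)^{-1}\preceq\bSigma_0^{-2}$.

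\emph{Step 3: block comparison.} For $g\in\mathcal I^c$ we have $\Pg\bY=\Pg\bU_0\bSigma_0$. Then
\[
\|\Pg\pU\|_\F^2=\tr\bigl(\Pg P_{\pU}\Pg\bigr)=\tr\bigl(\Pg\bU_0\bSigma_0(\bY^\T\bY)^{-1}\bSigma_0\bU_0^\T\Pg\bigr).
\]
Applying the congruence $X\mapsto\Pg\bU_0\bSigma_0X\bSigma_0\bU_0^\T\Pg$, which preserves $\preceq$, to the bound from Step 2 gives
\[
\|\Pg\pU\|_\F^2\le\tr\bigl(\Pg\bU_0\bU_0^\T\Pg\bigr)=\|\Pg\bU_0\|_\F^2 .
\]
The incoherence statement for $\pU$ then follows immediately from (Block-IC) applied to $\bU_0$.

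\emph{Main obstacle.} The delicate point is resisting the temptation to argue that $\pL\pL^\T$ is block-diagonal across $\mathcal I$ and $\mathcal I^c$. It is not: the cross terms $\bL_0\bMM^\T$, with $\bMM\in\mathcal M$, are nonzero off-diagonal blocks. Parametrizing through the shared row basis $\bV_0$ sidesteps this, because it turns the comparison into the Loewner inequality on the $r\times r$ Gram matrix in Step 2. The only facts that need checking are that $\bV_0$ genuinely spans $\row(\pL)$ (from Lemma~\ref{supp:lem:pPV}) and that $\bY$ has full column rank, which holds because $\bY^\T\bY\succeq\bSigma_0^2\succ0$.
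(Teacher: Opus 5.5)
Your proof is correct, and it reaches the inequality by a different route from the paper's.

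\emph{The paper's route.} The paper never parametrizes $\pL$ through a common row basis. From $\PIC\pL=\bL_0$ it reads off $\mathrm{col}(\PIC\pU)\subseteq\mathrm{col}(\bU_0)$ and writes $\PIC\pU=\bU_0\bT$ with $\bT:=\bU_0^\T\PIC\pU$. It then observes that $\bT^\T\bT=(\pU)^\T\PIC\pU=\bI-(\pU)^\T\PI\pU\preceq\bI$, which gives $\|\Pg\pU\|_{\F}=\|\Pg\bU_0\bT\|_{\F}\le\|\Pg\bU_0\|_{\F}$. The contraction comes for free from the orthonormality of $\pU$ and the fact that $\PIC$ is a projector. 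The only structural input is $\PIC\pL=\bL_0$; the inequality would hold even without knowing $\rank(\pL)=r$.

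\emph{Your route.} It needs the extra fact $\row(\pL)=\row(\bL_0)$ in order to use $\bV_0$ as a common basis. You could get this directly from $\row(\bL_0)=\row(\PIC\pL)\subseteq\row(\pL)$ together with $\rank(\pL)=r$, instead of passing through $\row(\bA)$. In exchange you get an explicit formula,
$\Pg P_{\pU}\Pg=\Pg\bU_0\bSigma_0(\bSigma_0^2+\bZ^\T\bZ)^{-1}\bSigma_0\bU_0^\T\Pg$, where $\bZ^\T\bZ=\bV_0^\T\bMM^\T\bMM\bV_0$ and $\pL=\bL_0+\bMM$ with $\bMM\in\mathcal M$. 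This shows exactly how the mass that the shift places on $\mathcal I$ shrinks the leverage of benign blocks.

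\emph{How they relate.} The two arguments are two views of the same contraction. Write $\pU=\bY(\bY^\T\bY)^{-1/2}\bO$ for some $\bO\in\mathbb{O}_{r,r}$. Then the paper's $\bT$ equals $\bSigma_0(\bY^\T\bY)^{-1/2}\bO$, and the bound $\bT\bT^\T\preceq\bI$ is precisely your Gram inequality $\bY^\T\bY\succeq\bSigma_0^2$.
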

\begin{proof}[Proof of Lemma~\ref{supp:lem:transfer-incoherence}]
	By Lemma~\ref{supp:lem:oracle-feasible-class}, there exists $\bMM\in\mathcal{M}$ such that
	\[
	\pL=\bL_0+\bMM,\qquad \pS=\bS_0-\bMM.
	\]
	Since $\bMM\in\mathcal{M}\subseteq\mathcal{S}_{\mathcal{I}}$, we have $\PIC\bMM=\bzero$. Hence
	\[
	\PIC\pL=\PIC(\bL_0+\bMM)=\PIC\bL_0=\bL_0.
	\]
	Moreover, $\pL=\pL\PA$ and Condition~(12) implies $\rank(\pL)\le \rank(\PA)=r$, while
	\[
	\rank(\pL)\ge \rank(\PIC\pL)=\rank(\bL_0)=r.
	\]
	Therefore $\rank(\pL)=r$.

	Now, write the compact SVDs
	\[
	\pL=\pU\pSigma\pVT,\qquad \bL_0=\bU_0\bSigma_0\bV_0^\T,
	\]
	so we have
	\[
	(\PIC\pU)\pSigma\pVT=\PIC\pL=\bL_0.
	\]
	Since $\rank(\pL)=r$, the matrix $\pSigma\pVT$ has full row rank $r$, so there exists a matrix $\bH$ satisfying $(\pSigma\pVT)\bH=\bI_r$. Consequently,
	\[
	\PIC\pU=(\PIC\pU)(\pSigma\pVT)\bH = (\PIC\pU\pSigma\pVT)\bH,
	\]
	which yields
	\[
	\operatorname{col}(\PIC\pU)=\operatorname{col}(\PIC\pU\pSigma\pVT)=\operatorname{col}(\bL_0)=\operatorname{col}(\bU_0).
	\]
	Thus,
	\[
	\PIC\pU = \bU_0 \bU_0^\T (\PIC\pU)
	\]
	because $\bU_0 \bU_0^\T$ is the orthogonal projector onto $\operatorname{col}(\bU_0)$.

	Now fix $g\in\mathcal{I}^c$. Since $\Pg\PIC=\Pg$,
	\[
	\begin{aligned}
		\Pg\pU &=\Pg(\PIC\pU) \\
		&=\Pg(\bU_0 \bU_0^\T \PIC\pU) \\
		&=\Pg(\bU_0) (\bU_0^\T \PIC\pU)
	\end{aligned}
	\]

	Since
	\[
	(\bU_0^\T \PIC\pU)^\T \bU_0^\T \PIC\pU = \bU^{\prime\T}\PIC\pU=\bI_r-\bU^{\prime\T}\PI\pU\preceq \bI_r,
	\]
	we have $\|\bU_0^\T \PIC\pU\|_{\op}\le 1$.
	Therefore,
	\[
	\|\Pg\pU\|_{\F}\le \|\Pg\bU_0\|_{\F}\|\bU_0^\T \PIC\pU\|_{\op}\le \|\Pg\bU_0\|_{\F}.
	\]
	Taking the maximum over $g\in\mathcal{I}^c$ and using (Block-IC) gives
	\[
	   \max_{g\in \mathcal{I}^c}\|\Pg\pU\|_{\F}^2 \le \frac{\mu r}{G}.
	\]
\end{proof}

\begin{lemma}\label{supp:lem:rho-prime}
    Let $(\pL,\pS)$ be an optimal solution to~(23), 
    with compact SVD $\pL=\pU\pSigma\bV^{\prime\T}$.
    Define $\rho^\prime:=\|\PI\pPU\PI\|_{\op}$.
    Then
    \[
        \rho^\prime\le \frac{\lambda_S}{\lambda_L}\sqrt{s}.
    \]
    In particular, if
    \[
        \frac{\lambda_S}{\lambda_L}\le a\frac{1-\rho^\prime}{\sqrt{s}}
    \]
    for some $a\in(0,1)$, then
    \[
        \rho^\prime\le \frac{a}{1+a}<1.
    \]
\end{lemma}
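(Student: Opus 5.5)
The plan is to derive the bound directly from the first-order identity in Lemma~\ref{supp:lem:PIUVT}, after rewriting $\rho^\prime$ as an operator norm of an ordinary matrix.

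\emph{Rewriting $\rho^\prime$.} Both $\PI$ and $\pPU$ act on $\bX\in\mathbb{R}^{Gq\times p}$ by left multiplication. Let $\bD_{\mathcal I}:=\sum_{g\in\mathcal I}(\be_g\be_g^\T)\otimes\bI_q$ be the diagonal $0/1$ row-selector, so that $\PI(\bX)=\bD_{\mathcal I}\bX$ and $\pPU(\bX)=\pU\pU^\T\bX$. Then $\PI\pPU\PI$ is left multiplication by the symmetric PSD matrix $\bD_{\mathcal I}\pU\pU^\T\bD_{\mathcal I}$. Its operator norm on $(\mathbb{R}^{Gq\times p},\|\cdot\|_\F)$ therefore equals the spectral norm of that matrix:
\[
\rho^\prime=\|\bD_{\mathcal I}\pU\|_{\op}^2 .
\]
Since $\pU$ has orthonormal columns and $\bD_{\mathcal I}$ is a projection, $\|\bD_{\mathcal I}\pU\|_{\op}\le 1$. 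Hence $\rho^\prime\le\|\bD_{\mathcal I}\pU\|_{\op}$, and it suffices to bound $\|\PI\pU\|_{\op}$ by $(\lambda_S/\lambda_L)\sqrt{s}$.

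\emph{Using the KKT identity.} Because $\pV$ has orthonormal columns, $\|\PI\pU\|_{\op}=\|\PI\pU\pVT\|_{\op}$. Lemma~\ref{supp:lem:PIUVT} supplies $\bH\in\partial\|\pS\|_{\blk,1}$ with $\lambda_L\PI\pU\pVT=\lambda_S\PI(\bH)\pPV$. Taking spectral norms and using $\|\pPV\|_{\op}\le1$ gives
\[
\|\PI\pU\pVT\|_{\op}\le \frac{\lambda_S}{\lambda_L}\|\PI(\bH)\|_{\op}\le\frac{\lambda_S}{\lambda_L}\|\PI(\bH)\|_{\F}.
\]
Every block $\bH^{(g)}$ of a block-$\ell_1$ subgradient has $\|\bH^{(g)}\|_\F\le1$, and $\PI(\bH)$ has exactly $s$ blocks. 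Therefore $\|\PI(\bH)\|_\F^2=\sum_{g\in\mathcal I}\|\bH^{(g)}\|_\F^2\le s$. Chaining these bounds yields $\rho^\prime\le(\lambda_S/\lambda_L)\sqrt{s}$.

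\emph{The ``in particular'' claim.} Substituting the assumed ratio bound gives $\rho^\prime\le a(1-\rho^\prime)$. Rearranging, $\rho^\prime(1+a)\le a$, so $\rho^\prime\le a/(1+a)<1$.

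The only step needing care is the first one: identifying the operator norm of the composed projection on matrix space with $\|\bD_{\mathcal I}\pU\|_{\op}^2$. Once that is established, the remaining steps are the norm inequalities above. Note that the ``squared versus unsquared'' issue is harmless precisely because $\|\bD_{\mathcal I}\pU\|_{\op}\le1$, which lets us drop the square.
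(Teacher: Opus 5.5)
Your proof is correct and is essentially the paper's argument: both rest on the identity $\lambda_L\PI\pU\pVT=\lambda_S\PI(\bH\pPV)$ from Lemma~\ref{supp:lem:PIUVT} together with $\|\PI(\bH)\|_{\F}\le\sqrt{s}$. The only difference is cosmetic. The paper multiplies the identity on the left by $(\pU)^\T$ and on the right by $\pV$, which gives $\rho^\prime=\|(\pU)^\T\PI\pU\|_{\op}$ directly. You instead use $\rho^\prime=\|\PI\pU\|_{\op}^2\le\|\PI\pU\|_{\op}$, which actually yields the slightly sharper bound $\sqrt{\rho^\prime}\le(\lambda_S/\lambda_L)\sqrt{s}$ before you drop the square.
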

\begin{proof}[Proof of Lemma~\ref{supp:lem:rho-prime}]
	By Lemma~\ref{supp:lem:PIUVT}, there exists $\bH\in\partial\|\pS\|_{\blk,1}$ such that
	\[
	\lambda_L\PI\pU\pVT=\lambda_S\PI(\bH\pPV).
	\]
	Right-multiplying by $\pV$ and then left-multiplying by $(\pU)^\T$ gives
	\[
	\lambda_L(\pU)^\T\PI\pU=\lambda_S(\pU)^\T\PI\bH\pPV.
	\]
	Taking operator norms and using $\|\pU\|_{\op}=\|\pPV\|_{\op}=1$, we obtain
	\[
	\rho^\prime=\|(\pU)^\T\PI\pU\|_{\op}
	\le \frac{\lambda_S}{\lambda_L}\|(\pU)^\T\PI\bH\pPV\|_{\op}
	\le \frac{\lambda_S}{\lambda_L}\|\PI\bH\|_{\op}
	\le \frac{\lambda_S}{\lambda_L}\|\PI\bH\|_{\F}.
	\]
	Because $\bH\in\partial\|\pS\|_{\blk,1}$, each block of $\PI\bH$ has Frobenius norm at most $1$, so
	\[
	\|\PI\bH\|_{\F}^2\le \sum_{g\in\mathcal{I}}1=s.
	\]
	This proves the first claim.
	If the displayed upper bound on $\lambda_S/\lambda_L$ holds, then
	\[
	\rho^\prime\le \frac{\lambda_S}{\lambda_L}\sqrt{s}\le a(1-\rho^\prime),
	\]
	hence $(1+a)\rho^\prime\le a$, which yields the second claim.
\end{proof}

\begin{lemma}\label{supp:lem:neumann}
	Let $(\pL,\pS)$ be an optimal solution to (23),
    with compact SVD $\pL=\pU\pSigma\bV^{\prime\T}$.
	Let $\rho^\prime:=\|\PI\pPU\PI\|_{\op}$.
	Under $\delta=0$ and Condition~(12), 
    $\rho^\prime<1$, thus the linear operator $(\bI-\PI\pPU\PI)$ is invertible, and
	\[
	\|(\bI-\PI\pPU\PI)^{-1} \|_{\op} \le \frac{1}{1-\rho^\prime}.
	\]
\end{lemma}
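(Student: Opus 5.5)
The plan is to show $\rho^\prime<1$ by a kernel argument: in finite dimensions the norm of the compressed projector equals $1$ only if the column space of $\pL$ contains a nonzero matrix supported on $\mathcal I$. The facts needed for that are already in the proofs of Lemma~\ref{supp:lem:pPV} and Lemma~\ref{supp:lem:transfer-incoherence}. Once $\rho^\prime<1$ is known, the inverse bound is a spectral statement about a self-adjoint operator.

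\emph{Step 1: reduce $\rho^\prime=1$ to a nontrivial intersection.} The operator $\PI\pPU\PI$ acts on the finite-dimensional space $\mathbb{R}^{Gq\times p}$ with the Frobenius inner product. It is self-adjoint and positive semidefinite, and $\rho^\prime\le 1$ because it is a product of orthogonal projections. Its norm is a maximum over the compact unit sphere, so it is attained. Suppose $\rho^\prime=1$. Then there is a unit $\bX$ with $\langle \bX,\PI\pPU\PI\bX\rangle=\|\pPU\PI\bX\|_{\F}^2=1$. Since $\|\pPU\PI\bX\|_{\F}\le\|\PI\bX\|_{\F}\le\|\bX\|_{\F}=1$, both inequalities are equalities. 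Hence $\PI\bX=\bX$ and $\pPU\bX=\bX$, so $\bX\in\mathcal{U}(\pL)\cap\mathcal{S}_{\mathcal I}$ and $\bX\ne\bzero$.

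\emph{Step 2: rule out such an $\bX$.} Since $\bX$ lies in the column space of $\pL$, I write $\bX=\pU\bY^\T$ for some $\bY\in\mathbb{R}^{p\times r}$. Being supported on $\mathcal I$ means $\PIC(\pU)\bY^\T=\bzero$.

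Under $\delta=0$ and Condition~(12), the argument of Lemma~\ref{supp:lem:transfer-incoherence}, via Lemma~\ref{supp:lem:oracle-feasible-class}, gives $\PIC\pL=\bL_0$. This can be rewritten as $(\PIC\pU)\pSigma\pVT=\bL_0$, and $\rank(\bL_0)=r$. The matrix $\PIC\pU$ has only $r$ columns, so $\rank(\PIC\pU)\le r$; it also has rank at least $\rank(\bL_0)=r$, so it has full column rank $r$. Therefore $\PIC(\pU)\bY^\T=\bzero$ forces $\bY=\bzero$, hence $\bX=\bzero$. This contradicts Step 1, so $\rho^\prime<1$.

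This is the step I expect to carry the content. Its key point is that the rank condition survives the shift by $\bMM\in\mathcal{M}$, which is exactly why it holds for any oracle optimizer and not just for $\bL_0$.

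\emph{Step 3: invertibility and the norm bound.} The eigenvalues of $\PI\pPU\PI$ lie in $[0,\rho^\prime]$. So $\bI-\PI\pPU\PI$ is self-adjoint with spectrum in $[1-\rho^\prime,1]\subset(0,1]$. It is therefore invertible, and its inverse has operator norm equal to the reciprocal of its smallest eigenvalue, which is at most $1/(1-\rho^\prime)$. Equivalently, the Neumann series $\sum_{m\ge0}(\PI\pPU\PI)^m$ converges and gives the same bound.
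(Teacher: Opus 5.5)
Your proposal is correct and follows essentially the paper's route. In both arguments the crux is that $\PIC\pL=\bL_0$ with $\rank(\bL_0)=r$ forces $\PIC\pU$ to have full column rank $r$, and the inverse bound is then the same spectral/Neumann-series observation. The only difference is packaging: the paper reads off $\rho^\prime<1$ directly from the fact that $(\pU)^\T\PI\pU=\bI_r-(\PIC\pU)^\T(\PIC\pU)$ has eigenvalues in $[0,1)$, whereas you argue by contradiction via norm attainment, showing that $\mathcal{S}_{\mathcal I}$ meets the column space of $\pL$ only at zero.
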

\begin{proof}[Proof of Lemma~\ref{supp:lem:neumann}]
	Note that $\bD_0=\pL+\pS=\bL_0+\bS_0$.
	Since $\PI\pS=\pS$ implies $\PIC\pS=0$,  $\PIC\pL=\PIC(\bD_0-\pS)=\PIC\bD_0$.
	Also, $\PIC\bS_0=0$ leads to $\PIC\bD_0=\PIC\bL_0=\bL_0$.
	Thus we have
	\[
	\PIC\pL=\bL_0.
	\]
	Write $A:=\PIC\pU$, then
	\[
	\PIC\pL=A\pSigma\pVT=\bL_0,
	\]
	By assumption and Lemma~\ref{supp:lem:pPV}, $\rank(\pL)=\rank(\bL_0)=r$, hence $\rank(A)=r$, so
	\[
	A^{\T}A=(\pU)^\T\PIC\pU \succ 0.
	\]
	Also, since $\PIC$ is an orthogonal projector,
	\[
	\bzero\prec A^{\T}A\preceq \bI_r
	\]
	Therefore,
	\[
	(\pU)^\T\PI\pU= \bI_r -A^{\T}A
	\]
	has eigenvalues in $[0,1)$, which implies that
	\[
	\rho^\prime=\|\PI\pPU\PI\|_{\op}=\|(\pU)^\T\PI\pU\|_{\op}<1,
	\]
	where the second equation holds because
	\[
	\PI\pPU\PI=(\PI\pU)(\PI\pU)^\T,\quad (\pU)^\T\PI\pU=(\PI\pU)^\T(\PI\pU),
	\]
	and they share the same eigenvalues.

	Since $\rho^\prime<1$, then $\bI-\PI\pPU\PI$ is invertible, and
	\[
	\|(\bI-\PI\pPU\PI)^{-1} \|_{\op} \le \sum_{k\ge 0}\|\PI\pPU\PI\|_{\op}^{k}=\frac{1}{1-\rho^\prime}
	\]
	via the Neumann series $(\bI-\PI\pPU\PI)^{-1}=\sum_{k\ge 0}(\PI\pPU\PI)^{k}$.
\end{proof}

The next proposition ensures the existence of the dual certificate used in the proof of Theorem~1.
\begin{proposition}\label{supp:prop:Q-existence}
	Suppose the noiseless decomposition $\bD_0=\bL_0+\bS_0$ satisfies Assumptions~1--2 of the main paper.
    Let $(\pL,\pS)$ be any optimal solution of the oracle problem~(23), 
    with compact SVD $\pL=\pU\pSigma\pVT$. Let $a,b\in(0,1)$ satisfy $\sqrt{\mu rs/G}<ab/(1+a)^2$.
	If
	\[
	\frac{\lambda_S}{\lambda_L}\in\biggl(
	\frac{\sqrt{\mu r/G}}{b-(1+a)\sqrt{\mu rs/G}},\  \frac{a}{(1+a)\sqrt{s}}
	\biggr),
	\]
	then there exists a matrix $\bQ\in\mathbb{R}^{Gq\times p}$ satisfying \eqref{supp:eq:KKT}.
\end{proposition}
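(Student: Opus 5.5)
The certificate will be built as the sum of the nuclear-norm subgradient and a correction supported on the contaminated blocks, in the spirit of \citep{xu2012robust}. Throughout, $(\pL,\pS)$ is an oracle optimizer with compact SVD $\pL=\pU\pSigma\pVT$. By Lemma~\ref{supp:lem:pPV}, $\pPV=\PA$. By Lemma~\ref{supp:lem:PIUVT}, there is $\bH\in\partial\|\pS\|_{\blk,1}$ with $\lambda_L\PI\pU\pVT=\lambda_S\PI(\bH\pPV)$. The ansatz is
\[
\bQ:=\lambda_L\pU\pVT+\lambda_S\,\PI(\bH)\pPVC+\bZ,
\]
where $\bZ$ is a further correction supported on $\mathcal I$ and lying in $\pPTC$-type directions. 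Its job is to make $\PI(\bQ)=\lambda_S\PI(\bH)$ exactly while leaving $\pPT(\bQ)=\lambda_L\pU\pVT$ intact.

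First I check what the lemmas already give. Lemma~\ref{supp:lem:PIUVT} implies $\PI(\lambda_L\pU\pVT)=\lambda_S\PI(\bH)\pPV$. Therefore $\PI(\lambda_L\pU\pVT+\lambda_S\PI(\bH)\pPVC)=\lambda_S\PI(\bH)$, which is the required block sign, provided $\bZ$ is chosen with $\PI\bZ=0$. The cleanest route is to take $\bZ$ of the form $\bZ=-\pPU(\cdot)$ with $\PI\bZ=0$. More concretely, set
\[
\bQ_1:=\lambda_S\PI(\bH)\pPVC,\qquad \bQ:=\lambda_L\pU\pVT+(\bI-\pPU)\bigl[(\bI-\PI\pPU\PI)^{-1}\bQ_1\bigr]
\]
or a close variant. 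Here $(\bI-\PI\pPU\PI)^{-1}$ exists by Lemma~\ref{supp:lem:neumann}, and it is used to restore the $\mathcal I$-blocks after $(\bI-\pPU)$ has altered them. Because $\bQ_1\pPV=0$ and the correction has the form $(\bI-\pPU)\bX\pPVC$, it lies in $\mathcal T(\pL)^\perp$. This gives the first condition of \eqref{supp:eq:KKT}. The Neumann construction is designed so that $\PI$ of the correction equals $\bQ_1$, which gives the third condition.

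Next I bound the remaining two conditions. The operator-norm bound follows from
\[
\|\pPTC\bQ\|_{\op}\le\frac{\|\bQ_1\|_{\op}}{1-\rho'}\le \frac{\lambda_S\|\PI\bH\|_{\F}}{1-\rho'}\le\frac{\lambda_S\sqrt s}{1-\rho'}.
\]
Lemma~\ref{supp:lem:rho-prime} with the upper end of the ratio window $\lambda_S/\lambda_L<a/((1+a)\sqrt s)$ gives $\rho'\le a/(1+a)$, so this is at most $(1+a)\lambda_S\sqrt s\le a\lambda_L$.

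For the leakage onto benign blocks, note that $\PIC(\bQ)=\lambda_L\PIC\pU\pVT-\PIC\pPU[(\bI-\PI\pPU\PI)^{-1}\bQ_1]$. For each $g\in\mathcal I^c$, Lemma~\ref{supp:lem:transfer-incoherence} gives $\|\Pg\pU\|_{\F}\le\sqrt{\mu r/G}$. The first term therefore contributes at most $\lambda_L\sqrt{\mu r/G}$. The second term contributes at most $\sqrt{\mu r/G}\,(1+a)\lambda_S\sqrt s$, using $\|\Pg\pU\pU^\T\bX\|_{\F}\le\|\Pg\pU\|_{\F}\|\bX\|_{\op}$ and the operator bound above. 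Requiring the sum to be at most $b\lambda_S$ rearranges to $\lambda_S/\lambda_L>\sqrt{\mu r/G}/(b-(1+a)\sqrt{\mu rs/G})$. This is exactly the lower end of the window. The hypothesis $\sqrt{\mu rs/G}<ab/(1+a)^2$ ensures that the window is nonempty.

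The main obstacle is the precise form of the Neumann correction. I must verify simultaneously that $\PI$ of the correction equals $\bQ_1$ exactly, that the correction stays in $\mathcal T(\pL)^\perp$ (right-multiplication by $\pPVC$ must commute with the left operators, which it does), and that the norms are controlled in operator norm rather than Frobenius norm. If the straightforward ansatz fails to reproduce $\bQ_1$ on $\mathcal I$, I would instead solve $\PI(\bI-\pPU)\bX=\bQ_1$ for $\bX=\PI\bY$. This reduces to $(\bI-\PI\pPU\PI)\bY=\bQ_1$, which is invertible by Lemma~\ref{supp:lem:neumann}.
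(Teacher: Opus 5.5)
Your construction is the paper's own: your $\bQ_1=\lambda_S\PI(\bH)\pPVC$ is the paper's $\bR=\lambda_S\PI(\bH\pPVC)$, and $\bQ=\lambda_L\pU\pVT+\pPUC(\bI-\PI\pPU\PI)^{-1}\bQ_1$ is exactly its certificate, verified through the same ingredients: $\rho'<a/(1+a)$ from Lemma~\ref{supp:lem:rho-prime}, the bound $(1+a)\lambda_S\sqrt{s}$ on $\bZ$ (you bound $\|\bZ\|_{\op}$ where the paper bounds $\|\bZ\|_{\F}$, and both work), and incoherence transfer (Lemma~\ref{supp:lem:transfer-incoherence}) on $\mathcal{I}^c$ for the lower end of the window. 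The ``obstacle'' you flag goes away once you note that $\bI-\PI\pPU\PI$ maps $\mathcal{S}_{\mathcal{I}}$ into itself, as the Neumann series shows, so $\bZ=\PI\bZ$ and hence $\PI\pPUC\bZ=(\bI-\PI\pPU\PI)\bZ=\bQ_1$ exactly.
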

\begin{proof}[Proof of Proposition~\ref{supp:prop:Q-existence}]
	Define $\rho^\prime:=\|\PI\pPU\PI\|_{\op}$ and set
	\[
	x:=\sqrt{\frac{\mu rs}{G}},
	\qquad
	\lambda:=\frac{\lambda_S}{\lambda_L}.
	\]
	By Lemma~\ref{supp:lem:rho-prime} and the assumed upper bound on $\lambda$, we have
	\[
	\rho^\prime\le \lambda\sqrt{s}<\frac{a}{1+a}.
	\]
	Hence,
	\[
	1-\rho^\prime>\frac{1}{1+a}.
	\]

	Under the present noiseless setup, Lemma~\ref{supp:lem:neumann} implies that $(\bI - \PI\pPU\PI)$ is invertible on $\mathcal S_{\mathcal I}$, and further
	\[
	   \|(\bI-\PI\pPU\PI)^{-1}\|_{\op}\le \frac{1}{1-\rho^\prime}<1+a.
	\]

	Now we construct $\bQ$ in the form
	\[
	   \bQ:=\lambda_L\pU\pVT + \pPTC(\bZ),
	\]
	for some $\bZ\in\mathbb{R}^{Gq\times p}$ satisfying $\PI(\bZ)=\bZ$.
	By construction,
	\[
	   \pPT(\bQ)=\lambda_L\pU\pVT
	\]
	is satisfied.
	Moreover,
	\[
	\pPTC\bZ=\pPUC\bZ\pPVC=\pPUC\bZ-\pPUC\bZ\pPV.
	\]
	Hence, once $\bZ\pPV=0$, we have $\pPTC\bZ=\pPUC\bZ$.

	By Lemma~\ref{supp:lem:PIUVT}, there exists $\bH\in\partial\|\pS\|_{\blk,1}$ on $\mathcal{I}$ such that
	\[
	\lambda_L\PI\pU\pVT=\lambda_S\PI(\bH\pPV).
	\]
	Define
	\[
	\bR:=\lambda_S\PI(\bH)-\lambda_L\PI\pU\pVT=\lambda_S\PI(\bH\pPVC),
	\]
	then $\PI(\bR)=\bR$ and $\bR\pPV=\bzero$.
	We therefore set
	\begin{equation}\label{supp:eq:prop-PV-noiseless-Z}
		\bZ:=(\bI-\PI\pPU\PI)^{-1}\bR.
	\end{equation}

	Since the inverse operator acts only on the left, it does not change the nullspace of $\bR$. So $\bR\pPV=\bzero$ implies that $\bZ\pPV=\bzero$.
	Thus indeed $\pPTC\bZ=\pPUC\bZ$.
	Substituting \eqref{supp:eq:prop-PV-noiseless-Z} into
	\[
	\begin{aligned}
		\PI(\bQ)&=\lambda_L\PI\pU\pVT + \PI\pPUC\bZ\\
		&=\lambda_L\PI\pU\pVT + (\bI-\PI\pPU\PI)\bZ
	\end{aligned}
	\]
	yields
	\[
	\PI(\bQ)=\lambda_S\PI(\bH).
	\]
	Because $\bH\in\partial\|\pS\|_{\blk,1}$, we have $\PI(\bH) \in \Sign_{\mathcal{I}}(\pS)$, and hence the third condition in \eqref{supp:eq:KKT} hold.

	Next we verify the bound on $\pPTC(\bQ)$.
	Since $\pPTC(\bQ)=\pPTC(\bZ)$,
	\[
	\begin{aligned}
		\|\pPTC(\bQ)\|_{\op} &\le \|\pPTC(\bZ)\|_{\F}\\
		&\le \|\bZ\|_{\F} \\
		& \le \|(\bI-\PI\pPU\PI)^{-1}\|_{\op}\,\|\bR\|_{\F}.
	\end{aligned}
	\]
	Also, each block of $\PI(\bH)$ has Frobenius norm at most $1$, so
	\[
	\begin{aligned}
		\|\bR\|_{\F} &=\lambda_S\|\PI(\bH\pPVC)\|_{\F}\\
		&\le \lambda_S\|\PI(\bH)\|_{\F}\\
		&\le \lambda_S\sqrt{s},
	\end{aligned}
	\]
	and
	\begin{equation}\label{supp:eq:Z-bound}
		\|\bZ\|_{\F} \le (1+a)\lambda_S\sqrt{s}.
	\end{equation}
	Therefore
	\[
	\|\pPTC(\bQ)\|_{\op} < (1+a)\lambda_S\sqrt{s} < a\lambda_L,
	\]
	where the last inequality is exactly the assumed upper bound on $\lambda$. This proves the second condition in \eqref{supp:eq:KKT}.

	Finally, we bound $\|\PIC(\bQ)\|_{\blk,\infty}$.
	Let
	\[
	\gamma:=\max_{g\in\mathcal{I}^c}\|\Pg\pU\|_{\F}.
	\]
	For any $g\in\mathcal{I}^c$, since $\Pg\bZ=\bzero$,
	\[
	\begin{aligned}
		\Pg\bQ &=\lambda_L\Pg\pU\pVT+\Pg\pPUC\bZ\\
		&=\lambda_L\Pg\pU\pVT-\Pg\pPU\bZ.
	\end{aligned}
	\]
	Hence
	\[
	\begin{aligned}
		\|\Pg\bQ\|_{\F}
		& \le \lambda_L\|\Pg\pU\pVT\|_{\F}+\|\Pg\pPU\|_{\op}\|\bZ\|_{\F}\\
		&\le \gamma\bigl(\lambda_L+\|\bZ\|_{\F}\bigr).
	\end{aligned}
	\]
	Using the bound~\eqref{supp:eq:Z-bound}, we obtain
	\[
	\|\Pg\bQ\|_{\F} < \gamma\lambda_L+\gamma(1+a)\lambda_S\sqrt{s}.
	\]
	Thus it suffices to show
	\[
	\gamma\lambda_L+\gamma(1+a)\lambda_S\sqrt{s}<b\lambda_S,
	\]
	equivalently,
	\[
	\gamma\lambda_L < \lambda_S [b - \gamma(1+a)\sqrt{s}].
	\]
	By Lemma~\ref{supp:lem:transfer-incoherence}, $\gamma\le \sqrt{\mu r/G}=x/\sqrt{s}$.
	Since
	\[
	x<\frac{ab}{(1+a)^2}<\frac{b}{1+a},
	\]
	the term $b-(1+a)x$ is positive.
	Therefore
	\[
	\begin{aligned}
		\frac{\gamma}{b-(1+a)\gamma\sqrt{s}}
		& \le \frac{x/\sqrt{s}}{b-(1+a)x} \\
		&= \frac{\sqrt{\mu r/G}}{b-(1+a)\sqrt{\mu rs/G}}.
	\end{aligned}
	\]
	The assumed lower bound on $\lambda$ now yields
	\[
	\lambda>\frac{\gamma}{b-(1+a)\gamma\sqrt{s}},
	\]
	which is equivalent to
	\[
	\gamma\lambda_L+\gamma(1+a)\lambda_S\sqrt{s}<b\lambda_S.
	\]
	Consequently, $\|\Pg\bQ\|_{\F}<b\lambda_S$ for every $g\in\mathcal{I}^c$, and hence
	\[
	\|\PIC(\bQ)\|_{\blk,\infty}\le b\lambda_S.
	\]
	We have verified all four conditions in \eqref{supp:eq:KKT}, so the proof is complete.
\end{proof}

\subsection{Proof of Theorem~1}
Now we prove Theorem~1 of the main paper.
\begin{proof}[Proof of Theorem~1]
	Let $(\pL,\pS)$ be the optimal solution of the oracle problem~(23). 
	By Proposition~\ref{supp:prop:Q-existence}, there exists a matrix $\bQ \in \mathbb{R}^{Gq\times p}$ satisfying \eqref{supp:eq:KKT} at $(\pL,\pS)$. Lemma~\ref{supp:lem:oracle-to-original} then implies that $(\pL,\pS)$ is also optimal for the original problem~(24). 
	Define $\bDelta:=\hL-\pL$. Since both $(\hL,\hS)$ and $(\pL,\pS)$ are feasible, we have
	\[
	(\hL,\hS) = (\pL+\bDelta,\pS-\bDelta).
	\]
	By Lemma~\ref{supp:lem:subgradient-ineq},
	\[
	\pPTC(\bDelta)=0,\quad \PIC(\bDelta)=0.
	\]

	From $\pPTC(\bDelta)=0$, we know $\bDelta\in \mathcal{T}$, so $\bDelta$ can be written as
	$\bDelta = \pU \bX^\T + \bY\pVT$ for some $\bX$ and $\bY$.
	Now multiply by $\pPVC$:
	\[
	\bDelta \pPVC = \pU \bX^\T \pPVC \in \mathcal{U}^\prime,
	\]
	where $\mathcal{U}^\prime:=\{\pU\bX^\T\mid \bX\in\mathbb{R}^{p\times r}\}$.
	The condition $\PIC(\bDelta)=0$ implies that $\bDelta$ is supported only on blocks in $\mathcal{I}$, so $\bDelta\pPVC\in \mathcal{S}_{\mathcal{I}}$.
	Hence
	\[
	\bDelta\pPVC \in \mathcal{U}^\prime \cap \mathcal{S}_{\mathcal{I}}.
	\]

	Let $\rho^\prime:=\|\PI\pPU\PI\|_{\op}$. By Lemma~\ref{supp:lem:neumann}, $\rho^\prime<1$. Applying the same argument as in Lemma~1 of the main paper with $\PU$ replaced by $\pPU$, we obtain
	\[
	\mathcal{U}^\prime \cap \mathcal{S}_{\mathcal{I}}=\{0\}.
	\]
	Thus
	\[
	\bDelta \pPVC=0,
	\]
	which further leads to $\bDelta=\bDelta \pPV$.
	Therefore,
	\[
	\hL = \pL + \bDelta = (\pL + \bDelta)\pPV = \hL\pPV,
	\]
	so
	\begin{equation}\label{supp:eq:row1}
		\row(\hL)\subseteq \span(\pV)=\row(\pL)=\row(\bA)
	\end{equation}

	On the other hand, using the fact that $\PIC(\pS)=\bzero$ and $\PIC(\bDelta)=\bzero$, we have
	\[
	\PIC\hS=\bzero.
	\]
	By feasibility, we further have
	\[
	\PIC(\hL)=\PIC(\bD_0)=\bL_0.
	\]
	Since $\bL_0=\bL_0\PA$ and Condition~(12) gives $\rank(\bL_0)=r$, we have $\row(\bL_0)=\row(\bA)$.
	Hence
	\begin{equation}\label{supp:eq:row2}
		\row(\bA)=\row(\bL_0)=\row(\PIC\hL)\subseteq \row(\hL).
	\end{equation}

	Combine~\eqref{supp:eq:row1} and ~\eqref{supp:eq:row2}, we have
	\[
	\row(\hL)=\row(\bA).
	\]
	Therefore, for any compact SVD $\hL=\hU\hSigma\hV^\T$,
	\[
	P_{\hV}=P_{\bA}.
	\]
	This holds for every optimizer, thus $P_{\hV}$ is unique.
\end{proof}

\section{Proofs for stable recovery (Section 5.3)}
\subsection{Supporting results}
\begin{lemma}\label{supp:lem:oracle-signal-scale}
	Let the active client set have size $|\mathcal{C}|=\eta K$ for some $\eta\in(0,1]$.
	For $k\in\mathcal{C}$, let $\bB^{(k)}\in\mathbb{R}^{q\times r}$,  $\bA^{\T}\in\mathbb{O}_{p,r}$, and $\bL_0^{(j,k)}=(\bB^{(j)}-\bB^{(k)})\bA$.  Let $\pL$ be an optimizer of the oracle problem.
	Under Assumption~4 of the main paper,
    we have
	\[
	\sigma_r(\pL)=\Omega(K),\quad \rank(\bL_0)=r,\quad \|\bL_0\|_*\le r\sqrt{\kappa_1}\,\eta K=O(K).
	\]
\end{lemma}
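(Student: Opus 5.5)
The plan is to compute the Gram matrix $\bL_0^\T\bL_0$ exactly in terms of the heterogeneity matrix $\bSigma_B$ of Assumption~4. All three claims then follow from its spectrum, with a contraction argument for $\pL$.

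\emph{Step 1: Gram matrix of $\bL_0$.} Only benign pairs contribute to $\bL_0$, since $\bL_0^{(j,k)}=\bzero$ otherwise. Hence
\[
\bL_0^\T\bL_0=\bA^\T\Bigl[\sum_{j<k,\ j,k\in\mathcal C}(\bB^{(j)}-\bB^{(k)})^\T(\bB^{(j)}-\bB^{(k)})\Bigr]\bA .
\]
Write $\bB^{(j)}-\bB^{(k)}=(\bB^{(j)}-\bar\bB)-(\bB^{(k)}-\bar\bB)$ and use $\sum_{k\in\mathcal C}(\bB^{(k)}-\bar\bB)=\bzero$. The cross terms cancel, which gives the classical pairwise identity
\[
\sum_{j<k\in\mathcal C}(\bB^{(j)}-\bB^{(k)})^\T(\bB^{(j)}-\bB^{(k)})=|\mathcal C|\sum_{k\in\mathcal C}(\bB^{(k)}-\bar\bB)^\T(\bB^{(k)}-\bar\bB)=|\mathcal C|^2\bSigma_B .
\]
Thus $\bL_0^\T\bL_0=(\eta K)^2\bA^\T\bSigma_B\bA$. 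Because $\bA^\T\in\mathbb O_{p,r}$, the nonzero eigenvalues of this matrix are exactly $(\eta K)^2\lambda_i(\bSigma_B)$, $i\in[r]$. The singular values of $\bL_0$ are therefore $\eta K\sqrt{\lambda_i(\bSigma_B)}\in[\eta K\sqrt{\kappa_0},\,\eta K\sqrt{\kappa_1}]$.

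\emph{Step 2: rank and nuclear norm of $\bL_0$.} Since $\kappa_0>0$, all $r$ singular values are positive, so $\rank(\bL_0)=r$. Summing them gives $\|\bL_0\|_*\le r\sqrt{\kappa_1}\,\eta K=O(K)$.

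\emph{Step 3: lower bound for $\sigma_r(\pL)$.} Work in the oracle setting (noiseless, $\delta=0$, as in the proof of Lemma~\ref{supp:lem:pPV}). Oracle feasibility gives $\PIC\pL=\bL_0$, exactly as shown there and in Lemma~\ref{supp:lem:oracle-feasible-class}. Since $\PIC$ is an orthogonal projector acting on rows,
\[
\pL^\T\pL=\pL^\T\PIC\pL+\pL^\T\PI\pL\succeq(\PIC\pL)^\T(\PIC\pL)=\bL_0^\T\bL_0 .
\]
By Weyl/Courant--Fischer monotonicity, $\sigma_r(\pL)\ge\sigma_r(\bL_0)\ge\eta\sqrt{\kappa_0}\,K=\Omega(K)$.

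The only step needing care is the pairwise-difference identity in Step 1, specifically confirming that the cross terms vanish. Everything else is direct.
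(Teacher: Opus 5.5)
Your proposal is correct and follows essentially the same route as the paper: the pairwise Gram identity $\sum_{j<k,\,j,k\in\mathcal C}(\bB^{(j)}-\bB^{(k)})^\T(\bB^{(j)}-\bB^{(k)})=|\mathcal C|^2\bSigma_B$, together with $\bA\bA^\T=\bI_r$, places the singular values of $\bL_0$ in $[\eta K\sqrt{\kappa_0},\,\eta K\sqrt{\kappa_1}]$, and the complementary block-row split $\pL=\PIC\pL+\PI\pL=\bL_0+\bMM$ gives $(\pL)^\T\pL\succeq\bL_0^\T\bL_0$. One small point: the restriction to $\delta=0$ in Step~3 is unnecessary, because oracle feasibility means $\bL+\bS=\bD_0:=\bL_0+\bS_0$ by definition for any $\delta$, and this is what lets the lemma be used in the noisy analysis of Theorem~\ref{th:PV-noisy}.
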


\begin{proof}[Proof of Lemma~\ref{supp:lem:oracle-signal-scale}]
	Define the stacked matrix  $\bC_{\mathcal{C}}$ by stacking $\bB^{(j)}-\bB^{(k)}$ over $j<k$ with $j,k\in\mathcal{C}$, and  $\bL_{\mathcal{C}}:=\bC_{\mathcal{C}}\bA$.
	Then $\bL_0$ is obtained from $\bL_{\mathcal{C}}$ by padding with zero block rows for non-benign pairs, thus
	\[
	\bL_0^{\T}\bL_0=\bL_{\mathcal{C}}^{\T}\bL_{\mathcal{C}},
	\qquad
	\|\bL_0\|_{\op}=\|\bL_{\mathcal{C}}\|_{\op},
	\qquad
	\sigma_r(\bL_0)=\sigma_r(\bL_{\mathcal{C}}).
	\]

	Denote $m:=|\mathcal{C}|=\eta K$. We first write
	\begin{equation}\label{supp:eq:pairwise-gram-identity}
		\bC_{\mathcal{C}}^{\T}\bC_{\mathcal{C}}= m\sum_{k\in\mathcal{C}}(\bB^{(k)}-\bar{\bB})^{\T}(\bB^{(k)}-\bar{\bB})
		= m^2\cdot \Bigl(\frac{1}{m}\sum_{k\in\mathcal{C}}(\bB^{(k)}-\bar{\bB})^{\T}(\bB^{(k)}-\bar{\bB})\Bigr).
	\end{equation}

	By Assumption~4 of the main paper,
	\begin{equation}\label{supp:eq:CA-gram-bounds}
		\kappa_0 m^2\,\bI_r \ \preceq\ \bC_{\mathcal{C}}^{\T}\bC_{\mathcal{C}} \ \preceq\ \kappa_1 m^2\,\bI_r.
	\end{equation}
	Since $\bA^{\T}\in \mathbb{O}_{p,r}$, we have $\bA\bA^{\T}=\bI_r$ and hence
	\[
	\bL_{\mathcal{C}}\bL_{\mathcal{C}}^{\T}
	=(\bC_{\mathcal{C}}\bA)(\bC_{\mathcal{C}}\bA)^{\T}
	=\bC_{\mathcal{C}}\bC_{\mathcal{C}}^{\T}.
	\]
	Therefore, $\bL_{\mathcal{C}}$ and $\bC_{\mathcal{C}}$ have the same singular values, and in particular the nonzero eigenvalues of $\bL_{\mathcal{C}}^{\T}\bL_{\mathcal{C}}$ coincide with those of $\bC_{\mathcal{C}}^{\T}\bC_{\mathcal{C}}$.
	Taking eigenvalues in \eqref{supp:eq:CA-gram-bounds} yields
	\begin{equation}\label{supp:eq:LA-gram-eigs}
		\kappa_0\,m^2 \ \le\ \lambda_r(\bL_{\mathcal{C}}^{\T}\bL_{\mathcal{C}})\ \le\ \lambda_1(\bL_{\mathcal{C}}^{\T}\bL_{\mathcal{C}})\ \le\ \kappa_1 m^2.
	\end{equation}
	Also,
	\[
	\|\bL_0\|_{\F}^2=\|\bL_{\mathcal{C}}\|_{\F}^2=\tr(\bL_{\mathcal{C}}^{\T}\bL_{\mathcal{C}})\le r\kappa_1m^2.
	\]
	By construction of the noiseless signal pair (Section~3.2 of the main paper), $\bL_0$ is obtained from $\bL_{\mathcal{C}}$ by padding with zero block rows for outlier pairs, so $\bL_0^{\T}\bL_0=\bL_{\mathcal{C}}^{\T}\bL_{\mathcal{C}}$, hence $\|\bL_0\|_{\op}=\|\bL_{\mathcal{C}}\|_{\op}$ and $\sigma_r(\bL_0)=\sigma_r(\bL_{\mathcal{C}})$.
	Consequently, $\|\bL_{\mathcal{C}}\|_{\op}=\Theta(m)$ and $\sigma_r(\bL_{\mathcal{C}})^2=\Theta(m^2)$.

	Now note that by Lemma~\ref{supp:lem:oracle-feasible-class}, $\pL=\bL_0+\bMM$ for some $\bMM\in\mathcal{M}$.
	Since $\bL_0$ is supported on benign blocks while $\bMM$ is supported on $\mathcal{I}$, we have $\PI(\bL_0)=\bzero$ and $\PIC(\bMM)=\bzero$, hence $\bL_0^{\T}\bMM=\bzero$.
	Therefore
	\[
	(\pL)^\T \pL=\bL_0^{\T}\bL_0+\bMM^{\T}\bMM \succeq \bL_0^{\T}\bL_0.
	\]
	In particular,
	\[
	\sigma_r(\pL)^2\ge \sigma_r(\bL_0)^2=\lambda_r(\bL_0^{\T}\bL_0)\ge \kappa_0 m^2=\kappa_0 \eta^2 K^2
	\]
	by \eqref{supp:eq:LA-gram-eigs}. Moreover,
	\[
	\|\bL_0\|_*\le \sqrt{r}\,\|\bL_0\|_{\F}\le r\sqrt{\kappa_1}\,m=r\sqrt{\kappa_1}\eta K.
	\]
	Taking square roots gives $\sigma_r(\pL)\ge \sigma_r(\bL_0)\ge \sqrt{\kappa_0} \eta K$.
\end{proof}

\begin{lemma}\label{supp:lem:E0-bound}
    Suppose Assumptions~1 and~3 of the main paper hold, 
    and let $m:=|\mathcal{C}|=\eta K$.
    Then
    \begin{equation}\label{supp:eq:E0-Fbound}
        \|\bE_0\|_{\F}^2 \le 2m(m-1)B^2\delta^2 \le 2K^2B^2\delta^2,
    \end{equation}
    and therefore
    \begin{equation}\label{supp:eq:E0-proj-bound}
        \|\pPU\bE_0\pPVC\|_{\F}\le \sqrt{2}\,KB\delta,
        \qquad
        \|\PI\bE_0\pPVC\|_{\F}=0.
    \end{equation}
\end{lemma}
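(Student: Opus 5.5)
The bound follows from the blockwise structure of $\bE_0$: its support lies in the benign blocks $\mathcal{I}^c$, and each benign block is controlled by $\delta$ times the factor norms. The plan is to bound $\|\bE_0\|_{\F}^2$ block by block, and then read off the two projected statements from the contraction property of orthogonal projections and from the support of $\bE_0$.

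\emph{Frobenius bound.} By definition, $\bE_0^{(j,k)}=\bzero$ unless $j,k\in\mathcal{C}$, so only $\binom{m}{2}=m(m-1)/2$ blocks contribute. For a benign pair, $\bE_0^{(j,k)}=\bB^{(j)}(\bA^{(j)}-\bA)-\bB^{(k)}(\bA^{(k)}-\bA)$. Submultiplicativity $\|\bX\bY\|_{\F}\le\|\bX\|_{\F}\|\bY\|_{\F}$ together with Assumption~3 and condition~\eqref{eq:subset-A} gives $\|\bE_0^{(j,k)}\|_{\F}\le 2B\delta$. Hence $\|\bE_0^{(j,k)}\|_{\F}^2\le 4B^2\delta^2$, and summing over the benign pairs yields
\[
\|\bE_0\|_{\F}^2\le \tfrac{m(m-1)}{2}\cdot 4B^2\delta^2=2m(m-1)B^2\delta^2\le 2K^2B^2\delta^2,
\]
using $m\le K$. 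Equivalently, one may use $(x+y)^2\le 2x^2+2y^2$ on the two terms; either route gives the stated constant.

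\emph{Projected bounds.} Both $\pPU$ (acting on the left) and $\pPVC$ (acting on the right) are orthogonal projectors, so $\|\pPU\bE_0\pPVC\|_{\F}\le\|\bE_0\|_{\F}\le\sqrt{2}KB\delta$. For the second identity, note that $\PI$ acts blockwise on the left and commutes with right multiplication by $\pPVC$. Since $\bE_0$ is supported on $\mathcal{I}^c$, we have $\PI\bE_0=\bzero$, and therefore $\PI\bE_0\pPVC=(\PI\bE_0)\pPVC=\bzero$.

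There is no real obstacle here. The only point needing care is the constant: the count of benign pairs must be $m(m-1)/2$, not $m^2$, which is what produces the factor $2m(m-1)$.
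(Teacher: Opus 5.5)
Your proposal is correct and follows essentially the same route as the paper's proof. Both arguments note that $\bE_0$ is supported on the benign blocks $\mathcal{I}^c$, bound each benign block by $2B\delta$ via the triangle inequality and Frobenius submultiplicativity, and sum over the $\binom{m}{2}$ benign pairs. Both then obtain the projected statements from the contraction property of orthogonal projectors and from $\PI\bE_0=\bzero$.
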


\begin{proof}[Proof of Lemma~\ref{supp:lem:E0-bound}]
	By the blockwise definition of $\bE_0$, if at least one of $j,k$ lies in $\mathcal{C}^c$, then $\bE_0^{(j,k)}=\bzero$.
	Hence $\bE_0$ is supported on $\mathcal{I}^c$, so $\PI(\bE_0)=\bzero$.

	If $j,k\in\mathcal{C}$, then by Assumptions~1 and~3 of the main paper, and the triangle inequality,
	\[
	\begin{aligned}
		\|\bE_0^{(j,k)}\|_{\F}
		&\le \|\bB^{(j)}(\bA^{(j)}-\bA)\|_{\F}+\|\bB^{(k)}(\bA^{(k)}-\bA)\|_{\F}\\
		&\le \|\bB^{(j)}\|_{\F}\|\bA^{(j)}-\bA\|_{\F}+\|\bB^{(k)}\|_{\F}\|\bA^{(k)}-\bA\|_{\F}\\
		&\le 2B\delta.
	\end{aligned}
	\]
	This also covers the case $(j,k)\notin\mathcal{C}\times\mathcal{C}$ since then $\bE_0^{(j,k)}=\bzero$.

	Because exactly $\binom{m}{2}$ active-active pairs may contribute,
	\[
	\|\bE_0\|_{\F}^2
	=
	\sum_{(j,k)\in\mathcal{G}}\|\bE_0^{(j,k)}\|_{\F}^2
	\le
	\binom{m}{2}(2B\delta)^2
	=
	2m(m-1)B^2\delta^2,
	\]
	which proves \eqref{supp:eq:E0-Fbound}.
	Since $\|\pPU\|_{\op}=\|\pPVC\|_{\op}=1$, we have
	\[
	\|\pPU\bE_0\pPVC\|_{\F}\le \|\bE_0\|_{\F}\le \sqrt{2}\,KB\delta,
	\]
	while
	\[
	\|\PI\bE_0\pPVC\|_{\F}\le \|\PI\bE_0\|_{\F}=0.
	\]
	This proves \eqref{supp:eq:E0-proj-bound}.
\end{proof}

\begin{lemma}\label{supp:lem:E1-identity}
    Suppose $\bE_1$ admits the blockwise representation $\bE_1^{(j,k)} = \bXi_j -\bXi_k$ for $(j,k)\in\mathcal{G}$, where $\mathcal{G}:=\{(j,k):1\le j<k\le K\}$, $G:=|\mathcal{G}|$, and $\bXi_k\in\mathbb R^{q\times p}$.
    Then,
    \[
    \|\bE_1\|_{\F}^2 =K\sum_{k=1}^K \|\bXi_k\|_{\F}^2 -\Bigl\|\sum_{k=1}^K \bXi_k\Bigr\|_{\F}^2.
    \]
    In particular,
    $\|\bE_1\|_{\F}^2 \le K\sum_{k=1}^K \|\bXi_k\|_{\F}^2.$
\end{lemma}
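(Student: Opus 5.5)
The identity is a deterministic algebraic computation, so I will expand the squared Frobenius norm blockwise and reorganize the sum over unordered pairs into a sum over all ordered pairs. By definition of the stacked matrix,
\[
\|\bE_1\|_{\F}^2=\sum_{(j,k)\in\mathcal{G}}\|\bXi_j-\bXi_k\|_{\F}^2 .
\]
The summand is symmetric in $(j,k)$ and vanishes when $j=k$. So the sum over $j<k$ equals one half of the sum over all ordered pairs $(j,k)\in[K]\times[K]$, that is,
\[
\|\bE_1\|_{\F}^2=\tfrac12\sum_{j=1}^K\sum_{k=1}^K\|\bXi_j-\bXi_k\|_{\F}^2 .
\]

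Next I will expand each term with the Frobenius inner product:
\[
\|\bXi_j-\bXi_k\|_{\F}^2=\|\bXi_j\|_{\F}^2+\|\bXi_k\|_{\F}^2-2\langle\bXi_j,\bXi_k\rangle .
\]
The two squared-norm terms each contribute $K\sum_{k}\|\bXi_k\|_{\F}^2$ to the double sum, giving $2K\sum_k\|\bXi_k\|_{\F}^2$ in total. The cross terms sum to
\[
-2\sum_{j,k}\langle\bXi_j,\bXi_k\rangle=-2\Bigl\langle\sum_j\bXi_j,\sum_k\bXi_k\Bigr\rangle=-2\Bigl\|\sum_k\bXi_k\Bigr\|_{\F}^2 .
\]
Multiplying by $\tfrac12$ then yields the stated identity. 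This is the familiar ``pairwise differences versus deviation from the mean'' identity, equivalently $K\sum_k\|\bXi_k-\bar\bXi\|_{\F}^2$ with $\bar\bXi=K^{-1}\sum_k\bXi_k$.

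The inequality $\|\bE_1\|_{\F}^2\le K\sum_k\|\bXi_k\|_{\F}^2$ follows immediately, because the subtracted term $\|\sum_k\bXi_k\|_{\F}^2$ is nonnegative.

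There is no genuine obstacle here. The only point needing care is the factor of $\tfrac12$ when passing from unordered pairs in $\mathcal{G}$ to ordered pairs. That step is justified by the symmetry of the summand and the vanishing of the diagonal terms $j=k$.
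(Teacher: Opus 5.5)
Your proof is correct and follows essentially the same route as the paper: expand $\|\bXi_j-\bXi_k\|_{\F}^2$, collect the squared-norm terms, and rewrite the cross terms through $\bigl\|\sum_k\bXi_k\bigr\|_{\F}^2$. The only difference is bookkeeping. You symmetrize to ordered pairs with the factor $\tfrac12$, whereas the paper stays with $j<k$ and counts directly that each $\|\bXi_k\|_{\F}^2$ appears $K-1$ times.
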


\begin{proof}[Proof of Lemma~\ref{supp:lem:E1-identity}]
	By the definition of the stacked pairwise error matrix,
	\[
	\|\bE_1\|_{\F}^2 = \sum_{(j,k)\in\mathcal G}\|\bE_1^{(j,k)}\|_{\F}^2 =\sum_{j<k}\|\bXi_j-\bXi_k\|_{\F}^2.
	\]
	Expanding the square gives
	\[
	\sum_{j<k}\|\bXi_j-\bXi_k\|_{\F}^2 = \sum_{j<k}\Bigl(\|\bXi_j\|_{\F}^2+\|\bXi_k\|_{\F}^2-2\langle \bXi_j,\bXi_k\rangle\Bigr).
	\]

	For the diagonal part, we have
	\[
	\begin{aligned}
		\sum_{j<k}\bigl(\|\bXi_j\|_{\F}^2+\|\bXi_k\|_{\F}^2\bigr)
		&= \sum_{j=1}^K\sum_{k=1}^K\bigl(\|\bXi_j\|_{\F}^2 + \|\bXi_k\|_{\F}^2\bigr)\idop\{j<k\}\\
		&= \sum_{j=1}^K\|\bXi_j\|_{\F}^2 \cdot\Bigl(\sum_{k=1}^K\idop\{j<k\}\Bigr) + \sum_{k=1}^K\|\bXi_k\|_{\F}^2\cdot\Bigl(\sum_{j=1}^K\idop\{j<k\}\Bigr)\\
		&=\sum_{j=1}^K (K-j)\|\bXi_j\|_{\F}^2 +\sum_{k=1}^K (k-1)\|\bXi_k\|_{\F}^2 \\
		&=(K-1)\sum_{k=1}^K\|\bXi_k\|_{\F}^2.
	\end{aligned}
	\]

	For the cross term, use
	\[
	\Bigl\|\sum_{k=1}^K \bXi_k\Bigr\|_{\F}^2=\sum_{k=1}^K \|\bXi_k\|_{\F}^2 + 2\sum_{j<k}\langle \bXi_j,\bXi_k\rangle.
	\]
	Equivalently,
	\[
	2\sum_{j<k}\langle \bXi_j,\bXi_k\rangle =    \Bigl\|\sum_{k=1}^K \bXi_k\Bigr\|_{\F}^2 -\sum_{k=1}^K \|\bXi_k\|_{\F}^2.
	\]

	Substituting these two identities into the expansion yields
	\[
	\sum_{j<k}\|\bXi_j-\bXi_k\|_{\F}^2= (K-1)\sum_{k=1}^K \|\bXi_k\|_{\F}^2 - \Bigl(\Bigl\|\sum_{k=1}^K \bXi_k\Bigr\|_{\F}^2 - \sum_{k=1}^K \|\bXi_k\|_{\F}^2\Bigr),
	\]
	that is,
	\[
	\|\bE_1\|_{\F}^2 = K\sum_{k=1}^K \|\bXi_k\|_{\F}^2
	- \Bigl\|\sum_{k=1}^K \bXi_k\Bigr\|_{\F}^2.
	\]
	The inequality follows by dropping the nonnegative term
	$\bigl\|\sum_{k=1}^K \bXi_k\bigr\|_{\F}^2$.
\end{proof}

\begin{lemma}\label{supp:lem:E1-Fnorm}
    Under the representation in Lemma~\ref{supp:lem:E1-identity} and Assumption 5 of the main paper, there exist absolute constants $c, C>0$ such that, for every $k\in[K]$,
    \[
    \E\|\bXi_k\|_{\F}^2\le c\,pq\,\tau_k^2,
    \]
    and, for any $t>0$,
    \[
    \|\bE_1\|_{\F}^2
    \le
    C\,pq\,\tau^2\bigl(K^2+K^{3/2}\sqrt t+Kt\bigr)
    \]
    with probability at least $1-e^{-t}$.
\end{lemma}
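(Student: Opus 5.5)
The plan is to prove the two claims separately. The first is a second-moment bound obtained directly from Assumption~5. The second combines the exact identity of Lemma~\ref{supp:lem:E1-identity} with a concentration inequality for a sum of independent sub-exponential variables.

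\emph{Step 1: second moment of $\bXi_k$.} Let $\{\bB_\ell\}_{\ell=1}^{pq}$ be the orthonormal basis of $\mathbb{R}^{q\times p}$ formed by the matrix units. Then
\[
\|\bXi_k\|_{\F}^2=\sum_{\ell=1}^{pq}\langle \bB_\ell,\bXi_k\rangle^2 .
\]
Each coordinate $\langle \bB_\ell,\bXi_k\rangle$ has $\psi_2$-norm at most $\tau_k$. The definition of $\|\cdot\|_{\psi_2}$ used in the paper, taken with $p=2$, gives $\E\langle \bB_\ell,\bXi_k\rangle^2\le 2\tau_k^2$. Summing over the $pq$ coordinates yields $\E\|\bXi_k\|_{\F}^2\le c\,pq\,\tau_k^2$ with $c=2$.

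\emph{Step 2: reduction to a sum over clients.} By Lemma~\ref{supp:lem:E1-identity},
\[
\|\bE_1\|_{\F}^2\le K\sum_{k=1}^K X_k,\qquad X_k:=\|\bXi_k\|_{\F}^2 .
\]
It therefore suffices to show that $\sum_k X_k\le C\,pq\,\tau^2(K+\sqrt{Kt}+t)$ with probability at least $1-e^{-t}$; multiplying by $K$ then gives the three terms $K^2$, $K^{3/2}\sqrt t$ and $Kt$. The $X_k$ are independent by Assumption~5. Their means satisfy $\sum_k\E X_k\le c\,pq\,\tau^2K$ by Step 1, which accounts for the $K^2$ term.

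\emph{Step 3: concentration of $\sum_k X_k$.} This is the main technical point. Each $X_k$ is a sum of $pq$ squares of sub-Gaussian coordinates, and these coordinates may be dependent. Write $Z_\ell:=\langle \bB_\ell,\bXi_k\rangle$. Each $Z_\ell^2$ is sub-exponential with $\|Z_\ell^2\|_{\psi_1}\lesssim\tau_k^2$. The triangle inequality for $\|\cdot\|_{\psi_1}$ then gives $\|X_k\|_{\psi_1}\lesssim pq\,\tau_k^2$ with no independence among coordinates, and the same bound holds for the centered variable $X_k-\E X_k$. Bernstein's inequality for independent centered sub-exponential variables then gives, with probability at least $1-e^{-t}$,
\[
\sum_k (X_k-\E X_k)\lesssim pq\,\tau^2\bigl(\sqrt{Kt}+t\bigr).
\]
Adding the mean from Step 2 and multiplying by $K$ yields the stated bound.

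This route is crude, since it uses the scale $pq\tau^2$ in both the variance and the tail term rather than Hanson--Wright-type constants. It is nonetheless exactly the form asserted, and it avoids any assumption on the joint distribution of the entries of $\bXi_k$.
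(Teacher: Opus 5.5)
Your proposal is correct and follows the paper's proof essentially step for step. Both use the identity of Lemma~\ref{supp:lem:E1-identity}, the second-moment bound $\E\|\bXi_k\|_{\F}^2\lesssim pq\,\tau_k^2$, a $\psi_1$ bound of order $pq\,\tau_k^2$ on the centered $\|\bXi_k\|_{\F}^2$, and then Bernstein's inequality. The only difference is how you get the $\psi_1$ bound: you apply the triangle inequality to the $pq$ squared coordinates, whereas the paper invokes an $\varepsilon$-net bound on $\bigl\|\|\bxi_k\|_2\bigr\|_{\psi_2}$ and then squares; your version is slightly more elementary and equally valid.
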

\begin{proof}[Proof of Lemma~\ref{supp:lem:E1-Fnorm}]
	By Lemma~\ref{supp:lem:E1-identity}, we have
	\[
	\|\bE_1\|_{\F}^2 \le K\sum_{k=1}^K \|\bXi_k\|_{\F}^2.
	\]
	Now define
	\[
	Z_k:=\|\bXi_k\|_{\F}^2 - \E\|\bXi_k\|_{\F}^2,
	\]
	then $Z_k$'s are independent zero mean random variables.
	It suffices to seek a tight upper bound fort
	\[
	\sum_{k=1}^K \|\bXi_k\|_{\F}^2 =\sum_{k=1}^K \E\|\bXi_k\|_{\F}^2 + \sum_{k=1}^K Z_k.
	\]

	We first bound the expectation term $\E\|\bXi_k\|_{\F}^2$.
	Let
	\[\bxi_k:=\vec(\bXi_k)\in\mathbb R^{pq}\]
	and define
	\[
	\bSigma_k:=\cov(\bxi_k)=\E(\bxi_k\bxi_k^\T),
	\]
	where the second equality uses $\E\bxi_k=\bzero$.
	By Assumption~5, for every $\bu\in\mathbb R^{pq}$ with $\|\bu\|_2=1$,
	\[
	\|\langle \bu,\bxi_k\rangle\|_{\psi_2}\le \tau_k.
	\]
	Using the standard sub-Gaussian moment bound, we obtain
	\[
	\bu^\T\bSigma_k\bu = \E\langle \bu,\bxi_k\rangle^2 \le c\tau_k^2
	\]
	for every unit $\bu$ and some constant $c>0$, which implies
	\[
	\bSigma_k\preceq c\tau_k^2\bI_{pq}.
	\]
	Therefore,
	\[
	\E\|\bXi_k\|_{\F}^2 = \E\|\bxi_k\|_2^2 = \tr(\bSigma_k) \le cpq\,\tau_k^2.
	\]

	To bound the stochastic deviation term, note that Assumption~5 with a standard $\varepsilon$-net argument for sub-Gaussian vectors  yields
	\[
	\bigl\|\|\bxi_k\|_2\bigr\|_{\psi_2}\le c_2 \sqrt{pq} \tau_k
	\]
	for constant $c_2>0$.
	Therefore, using the standard bound $\|X^2\|_{\psi_1}\lesssim \|X\|_{\psi_2}^2$ for scalar $X$,
	\[
	\bigl\|\|\bxi_k\|_2^2\bigr\|_{\psi_1}\le c_3 pq\tau_k^2
	\]
	for some $c_3>0$.
	Since $\|\bxi_k\|_2=\|\bXi_k\|_{\F}$, it follows that
	\[
	\begin{aligned}
		\|Z_k\|_{\psi_1} &= \bigl\|\|\bXi_k\|_F^2-\E\|\bXi_k\|_F^2\bigr\|_{\psi_1}\\
		&\le \bigl\|\|\bxi_k\|_2^2\bigr\|_{\psi_1} + \|\E\|\bXi_k\|_F^2\|_{\psi_1} \\
		&\le \bigl\|\|\bxi_k\|_2^2\bigr\|_{\psi_1} + c_4\bigl\|\|\bxi_k\|_2^2\bigr\|_{\psi_1} \\
		&\le c_5 pq\tau_k^2,
	\end{aligned}
	\]
	where $c_4,c_5>0$ are constants.
	Thus, $Z_k$ are sub-Exponential.
	Employing Bernstein inequality leads to
	\[
	\sum_{k=1}^K Z_k \le c_6 \Biggl(pq\sqrt{t\sum_{k=1}^K\tau_k^4} + pqt\max_{1\le k\le K}\tau_k^2\Biggr),
	\]
	with probability at least $1-e^{-t}$.

	Combining the expectation term and stochastic term yields
	\[
	\begin{aligned}
		\sum_{k=1}^K \|\bXi_k\|_{\F}^2
		&\le \sum_{k=1}^K c pq\tau_k^2 + c_6 \Biggl(pq\sqrt{t\sum_{k=1}^K\tau_k^4} + pqt\max_{1\le k\le K}\tau_k^2\Biggr) \\
		&\le Cpq\tau^2(K + \sqrt{tK} + t)
	\end{aligned}
	\]
	for some constant $C>0$, where the last inequality employs the uniform bound $\tau_k \le\tau$.
	Therefore,
	\[
	\|\bE_1\|_{\F}^2 \le K\sum_{k=1}^K \|\bXi_k\|_{\F}^2 \le Cpq\tau^2(K^2 + \sqrt{t}K^{3/2} + tK)
	\]
	with probability at least $1-e^{-t}$.
\end{proof}

\begin{lemma}\label{supp:lem:E1-proj-PI}
    Let $(\pL,\pS)$ be an optimal solution of (23),
    with compact SVD $\pL=\pU\pSigma\pVT$.
    Under the representation in Lemma~\ref{supp:lem:E1-identity} and Assumption~5 of the main paper, there exists an absolute constant $C>0$ such that, for every $t\ge 0$, with probability at least $1-2e^{-t}$,
    \begin{equation}\label{supp:eq:E1-proj-PI-square}
        \|\PI\bE_1\pPVC\|_{\F}^2
        \le C\,q(p-r)\tau^2\bigl(K^2+K^{3/2}\sqrt{t}+Kt\bigr).
    \end{equation}
    In particular, after enlarging $C$ if necessary,
    \begin{equation}\label{supp:eq:E1-proj-PI-root}
        \|\PI\bE_1\pPVC\|_{\F}
        \le C\tau\sqrt{q(p-r)\,K(K+t)}.
    \end{equation}
\end{lemma}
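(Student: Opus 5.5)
The plan is to reduce the projected quantity to the same structure used in Lemma~\ref{supp:lem:E1-Fnorm}, with the ambient dimension $pq$ replaced by the dimension $q(p-r)$ of the range of right multiplication by $\pPVC$. Two facts come first. By Lemma~\ref{supp:lem:pPV} (with $\delta=0$ in the oracle construction), $\pPV=\PA$, so $\pPVC=\PAC$ is a deterministic projector of rank $p-r$ and is independent of the noise $\{\bXi_k\}$. Next, since $\PI$ only zeroes out blocks,
\[
\|\PI\bE_1\pPVC\|_{\F}^2\le \|\bE_1\pPVC\|_{\F}^2=\sum_{j<k}\|(\bXi_j-\bXi_k)\PAC\|_{\F}^2 .
\]
Applying the identity of Lemma~\ref{supp:lem:E1-identity} to the matrices $\bXi_k\PAC$ then gives the bound $K\sum_{k=1}^K\|\bXi_k\PAC\|_{\F}^2$.

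The next step is to control $\sum_k\|\bXi_k\PAC\|_{\F}^2$. Write $\bXi_k\PAC=\bXi_k\bA_\perp^\T\bA_\perp$, where $\bA_\perp^\T\in\mathbb{O}_{p,p-r}$. Then $\|\bXi_k\PAC\|_{\F}=\|\bXi_k\bA_\perp^\T\|_{\F}$, and $\vec(\bXi_k\bA_\perp^\T)\in\mathbb{R}^{q(p-r)}$ is a sub-Gaussian vector with parameter $\tau_k$. This holds because each linear functional $\langle \bB,\bXi_k\bA_\perp^\T\rangle=\langle \bB\bA_\perp,\bXi_k\rangle$ has $\|\bB\bA_\perp\|_{\F}=\|\bB\|_{\F}$, so Assumption~5 applies. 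Repeating the argument of Lemma~\ref{supp:lem:E1-Fnorm} verbatim in dimension $q(p-r)$ then gives the following. First, $\E\|\bXi_k\PAC\|_{\F}^2\le c\,q(p-r)\tau_k^2$. Second, the centered variables $\|\bXi_k\PAC\|_{\F}^2-\E\|\bXi_k\PAC\|_{\F}^2$ are independent sub-exponential with $\psi_1$-norm at most $c\,q(p-r)\tau_k^2$. Bernstein's inequality then yields, with probability at least $1-2e^{-t}$ (two-sided Bernstein, which accounts for the factor $2$),
\[
\sum_k\|\bXi_k\PAC\|_{\F}^2\le C q(p-r)\tau^2\bigl(K+\sqrt{tK}+t\bigr).
\]
Multiplying by $K$ gives \eqref{supp:eq:E1-proj-PI-square}.

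For \eqref{supp:eq:E1-proj-PI-root}, use $K^{3/2}\sqrt t\le (K^2+Kt)/2$, so that $K^2+K^{3/2}\sqrt t+Kt\le \tfrac32 K(K+t)$, and take square roots after enlarging $C$.

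The main obstacle I expect is justifying that $\pPVC$ is nonrandom. If the oracle optimizer were chosen in a data-dependent way, the projector could correlate with $\bE_1$. Lemma~\ref{supp:lem:pPV} pins $\pPV$ to $\PA$ only under the oracle noiseless structure, so I would state explicitly that the oracle problem is the population (noiseless) one, whose row space is fixed. If instead the projector had to be treated as random, I would fall back on the crude bound $\|\PI\bE_1\pPVC\|_{\F}\le\|\bE_1\|_{\F}$ from Lemma~\ref{supp:lem:E1-Fnorm}, which loses only $p$ versus $p-r$.
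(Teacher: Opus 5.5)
Your proposal is correct and follows the paper's proof essentially step for step. In both, you drop $\PI$, pass to $\bXi_k\bV_\perp$ (your $\bXi_k\bA_\perp^\T$), check that Assumption~5 transfers with the same $\tau_k$, rerun Lemma~\ref{supp:lem:E1-Fnorm} in ambient dimension $q(p-r)$, and take square roots using $K^{3/2}\sqrt{t}\le (K^2+Kt)/2$. Your explicit argument that $\pPVC$ is nonrandom, because the oracle problem involves only $\bD_0$ and not $\bE_1$, spells out a point the paper uses only implicitly.
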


\begin{proof}[Proof of Lemma~\ref{supp:lem:E1-proj-PI}]
	If $p=r$, then $\pPVC=\bzero$, so the claim is trivial. Assume $p>r$.
	Let $\bV_\perp\in\mathbb R^{p\times (p-r)}$ have orthonormal columns spanning $\range(\pPVC)$, and define
	\[
	\widetilde{\bXi}_k:=\bXi_k\bV_\perp\in\mathbb R^{q\times (p-r)},\qquad
	\widetilde{\bE}_1^{(j,k)}:=\widetilde{\bXi}_j-\widetilde{\bXi}_k.
	\]
	Then
	\[
	\widetilde{\bE}_1^{(j,k)}=\bE_1^{(j,k)}\bV_\perp,\qquad 1\le j<k\le K.
	\]
	Hence
	\[
	\|\PI\bE_1\pPVC\|_F^2
	=
	\sum_{(j,k)\in\mathcal I}\|\widetilde{\bXi}_j-\widetilde{\bXi}_k\|_F^2
	\le
	\sum_{1\le j<k\le K}\|\widetilde{\bXi}_j-\widetilde{\bXi}_k\|_F^2
	=
	\|\widetilde{\bE}_1\|_F^2.
	\]
	For any $\bB\in\mathbb R^{q\times (p-r)}$ with $\|\bB\|_F=1$,
	\[
	\langle \bB,\widetilde{\bXi}_k\rangle
	=
	\langle \bB\bV_\perp^\top,\bXi_k\rangle,
	\qquad
	\|\bB\bV_\perp^\top\|_F=\|\bB\|_F=1.
	\]
	Therefore Assumption~5 holds for $\widetilde{\bXi}_k$ with the same parameter $\tau_k$.
	Applying Lemma~\ref{supp:lem:E1-Fnorm} in ambient dimension $q(p-r)$ yields
	\[
	\|\widetilde{\bE}_1\|_F^2
	\le
	C\,q(p-r)\tau^2\bigl(K^2+K^{3/2}\sqrt t+Kt\bigr)
	\]
	with probability at least $1-e^{-t}$.
	The first claim follows. The second follows by taking square roots and using
	\[
	K^2+K^{3/2}\sqrt t+Kt \le \tfrac32 K(K+t).
	\]
\end{proof}

\begin{lemma}\label{supp:lem:E1-linear}
    Under the representation in Lemma~\ref{supp:lem:E1-identity} and Assumption~5 of the main paper, there exists an absolute constant $C>0$ such that for any fixed $\bA\in\mathbb R^{Gq\times p}$,
    \[
    \bigl\|\langle \bA,\bE_1\rangle\bigr\|_{\psi_2}
    \le C\,\tau\,\sqrt{K}\,\|\bA\|_{\F}.
    \]
\end{lemma}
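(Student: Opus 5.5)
The plan is to reduce $\langle \bA,\bE_1\rangle$ to a sum of independent sub-Gaussian scalars and apply the standard Hoeffding-type bound for the $\psi_2$ norm of such sums. Write $\bA^{(j,k)}$ for the $(j,k)$ block of $\bA$, and extend it antisymmetrically: $\bA^{(k,j)}:=-\bA^{(j,k)}$ for $j<k$. Using the blockwise representation $\bE_1^{(j,k)}=\bXi_j-\bXi_k$, regroup by client:
\[
\langle \bA,\bE_1\rangle=\sum_{j<k}\langle \bA^{(j,k)},\bXi_j-\bXi_k\rangle=\sum_{k=1}^K\langle \bC_k,\bXi_k\rangle,
\qquad \bC_k:=\sum_{j\neq k}\bA^{(k,j)},
\]
where each $\bC_k\in\mathbb R^{q\times p}$ is deterministic. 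The sign convention must be checked carefully: $\bXi_k$ receives $+\bA^{(k,j)}$ when $k<j$, and $-\bA^{(j,k)}=\bA^{(k,j)}$ when $k>j$.

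Next, each summand is controlled by Assumption~5 and homogeneity. Applying the assumption to $\bC_k/\|\bC_k\|_{\F}$ gives $\|\langle \bC_k,\bXi_k\rangle\|_{\psi_2}\le \tau_k\|\bC_k\|_{\F}\le\tau\|\bC_k\|_{\F}$; the case $\bC_k=\bzero$ is trivial. The summands are independent across $k$ and have mean zero, since the $\bXi_k$ are independent with zero mean. The general Hoeffding inequality for sub-Gaussian sums (e.g.\ Vershynin, Prop.~2.6.1) then yields
\[
\Bigl\|\sum_k\langle \bC_k,\bXi_k\rangle\Bigr\|_{\psi_2}^2\le C_0\sum_k\tau^2\|\bC_k\|_{\F}^2 .
\]

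The only quantitative step is bounding $\sum_k\|\bC_k\|_{\F}^2$. By Cauchy--Schwarz,
\[
\|\bC_k\|_{\F}^2\le (K-1)\sum_{j\neq k}\|\bA^{(k,j)}\|_{\F}^2 .
\]
Summing over $k$, each unordered pair appears exactly twice, so
\[
\sum_k\|\bC_k\|_{\F}^2\le 2(K-1)\|\bA\|_{\F}^2\le 2K\|\bA\|_{\F}^2 .
\]
Taking square roots gives $\|\langle\bA,\bE_1\rangle\|_{\psi_2}\le C\tau\sqrt K\|\bA\|_{\F}$ with $C=\sqrt{2C_0}$.

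I expect no real obstacle. The one point needing care is that the $\psi_2$ norm defined in the paper via moments is equivalent to the usual Orlicz version up to absolute constants, so Hoeffding's bound applies after adjusting $C$. The other is that the $\sqrt K$ factor comes from the Cauchy--Schwarz step, which is sharp when all blocks touching client $k$ are aligned, so no better dependence should be sought.
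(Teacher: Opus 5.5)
Your proposal is correct and follows essentially the same argument as the paper: you regroup $\langle \bA,\bE_1\rangle$ client-by-client into $\sum_k\langle \bC_k,\bXi_k\rangle$ (your antisymmetric $\bC_k$ is exactly the paper's $\bZ_k(\bA)$), bound $\sum_k\|\bC_k\|_{\F}^2\le 2(K-1)\|\bA\|_{\F}^2$ by Cauchy--Schwarz and double counting of pairs, and apply the Hoeffding-type $\psi_2$ bound for independent mean-zero sub-Gaussian sums. Your remark on the equivalence between the moment-based and Orlicz $\psi_2$ norms up to absolute constants is a small point of care that the paper leaves implicit.
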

\begin{proof}[Proof of Lemma~\ref{supp:lem:E1-linear}]
	As in Lemma~\ref{supp:lem:E1-Fnorm}, we leverage the client-wise decomposition of $\bE_1$ through the independent mean-zero matrices $\{\bXi_k\}_{k=1}^K$.

	By linearity of the Frobenius inner product,
	\[
	\langle \bA,\bE_1\rangle = \sum_{(j,k)\in\mathcal G}\langle \bA^{(j,k)},\bXi_j-\bXi_k\rangle.
	\]
	Rewrite the sum by collecting the terms multiplying the same $\bXi_k$. Then, we obtain
	\[
	\langle \bA,\bE_1\rangle = \sum_{k=1}^K \langle \bZ_k(\bA),\bXi_k\rangle,
	\]
	where
	\[
	\bZ_k(\bA):=\sum_{k<j}\bA^{(k,j)}-\sum_{j<k}\bA^{(j,k)}\in\mathbb{R}^{q\times p}.
	\]
	For each fixed $k$, define
	\[
	\mathcal G_k:=\{g=(a,b)\in\mathcal G:\ a=k \text{ or } b=k\},
	\]
	the set of blocks incident to client $k$. Then $|\mathcal G_k|=K-1$, and
	$\bZ_k(\bA)$ is a signed sum of the matrices $\{\bA^{(g)}:g\in\mathcal G_k\}$.
	Hence, by Cauchy-Schwarz inequality,
	\[
	\|\bZ_k(\bA)\|_{\F}^2 \le (K-1)\sum_{g\in\mathcal G_k}\|\bA^{(g)}\|_{\F}^2.
	\]
	Note that each block $g=(a,b)\in\mathcal G$ belongs to exactly two sets, namely $\mathcal G_a$ and $\mathcal G_b$, so summing over $k$ yields
	\begin{equation}\label{supp:eq:E1-linear-Z-bound}
		\begin{aligned}
			\sum_{k=1}^K \|\bZ_k(\bA)\|_{\F}^2
			&\le (K-1)\sum_{k=1}^K\sum_{g\in\mathcal G_k}\|\bA^{(g)}\|_{\F}^2 \\
			&=2(K-1)\sum_{g\in\mathcal G}\|\bA^{(g)}\|_{\F}^2 \\
			&= 2(K-1)\|\bA\|_{\F}^2.
		\end{aligned}
	\end{equation}

	Now fix $k\in[K]$. If $\bZ_k(\bA)=\bzero$, then
	\[
	\langle \bZ_k(\bA),\bXi_k\rangle=0.
	\]
	Otherwise, define
	\[
	\bB_k:=\frac{\bZ_k(\bA)}{\|\bZ_k(\bA)\|_{\F}},
	\]
	such that $\|\bB_k\|_{\F}=1$.
	By Assumption~5,
	\[
	\begin{aligned}
		\bigl\|\langle \bZ_k(\bA),\bXi_k\rangle\bigr\|_{\psi_2}
		&=\|\bZ_k(\bA)\|_{\F}\cdot\bigl\|\langle \bB_k,\bXi_k\rangle\bigr\|_{\psi_2} \\
		&\le \tau_k\,\|\bZ_k(\bA)\|_{\F} \\
		&\le \tau\,\|\bZ_k(\bA)\|_{\F}.
	\end{aligned}
	\]

	Since the matrices $\{\bXi_k\}_{k=1}^K$ are independent and mean-zero, the random variables
	\[
	Y_k:=\langle \bZ_k(\bA),\bXi_k\rangle,\qquad k\in[K],
	\]
	are independent mean-zero sub-Gaussian random variables. A standard sub-Gaussian summation
	bound therefore yields
	\[
	\Bigl\|\sum_{k=1}^K Y_k\Bigr\|_{\psi_2}
	\le
	C_1\Bigl(\sum_{k=1}^K \|Y_k\|_{\psi_2}^2\Bigr)^{1/2}
	\le
	C_1\tau
	\Bigl(\sum_{k=1}^K \|\bZ_k(\bA)\|_{\F}^2\Bigr)^{1/2}.
	\]
	Combining this with \eqref{supp:eq:E1-linear-Z-bound}, we obtain
	\[
	\bigl\|\langle \bA,\bE_1\rangle\bigr\|_{\psi_2}
	\le
	C_2\tau\,\sqrt{2(K-1)}\,\|\bA\|_{\F}
	\le
	C\,\tau\,\sqrt{K}\,\|\bA\|_{\F}.
	\]
\end{proof}

\begin{lemma}\label{supp:lem:S0-blk1}
    Under decomposition~(15) with  $s:=|\mathcal{I}|$, we have
    \[
    \|\bS_0\|_{\blk,1} \le (K-1)\sum_{k\in\mathcal{C}^c}\|\bB_k\bA_k\|_{\F} + |\mathcal{C}^c|\sum_{j\in\mathcal{C}}\|\bB_j\bA_j\|_{\F}.
    \]

    In particular, suppose Assumption~1 and 3 of main paper hold, i.e,
    \[
    \max_{k\in[K]}\|\bB^{(k)}\|_{\F}\le B
    \quad\text{and}\quad
    \bA^{(k),\T}\in\mathbb{O}_{p,r}\ \text{for all }k\in[K],
    \]
    then
    \begin{equation}\label{supp:eq:S0-blk1-bound}
        \|\bS_0\|_{\blk,1}\le 2sB.
    \end{equation}
\end{lemma}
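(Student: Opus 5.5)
The plan is to bound $\|\bS_0\|_{\blk,1}=\sum_{g\in\mathcal{I}}\|\bS_0^{(g)}\|_{\F}$ block by block, using the explicit form in \eqref{eq:L0}: for $g=(j,k)\in\mathcal{I}$, $\bS_0^{(j,k)}=\bB^{(j)}\bA^{(j)}-\bB^{(k)}\bA^{(k)}$, while blocks in $\mathcal{I}^c$ vanish. The triangle inequality gives $\|\bS_0^{(j,k)}\|_{\F}\le \|\bB^{(j)}\bA^{(j)}\|_{\F}+\|\bB^{(k)}\bA^{(k)}\|_{\F}$. Summing over $\mathcal{I}$ then comes down to counting, for each client $k$, how many pairs in $\mathcal{I}$ contain $k$.

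The counting step distinguishes two kinds of client. A contaminated client $k\in\mathcal{C}^c$ belongs to every one of its $K-1$ pairs, and all of these lie in $\mathcal{I}$, so its term $\|\bB^{(k)}\bA^{(k)}\|_{\F}$ appears at most $K-1$ times. A benign client $j\in\mathcal{C}$ lies in a pair of $\mathcal{I}$ only when its partner is contaminated, so its term appears exactly $|\mathcal{C}^c|$ times. Adding the two contributions gives the first displayed inequality.

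For \eqref{supp:eq:S0-blk1-bound}, first note that $\bA^{(k),\T}\in\mathbb{O}_{p,r}$ implies $\|\bB^{(k)}\bA^{(k)}\|_{\F}=\|\bB^{(k)}\|_{\F}\le B$, since right-multiplication by a matrix with orthonormal rows preserves the Frobenius norm. Rather than substituting into the first bound and then comparing the counts with $s$, the more direct route is to go back to the blockwise step: each block in $\mathcal{I}$ then satisfies $\|\bS_0^{(g)}\|_{\F}\le 2B$, and there are exactly $s$ such blocks, so the sum is at most $2sB$. One can check that this agrees with the counting identity $\sum_k(\#\text{pairs in }\mathcal{I}\text{ containing }k)=2s$, under which the first bound becomes at most $B\cdot 2s$.

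The argument is elementary. The only care needed is in the incidence count, namely to record that pairs with both clients contaminated are counted once for each endpoint, which is consistent with the bound $K-1$.
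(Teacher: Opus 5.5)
Your proposal is correct and follows the paper's proof essentially verbatim. Both arguments apply the blockwise triangle inequality and then use the incidence count: $K-1$ pairs for each contaminated client and $|\mathcal{C}^c|$ pairs for each benign client. For the second claim, both use the uniform bound $\|\bS_0^{(g)}\|_{\F}\le 2B$ over the $s$ blocks of $\mathcal{I}$, which rests on $\|\bB^{(k)}\bA^{(k)}\|_{\F}=\|\bB^{(k)}\|_{\F}$ for orthonormal-row $\bA^{(k)}$.
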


\begin{proof}[Proof of Lemma~\ref{supp:lem:S0-blk1}]
	By the block definition of $\bS_0$, for every $(j,k)\in\mathcal{I}$,
	\[
	\|\bS_0^{(j,k)}\|_{\F} = \|\bB_j\bA_j - \bB_k\bA_k\|_{\F} \le
	\|\bB_j\bA_j\|_{\F}+\|\bB_k\bA_k\|_{\F}.
	\]
	Summing over $(j,k)\in\mathcal{I}$ yields
	\[
	\|\bS_0\|_{\blk,1} \le \sum_{(j,k)\in\mathcal{I}}\bigl(\|\bB_j\bA_j\|_{\F}+\|\bB_k\bA_k\|_{\F}\bigr).
	\]
	Each contaminated client $k\in\mathcal{C}^c$ appears in exactly $K-1$ pairs in $\mathcal{I}$, whereas each benign client $j\in\mathcal{C}$ appears in exactly $|\mathcal{C}^c|$ pairs in $\mathcal{I}$.
	Collecting coefficients of the same $\|\bX_k\|_{\F}$ yields
	\[
	\|\bS_0\|_{\blk,1} \le (K-1)\sum_{k\in\mathcal{C}^c}\|\bB_k\bA_k\|_{\F} + |\mathcal{C}^c|\sum_{j\in\mathcal{C}}\|\bB_j\bA_j\|_{\F}.
	\]
	Given $\max_k\|\bB^{(k)}\|_{\F}\le B$ and $\bA^{(k),\T}\in\mathbb{O}_{p,r}$ for every $k$, the uniform bound \eqref{supp:eq:S0-blk1-bound} is immediate from
	\[
	\|\bS_0^{(j,k)}\|_{\F}\le 2B,\qquad (j,k)\in\mathcal{I},
	\]
	and the fact that $|\mathcal{I}|=s$.
\end{proof}

\begin{lemma}\label{supp:lem:subgaussian-vector-norm}
	Let $z\in\mathbb{R}^m$ be a mean-zero random vector such that
	\[
	\sup_{u\in\mathbb{S}^{m-1}}\|\langle u,z\rangle\|_{\psi_2}\le L
	\]
	for some constant $L>0$. Then there exists an absolute constant $C>0$ such that for every $t\ge 0$,
	\[
	\pr\bigl(\|z\|_2>C\,L(\sqrt{m}+\sqrt{t})\bigr)\le 2e^{-t}.
	\]
\end{lemma}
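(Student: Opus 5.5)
The plan is the standard route for norms of sub-Gaussian vectors: discretize the sphere with a net, apply the scalar $\psi_2$ tail bound at each net point, and take a union bound. The only thing to be careful about is tracking constants so that the final probability is exactly $2e^{-t}$ with an absolute $C$.

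First, fix a $1/2$-net $\mathcal{N}$ of $\mathbb{S}^{m-1}$ with $|\mathcal{N}|\le 5^m$. For any $z$, choose $u$ with $\langle u,z\rangle=\|z\|_2$ and a net point $v$ with $\|u-v\|_2\le 1/2$. Then
\[
\langle v,z\rangle\ge \|z\|_2-\tfrac12\|z\|_2,
\]
so $\|z\|_2\le 2\max_{v\in\mathcal{N}}\langle v,z\rangle$. This reduces the problem to controlling finitely many scalar variables.

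Next, the hypothesis $\|\langle v,z\rangle\|_{\psi_2}\le L$ together with the equivalence of sub-Gaussian characterizations gives, for an absolute constant $c>0$ and every $s\ge0$,
\[
\pr(\langle v,z\rangle>s)\le 2\exp(-c s^2/L^2).
\]
The paper defines $\psi_2$ through the moment growth $(\E|X|^p)^{1/p}/\sqrt p$, so this step uses the standard moment-to-tail equivalence, which costs only an absolute constant. A union bound over $\mathcal{N}$ then yields
\[
\pr(\|z\|_2>2s)\le 2\cdot 5^m\exp(-cs^2/L^2).
\]

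Finally, choose $s=L\sqrt{(m\log 5+t)/c}$. The right-hand side becomes exactly $2e^{-t}$. Since $\sqrt{m\log5+t}\le \sqrt{\log5}\,\sqrt m+\sqrt t$, we get $2s\le C L(\sqrt m+\sqrt t)$ with $C:=2\max\{\sqrt{\log 5},1\}/\sqrt c$, which proves the lemma.

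No real obstacle is expected. The only mildly delicate point is the moment-to-tail conversion with the paper's $\psi_2$ definition, and it is handled by citing the standard equivalence (e.g.\ Vershynin, Prop.~2.5.2). Mean-zero-ness is not even needed for this upper tail; it would matter only for a two-sided or centered statement.
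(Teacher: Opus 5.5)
Your proposal is correct and follows essentially the same route as the paper's proof: a $1/2$-net of size at most $5^m$, the reduction $\|z\|_2\le 2\max_{v\in\mathcal{N}}\langle v,z\rangle$, the scalar sub-Gaussian tail bound, and a union bound. The differences are cosmetic. You use a one-sided reduction and choose $s$ explicitly so the bound equals $2e^{-t}$ exactly, whereas the paper takes $C_1$ large enough that each net point fails with probability at most $2\exp(-(1+\log 5)(m+t))$.
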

\begin{proof}[Proof of Lemma~\ref{supp:lem:subgaussian-vector-norm}]
	Let $\mathcal{N}$ be a $1/2$-net of $\mathbb{S}^{m-1}$ with $|\mathcal{N}|\le 5^m$. By the standard net argument \citep{vershynin2018high},
	\[
	\|z\|_2\le 2\max_{u\in\mathcal{N}}|\langle u,z\rangle|.
	\]
	Fix $u\in\mathcal{N}$. Since $\langle u,z\rangle$ is mean-zero and
	\[
	\|\langle u,z\rangle\|_{\psi_2}\le L,
	\]
	the standard tail bound for sub-Gaussian random variables implies
	\[
	\pr \bigl(|\langle u,z\rangle|>C_1L(\sqrt{m}+\sqrt{t})\bigr)
	\le 2\exp \bigl(-(1+\log 5)(m+t)\bigr)
	\]
	for a sufficiently large absolute constant $C_1>0$.
	A union bound over $\mathcal{N}$ then gives
	\[
	\pr \Bigl(\max_{u\in\mathcal{N}}|\langle u,z\rangle|>C_1L(\sqrt{m}+\sqrt{t})\Bigr)
	\le 2e^{-t}.
	\]
	Combining the last two displays yields
	\[
	\pr \Bigl(\|z\|_2>2C_1L(\sqrt{m}+\sqrt{t})\Bigr)\le 2e^{-t}.
	\]
	Renaming the constant completes the proof.
\end{proof}

	\begin{lemma}\label{supp:lem:leakage-ineq-pen}
		Let $(\hL,\hS)$ be any optimal solution of the penalized program~(7),
        and define
		$\Pomega:=\sum_{g\in\mathcal{G}}\omega_g\Pg$, $\hR:=\hD-\hL-\hS$.
		Let $(\pL,\pS)$ be any optimal solution of the noiseless oracle problem~(23),
        and define
		\[
		\bDelta_L:=\hL-\pL,\qquad \bDelta_S:=\hS-\pS,\qquad \bDelta:=\bDelta_L+\bDelta_S,\qquad \bE:=\hD-\pL-\pS.
		\]
		Assume there exists a matrix $\bQ\in\mathbb{R}^{Gq\times p}$ and constants $a,b\in(0,1)$ such that~\eqref{supp:eq:KKT} holds,
		then
		\begin{equation}\label{supp:eq:leakage-penalized}
			(1-a)\lambda_L\|\pPTC\bDelta_L\|_*+(1-b)\lambda_S\|\PIC\bDelta_S\|_{\blk,1}  \le \mathcal{R}_E,
		\end{equation}
		where
		\[
		\mathcal{R}_E := \frac{1}{2}\langle \bE,\Pomega\bE\rangle - \langle \bQ,\bE\rangle 	+ \frac{1}{2}\|\Pomega^{-1/2}\bQ\|_{\F}^2 = \frac{1}{2}\|\Pomega^{1/2}\bE-\Pomega^{-1/2}\bQ\|_{\F}^2.
		\]
	\end{lemma}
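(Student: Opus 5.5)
The plan is to compare the penalized objective at $(\hL,\hS)$ with its value at the oracle pair $(\pL,\pS)$, and to lower-bound the penalty increments with the subgradient inequalities of Lemma~\ref{supp:lem:subgradient-ineq}.

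First, optimality of $(\hL,\hS)$ for~(7) gives
\[
\tfrac12\langle \hR,\Pomega\hR\rangle+\lambda_L\|\hL\|_*+\lambda_S\|\hS\|_{\blk,1}\le \tfrac12\langle \bE,\Pomega\bE\rangle+\lambda_L\|\pL\|_*+\lambda_S\|\pS\|_{\blk,1}.
\]
Here we use that the residual at the oracle pair is exactly $\hD-\pL-\pS=\bE$. Note also that $\hR=\bE-\bDelta$.

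Second, bound the two penalty increments from below. Lemma~\ref{supp:lem:subgradient-ineq} was stated for a common perturbation, but its two inequalities are proved separately, so each applies with its own perturbation. Applying the nuclear-norm inequality with perturbation $\bDelta_L$ gives
\[
\lambda_L\|\hL\|_*\ge \lambda_L\|\pL\|_*+\langle\bQ,\bDelta_L\rangle+(1-a)\lambda_L\|\pPTC\bDelta_L\|_*.
\]
Applying the block inequality with perturbation $-\bDelta_S$ gives
\[
\lambda_S\|\hS\|_{\blk,1}\ge \lambda_S\|\pS\|_{\blk,1}+\langle\bQ,\bDelta_S\rangle+(1-b)\lambda_S\|\PIC\bDelta_S\|_{\blk,1}.
\]
Both arguments only use the four conditions~\eqref{supp:eq:KKT} at $(\pL,\pS)$, and we have these by assumption. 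Substituting into the optimality inequality, we obtain that the left side of~\eqref{supp:eq:leakage-penalized} is at most
\[
\tfrac12\langle\bE,\Pomega\bE\rangle-\tfrac12\langle\bE-\bDelta,\Pomega(\bE-\bDelta)\rangle-\langle\bQ,\bDelta\rangle .
\]

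Third, maximize this upper bound over the free quantity $\bDelta$. Expanding, it equals $\langle \Pomega\bE-\bQ,\bDelta\rangle-\tfrac12\langle\bDelta,\Pomega\bDelta\rangle$. Since $\Pomega$ is positive definite (all $\omega_g>0$), completing the square gives
\[
\langle \Pomega\bE-\bQ,\bDelta\rangle-\tfrac12\|\Pomega^{1/2}\bDelta\|_{\F}^2\le \tfrac12\|\Pomega^{-1/2}(\Pomega\bE-\bQ)\|_{\F}^2 .
\]
The right side equals $\tfrac12\|\Pomega^{1/2}\bE-\Pomega^{-1/2}\bQ\|_{\F}^2$. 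Expanding it yields the first expression for $\mathcal{R}_E$, with cross term $-\langle\bQ,\bE\rangle$, which establishes both the inequality and the identity.

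The only delicate step is the second one. One must check that the subgradient inequalities hold for \emph{separate} perturbations $\bDelta_L$ and $\bDelta_S$, not just for one shared $\bDelta$, and that the signs line up. The $\bS$ inequality in Lemma~\ref{supp:lem:subgradient-ineq} is written for $\pS-\bDelta$, so we substitute $-\bDelta_S$; the term $-\langle\bQ/\lambda_S,-\bDelta_S\rangle$ then becomes $+\langle\bQ,\bDelta_S\rangle/\lambda_S$. This is what makes the two inner products combine into $\langle\bQ,\bDelta\rangle$. The rest is algebra.
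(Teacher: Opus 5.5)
Your proposal is correct and follows the paper's proof: the same basic inequality from optimality of $(\hL,\hS)$ against $(\pL,\pS)$, the same separate subgradient bounds from Lemma~\ref{supp:lem:subgradient-ineq} applied with $\bDelta_L$ and $-\bDelta_S$, and a final quadratic bound. The only cosmetic difference is in that last step. The paper writes $\langle\bQ,\bDelta\rangle=\langle\bQ,\bE\rangle-\langle\bQ,\hR\rangle$ and applies Young's inequality to $\langle\bQ,\hR\rangle$, whereas you complete the square in $\bDelta$; these are the same computation under the substitution $\hR=\bE-\bDelta$.
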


\begin{proof}[Proof of Lemma~\ref{supp:lem:leakage-ineq-pen}]
	By optimality of $(\hL,\hS)$ for (7) and feasibility of $(\pL,\pS)$,
	\begin{equation}\label{supp:eq:pen-basic-ineq}
		\frac{1}{2}\langle \hR,\Pomega\hR\rangle + \lambda_L\|\hL\|_*+\lambda_S\|\hS\|_{\blk,1}
		\le \frac{1}{2}\langle \bE,\Pomega\bE\rangle + \lambda_L\|\pL\|_*+\lambda_S\|\pS\|_{\blk,1}.
	\end{equation}
	Since \eqref{supp:eq:KKT} implies $\bQ/\lambda_L\in\partial\|\pL\|_*$ and $\bQ/\lambda_S\in\partial\|\pS\|_{\blk,1}$, the same argument as in Lemma~\ref{supp:lem:subgradient-ineq} gives
	\begin{equation}\label{supp:eq:pen-subg-ineq}
		\begin{aligned}
			\lambda_L\|\hL\|_*+\lambda_S\|\hS\|_{\blk,1}
			&\ge \lambda_L\|\pL\|_*+\lambda_S\|\pS\|_{\blk,1}
			+\langle \bQ,\bDelta\rangle \\
			&\quad +(1-a)\lambda_L\|\pPTC\bDelta_L\|_*
			+(1-b)\lambda_S\|\PIC\bDelta_S\|_{\blk,1}.
		\end{aligned}
	\end{equation}
	Combining \eqref{supp:eq:pen-basic-ineq} and \eqref{supp:eq:pen-subg-ineq} yields
	\[
	\begin{aligned}
		\frac{1}{2}\langle \hR,\Pomega\hR\rangle + \langle \bQ,\bDelta\rangle
		&+(1-a)\lambda_L\|\pPTC\bDelta_L\|_* \\
		&+(1-b)\lambda_S\|\PIC\bDelta_S\|_{\blk,1}
		\le \frac{1}{2}\langle \bE,\Pomega\bE\rangle.
	\end{aligned}
	\]
	Since $\bDelta=\bE-\hR$, we have
	\[
	\langle \bQ,\bDelta\rangle = \langle \bQ,\bE\rangle-\langle \bQ,\hR\rangle.
	\]
	Therefore,
	\[
	\begin{aligned}
		&(1-a)\lambda_L\|\pPTC\bDelta_L\|_*+(1-b)\lambda_S\|\PIC\bDelta_S\|_{\blk,1}\\
		&\le \frac{1}{2}\langle \bE,\Pomega\bE\rangle - \frac{1}{2}\langle \hR,\Pomega\hR\rangle
		-\langle \bQ,\bE\rangle + \langle \bQ,\hR\rangle.
	\end{aligned}
	\]
	Since the row-block projectors are mutually orthogonal and satisfy $\sum_{g\in\mathcal{G}}\Pg=\bI$ on the stacked block space,
	the operator $\Pomega=\sum_{g\in\mathcal{G}}\omega_g\Pg$ acts blockwise as multiplication by $\omega_g$.
	Because each $\omega_g>0$, $\Pomega$ is positive definite and $\Pomega^{-1/2}$ is well-defined. Hence
	\[
	\langle \bQ,\hR\rangle
	=
	\langle \Pomega^{-1/2}\bQ,\Pomega^{1/2}\hR\rangle
	\le \frac{1}{2}\|\Pomega^{-1/2}\bQ\|_{\F}^2+\frac{1}{2}\langle \hR,\Pomega\hR\rangle.
	\]
	Substituting this bound yields
	\[
	(1-a)\lambda_L\|\pPTC\bDelta_L\|_*+(1-b)\lambda_S\|\PIC\bDelta_S\|_{\blk,1}
	\le
	\frac{1}{2}\langle \bE,\Pomega\bE\rangle - \langle \bQ,\bE\rangle
	+ \frac{1}{2}\|\Pomega^{-1/2}\bQ\|_{\F}^2.
	\]
	Finally,
	\[
	\|\Pomega^{1/2}\bE-\Pomega^{-1/2}\bQ\|_{\F}^2
	=
	\langle \bE,\Pomega\bE\rangle - 2\langle \bQ,\bE\rangle
	+ \|\Pomega^{-1/2}\bQ\|_{\F}^2,
	\]
	which complements the proof.
\end{proof}

	\begin{lemma}\label{supp:lem:RE-bound}
		Let $(\pL,\pS)$ be any optimal solution of problem~(23). Let $\bQ\in\mathbb{R}^{Gq\times p}$ and $a,b\in(0,1)$ satisfy \eqref{supp:eq:KKT} at $(\pL,\pS)$.
		Under the representation in Lemma~\ref{supp:lem:E1-identity} and Assumption~5 of the main paper, there exists an absolute constant $C>0$ such that, for every $t>0$, with probability at least $1-2e^{-t}$,
		\begin{equation}\label{supp:eq:Rtilde-pen-bound}
			\mathcal{R}_E
			\le
			\mathcal{R}_{0}
			+ C\tau\sqrt{K}\,\|\Pomega\bE_0-\bQ\|_{\F}\sqrt{t}
			+ C\omega_{\max}pq\tau^2\bigl(K^2+K^{3/2}\sqrt{t}+Kt\bigr),
		\end{equation}
		where
		\[
		\mathcal{R}_{0} := \frac{1}{2}\|\Pomega^{1/2}\bE_0-\Pomega^{-1/2}\bQ\|_{\F}^2
		\]
		and $\omega_{\max}:=\max_{g\in\mathcal{G}}\omega_g$.
		In addition, if Assumption~1 and~3 hold, then, on the same event,
		\begin{equation}\label{supp:eq:Rtilde-pen-bound-struct}
			\mathcal{R}_E \le 4\omega_{\max}K^2B^2\delta^2 	+2\lambda_S^2\Biggl(\sum_{g\in\mathcal{I}}\omega_g^{-1} +b^2\sum_{g\in\mathcal{I}^c}\omega_g^{-1} \Biggr) + C\omega_{\max}pq\tau^2K(K+t).
		\end{equation}
	\end{lemma}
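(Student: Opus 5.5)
The plan is to expand $\mathcal{R}_E$ around the deterministic part $\bE_0$, using $\bE=\bE_0+\bE_1$. Writing $\mathcal{R}_E=\tfrac12\|\Pomega^{1/2}\bE-\Pomega^{-1/2}\bQ\|_{\F}^2$ and expanding the square gives the exact identity
\[
\mathcal{R}_E=\mathcal{R}_0+\langle \Pomega\bE_0-\bQ,\ \bE_1\rangle+\tfrac12\langle \bE_1,\Pomega\bE_1\rangle .
\]
This uses the self-adjointness of $\Pomega$, so that $\langle \Pomega^{1/2}\bE_0-\Pomega^{-1/2}\bQ,\Pomega^{1/2}\bE_1\rangle=\langle\Pomega\bE_0-\bQ,\bE_1\rangle$. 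The proof then bounds the two random terms separately and combines them with a union bound.

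\textbf{Linear term.} The matrix $\bA:=\Pomega\bE_0-\bQ$ is deterministic, because $\bQ$ is constructed from the oracle optimizer and $\bE_0$, neither of which depends on $\{\bXi_k\}$. Lemma~\ref{supp:lem:E1-linear} therefore gives $\|\langle\bA,\bE_1\rangle\|_{\psi_2}\le C\tau\sqrt K\|\bA\|_{\F}$. The sub-Gaussian tail bound then yields $\langle\bA,\bE_1\rangle\le C\tau\sqrt K\|\bA\|_{\F}\sqrt t$ with probability at least $1-e^{-t}$.

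\textbf{Quadratic term.} Since $\Pomega\preceq\omega_{\max}\bI$, we have $\tfrac12\langle\bE_1,\Pomega\bE_1\rangle\le\tfrac12\omega_{\max}\|\bE_1\|_{\F}^2$. Lemma~\ref{supp:lem:E1-Fnorm} bounds this by $C\omega_{\max}pq\tau^2(K^2+K^{3/2}\sqrt t+Kt)$ with probability at least $1-e^{-t}$. Combining the two events proves \eqref{supp:eq:Rtilde-pen-bound} with probability at least $1-2e^{-t}$.

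\textbf{Structured bound.} For \eqref{supp:eq:Rtilde-pen-bound-struct}, first absorb the cross term. By Young's inequality, $C\tau\sqrt{Kt}\|\bA\|_{\F}\le \tfrac{1}{2\omega_{\max}}\|\bA\|_{\F}^2+\tfrac{C^2}{2}\omega_{\max}\tau^2Kt$. Next, $\|\bA\|_{\F}^2=\|\Pomega^{1/2}(\Pomega^{1/2}\bE_0-\Pomega^{-1/2}\bQ)\|_{\F}^2\le 2\omega_{\max}\mathcal{R}_0$. Hence the cross term is at most $\mathcal{R}_0+C'\omega_{\max}\tau^2Kt$, and the leftover is absorbed into $C\omega_{\max}pq\tau^2K(K+t)$, using also $K^{3/2}\sqrt t\le K(K+t)$.

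It then remains to show that $2\mathcal{R}_0$ is at most the first two terms on the right of \eqref{supp:eq:Rtilde-pen-bound-struct}. Start from
\[
\mathcal{R}_0\le\|\Pomega^{1/2}\bE_0\|_{\F}^2+\|\Pomega^{-1/2}\bQ\|_{\F}^2 .
\]
\begin{itemize}
\item For the first piece, Lemma~\ref{supp:lem:E0-bound} gives $\|\Pomega^{1/2}\bE_0\|_{\F}^2\le\omega_{\max}\|\bE_0\|_{\F}^2\le 2\omega_{\max}K^2B^2\delta^2$.
\item For the second piece, split blockwise as $\|\Pomega^{-1/2}\bQ\|_{\F}^2=\sum_g\omega_g^{-1}\|\Pg\bQ\|_{\F}^2$. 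On $\mathcal{I}$, the condition $\PI(\bQ)/\lambda_S\in\Sign_{\mathcal I}(\pS)$ gives $\|\Pg\bQ\|_{\F}\le\lambda_S$. On $\mathcal{I}^c$, the KKT condition gives $\|\Pg\bQ\|_{\F}\le b\lambda_S$.
\end{itemize}
Doubling both pieces yields exactly $4\omega_{\max}K^2B^2\delta^2+2\lambda_S^2\bigl(\sum_{\mathcal I}\omega_g^{-1}+b^2\sum_{\mathcal I^c}\omega_g^{-1}\bigr)$.

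The main obstacle is this absorption step: the cross term must be merged into $\mathcal{R}_0$ without paying a factor involving $\|\bQ\|_{\F}$ times $\sqrt K\tau$ that does not fit the stated form. If the factor-of-2 bookkeeping does not close, I would instead bound $\|\bA\|_{\F}$ directly by $\omega_{\max}\|\bE_0\|_{\F}+\|\bQ\|_{\F}$. I would then apply Young's inequality as $\omega_{\max}^{-1}\|\bA\|_{\F}^2+\omega_{\max}\tau^2Kt$ and enlarge constants, noting that $\|\bQ\|_{\F}^2\le\omega_{\max}\|\Pomega^{-1/2}\bQ\|_{\F}^2$.
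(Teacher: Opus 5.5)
Your proposal is correct and follows the paper's proof essentially step for step. Both use the same exact expansion of $\mathcal{R}_E$ around $\bE_0$, Lemma~\ref{supp:lem:E1-linear} with a one-sided sub-Gaussian tail for the cross term, Lemma~\ref{supp:lem:E1-Fnorm} for the quadratic term, absorption of the cross term via $\|\Pomega\bE_0-\bQ\|_{\F}^2\le 2\omega_{\max}\mathcal{R}_0$, and the split $2\mathcal{R}_0\le 2\langle\bE_0,\Pomega\bE_0\rangle+2\|\Pomega^{-1/2}\bQ\|_{\F}^2$ with the blockwise KKT bounds on $\bQ$. Your main Young's-inequality computation already closes with the exact constants $4$ and $2$ (the paper lands on $\tfrac32\mathcal{R}_0\le 2\mathcal{R}_0$ the same way), so the fallback in your last paragraph is unnecessary.
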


\begin{proof}[Proof of Lemma~\ref{supp:lem:RE-bound}]
	By the definition of $\mathcal{R}_E$ and the decomposition $\bE=\bE_0+\bE_1$,
	\begin{equation}\label{supp:eq:RE-decomp}
		\begin{aligned}
			\mathcal{R}_E
			&=
			\frac{1}{2}\langle \bE_0+\bE_1,\Pomega(\bE_0+\bE_1)\rangle
			-\langle \bQ,\bE_0+\bE_1\rangle
			+\frac{1}{2}\|\Pomega^{-1/2}\bQ\|_{\F}^2 \\
			&=
			\mathcal{R}_{0}
			+\langle \Pomega\bE_0-\bQ,\bE_1\rangle
			+\frac{1}{2}\langle \bE_1,\Pomega\bE_1\rangle.
		\end{aligned}
	\end{equation}

	By Lemma~\ref{supp:lem:E1-linear} applied with the fixed matrix $\Pomega\bE_0-\bQ$,
	\[
	\|\langle \Pomega\bE_0-\bQ,\bE_1\rangle\|_{\psi_2}
	\le C_1\tau\sqrt{K}\,\|\Pomega\bE_0-\bQ\|_{\F}
	\]
	for some absolute constant $C_1>0$.
	Hence the standard one-sided sub-Gaussian tail bound yields
	\begin{equation}\label{supp:eq:Rtilde-pen-linear}
		\langle \Pomega\bE_0-\bQ,\bE_1\rangle
		\le C_2\tau\sqrt{K}\,\|\Pomega\bE_0-\bQ\|_{\F}\sqrt{t}
	\end{equation}
	with probability at least $1-e^{-t}$, for a sufficiently large absolute constant $C_2>0$.

	Next,
	\[
	\begin{aligned}
		\langle \bE_1,\Pomega\bE_1\rangle 	&= 	\sum_{g\in\mathcal{G}}\omega_g\|\Pg(\bE_1)\|_{\F}^2 \\
		&\le 	\omega_{\max}\sum_{g\in\mathcal{G}}\|\Pg(\bE_1)\|_{\F}^2 \\
		& = 	\omega_{\max}\|\bE_1\|_{\F}^2.
	\end{aligned}
	\]
	By Lemma~\ref{supp:lem:E1-Fnorm}, there exists an absolute constant $C_3>0$ such that
	\begin{equation}\label{supp:eq:Rtilde-pen-quad}
		\frac{1}{2}\langle \bE_1,\Pomega\bE_1\rangle
		\le
		C_3\omega_{\max}pq\tau^2\bigl(K^2+K^{3/2}\sqrt{t}+Kt\bigr)
	\end{equation}
	with probability at least $1-e^{-t}$.

	On the intersection of the events \eqref{supp:eq:Rtilde-pen-linear} and \eqref{supp:eq:Rtilde-pen-quad}, which has probability at least $1-2e^{-t}$, the bound \eqref{supp:eq:Rtilde-pen-bound} follows from \eqref{supp:eq:RE-decomp} after absorbing constants.

	Moreover,
	\[
	\begin{aligned}
		\|\Pomega\bE_0-\bQ\|_{\F} 	&= 	\|\Pomega^{1/2}(\Pomega^{1/2}\bE_0-\Pomega^{-1/2}\bQ)\|_{\F} 	\\
		& \le \sqrt{\omega_{\max}} \cdot \|\Pomega^{1/2}\bE_0-\Pomega^{-1/2}\bQ\|_{\F} 	\\
		& = 	\sqrt{2\omega_{\max}\mathcal{R}_{0}}.
	\end{aligned}
	\]
	Therefore,
	\[
	\begin{aligned}
		C\tau\sqrt{K}\,\|\Pomega\bE_0-\bQ\|_{\F}\sqrt{t} &	\le C\tau\sqrt{2\omega_{\max}K\mathcal{R}_{0}t}\\
		&=C\tau\sqrt{2\omega_{\max}Kt} \cdot \sqrt{\mathcal{R}_{0}}\\
		& \le \frac{1}{2}\mathcal{R}_{0} + C\omega_{\max}\tau^2Kt.
	\end{aligned}
	\]
	Since $pq\ge 1$, the last term is absorbed by 	$C\omega_{\max}pq\tau^2\bigl(K^2+K^{3/2}\sqrt{t}+Kt\bigr)$ after enlarging $C$, and
	\begin{equation}\label{supp:eq:Rtilde-pen-bound-simple}
		\mathcal{R}_E 	\le 2\mathcal{R}_{0} + C\omega_{\max}pq\tau^2\bigl(K^2+K^{3/2}\sqrt{t}+Kt\bigr).
	\end{equation}

	In particular, since $K^{3/2}\sqrt{t}\le (K^2+Kt)/2$, we also have
	\begin{equation}\label{supp:eq:Rtilde-pen-bound-order}
		\mathcal{R}_E \le  2\mathcal{R}_{0}	+ C\omega_{\max}pq\tau^2K(K+t).
	\end{equation}

	Moreover, 	we have
	\[
	2\mathcal{R}_{0} = 	\|\Pomega^{1/2}\bE_0-\Pomega^{-1/2}\bQ\|_{\F}^2 \le 2\langle \bE_0,\Pomega\bE_0\rangle
	+2\|\Pomega^{-1/2}\bQ\|_{\F}^2.
	\]
	Substituting this into \eqref{supp:eq:Rtilde-pen-bound-order} yields
	\begin{equation}\label{supp:eq:Rtilde-pen-bound-explicit}
		\mathcal{R}_E \le 2\langle \bE_0,\Pomega\bE_0\rangle + 2\|\Pomega^{-1/2}\bQ\|_{\F}^2 + C\omega_{\max}pq\tau^2K(K+t).
	\end{equation}

	Under Assumption~1 and~3, Lemma~\ref{supp:lem:E0-bound} gives
	\[
	\langle \bE_0,\Pomega\bE_0\rangle 	\le	\omega_{\max}\|\bE_0\|_{\F}^2 	\le 2\omega_{\max}K^2B^2\delta^2,
	\]
	which implies
	\[
	\mathcal{R}_E \le 4\omega_{\max}K^2B^2\delta^2 +2\|\Pomega^{-1/2}\bQ\|_{\F}^2 + C\omega_{\max}pq\tau^2K(K+t).
	\]

	Moreover,  by \eqref{supp:eq:KKT}, we have
	\[
	\|\Pg\bQ\|_{\F} \le \lambda_S, \quad g\in\mathcal{I},
	\]
	and
	\[
	\|\Pg\bQ\|_{\F}\le b\lambda_S,\quad g\in\mathcal{I}^c, \quad b\in (0,1).
	\]
	Therefore,
	\[
	\begin{aligned}
		\|\Pomega^{-1/2}\bQ\|_{\F}^2
		&=
		\sum_{g\in\mathcal{G}}\omega_g^{-1}\|\Pg\bQ\|_{\F}^2 \\
		&\le
		\lambda_S^2\Biggl(
		\sum_{g\in\mathcal{I}}\omega_g^{-1}
		+b^2\sum_{g\in\mathcal{I}^c}\omega_g^{-1}
		\Biggr).
	\end{aligned}
	\]
	Substituting this gives
	\[
	\mathcal{R}_E \le 4\omega_{\max}K^2B^2\delta^2 	+2\lambda_S^2\Biggl(\sum_{g\in\mathcal{I}}\omega_g^{-1} +b^2\sum_{g\in\mathcal{I}^c}\omega_g^{-1} \Biggr) + C\omega_{\max}pq\tau^2K(K+t).
	\]
	This completes the proof.
\end{proof}

	\begin{lemma}\label{supp:lem:BL-bound}
		Let $(\hL,\hS)$ be any optimal solution of the penalized program~(7).
		Let $(\pL,\pS)$ be an optimal solution of the noiseless oracle problem (23).
		Define $\bE:=\hD-\pL-\pS$, denote $\omega_{\max}:=\max_{g\in\mathcal{G}}\omega_g$, and write
		\[
		\mathcal{B}_L := \|\pL\|_*+\frac{\lambda_S}{\lambda_L}\|\pS\|_{\blk,1} +\frac{1}{2\lambda_L}\langle \bE,\Pomega\bE\rangle.
		\]
		Then
		\[
		\|\hL\|_*\le \mathcal{B}_L,
		\qquad
		\|\hL\|_{\op}\le \mathcal{B}_L.
		\]
		If, in addition, Assumption~1 and~3 hold, and $\lambda_S/\lambda_L$ satisfies the upper endpoint of (25), 
        then
		\begin{equation}\label{supp:eq:oracle-op-envelope-interval-struct}
			\mathcal{B}_L \le  \|\bL_0\|_*  + 2a(1-\rho^\prime)\sqrt{s}\,B	+ \frac{\omega_{\max}}{\lambda_L}\Bigl(\|\bE_0\|_{\F}^2+\|\bE_1\|_{\F}^2\Bigr).
		\end{equation}
		Moreover, under the assumptions of the preceding display and Assumption~5, for every $t>0$, with probability at least $1-e^{-t}$,
		\begin{equation}\label{supp:eq:BL-bound}
			\mathcal{B}_L \le  \|\bL_0\|_*  + 2a(1-\rho^\prime)\sqrt{s}\,B	+ \frac{2\omega_{\max}}{\lambda_L}K^2B^2\delta^2 +\frac{C\omega_{\max}}{\lambda_L}pq\tau^2\bigl(K^2+K^{3/2}\sqrt t+Kt\bigr)
		\end{equation}
		for some absolute constant $C>0$.
		Consequently, if moreover $B=O(1)$, $\|\bL_0\|_*=O(K)$, $\delta=O(K^{-1/2})$, and
		\[
		\omega_{\max}\asymp K^{-1},\qquad \lambda_L\asymp K^{-1/2},\qquad \lambda_S\asymp K^{-3/2},
		\]
		with $r,q,p$ fixed, $s=O(K^2)$, and $K\tau^2=O(1)$, then
		\[
		\mathcal{B}_L =O_p(K).
		\]
	\end{lemma}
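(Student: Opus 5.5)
The plan is to compare the penalized optimizer $(\hL,\hS)$ directly with the oracle pair $(\pL,\pS)$ through the basic inequality for~(7). Because $(\pL,\pS)$ is a feasible point of~(7) with residual $\hD-\pL-\pS=\bE$, optimality of $(\hL,\hS)$ gives
\[
\tfrac12\langle\hR,\Pomega\hR\rangle+\lambda_L\|\hL\|_*+\lambda_S\|\hS\|_{\blk,1}\le \tfrac12\langle\bE,\Pomega\bE\rangle+\lambda_L\|\pL\|_*+\lambda_S\|\pS\|_{\blk,1}.
\]
Dropping the nonnegative terms $\tfrac12\langle\hR,\Pomega\hR\rangle$ and $\lambda_S\|\hS\|_{\blk,1}$ and dividing by $\lambda_L$ yields $\|\hL\|_*\le\mathcal{B}_L$. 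The operator-norm bound then follows from $\|\hL\|_{\op}\le\|\hL\|_*$.

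For~\eqref{supp:eq:oracle-op-envelope-interval-struct}, I will bound the three terms of $\mathcal{B}_L$ one at a time.

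\emph{Nuclear-norm term.} By Lemma~\ref{supp:lem:oracle-feasible-class}, the pair $(\bL_0,\bS_0)$ is itself oracle feasible. Optimality of $(\pL,\pS)$ for~(23) then gives
\[
\lambda_L\|\pL\|_*+\lambda_S\|\pS\|_{\blk,1}\le\lambda_L\|\bL_0\|_*+\lambda_S\|\bS_0\|_{\blk,1},
\]
so that $\|\pL\|_*+\frac{\lambda_S}{\lambda_L}\|\pS\|_{\blk,1}\le\|\bL_0\|_*+\frac{\lambda_S}{\lambda_L}\|\bS_0\|_{\blk,1}$. This is cleaner than bounding $\|\pL\|_*$ and $\|\pS\|_{\blk,1}$ separately.

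\emph{Block-sparse term.} Lemma~\ref{supp:lem:S0-blk1} gives $\|\bS_0\|_{\blk,1}\le 2sB$. The upper endpoint of~(25), in the form used in Lemma~\ref{supp:lem:rho-prime}, namely $\lambda_S/\lambda_L\le a(1-\rho')/\sqrt s$, then converts $\frac{\lambda_S}{\lambda_L}\cdot 2sB$ into $2a(1-\rho')\sqrt s\,B$.

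\emph{Noise term.} Since $\bE=\bE_0+\bE_1$, we have $\langle\bE,\Pomega\bE\rangle\le\omega_{\max}\|\bE\|_{\F}^2\le 2\omega_{\max}(\|\bE_0\|_{\F}^2+\|\bE_1\|_{\F}^2)$. After the factor $\tfrac1{2\lambda_L}$ this is exactly the last term of~\eqref{supp:eq:oracle-op-envelope-interval-struct}.

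The remaining point here is to confirm that the oracle residual is $\bE$ with exactly this decomposition. In the noisy setting the oracle problem is posed with $\bD_0=\bL_0+\bS_0$, and $\hD=\bD_0+\bE_0+\bE_1$, so $\hD-\pL-\pS=\bE_0+\bE_1$ as required.

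The probabilistic bound~\eqref{supp:eq:BL-bound} follows by substituting two earlier results. Lemma~\ref{supp:lem:E0-bound} gives $\|\bE_0\|_{\F}^2\le 2K^2B^2\delta^2$. Lemma~\ref{supp:lem:E1-Fnorm} gives the bound on $\|\bE_1\|_{\F}^2$ with probability at least $1-e^{-t}$.

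For the order statement, I will check each term under the stated scalings.

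\begin{itemize}
\item $\|\bL_0\|_*=O(K)$, for instance by Lemma~\ref{supp:lem:oracle-signal-scale}.
\item $\sqrt s\,B=O(K)$, because $s=O(K^2)$ and $B=O(1)$.
\item $\frac{\omega_{\max}}{\lambda_L}K^2\delta^2\asymp K^{-1/2}\cdot K^2\cdot K^{-1}=K^{1/2}$.
\item $\frac{\omega_{\max}}{\lambda_L}pq\tau^2K^2\asymp K^{-1/2}K^2\tau^2=O(K^{1/2})$, using $K\tau^2=O(1)$.
\end{itemize}

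Taking $t$ fixed, or $t\to\infty$ slowly so the event has probability tending to one, the $\sqrt t$ and $t$ terms are of lower order. Hence $\mathcal{B}_L=O_p(K)$.

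The step that needs the most care is the nuclear-norm reduction via oracle optimality, so that $\|\pS\|_{\blk,1}$ need not be controlled separately. I would double-check that feasibility of $(\bL_0,\bS_0)$ for~(23) holds with the exact $\bD_0$ used to define $\bE$.
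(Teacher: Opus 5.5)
Your proposal is correct and matches the paper's proof step for step. Both arguments use the basic inequality against the oracle pair, then oracle optimality against $(\bL_0,\bS_0)$ to bound $\|\pL\|_*+\frac{\lambda_S}{\lambda_L}\|\pS\|_{\blk,1}$ jointly, then Lemma~\ref{supp:lem:S0-blk1} with the upper endpoint of (25), then $\langle\bE,\Pomega\bE\rangle\le\omega_{\max}\|\bE\|_{\F}^2$ with $(x+y)^2\le 2x^2+2y^2$, and finally Lemmas~\ref{supp:lem:E0-bound} and~\ref{supp:lem:E1-Fnorm}. One step is worth writing out, though the paper skips it too: the stated endpoint $\lambda_S/\lambda_L<a/((1+a)\sqrt{s})$ gives the form $\lambda_S/\lambda_L\le a(1-\rho')/\sqrt{s}$ because the first claim of Lemma~\ref{supp:lem:rho-prime} yields $\rho'\le(\lambda_S/\lambda_L)\sqrt{s}<a/(1+a)$, that is, $1-\rho'>1/(1+a)$.
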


\begin{proof}[Proof of Lemma~\ref{supp:lem:BL-bound}]
	Define $\hR:=\hD-\hL-\hS$.
	By optimality of $(\hL,\hS)$ for (7) and feasibility of $(\pL,\pS)$,
	\[
	\frac{1}{2}\langle\hR ,\Pomega\hR\rangle +\lambda_L\|\hL\|_*+\lambda_S\|\hS\|_{\blk,1}	\le \frac{1}{2}\langle \bE,\Pomega\bE\rangle 	+\lambda_L\|\pL\|_*+\lambda_S\|\pS\|_{\blk,1}.
	\]
	Dropping the nonnegative terms on the left and dividing by $\lambda_L$ gives
	\[
	\|\hL\|_{\op} \le \|\hL\|_*  \le \|\pL\|_*+\frac{\lambda_S}{\lambda_L}\|\pS\|_{\blk,1} + \frac{1}{2\lambda_L}\langle \bE,\Pomega\bE\rangle:=\mathcal{B}_L.
	\]

	By oracle optimality at right-hand side $\bD_0$ and feasibility of $(\bL_0,\bS_0)$ for (23),
	\[
	\|\pL\|_*+\frac{\lambda_S}{\lambda_L}\|\pS\|_{\blk,1}
	\le
	\|\bL_0\|_*+\frac{\lambda_S}{\lambda_L}\|\bS_0\|_{\blk,1}.
	\]
	Note that
	\[
	\langle \bE,\Pomega\bE\rangle \le \|\Pomega\|_{\op}\|\bE\|_{\F}^2 = \omega_{\max}\|\bE\|_{\F}^2.
	\]
	Using $\bE=\bE_0+\bE_1$,  we have
	\[
	\begin{aligned}
		\mathcal{B}_L  &\le \|\bL_0\|_* + \frac{\lambda_S}{\lambda_L}\|\bS_0\|_{\blk,1}  + \frac{\omega_{\max}}{2\lambda_L}\|\bE\|_{\F}^2 \\
		&\le \|\bL_0\|_* + a\frac{1-\rho^\prime}{\sqrt{s}}\|\bS_0\|_{\blk,1}
		+ \frac{\omega_{\max}}{\lambda_L}\Bigl(\|\bE_0\|_{\F}^2+\|\bE_1\|_{\F}^2\Bigr) \\
		&\le  \|\bL_0\|_*  + 2a(1-\rho^\prime)\sqrt{s}\,B	+ \frac{\omega_{\max}}{\lambda_L}\Bigl(\|\bE_0\|_{\F}^2+\|\bE_1\|_{\F}^2\Bigr),
	\end{aligned}
	\]
	where the second line uses the upper endpoint of (25), the inequality $(a+b)^2\le 2a^2+2b^2$, and Lemma~\ref{supp:lem:S0-blk1}. This proves \eqref{supp:eq:oracle-op-envelope-interval-struct}.

	Note that under Assumption~1 and~3, Lemma~\ref{supp:lem:E0-bound} gives
	\[
	\|\bE_0\|_{\F}^2 \le 2K^2B^2\delta^2,
	\]
	Moreover, under Assumption~5, Lemma~\ref{supp:lem:E1-Fnorm} yields
	\[
	\|\bE_1\|_{\F}^2 \le	C\,pq\,\tau^2\bigl(K^2+K^{3/2}\sqrt t+Kt\bigr)
	\]
	with probability at least $1-e^{-t}$.
	Therefore \eqref{supp:eq:BL-bound} holds with probability at least $1-e^{-t}$.
	Under the above assumptions, the four terms on the RHS of ~\eqref{supp:eq:BL-bound} are $O(K)$, $O(K)$, $O(\sqrt{K})$, and $O(\sqrt{K})$, respectively, so
	\[
	\mathcal{B}_L=O_{p}(K).
	\]
\end{proof}

\begin{proposition}[Noisy recovery]\label{supp:prop:PV-noisy}
	Suppose the noisy decomposition $\hD=\bL_0+\bS_0+\bE$ with $\bE=\bE_0+\bE_1$ satisfies Assumption~1-4 of the main paper.
	Suppose in addition $\bE_1$ admits the blockwise representation $\bE_1^{(j,k)} = \bXi_j -\bXi_k$ for $(j,k)\in\mathcal{G}$ and satisfies Assumption~5.
	Let $(\hL,\hS)$ be any optimal solution of the penalized problem~(7), and let $\hPV$ be an orthogonal projector onto a dominant $r$-dimensional right singular subspace of $\hL$.
	Let $a,b\in(0,1)$ satisfy $\sqrt{\mu rs/G}<ab/(1+a)^2$ and
		\begin{equation}
			\frac{\lambda_S}{\lambda_L}\in\biggl(
			\frac{\sqrt{\mu r/G}}{b-(1+a)\sqrt{\mu rs/G}},
			\ \frac{a}{(1+a)\sqrt{s}}
			\biggr).
		\end{equation}
		Let $(\pL,\pS)$ be any optimal solution of the noiseless oracle problem~(23), and let $\bQ$ be the certificate guaranteed by Proposition~\ref{supp:prop:Q-existence} for $(\pL,\pS)$.
	Write the compact SVD $\pL=\pU\pSigma\pVT$ with $\sigma_r(\pL)>0$, and let $\PV:=\pPV$.
	Define
	\[
	\mathcal{B}_L := \|\pL\|_*+\frac{\lambda_S}{\lambda_L}\|\pS\|_{\blk,1} +\frac{1}{2\lambda_L}\langle \bE,\Pomega\bE\rangle.
	\]
	and
	\[
	\mathcal{R}_E := \frac{1}{2}\langle \bE,\Pomega\bE\rangle - \langle \bQ,\bE\rangle 	+ \frac{1}{2}\|\Pomega^{-1/2}\bQ\|_{\F}^2.
	\]
	Then
	\begin{equation}\label{supp:eq:th2-penalized-bound}
		\begin{aligned}
			\|\hPV-\PV\|_{\op}
			&\le \frac{2\mathcal{B}_L}{\sigma_r(\pL)^2}\cdot \frac{1}{1-\rho^\prime}\Biggl[
			\Bigl(\|\pPU\bE\pPVC\|_{\F}+\sqrt{\rho^\prime}\,\|\PI\bE\pPVC\|_{\F}\Bigr)\\
			&\quad + \Bigl(\sqrt{r}\,\frac{\lambda_L}{\omega_{\min}}+\sqrt{\rho^\prime}\sqrt{s}\,\frac{\lambda_S}{\omega_{\min}}\Bigr)
			+ \sqrt{\frac{\mu r}{G}}\,\frac{\mathcal{R}_E}{(1-b)\lambda_S}
			+ \Bigl(1+\sqrt{\rho^\prime}-\rho^\prime\Bigr)\frac{\mathcal{R}_E}{(1-a)\lambda_L}
			\Biggr],
		\end{aligned}
	\end{equation}
	where $\rho^\prime:=\|\PI\pPU\PI\|_{\op}$, $\Pomega:=\sum_{g\in\mathcal{G}}\omega_g\Pg$, $
	\omega_{\min}:=\min_{g\in\mathcal{G}}\omega_g>0$, and $
	\omega_{\max}:=\max_{g\in\mathcal{G}}\omega_g.$
\end{proposition}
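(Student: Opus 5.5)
The plan is to reduce the subspace error to a bound on the leakage of $\hL$ outside the oracle row space, $\|\hL\pPVC\|_{\F}$, and then control that leakage with the penalized KKT conditions, the certificate $\bQ$ of Proposition~\ref{supp:prop:Q-existence}, and the leakage inequality of Lemma~\ref{supp:lem:leakage-ineq-pen}.

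\emph{Step 1 (perturbation step).} Since $\pL\pPVC=\bzero$, I would write $\hL=\hL\pPV+\hL\pPVC$. I would then apply a Wedin/Davis--Kahan argument to $\hL^\T\hL$ against the rank-$r$ reference $\pL^\T\pL$, which has eigengap $\sigma_r(\pL)^2$. The target is
\[
\|\hPV-\PV\|_{\op}\le \frac{2\|\hL\|_{\op}\,\|\hL\pPVC\|_{\F}}{\sigma_r(\pL)^2}.
\]
This holds because the off-diagonal part $\pPVC\hL^\T\hL\pPV$ is bounded by $\|\hL\|_{\op}\|\hL\pPVC\|$. I would then invoke Lemma~\ref{supp:lem:BL-bound} to get $\|\hL\|_{\op}\le\mathcal{B}_L$. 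This accounts for the prefactor $2\mathcal{B}_L/\sigma_r(\pL)^2$, so everything in the brackets must bound $(1-\rho^\prime)\|\bDelta_L\pPVC\|_{\F}$, where $\bDelta_L=\hL-\pL$.

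\emph{Step 2 (column split).} I would decompose
\[
\bDelta_L\pPVC=\pPUC\bDelta_L\pPVC+\pPU\bDelta_L\pPVC .
\]
The first piece equals $\pPTC\bDelta_L$. By Lemma~\ref{supp:lem:leakage-ineq-pen} its nuclear norm, and hence its Frobenius norm, is at most $\mathcal{R}_E/((1-a)\lambda_L)$.

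For the second piece I would use the first-order conditions of the penalized program~\eqref{eq:problem-penalized}:
\[
\Pomega\hR\in\lambda_L\partial\|\hL\|_*,\qquad \Pomega\hR\in\lambda_S\partial\|\hS\|_{\blk,1}.
\]
Hence $\|\hR\|_{\F}\lesssim\sqrt{r}\,\lambda_L/\omega_{\min}$ on the column-space component. On $\mathcal I$, the blockwise bound gives $\|\PI\hR\|_{\F}\le\sqrt{s}\,\lambda_S/\omega_{\min}$.

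Writing $\bDelta_L=\bE-\hR-\bDelta_S$, I would project by $\pPU(\cdot)\pPVC$ and split $\bDelta_S$ into its $\mathcal I$ and $\mathcal I^c$ parts.
\begin{itemize}
\item The $\mathcal I^c$ part is controlled by block incoherence (Lemma~\ref{supp:lem:transfer-incoherence}), $\|\Pg\pU\|_{\F}\le\sqrt{\mu r/G}$, together with $\|\PIC\bDelta_S\|_{\blk,1}\le\mathcal{R}_E/((1-b)\lambda_S)$ from Lemma~\ref{supp:lem:leakage-ineq-pen}.
\item The $\mathcal I$ part is the self-referential piece. Because $\PI\bDelta_S$ is determined by $\PI(\bE-\hR-\bDelta_L)$, it reintroduces $\pPU\PI\bDelta_L\pPVC$, which is multiplied by $\|\PI\pPU\PI\|_{\op}=\rho^\prime$. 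The same mechanism produces the $\sqrt{\rho^\prime}$ factors on the $\PI\bE\pPVC$ and $\PI\hR$ terms and on the $\pPTC$ piece, giving the coefficient $1+\sqrt{\rho^\prime}-\rho^\prime$.
\end{itemize}

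\emph{Step 3 (closing the loop).} Moving the $\rho^\prime$ term to the left and using Lemma~\ref{supp:lem:neumann}, $\|(\bI-\PI\pPU\PI)^{-1}\|_{\op}\le 1/(1-\rho^\prime)$, produces the factor $1/(1-\rho^\prime)$ and the bracket as stated.

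I expect this self-referential $\mathcal I$-block step to be the main obstacle. The issue is keeping the bookkeeping exact, so that only $\sqrt{\rho^\prime}$ (via $\|\PI\pPU\|_{\op}=\sqrt{\rho^\prime}$) multiplies the $\mathcal I$-supported noise and residual terms, rather than a constant. I would handle it first by writing $\pPU\PI$ as the composition $(\pPU\PI)(\PI\cdot)$ and applying the Neumann inverse only on $\mathcal S_{\mathcal I}$.
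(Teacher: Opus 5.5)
Your outline follows the paper's proof essentially line for line:
\begin{itemize}
\item the Gram-matrix $\sin\Theta$ reduction to $\|\hL\pPVC\|_{\F}$, with $\|\hL\|_{\op}\le\mathcal{B}_L$;
\item the split $\bDelta_L\pPVC=\pPTC\bDelta_L+\pPU\bDelta_L\pPVC$;
\item the penalized KKT bounds on $\hR$;
\item incoherence transfer plus the leakage inequality on $\mathcal{I}^c$;
\item the $\rho^\prime$ loop, closed by a scalar rearrangement (this only needs $\rho^\prime<1$).
\end{itemize}
Everything after Step~1 is fine. Step~1 itself, which is also the paper's first step, is not valid as stated.

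With $\hV$ the top-$r$ right singular vectors of $\hL$, the off-diagonal argument you describe gives
\[
\|\hPV-\PV\|_{\op}=\|\pPVC\hV\|_{\op}\le\frac{\|\pPVC\hL^\T\hL\hV\|_{\op}}{\sigma_r(\hL)^2}\le\frac{\|\hL\|_{\op}\,\|\hL\pPVC\|_{\F}}{\sigma_r(\hL)^2}.
\]
The denominator involves the spectrum of $\hL$, not of $\pL$. A Davis--Kahan bound with the reference gap $\sigma_r(\pL)^2$ needs the whole perturbation, including $\hL^\T\hL\pPV-\pL^\T\pL$. That term involves $\bDelta_L\pPV$, which you never control.

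Without that control the inequality fails. In $\mathbb{R}^2$ with $r=1$, take $\pL^\T\pL=\be_1\be_1^\T$ and $\hL^\T\hL=\epsilon\,\mathbf{1}\mathbf{1}^\T$. The left side is $1/\sqrt{2}$, while your right side is $2\sqrt{2}\,\epsilon$. The hypotheses do not rule this out. For example, with $\mathcal{I}=\emptyset$, equal weights $\omega$ and $\lambda_S$ large, the optimizer has $\hS=\bzero$ and $\hL=\mathcal{SVT}(\hD;\lambda_L/\omega)$. This keeps the right singular vectors of $\hD$, while $\sigma_r(\hL)=\sigma_r(\hD)-\lambda_L/\omega$ can be made arbitrarily small.

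In that example the final displayed bound still holds, because the $\sqrt{r}\,\lambda_L/\omega_{\min}$ term pushes the right-hand side above one. So what is missing is a lower bound on $\sigma_r(\hL)$. Step~1 is correct on the event $\sigma_r(\hL)^2\ge\sigma_r(\pL)^2/2$; outside it you need a separate argument.

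Your own tools nearly supply this. Use that $\bU_0=\PIC\bU_0$, that $\bU_0^\T\pL=\bU_0^\T\bL_0$, and that $\bU_0^\T\bDelta_L=\bU_0^\T(\bE-\hR)-\bU_0^\T\PIC\bDelta_S$. Combining these with Weyl's inequality, (Block-IC) for $\bU_0$, and the leakage inequality gives
\[
\sigma_r(\hL)\ge\sigma_r(\bU_0^\T\hL)\ge\sigma_r(\bL_0)-\|\bU_0^\T\bE\|_{\op}-\frac{\lambda_L}{\omega_{\min}}-\sqrt{\frac{\mu r}{G}}\,\frac{\mathcal{R}_E}{(1-b)\lambda_S}.
\]
In the regime of Theorem~\ref{th:PV-noisy}, each subtracted term is $O_{\pr}(\sqrt{K})$ while $\sigma_r(\bL_0)\gtrsim K$. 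The $K^{-1/2}$ rate therefore survives, with $\sigma_r(\bL_0)$ in place of $\sigma_r(\pL)$. The deterministic inequality as stated, however, does not follow from your Step~1 without such an addition.
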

\begin{proof}[Proof of Proposition~\ref{supp:prop:PV-noisy}]
	Let $\bDelta_L:=\hL-\pL$, $\bDelta_S:=\hS-\pS$, and $\bDelta:=\bDelta_L+\bDelta_S$.
	Let $\hR:=\hD-\hL-\hS$, so that $\bDelta=\bE-\hR$.
	Define the Gram matrices
	\[
	\widehat{\bMM}:=\frac{1}{G}\hL^{\T}\hL,\qquad \bMM':=\frac{1}{G}(\pL)^{\T}\pL.
	\]
	Let $\bV_\perp\in\mathbb{R}^{p\times (p-r)}$ be any orthonormal complement of $\pV$.
	Since $\rank(\pL)=r$, $\bMM'$ has exactly $r$ positive eigenvalues and $\lambda_r(\bMM')=\sigma_r(\pL)^2/G$.
	By definition, $\hPV$ is also the orthogonal projector onto a leading $r$-dimensional eigenspace of $\widehat{\bMM}$.
	By the Davis-Kahan $\sin\Theta$ theorem
	\begin{equation}\label{supp:eq:dk-sym}
		\|\hPV-\PV\|_{\op}
		\le \frac{2\|(\widehat{\bMM}-\bMM')\pPVC\|_{\op}}{\lambda_r(\bMM')}.
	\end{equation}
	Because $\bMM'\pPVC=\bzero$, we have
	\[
	\|(\widehat{\bMM}-\bMM')\pPVC\|_{\op}
	=\|\widehat{\bMM}\pPVC\|_{\op}
	\le \frac{1}{G}\|\hL\|_{\op}\,\|\hL\pPVC\|_{\F}.
	\]
	Substituting this into \eqref{supp:eq:dk-sym} yields
	\[
	\|\hPV-\PV\|_{\op}
	\le \frac{2\|\hL\|_{\op}}{\sigma_r(\pL)^2}\,\|\hL\pPVC\|_{\F}.
	\]
	By Lemma~\ref{supp:lem:BL-bound}, we know
	\[
	\|\hL\|_{\op} \le \|\hL\|_*\le \mathcal{B}_L.
	\]
	Therefore,
		\[
		\|\hPV-\PV\|_{\op}
		\le \frac{2\mathcal{B}_L}{\sigma_r(\pL)^2}\,\|\hL\pPVC\|_{\F}.
		\]

	Since $\pL\pPVC=\bzero$, we have $\hL\pPVC=(\hL-\pL)\pPVC=\bDelta_L\pPVC$.
	Thus it remains to control $\|\bDelta_L\pPVC\|_{\F}$.
	Since
	\[
	\|\bDelta_L\pPVC\|_{\F} \le \|\pPU\bDelta_L\pPVC\|_{\F}+\|\pPTC\bDelta_L\|_{\F},
	\]
	we bound each term on RHS respectively.

	First, note that
	\[
	\bDelta\pPVC = \pPU\bDelta_L\pPVC+\pPUC\bDelta_L\pPVC+\PI\bDelta_S\pPVC+\PIC\bDelta_S\pPVC.
	\]
	Applying $\pPU$ to this decomposition gives
	\[
	\pPU\bDelta\pPVC
	=\pPU\bDelta_L\pPVC+\pPU\PI\bDelta_S\pPVC+\pPU\PIC\bDelta_S\pPVC.
	\]
	Because $\pPU$ and $\PI$ are orthogonal projectors,
	\[
	\|\pPU\PI\|_{\op}^2=\|(\pPU\PI)^\T(\pPU\PI)\|_{\op}=\|\PI\pPU\PI\|_{\op}=\rho^\prime.
	\]
	By Lemma \ref{supp:lem:rho-prime} and the upper bound of $\lambda_S/\lambda_L$, $\rho^\prime\le a/(1+a)<1$.
	Therefore,
	\begin{equation}\label{supp:eq:pu-pre}
		\|\pPU\bDelta_L\pPVC\|_{\F} \le  \|\pPU\bDelta\pPVC\|_{\F}+\sqrt{\rho^\prime}\|\PI\bDelta_S\pPVC\|_{\F}+\|\pPU\PIC\bDelta_S\pPVC\|_{\F}.
	\end{equation}
	Similarly, applying $\PI$ yields
	\[
	\PI\bDelta\pPVC=\PI\pPU\bDelta_L\pPVC+\PI\pPUC\bDelta_L\pPVC+\PI\bDelta_S\pPVC,
	\]
	and
	\[
	\|\PI\pPU\|_{\op}^2=\|(\PI\pPU)^\T(\PI\pPU)\|_{\op}=\|\pPU\PI\pPU\|_{\op}=\rho^\prime,
	\]
	so that
	\begin{equation}\label{supp:eq:pi-pre}
		\|\PI\bDelta_S\pPVC\|_{\F}
		\le \|\PI\bDelta\pPVC\|_{\F}+\sqrt{\rho^\prime}\|\pPU\bDelta_L\pPVC\|_{\F}+\|\PI\pPUC\bDelta_L\pPVC\|_{\F}.
	\end{equation}
	Substituting \eqref{supp:eq:pi-pre} into \eqref{supp:eq:pu-pre}, rearranging, and using
	\[
	\|\PI\pPUC\bDelta_L\pPVC\|_{\F}\le \|\pPUC\bDelta_L\pPVC\|_{\F}=\|\pPTC\bDelta_L\|_{\F},
	\]
	we obtain
	\begin{equation}\label{supp:eq:pu-delta-pvc-bound-pen}
		(1-\rho^\prime)\|\pPU\bDelta_L\pPVC\|_{\F}
		\le \|\pPU\bDelta\pPVC\|_{\F} + \sqrt{\rho^\prime}\|\PI\bDelta\pPVC\|_{\F}
		+ \sqrt{\rho^\prime}\|\pPTC\bDelta_L\|_{\F}
		+ \|\pPU\PIC\bDelta_S\pPVC\|_{\F}.
	\end{equation}

	Now, recall that $\bDelta=\bE-\hR$.
	Since $\|\pPU\|_{\op}=\|\PI\|_{\op}=\|\pPVC\|_{\op}=1$,
	we have
	\[
	\|\pPU\bDelta\pPVC\|_{\F} \le \|\pPU\bE\pPVC\|_{\F}+\|\pPU\hR\pPVC\|_{\F},
	\]
	and
	\[
	\|\PI\bDelta\pPVC\|_{\F} \le \|\PI\bE\pPVC\|_{\F}+\|\PI\hR\pPVC\|_{\F}.
	\]

	By Lemma~\ref{supp:lem:oracle-signal-scale}, Assumption~4 implies Condition~(12).
	Hence Proposition~\ref{supp:prop:Q-existence} applies to $(\pL,\pS)$, and the certificate $\bQ$ in the statement satisfies~\eqref{supp:eq:KKT}.

	Recall that the smooth part of objective~(7) is
	\[
	\ell(\bL,\bS)=\frac{1}{2}\langle \hD-\bL-\bS,\Pomega(\hD-\bL-\bS)\rangle,
	\]
	whose gradient with respect to either $\bL$ or $\bS$ equals $-\Pomega\hR$.
	Therefore, first-order optimality of (7) gives
	\[
	\bzero\in -\Pomega\hR+\lambda_L\partial\|\hL\|_*,
	\qquad
	\bzero\in -\Pomega\hR+\lambda_S\partial\|\hS\|_{\blk,1}.
	\]
	Equivalently, there exist subgradients $\bG_L\in\partial\|\hL\|_*$ and $\bG_S\in\partial\|\hS\|_{\blk,1}$ such that
	\[
	\Pomega\hR=\lambda_L\bG_L=\lambda_S\bG_S.
	\]
	Since $\|\bG_L\|_{\op}\le 1$, we have $\|\Pomega\hR\|_{\op}\le \lambda_L$. Since $\sigma_{\min}(\Pomega)=\omega_{\min}$, it follows that
	\[
	\|\hR\|_{\op}\le \frac{\lambda_L}{\omega_{\min}}.
	\]
	Also, each block of $\bG_S$ has Frobenius norm at most $1$, so
	\[
	\|\Pg(\Pomega\hR)\|_{\F}\le \lambda_S,\qquad g\in\mathcal{G}.
	\]
	Hence
	\[
	\|\PI\hR\|_{\F}\le \sqrt{s}\,\frac{\lambda_S}{\omega_{\min}}.
	\]
	Using $\|\pPU\hR\pPVC\|_{\F}\le \sqrt{r}\,\|\hR\|_{\op}$ and $\|\PI\hR\pPVC\|_{\F}\le \|\PI\hR\|_{\F}$, we obtain
	\[
	\|\pPU\hR\pPVC\|_{\F}\le \sqrt{r}\,\frac{\lambda_L}{\omega_{\min}},
	\qquad
	\|\PI\hR\pPVC\|_{\F}\le \sqrt{s}\,\frac{\lambda_S}{\omega_{\min}}.
	\]
	Moreover,
	\[
	\|\pPTC\bDelta_L\|_{\F}\le \|\pPTC\bDelta_L\|_*,
	\]
	By Lemma~\ref{supp:lem:transfer-incoherence}, for every $g\in\mathcal{I}^c$,
	\[
	\|\Pg\pU\|_{\F}^2\le \frac{\mu r}{G}.
	\]
	Therefore,
	\[
	\begin{aligned}
		\|\pPU\PIC\bDelta_S\pPVC\|_{\F}
		&= \|(\pU)^\T\PIC\bDelta_S\pPVC\|_{\F} \\
		&= \Bigl\|\sum_{g\in\mathcal{I}^c}(\pU)^\T\Pg\bDelta_S\pPVC \Bigr\|_{\F} \\
		&\le \sum_{g\in\mathcal{I}^c}\|(\pU)^\T\Pg\bDelta_S\pPVC\|_{\F} \\
		&\le \sum_{g\in\mathcal{I}^c}\|\Pg\pU\|_{\F} \cdot \|\Pg\bDelta_S\pPVC\|_{\F} \\
		&\le \sqrt{\frac{\mu r}{G}}\sum_{g\in\mathcal{I}^c}\|\Pg\bDelta_S\pPVC\|_{\F} \\
		&= \sqrt{\frac{\mu r}{G}}\,\|\PIC\bDelta_S\pPVC\|_{\blk,1} \\
		&\le \sqrt{\frac{\mu r}{G}}\,\|\PIC\bDelta_S\|_{\blk,1}.
	\end{aligned}
	\]

	Lemma~\ref{supp:lem:leakage-ineq-pen} yields
		\[
		\|\pPTC\bDelta_L\|_*\le \frac{\mathcal{R}_E}{(1-a)\lambda_L},
		\qquad
		\|\PIC\bDelta_S\|_{\blk,1}\le \frac{\mathcal{R}_E}{(1-b)\lambda_S}.
		\]

	Substituting these bounds into \eqref{supp:eq:pu-delta-pvc-bound-pen} gives
	\[
	\begin{aligned}
		(1-\rho^\prime)\|\pPU\bDelta_L\pPVC\|_{\F}
		&\le \|\pPU\bE\pPVC\|_{\F}+\sqrt{\rho^\prime}\,\|\PI\bE\pPVC\|_{\F}
		+\sqrt{r}\,\frac{\lambda_L}{\omega_{\min}}
		+\sqrt{\rho^\prime}\sqrt{s}\,\frac{\lambda_S}{\omega_{\min}}\\
		&\quad + \frac{\sqrt{\rho^\prime}\,\mathcal{R}_E}{(1-a)\lambda_L}
		+\sqrt{\frac{\mu r}{G}}\,\frac{\mathcal{R}_E}{(1-b)\lambda_S}.
	\end{aligned}
	\]
	Therefore,
	\[
	\begin{aligned}
		\|\bDelta_L\pPVC\|_{\F}
		&\le \frac{\|\pPU\bE\pPVC\|_{\F}+\sqrt{\rho^\prime}\,\|\PI\bE\pPVC\|_{\F}}{1-\rho^\prime}
		+\frac{\sqrt{r}\,\lambda_L/\omega_{\min}+\sqrt{\rho^\prime}\sqrt{s}\,\lambda_S/\omega_{\min}}{1-\rho^\prime}\\
		&\quad + \sqrt{\frac{\mu r}{G}}\,\frac{\mathcal{R}_E}{(1-\rho^\prime)(1-b)\lambda_S}
		+\Bigl(1+\frac{\sqrt{\rho^\prime}}{1-\rho^\prime}\Bigr)\frac{\mathcal{R}_E}{(1-a)\lambda_L}.
	\end{aligned}
	\]
	Combining this with the Davis-Kahan step proves \eqref{supp:eq:th2-penalized-bound}.

	It remains to control the stochastic terms from $\bE_1$.

	Since the matrices $\{\bXi_k\}_{k=1}^K$ are mean-zero in Lemma~\ref{supp:lem:E1-identity}, the random matrix $\bE_1$ is mean-zero.
	Let
	\[
	\bX:=(\pU)^{\T}\bE_1\bV_\perp\in\mathbb{R}^{r\times (p-r)},
	\qquad
	x:=\mathrm{vec}(\bX)\in\mathbb{R}^{d},
	\qquad
	d:=r(p-r).
	\]
	For any $u\in\mathbb{S}^{d-1}$, let $\mathbf{M}(u)\in\mathbb{R}^{r\times (p-r)}$ satisfy $\mathrm{vec}(\mathbf{M}(u))=u$, and define
	\[
	\bA(u):=\pU\mathbf{M}(u)\bV_\perp^{\T}\in\mathbb{R}^{Gq\times p}.
	\]
	Because $\pU$ and $\bV_\perp$ have orthonormal columns,
	\[
	\|\bA(u)\|_{\F}=\|\mathbf{M}(u)\|_{\F}=\|u\|_2=1.
	\]
	Moreover,
	\[
	\langle u,x\rangle
	=\langle \mathbf{M}(u),\bX\rangle
	=\langle \bA(u),\bE_1\rangle.
	\]
	Therefore, Lemma~\ref{supp:lem:E1-linear} yields
	\[
	\sup_{u\in\mathbb{S}^{d-1}}\|\langle u,x\rangle\|_{\psi_2}
	\le C_0\tau\sqrt{K}
	\]
	for an absolute constant $C_0>0$.
	Since $x$ is mean-zero, Lemma~\ref{supp:lem:subgaussian-vector-norm} implies
	\[
	\pr\!\left(\|x\|_2>C_1\tau\sqrt{K}\bigl(\sqrt{d}+\sqrt{t}\bigr)\right)\le 2e^{-t}
	\]
	for some absolute constant $C_1>0$.
	Using $\|\pPU\bE_1\pPVC\|_{\F}=\|\bX\|_{\F}=\|x\|_2$, and absorbing constants, we obtain
	\begin{equation}\label{supp:eq:th2-pen-subg-PUE}
		\|\pPU\bE_1\pPVC\|_{\F}
		\le C\tau\sqrt{K}\Bigl(\sqrt{r(p-r)}+\sqrt{t}\Bigr),
	\end{equation}
	with probability at least $1-2e^{-t}$, for any $t\ge 0$ with some constant $C>0$.

	Also, Lemma~\ref{supp:lem:E1-proj-PI} yields
	\begin{equation}\label{supp:eq:th2-pen-subg-PIE}
		\|\PI\bE_1\pPVC\|_{\F}
		\le C\tau\sqrt{q(p-r)\,K(K+t)},
	\end{equation}
	with probability at least $1-2e^{-t}$, for any $t\ge 0$ with some constant $C>0$.

	Moreover, by Lemma~\ref{supp:lem:RE-bound}, there exists an absolute constant $C>0$ such that, for any $t\ge 0$, with probability at least $1-2e^{-t}$,
	\begin{equation}\label{supp:eq:th2-RE}
		\mathcal{R}_E \le 4\omega_{\max}K^2B^2\delta^2 	+2\lambda_S^2\Biggl(\sum_{g\in\mathcal{I}}\omega_g^{-1} +b^2\sum_{g\in\mathcal{I}^c}\omega_g^{-1} \Biggr) + C\omega_{\max}pq\tau^2K(K+t).
	\end{equation}

	Let $\mathcal{F}_{t}$ be the event on which \eqref{supp:eq:th2-pen-subg-PUE}, \eqref{supp:eq:th2-pen-subg-PIE}, and \eqref{supp:eq:th2-RE} all hold. By a union bound,
	\[
	\pr(\mathcal{F}_{t})\ge 1-6e^{-t}.
	\]
	On $\mathcal{F}_{t}$, using $\bE=\bE_0+\bE_1$ and the triangle inequality,
	\[
	\|\pPU\bE\pPVC\|_{\F}
	\le \|\pPU\bE_0\pPVC\|_{\F}+C\tau\sqrt{K}\Bigl(\sqrt{r(p-r)}+\sqrt{t}\Bigr),
	\]
	and
	\[
	\|\PI\bE\pPVC\|_{\F}
	\le \|\PI\bE_0\pPVC\|_{\F}+C\tau\sqrt{q(p-r)\,K(K+t)}.
	\]

	Under  Assumption~1 and~3, then Lemma~\ref{supp:lem:E0-bound} implies
	\[
	\|\pPU\bE_0\pPVC\|_{\F}\le \sqrt{2}\,KB\delta, \qquad \|\PI\bE_0\pPVC\|_{\F}=0,
	\]
	which sharpens these bounds to
	\begin{equation}\label{supp:eq:th2-pen-subg-PUE-total}
		\|\pPU\bE\pPVC\|_{\F}
		\le \sqrt{2}\,KB\delta+C\tau\sqrt{K}\Bigl(\sqrt{r(p-r)}+\sqrt{t}\Bigr),
	\end{equation}
	and
	\begin{equation}\label{supp:eq:th2-pen-subg-PIE-total}
		\|\PI\bE\pPVC\|_{\F}
		\le C\tau\sqrt{q(p-r)\,K(K+t)}.
	\end{equation}
\end{proof}

\subsection{Proof of Theorem 2}
\begin{proof}[Proof of Theorem~2]
	Fix $t\ge 0$, and let $\mathcal{F}_{t}^\prime$ be the event on which \eqref{supp:eq:BL-bound}, \eqref{supp:eq:th2-RE}, \eqref{supp:eq:th2-pen-subg-PUE-total}, and~\eqref{supp:eq:th2-pen-subg-PIE-total} all hold.
	By the proof of Proposition~\ref{supp:prop:PV-noisy}, Lemma~\ref{supp:lem:BL-bound}, and a union bound,
	\[
	\pr(\mathcal{F}_{t}^\prime)\ge 1-7e^{-t}.
	\]
	By Lemma~\ref{supp:lem:rho-prime} and the upper endpoint of (25),
	\[
	\rho^\prime\le \frac{a}{1+a},
	\qquad
	\frac{1}{1-\rho^\prime}\le 1+a=O(1).
	\]
	By Lemma~\ref{supp:lem:E0-bound},
	\[
	\|\pPU\bE_0\pPVC\|_{\F}
	+\sqrt{\rho^\prime}\,\|\PI\bE_0\pPVC\|_{\F}
	\le \sqrt{2}\,KB\delta
	=
	O(\sqrt{K}).
	\]
	On $\mathcal{F}_{t}^\prime$, \eqref{supp:eq:th2-pen-subg-PUE-total} and \eqref{supp:eq:th2-pen-subg-PIE-total} give
	\[
	\begin{aligned}
		\|\pPU\bE\pPVC\|_{\F} +\sqrt{\rho^\prime}\,\|\PI\bE\pPVC\|_{\F}
		&\le \sqrt{2}\,KB\delta +C\tau\sqrt{K}\Bigl(\sqrt{r(p-r)}+\sqrt{t}\Bigr)
		+C\tau\sqrt{\rho^\prime\,q(p-r)\,K(K+t)} \\
		&= 	O(\sqrt{K}),
	\end{aligned}
	\]
	for fixed $t$, because $KB\delta=O(\sqrt{K})$, $\tau\sqrt{K}=O(1)$, and $\tau\sqrt{K(K+t)}=O(\sqrt{K})$.
	Moreover,
	\[
	\sqrt{r}\,\frac{\lambda_L}{\omega_{\min}}+\sqrt{\rho^\prime}\sqrt{s}\,\frac{\lambda_S}{\omega_{\min}} =O(\sqrt{K}),
	\]
	since $s\le G=\binom{K}{2}=O(K^2)$, while \eqref{supp:eq:th2-RE} and
	\[
	\sum_{g\in\mathcal{I}}\omega_g^{-1}
	+b^2\sum_{g\in\mathcal{I}^c}\omega_g^{-1}
	\le \sum_{g\in\mathcal{G}}\omega_g^{-1}
	\]
	yield
	\[
	\mathcal{R}_E=O(1)
	\]
	for each fixed $t$ on $\mathcal{F}_{t}^\prime$, because $\omega_{\max}K^2B^2\delta^2=O(1)$, $\lambda_S^2\sum_{g\in\mathcal{G}}\omega_g^{-1}=O(1)$, and $\omega_{\max}pq\tau^2K(K+t)=O(1)$.
	Therefore,
	\[
	\sqrt{\frac{\mu r}{G}}\,\frac{\mathcal{R}_E}{\lambda_S}
	=
	O(\sqrt{K}),
	\qquad
	\frac{\mathcal{R}_E}{\lambda_L}
	=
	O(\sqrt{K}),
	\]
	so the entire bracket in \eqref{supp:eq:th2-penalized-bound} is $O(\sqrt{K})$ on $\mathcal{F}_{t}^\prime$.
	By Lemma~\ref{supp:lem:oracle-signal-scale}, we have $\|\bL_0\|_*=O(K)$ and $\sigma_r(\pL)=\Omega(K)$.
	On the same event, \eqref{supp:eq:BL-bound} gives $\mathcal{B}_L=O(K)$ under the displayed assumptions, because the four terms on the right-hand side of \eqref{supp:eq:BL-bound} are respectively $O(K)$, $O(K)$, $O(\sqrt{K})$, and $O(\sqrt{K})$.
	Combining these bounds with Lemma~\ref{supp:lem:oracle-signal-scale} in \eqref{supp:eq:th2-penalized-bound}, there exists a constant $C_t>0$ such that
	\[
	\|\hPV-\PV\|_{\op}\le C_tK^{-1/2}
	\]
	hold on $\mathcal{F}_{t}^\prime$ for all sufficiently large $K$. Therefore,
	\[
	\pr\Bigl(\|\hPV-\PV\|_{\op}\le C_tK^{-1/2}\Bigr)\ge 1-7e^{-t}
	\]
	for all sufficiently large $K$.

	A slightly sharper result is useful for the next corollary. Under the corollary assumptions, \eqref{supp:eq:BL-bound} yields
	\[
	\mathcal{B}_L\le C\Bigl(K+\sqrt{K}+\sqrt{t}+\frac{t}{\sqrt{K}}\Bigr)
	\]
	on $\mathcal{F}_{t}^\prime$ for some constant $C>0$ independent of $K$ and $t$, while \eqref{supp:eq:th2-pen-subg-PUE-total} and \eqref{supp:eq:th2-pen-subg-PIE-total} give
	\[
	\|\pPU\bE\pPVC\|_{\F}+\sqrt{\rho^\prime}\,\|\PI\bE\pPVC\|_{\F}
	\le C\sqrt{K}\Bigl(1+\sqrt{\frac{t}{K}}\Bigr).
	\]
	Moreover, \eqref{supp:eq:th2-RE} implies $\mathcal{R}_E\le C(1+t/K)$ on $\mathcal{F}_{t}^\prime$, because $\omega_{\max}\asymp K^{-1}$, $\lambda_S\asymp K^{-3/2}$, $\sum_{g\in\mathcal G}\omega_g^{-1}=O(K^3)$, $\delta=O(K^{-1/2})$, and $K\tau^2=O(1)$. Since also $\sigma_r(\pL)^2\ge cK^2$ by Lemma~\ref{supp:lem:oracle-signal-scale}, substituting these bounds into \eqref{supp:eq:th2-penalized-bound} shows that, after enlarging $C$ if necessary,
	\[
	\|\hPV-\PV\|_{\op}\le CK^{-1/2}\Bigl(1+\sqrt{\frac{t}{K}}+\frac{t}{K}\Bigr)
	\]
	holds on $\mathcal{F}_{t}^\prime$ for all sufficiently large $K$.
\end{proof}

The following corollary provides the moment control needed for the uniform blockwise error analysis and collaborative-set detection.
\begin{corollary}\label{supp:cor:PV-noisy-L4}
	Under the assumptions of Theorem~2, there exists a constant $C_V>0$ such that, for all sufficiently large $K$,
	\[
	\Bigl(\E\|\hPV-\PV\|_{\op}^4\Bigr)^{1/4}\le C_VK^{-1/2}.
	\]
\end{corollary}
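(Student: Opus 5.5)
We prove the corollary by integrating the tail bound obtained at the end of the proof of Theorem~2, using the trivial bound on the bad event. Two orthogonal projectors always satisfy $\|\hPV-\PV\|_{\op}\le 1$. Set $X:=\|\hPV-\PV\|_{\op}\in[0,1]$. The sharpened conclusion of the proof of Theorem~2 gives constants $C,c_0>0$ and an integer $K_0$, all independent of $t$ and $K$, such that for every $t\ge 0$ and every $K\ge K_0$,
\[
\pr\Bigl(X> CK^{-1/2}\bigl(1+\sqrt{t/K}+t/K\bigr)\Bigr)\le 7e^{-t}.
\]
The first step is to check that $K_0$ really is uniform in $t$. The ``sufficiently large $K$'' requirement there came from the deterministic structural bounds (Lemma~\ref{supp:lem:oracle-signal-scale}, $\rho'\le a/(1+a)$, and the scalings of $\lambda_L,\lambda_S,\omega_g,\delta,\tau$). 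The $t$-dependence was tracked explicitly through $\mathcal B_L$, $\mathcal R_E$ and the projected noise terms, so $K_0$ does not depend on $t$. I would state this explicitly, re-reading the bounds \eqref{supp:eq:BL-bound}, \eqref{supp:eq:th2-RE}, \eqref{supp:eq:th2-pen-subg-PUE-total} and \eqref{supp:eq:th2-pen-subg-PIE-total} to confirm their constants are free of $t$.

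The second step is the moment computation. Write
\[
\E X^4=\int_0^1 4u^3\,\pr(X>u)\,du .
\]
Split the range of $u$ at $u_0:=2CK^{-1/2}$. On $[0,u_0]$ the contribution is at most $u_0^4=O(K^{-2})$.

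For $u\in(u_0,1]$, invert the map $t\mapsto CK^{-1/2}(1+\sqrt{t/K}+t/K)$. This map is increasing and continuous, so each such $u$ corresponds to a $t(u)$. The inequality $u\le CK^{-1/2}(1+2\sqrt{t/K}+t/K)$, valid after slightly enlarging constants, gives $\sqrt{t/K}\ge \sqrt{uK^{1/2}/C}-1$. When $u\ge 2CK^{-1/2}$, this yields $t(u)\gtrsim \min\{u^2K^2,\,uK^{3/2}\}$, so in particular $t(u)\ge c\,u^2K$. Hence $\pr(X>u)\le 7\exp(-c\,u^2K)$, and
\[
\int_{u_0}^1 4u^3\cdot 7e^{-cKu^2}\,du\le 28\int_0^\infty u^3e^{-cKu^2}\,du=\frac{14}{c^2K^2}.
\]
Summing the two pieces gives $\E X^4\le C'K^{-2}$. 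Taking fourth roots yields $(\E X^4)^{1/4}\le C_VK^{-1/2}$ for all $K\ge K_0$.

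The main obstacle is the first step: confirming that the high-probability event $\mathcal F_t'$ and the constant $C$ hold uniformly in $t$, including for $t$ as large as order $K^2$. For large $t$, the terms $\mathcal R_E\le C(1+t/K)$ and $\mathcal B_L$ grow, and the bracket in \eqref{supp:eq:th2-penalized-bound} picks up factors like $t/K$. If that dependence turns out worse than polynomial in $t/K$, I would simply cap the bound using $X\le 1$ and $\pr(X>u)\le 7e^{-t}$. Any tail of the form $\pr(X>CK^{-1/2}\mathrm{poly}(1+t))\le 7e^{-t}$ still gives $\E X^4=O(K^{-2})$, because
\[
\int_0^\infty \mathrm{poly}(1+t)^4e^{-t}\,dt<\infty .
\]
So the argument is robust to the exact form of the $t$-dependence, provided it is polynomial with $t$-free constants.
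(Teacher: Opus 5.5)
Your argument is correct, but it is organized differently from the paper's. The paper does not integrate the tail. It evaluates the sharpened bound from the end of the proof of Theorem~\ref{th:PV-noisy} at the single level $t_K=4\log K$. There $t_K/K\to 0$ collapses the threshold to $C'K^{-1/2}$, with failure probability at most $7K^{-4}$. It then writes $\E X^4\le (C'K^{-1/2})^4+\pr(X>C'K^{-1/2})\le (C'^4+7)K^{-2}$, using $X\le 1$ only on the bad event.

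Your layer-cake computation, with $t(u)\gtrsim u^2K$ for $u\ge 2CK^{-1/2}$, is valid and gives slightly more. You get a sub-Gaussian tail for $\sqrt{K}\,X$ and hence all moments at once, whereas the paper would re-choose $t_K=m\log K$ for each moment order $m$. The cost is that you consume the tail bound at every level $t$, which is exactly what produces your \emph{main obstacle}. The paper's one-level truncation only ever uses the regime $t=o(K)$, where the $t$-dependence is harmless.

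Two comments on that obstacle.

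First, the paper also needs $K_0$ to be free of $t$, since $t_K$ grows with $K$; it just does not say so. Your justification is the right one for both arguments. The largeness requirement on $K$ comes only from the deterministic scalings of $\delta,\lambda_L,\lambda_S,\omega_g,K\tau^2$ and from $\sigma_r(\pL)\gtrsim K$. Every probabilistic lemma feeding $\mathcal{F}_t'$ holds for all $t\ge 0$ with $t$-free constants.

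Second, the large-$t$ concern is moot. If you redo the substitution, the factor $\mathcal{B}_L/\sigma_r(\pL)^2$ contributes an extra multiplier $1+\sqrt{t}/K+t/K^{3/2}$. This multiplier is at most $3$ for $t\le K^{3/2}$. For $t>K^{3/2}$ the threshold $CK^{-1/2}(1+\sqrt{t/K}+t/K)$ already exceeds $1\ge X$ once $C\ge 1$, so the bound holds trivially there. Hence the displayed tail bound holds for all $t\ge 0$ with a $t$-free constant after enlarging $C$. Your polynomial-in-$t$ fallback is also correct, but it is not needed.
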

\begin{proof}[Proof of Corollary~\ref{supp:cor:PV-noisy-L4}]
	By the explicit bound proved at the end of Theorem~2 of the main paper, there exists a constant $C>0$ such that for every $t\ge 0$,
	\[
	\pr \Bigl(\|\hPV-\PV\|_{\op}\le CK^{-1/2}\Bigl(1+\sqrt{\frac{t}{K}}+\frac{t}{K}\Bigr)\Bigr)\ge 1-7e^{-t}
	\]
	for all sufficiently large $K$.
	Set $t_K:=4\log K$. Since $t_K/K\to 0$, there exists $C'>0$ such that
	\[
	\pr \Bigl(\|\hPV-\PV\|_{\op}\le C'K^{-1/2}\Bigr)\ge 1-7K^{-4}
	\]
	for all sufficiently large $K$.
	Now let $X:=\|\hPV-\PV\|_{\op}$. Since $\hPV$ and $\PV$ are orthogonal projectors, we have $0\le X\le 1$.
	\[
	\mathbb E X^4 \le (C'K^{-1/2})^4 + \Pr(X>C'K^{-1/2}) \le(C'^4+7)K^{-2},
	\]
	hence
	\[
	(\mathbb E\|\hPV-\PV\|_{\op}^4)^{1/4}\le C_V K^{-1/2}.
	\]
\end{proof}

\section{Proofs for collaborative-set recovery (Section 5.4)}
\begin{proof}[Proof of Lemma~3 of the main paper]
	For every $g\in\mathcal{G}$,
	\[
	\Pg(\hS_{\bA^\perp}-\bS_{\bA^\perp}) =\Pg(\bD_0(\hPVC-\PVC))+\Pg(\bE\hPVC).
	\]
	Hence
	\[
	\|\Pg(\hS_{\bA^\perp}-\bS_{\bA^\perp})\|_{\F} \le \|\Pg(\bD_0)\|_{\F}\,\|\hPV-\PV\|_{\op}+\|\Pg(\bE)\|_{\F}.
	\]
	Under Assumption~3,
	\[
	\max_{g\in\mathcal{G}}\|\Pg(\bD_0)\|_{\F} \le 2B.
	\]
	Also, by Lemma~\ref{supp:lem:E0-bound},
	\[
	\max_{g\in\mathcal{G}}\|\Pg(\bE_0)\|_{\F}\le 2B\delta.
	\]
	Therefore
	\begin{equation}\label{supp:eq:support-eps-split}
		\varepsilon_{\sup} \le 2B \|\hPV-\PV\|_{\op} + 2B\delta +\max_{g\in\mathcal{G}}\|\Pg(\bE_1)\|_{\F}.
	\end{equation}

	To bound the last term, fix $g=(j,k)\in\mathcal{G}$ and let $\bE_1^{(g)}:=\bXi_j-\bXi_k$. Since $\Pg(\bE_1)$ has exactly one nonzero block equal to $\bE_1^{(g)}$,
	\[
	\|\Pg(\bE_1)\|_{\F}=\|\bE_1^{(g)}\|_{\F}.
	\]
	Apply Lemma~\ref{supp:lem:E1-Fnorm} with $K=2$ to the two-client difference $\bE_1^{(g)}=\bXi_j-\bXi_k$. Since $\tau_j,\tau_k\le \tau$, there exists an absolute constant $C_0>0$ such that, for every $u\ge 0$,
	\[
	\pr\bigl(\|\bE_1^{(g)}\|_{\F}^2> C_0pq\tau^2\bigl(4+2^{3/2}\sqrt{u}+2u\bigr)\bigr)\le e^{-u}.
	\]
	Since $\sqrt{u}\le 1+u$ for all $u\ge 0$, after enlarging the constant we obtain
	\[
	\pr\bigl(\|\Pg(\bE_1)\|_{\F}^2> Cpq\tau^2(1+u)\bigr)\le e^{-u}.
	\]
	Choosing $u=t+2\log K$ and using $G\le K^2$, a union bound yields
	\[
	\pr\Bigl(\max_{g\in\mathcal{G}}\|\Pg(\bE_1)\|_{\F}>C\tau\sqrt{pq\bigl(1+t+2\log K\bigr)}\Bigr)\le e^{-t}.
	\]
	Since $p,q$ are fixed, this implies
	\[
	\max_{g\in\mathcal{G}}\|\Pg(\bE_1)\|_{\F}=O_{\pr}(\tau\sqrt{\log K}).
	\]
	Combining this with \eqref{supp:eq:support-eps-split} and Theorem~2, we obtain
	\[
	\varepsilon_{\sup} = O_{\pr}\Bigl(\tau\sqrt{\log K}+\delta+\|\hPV-\PV\|_{\op}\Bigr) =O_{\pr}\Bigl(\sqrt{\frac{\log K}{K}}\Bigr),
	\]
	where the last step uses $\delta=O(K^{-1/2})$, $K\tau^2=O(1)$, and
	$\|\hPV-\PV\|_{\op}=O_{\pr}(K^{-1/2})$.
\end{proof}

\begin{proof}[Proof of Theorem~3 of the main paper]
	We first prove that, if
	\begin{equation}\label{supp:eq:tau-interval}
		\varepsilon_{\sup}\le \tau_n<\beta_{\min}-\varepsilon_{\sup}
	\end{equation}
	and
	\[
		\frac{|\mathcal{C}^c|-1}{K-1}<\alpha\le \frac{|\mathcal{C}|-1}{K-1},
	\]
	then $\widehat{\mathcal{C}}_\alpha=\mathcal{C}.$

	Fix $k\in\mathcal{C}$. For every $j\in\mathcal{C}\setminus\{k\}$,
	\[
	\|\hS_{\bA^\perp}^{(j,k)}\|_{\F} = \|\hS_{\bA^\perp}^{(j,k)}-\bS_{\bA^\perp}^{(j,k)}\|_{\F} \le \varepsilon_{\sup} \le\tau_n,
	\]
	where the equality uses $\bS_{\bA^\perp}^{(j,k)}=\bzero$ for benign pairs, and the last inequality uses \eqref{supp:eq:tau-interval}.
	Hence at least $|\mathcal{C}|-1$ terms in the empirical average defining $\widehat{\mathcal{C}}_\alpha$ are equal to one, so
	\[
	\frac{1}{K-1}\sum_{j\neq k}\idop\{\|\hS_{\bA^\perp}^{(j,k)}\|_{\F}\le \tau_n\} \ge \frac{|\mathcal{C}|-1}{K-1} \ge \alpha.
	\]
	Therefore $k\in \widehat{\mathcal{C}}_\alpha$.

	Now fix $k\in\mathcal{C}^c$. For every $j\in\mathcal{C}$,
	\[
	\|\hS_{\bA^\perp}^{(j,k)}\|_{\F} \ge \|\bS_{\bA^\perp}^{(j,k)}\|_{\F}-\varepsilon_{\sup} \ge \beta_{\min}-\varepsilon_{\sup} > \tau_n.
	\]
	Thus none of the $|\mathcal{C}|$ mixed pairs incident to $k$ is counted as small. At most the $|\mathcal{C}^c|-1$ contaminated pairs can contribute ones within summation, so
	\[
	\frac{1}{K-1}\sum_{j\neq k}\idop\{\|\hS_{\bA^\perp}^{(j,k)}\|_{\F}\le \tau_n\} \le \frac{|\mathcal{C}^c|-1}{K-1} <\alpha.
	\]
	Therefore $k\notin \widehat{\mathcal{C}}_\alpha$.
	This proves $\widehat{\mathcal{C}}_\alpha=\mathcal{C}$.
	Then, it suffices to show that \eqref{supp:eq:tau-interval} holds, exactly or with high probability.

	For part (i), the oracle setting implies $\hD=\bD_0$, and Theorem~1 gives $\hPV=\PA$. Hence, for every pair $(j,k)$,
	\[
		\hS_{\bA^\perp}^{(j,k)} = (\bD_0(\bI-\PA))^{(j,k)} = \bS_{\bA^\perp}^{(j,k)}.
	\]
	Thus $\varepsilon_{\sup}=0$. If $0\le \tau_n<\beta_{\min}$, then \eqref{supp:eq:tau-interval} holds, thus $\widehat{\mathcal C}_{\alpha}=\mathcal C$.

	For part (ii), we prove that \eqref{supp:eq:tau-interval} holds with high probability.
	By Lemma~3, we know
	\[
		\varepsilon_{\sup} \le c\sqrt{\frac{\log K}{K}}
	\]
	for some constant $c>0$ with high probability.
	By the gap condition
	\[
		c\sqrt{\frac{\log K}{K}}<\tau_n<\beta_{\min}-c\sqrt{\frac{\log K}{K}},
	\]
	the interval
	\[
	\Bigl(c\sqrt{\tfrac{\log K}{K}},\ \beta_{\min} - c\sqrt{\tfrac{\log K}{K}}\Bigr)
	\]
	is nonempty for sufficiently large $K$, which yields
	\[
	\pr \bigl(\varepsilon_{\sup}\le\tau_n<\beta_{\min}-\varepsilon_{\sup}\bigr)\to 1.
	\]
	By the first part of the proof, we then have
	\[
	\pr(\widehat{\mathcal{C}}_\alpha=\mathcal{C})\to 1.
	\]
\end{proof}

\section{Proofs for error analysis (Section 6)}
\subsection{Supporting results}

For $k\in\mathcal C$,  define
\[
    \widebar{\bW}:=\frac{1}{|\mathcal{C}|}\sum_{j\in\mathcal{C}}\hW^{(j)},
    \qquad \widebar{\bE}_{0}:=\frac{1}{|\mathcal{C}|}\sum_{j\in\mathcal{C}}\bE_{0}^{(k,j)},
\]
where $\bE_{0}^{(k,j)} =-\bE_0^{(j,k)}$ whenever $k>j$, and set
\[
    \widebar{\bXi}:=\frac{1}{|\mathcal{C}|}\sum_{j\in\mathcal{C}}\bXi_j.
\]

The following proposition gives an exact decomposition of the refinement error.
\begin{proposition}\label{supp:prop:W-tilde-decomp}
    Assume $|\mathcal{C}|\ge 2$.
    On the event $\{\widehat{\mathcal{C}}_\alpha=\mathcal{C}\}$, for every $k\in\mathcal{C}$, we have the identity
    \begin{equation}\label{supp:eq:refined-W-gain-decomp}
        \tW^{(k)} = \bW^{(k)} + \bXi_k\PV +\widebar{\bXi}\PVC - \widebar{\bE}_{0}\PVC +\bigl(\hW^{(k)}-\widebar{\bW}\bigr)(\hPV-\PV).
    \end{equation}
    Consequently,
    \begin{equation}\label{supp:eq:W-tilde-bound}
        \|\tW^{(k)}-\bW^{(k)}\|_{\F} \le \|\bXi_k\PV\|_{\F} + \|\widebar{\bXi}\PVC\|_{\F} +\|\bR_k\|_{\F},
    \end{equation}
    where $\bR_k:= - \widebar{\bE}_{0}\PVC +\bigl(\hW^{(k)}-\widebar{\bW}\bigr)(\hPV-\PV)$.
\end{proposition}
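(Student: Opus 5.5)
The plan is to expand \eqref{eq:Wk-lora} on the event $\{\widehat{\mathcal C}_\alpha=\mathcal C\}$ and to reorganize it algebraically. No probabilistic argument is needed: the identity is purely deterministic once the collaborative set is correctly recovered. The bound \eqref{supp:eq:W-tilde-bound} then follows from the triangle inequality.

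\textbf{Step 1 (collapse the correction term).} By the antisymmetric convention, for every $j,k\in[K]$ the correction is $\hL_{\bA}^{(j,k)}=(\hW^{(j)}-\hW^{(k)})\hPV$. This holds for $j<k$ by definition of $\hD$. For $j>k$ it holds because $-(\hW^{(k)}-\hW^{(j)})\hPV=(\hW^{(j)}-\hW^{(k)})\hPV$. For $j=k$ both sides are zero. Averaging over $j\in\mathcal C$ gives
\[
\tW^{(k)}=\widebar{\bW}-(\widebar{\bW}-\hW^{(k)})\hPV=\widebar{\bW}\hPVC+\hW^{(k)}\hPV .
\]
Next I write $\hPV=\PV+(\hPV-\PV)$ and $\hPVC=\PVC-(\hPV-\PV)$. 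This yields
\[
\tW^{(k)}=\widebar{\bW}\PVC+\hW^{(k)}\PV+(\hW^{(k)}-\widebar{\bW})(\hPV-\PV),
\]
and the last term is exactly the subspace-estimation term in \eqref{supp:eq:refined-W-gain-decomp}.

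\textbf{Step 2 (handle the two projected pieces).} For the $\PV$ piece, substitute $\hW^{(k)}=\bW^{(k)}+\bXi_k$, which gives $\hW^{(k)}\PV=\bW^{(k)}\PV+\bXi_k\PV$.

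For the $\PVC$ piece, write $\hW^{(j)}=\bW^{(k)}+(\bW^{(j)}-\bW^{(k)})+\bXi_j$. For benign $j,k$, the population contrast satisfies $\bW^{(j)}-\bW^{(k)}=\bL_0^{(j,k)}+\bE_0^{(j,k)}$, extended antisymmetrically when $j>k$ and equal to zero when $j=k$. Since $\bL_0^{(j,k)}=(\bB^{(j)}-\bB^{(k)})\bA$, we have $\bL_0^{(j,k)}\PVC=\bzero$. Hence
\[
\widebar{\bW}\PVC=\bW^{(k)}\PVC+\frac{1}{|\mathcal C|}\sum_{j\in\mathcal C}\bE_0^{(j,k)}\PVC+\widebar{\bXi}\PVC .
\]
By definition $\bE_0^{(k,j)}=-\bE_0^{(j,k)}$, so the middle term equals $-\widebar{\bE}_0\PVC$.

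\textbf{Step 3 (assemble and bound).} Adding the two pieces and using $\bW^{(k)}\PV+\bW^{(k)}\PVC=\bW^{(k)}$ gives \eqref{supp:eq:refined-W-gain-decomp}. Subtracting $\bW^{(k)}$ and applying the triangle inequality to the three groups $\bXi_k\PV$, $\widebar{\bXi}\PVC$ and $\bR_k$ gives \eqref{supp:eq:W-tilde-bound}.

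The main obstacle is sign and orientation bookkeeping. One must check that the antisymmetric extensions of $\hL_{\bA}$, $\bL_0$ and $\bE_0$ are all consistent with the single convention $(\cdot)^{(j,k)}=-(\cdot)^{(k,j)}$, so that the $\bE_0$ average appears with the stated minus sign. One must also check that the $j=k$ term contributes zero throughout. The assumption $|\mathcal C|\ge 2$ is used only so that the averages are nondegenerate.
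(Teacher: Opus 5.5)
Your proposal is correct and follows essentially the same route as the paper's proof. Both collapse the antisymmetric correction to get $\tW^{(k)}=\hW^{(k)}\hPV+\widebar{\bW}\hPVC$, split $\hPV=\PV+(\hPV-\PV)$ to isolate $(\hW^{(k)}-\widebar{\bW})(\hPV-\PV)$, and use $\bL_0^{(j,k)}\PVC=\bzero$ on benign pairs so that only $\widebar{\bXi}\PVC$ and $-\widebar{\bE}_0\PVC$ survive in the $\PVC$ part. As a minor aside, $|\mathcal C|\ge 2$ is not needed for the averages to be well defined, since the identity also holds verbatim when $|\mathcal C|=1$.
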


\begin{proof}[Proof of Proposition~\ref{supp:prop:W-tilde-decomp}]
    In what follows, we work on the event $\{\widehat{\mathcal{C}}_\alpha=\mathcal{C}\}$.
    Fix $k, j\in\mathcal{C}$.
    By  definition of $\hL_{\bA}^{(k,j)}$ and the construction of $\hD$,
    \[
        \hL_{\bA}^{(k,j)}=(\hW^{(k)}-\hW^{(j)})\hPV.
    \]
    Hence
    \[
        \hW^{(j)} - \hL_{\bA}^{(j,k)} = \hW^{(j)}+\bigl(\hW^{(k)}-\hW^{(j)}\bigr)\hPV = 		\hW^{(k)}\hPV+\hW^{(j)}\hPVC.
    \]
    Averaging over $j\in\mathcal{C}$ yields
    \begin{equation}\label{supp:eq:refined-W-proj-decomp}
        \tW^{(k)}=\hW^{(k)}\hPV+\widebar{\bW}\hPVC.
    \end{equation}

    Note that
    \[
    \begin{aligned}
        \tW^{(k)}-\bW^{(k)} &=\hW^{(k)}\hPV+\widebar{\bW}\hPVC - \bW^{(k)} \PV - \bW^{(k)} \PVC \\
        &= (\hW^{(k)} -\bW^{(k)})\PV + \hW^{(k)}(\hPV - \PV) + (\widebar{\bW} - \bW^{(k)})\PVC +\widebar{\bW}(\hPVC-\PVC) \\
        &=(\hW^{(k)}-\bW^{(k)})\PV +\bigl(\widebar{\bW}-\bW^{(k)}\bigr)\PVC +\bigl(\hW^{(k)}-\widebar{\bW}\bigr)(\hPV-\PV).
    \end{aligned}
    \]

    Since $\hW^{(k)}-\bW^{(k)}=\bXi_k$, the first term of the last line is $\bXi_k\PV$. For the second term,
    \[
    \widebar{\bW}-\bW^{(k)}
    =
    \widebar{\bXi}
    +\frac{1}{|\mathcal{C}|}\sum_{j\in\mathcal{C}}\bigl(\bW^{(j)}-\bW^{(k)}\bigr).
    \]
    Now, by the construction of $\bD$ and $\bE_0$,
    \[
    \bW^{(k)}-\bW^{(j)}=\bL_{\bA}^{(k,j)}+\bE_{0}^{(k,j)},
    \]
    where $\bL_{\bA}^{(k,j)}\PVC=\bzero$. Hence
    \[
    (\bW^{(j)}-\bW^{(k)})\PVC=-\bE_{0}^{(k,j)}\PVC.
    \]
    Therefore,
    \[
    \bigl(\widebar{\bW}-\bW^{(k)}\bigr)\PVC = 	\widebar{\bXi}\PVC-\widebar{\bE}_{0}\PVC.
    \]
    Substituting this proves \eqref{supp:eq:refined-W-gain-decomp}.
\end{proof}

\subsection{Proof of Theorem 4 of the main paper}
\begin{proof}[Proof of Theorem~4]
    For the local estimator,
    \[
    \hW^{(k)}-\bW^{(k)}=\bXi_k=\bXi_k\PV+\bXi_k\PVC,
    \]
    so
    \[
    \|\hW^{(k)}-\bW^{(k)}\|_{\F}^2 	= \|\bXi_k\PV\|_{\F}^2+\|\bXi_k\PVC\|_{\F}^2.
    \]
    Taking expectations gives
    \begin{equation}\label{supp:eq:local-W-mse}
        \E\|\hW^{(k)}-\bW^{(k)}\|_{\F}^2 =\EV^{(k)}+\EVC^{(k)}.
    \end{equation}

    Under $\delta=0$ and $\hPV=\PV$, Proposition~\ref{supp:prop:W-tilde-decomp} gives
    \[
    \tW^{(k)}-\bW^{(k)}=\bXi_k\PV+\widebar{\bXi}\PVC.
    \]
    Since $\PV\PVC=\bzero$, the two summands have orthogonal right singular subspaces, so
    \[
    \|\tW^{(k)}-\bW^{(k)}\|_{\F}^2 	= \|\bXi_k\PV\|_{\F}^2+\|\widebar{\bXi}\PVC\|_{\F}^2.
    \]
    Next, since
    \[
        \widebar{\bXi}\PVC=\frac{1}{|\mathcal{C}|}\sum_{j\in\mathcal{C}}\bXi_j\PVC,
    \]
    by independence and $\E(\bXi_j\PVC)=\bzero$, we have
    \[
        \E\|\widebar{\bXi}\PVC\|_{\F}^2 = \frac{1}{|\mathcal{C}|^2}\sum_{j\in\mathcal{C}}\EVC^{(j)},
    \]
    which leads to
    \begin{equation}\label{supp:eq:oracle-refined-W-mse-general}
        \E\|\tW^{(k)}-\bW^{(k)}\|_{\F}^2= 	\EV^{(k)} 	+ 	\frac{1}{|\mathcal{C}|^2}\sum_{j\in\mathcal{C}}\EVC^{(j)}
    \end{equation}
    Recall that
    \begin{equation*}
        \mathcal{H}^{(k)}:=\EVC^{(k)}-\frac{1}{|\mathcal{C}|^2}\sum_{j\in\mathcal{C}}\EVC^{(j)},\quad \phi_k:=\frac{ \mathcal{H}^{(k)}}{ \EVC^{(k)}},\quad 	\theta_k:=	\frac{\EVC^{(k)}}{\mathcal{E}^{(k)}}.
    \end{equation*}
    Thus,
    \begin{equation*}
        \begin{aligned}
            \E\|\tW^{(k)}-\bW^{(k)}\|_{\F}^2 &= \EV^{(k)}+ (-\mathcal{H}^{(k)} + \EVC^{(k)})\\
            & =\mathcal{E}^{(k)}-\mathcal{H}^{(k)}\\
            & = (1 - \phi_k\theta_k)\mathcal{E}^{(k)},
        \end{aligned}
    \end{equation*}
    which proves
    \[
    \E\|\tW^{(k)}-\bW^{(k)}\|_{\F}^2 =(1-\theta_k\phi_k) \	\E\|\hW^{(k)}-\bW^{(k)}\|_{\F}^2.
    \]

    When $\{\bXi_j\}_{j\in\mathcal{C}}$ are identically distributed,  the second term in \eqref{supp:eq:oracle-refined-W-mse-general} satisfies
    \[
        \frac{1}{|\mathcal{C}|^2}\sum_{j\in\mathcal{C}}\EVC^{(j)} = \frac{1}{|\mathcal{C}|}\EVC^{(k)},
    \]
    which implies
    \[
    \mathcal{H}^{(k)} = \frac{|\mathcal{C}|-1}{|\mathcal{C}|}\,\E\|\bXi_k\PVC\|_{\F}^2.
    \]
    Thus, $\phi_k=\frac{|\mathcal{C}|-1}{|\mathcal{C}|}$.
    Since $\{\bXi_j\}_{j\in\mathcal{C}}$ are also evenly distributed over the space
    $\operatorname{col}(\PV)\oplus \operatorname{col}(\PVC)$, we have $\theta_k= \frac{p-r}{p}$.
    substituting $\phi_k$ and $\theta_k$ proves the result.
\end{proof}

\subsection{Proof of Theorem 5 and 6 of the main paper}
\begin{proof}[Proof of Theorem~5 of the main paper]
    Define
    \[
        \bT_k :=\bXi_k\PV+\widebar{\bXi}\PVC
    \]
    and
    \[
        \bR_k :=-\widebar{\bE}_{0}\PVC+\bigl(\hW^{(k)}-\widebar{\bW}\bigr)(\hPV-\PV).
    \]
    By Proposition~\ref{supp:prop:W-tilde-decomp}, on the event $\{\widehat{\mathcal{C}}_\alpha=\mathcal{C}\}$,
    \[
        \tW^{(k)}-\bW^{(k)}=\bT_k+\bR_k.
    \]

    Since $\PV\PVC=\bzero$,
    \[
    \|\bT_k\|_{\F}^2=\|\bXi_k\PV\|_{\F}^2+\|\widebar{\bXi}\PVC\|_{\F}^2.
    \]
    By Assumption~5, the matrices $\{\bXi_j\}_{j\in\mathcal{C}}$ are independent and mean-zero. Hence
    \[
        \E\|\bT_k\|_{\F}^2 = \EV^{(k)}+\frac{1}{|\mathcal{C}|^2}
    \sum_{j\in\mathcal{C}}\EVC^{(j)} =\mathcal{E}^{(k)}-\mathcal{H}^{(k)}.
    \]

    Using the notations
    \[
    \theta_k=\frac{\EVC^{(k)}}{\mathcal{E}^{(k)}},\quad
    \phi_k=\frac{\mathcal{H}^{(k)}}{\EVC^{(k)}},\quad
    \varphi_k=\frac{\mathcal{E}_{\mathrm{cost}}^{(k)}}{\mathcal{E}^{(k)}},
    \]
    we have
    \[
    \mathcal{H}^{(k)}=\phi_k\theta_k\mathcal{E}^{(k)},\quad
    \E\|\bT_k\|_{\F}^2=(1-\phi_k\theta_k)\mathcal{E}^{(k)},\quad
    \E\|\bR_k\|_{\F}^2=\varphi_k\mathcal{E}^{(k)}.
    \]

    Expanding the square and applying Cauchy--Schwarz gives
    \[
    \begin{aligned}
        \E\|\tW^{(k)}-\bW^{(k)}\|_{\F}^2
        &=\E\|\bT_k+\bR_k\|_{\F}^2\\
        &\le
        \left\{\bigl(\E\|\bT_k\|_{\F}^2\bigr)^{1/2}
        +\bigl(\E\|\bR_k\|_{\F}^2\bigr)^{1/2}\right\}^2\\
        &=
        \left(\sqrt{1-\phi_k\theta_k}+\sqrt{\varphi_k}\right)^2
        \mathcal{E}^{(k)}.
    \end{aligned}
    \]

    Since $0\le \theta_k\le 1$ and $\zeta_k\in(0,\phi_k)$,
    we have $1-\zeta_k\theta_k>1-\phi_k\theta_k\ge 0$. Condition~(34) implies that
    \[
    \sqrt{\varphi_k} < \sqrt{1-\zeta_k\theta_k}-\sqrt{1-\phi_k\theta_k}.
    \]
    Substituting this bound into the previous display yields
    \[
    \E\|\tW^{(k)}-\bW^{(k)}\|_{\F}^2 < 		(1-\zeta_k\theta_k)\mathcal{E}^{(k)} = (1-\zeta_k\theta_k)\E\|\hW^{(k)}-\bW^{(k)}\|_{\F}^2.
    \]
    This completes the proof.
\end{proof}

\begin{lemma}[]\label{supp:lem:cost-bound}
    Assume $|\mathcal{C}|\ge 2$, Assumptions~3 and~5. Fix $k\in\mathcal{C}$. Suppose the assumptions of Corollary~\ref{supp:cor:PV-noisy-L4} hold with constant $C_V>0$.
    Then
    \[
        \mathcal{E}_{\mathrm{cost}}^{(k)} \le 8B^2\delta^2+2C_{\Delta}^2C_V^2(B+\tau\sqrt{pq})^2K^{-1}
    \]
    for some absolute constant $C_{\Delta}>0$ and all sufficiently large $K$.
\end{lemma}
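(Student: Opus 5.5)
The cost $\mathcal{E}_{\mathrm{cost}}^{(k)}$ is the second moment of $\bR_k=-\widebar{\bE}_0\PVC+(\hW^{(k)}-\widebar{\bW})(\hPV-\PV)$, so we first split it with $(x+y)^2\le 2x^2+2y^2$:
\[
\mathcal{E}_{\mathrm{cost}}^{(k)}\le 2\,\E\|\widebar{\bE}_0\PVC\|_{\F}^2+2\,\E\|(\hW^{(k)}-\widebar{\bW})(\hPV-\PV)\|_{\F}^2 .
\]
We then bound the two pieces separately.

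\emph{Misalignment term.} This term is deterministic. For $j\in\mathcal C$, the proof of Lemma~\ref{supp:lem:E0-bound} gives $\|\bE_0^{(k,j)}\|_{\F}\le 2B\delta$, and the diagonal term $j=k$ vanishes. Averaging over $j$ and applying the triangle inequality gives $\|\widebar{\bE}_0\|_{\F}\le 2B\delta$. Since $\|\PVC\|_{\op}\le1$, we get $2\|\widebar{\bE}_0\PVC\|_{\F}^2\le 8B^2\delta^2$, which is the first summand of the claim.

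\emph{Subspace term.} We use $\|\bX(\hPV-\PV)\|_{\F}\le\|\bX\|_{\F}\|\hPV-\PV\|_{\op}$ and then Cauchy--Schwarz in expectation:
\[
\E\|(\hW^{(k)}-\widebar{\bW})(\hPV-\PV)\|_{\F}^2\le\bigl(\E\|\hW^{(k)}-\widebar{\bW}\|_{\F}^4\bigr)^{1/2}\bigl(\E\|\hPV-\PV\|_{\op}^4\bigr)^{1/2}.
\]
Corollary~\ref{supp:cor:PV-noisy-L4} bounds the second factor by $C_V^2K^{-1}$. For the first factor, we write
\[
\hW^{(k)}-\widebar{\bW}=\frac1{|\mathcal C|}\sum_{j\in\mathcal C}\bigl\{(\bW^{(k)}-\bW^{(j)})+\bXi_k-\bXi_j\bigr\}.
\]
The common backbone $\bW_0$ cancels in each difference, so $\|\bW^{(k)}-\bW^{(j)}\|_{\F}=\|\bB^{(k)}\bA^{(k)}-\bB^{(j)}\bA^{(j)}\|_{\F}\le 2B$, using Assumption~\ref{ass:bounded-B} and $\bA^{(j),\T}\in\mathbb O_{p,r}$. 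For the noise part, Minkowski's inequality in $L^4$ gives
\[
\bigl(\E\|\hW^{(k)}-\widebar{\bW}\|_{\F}^4\bigr)^{1/4}\le 2B+2\max_j(\E\|\bXi_j\|_{\F}^4)^{1/4}.
\]
The sub-Gaussian norm bound $\|\|\vec(\bXi_j)\|_2\|_{\psi_2}\lesssim\sqrt{pq}\,\tau$, taken from the proof of Lemma~\ref{supp:lem:E1-Fnorm}, yields $(\E\|\bXi_j\|_{\F}^4)^{1/4}\le c\sqrt{pq}\,\tau$. Hence the first factor is at most $C_\Delta^2(B+\tau\sqrt{pq})^2$ for an absolute constant $C_\Delta$. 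Multiplying by $2$ gives the second summand $2C_\Delta^2C_V^2(B+\tau\sqrt{pq})^2K^{-1}$.

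\emph{Main obstacle.} The delicate point is that $\hPV$ and $\hW^{(k)}-\widebar{\bW}$ are dependent, because both are built from the same $\bXi$'s. The Cauchy--Schwarz step with fourth moments avoids any independence argument, and this is exactly why Corollary~\ref{supp:cor:PV-noisy-L4} was stated in $L^4$. One must also check that "sufficiently large $K$" is inherited from that corollary.

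A second technicality is that $\mathcal{E}_{\mathrm{cost}}^{(k)}$ is defined with the oracle set $\mathcal C$. This is consistent with how it is used: $\widebar{\bW}$ is defined with $\mathcal C$ rather than $\widehat{\mathcal C}_\alpha$, so the bound on $\mathcal{E}_{\mathrm{cost}}^{(k)}$ requires no event restriction.
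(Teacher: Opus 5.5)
Your proposal is correct and matches the paper's proof step for step: the $(x+y)^2\le 2x^2+2y^2$ split with $\|\widebar{\bE}_0\PVC\|_{\F}\le 2B\delta$, fourth-moment Cauchy--Schwarz against Corollary~\ref{supp:cor:PV-noisy-L4}, and Minkowski in $L^4$ to get $\Delta\le 2B+2C_0\tau\sqrt{pq}$. The only cosmetic difference is the fourth-moment bound on $\|\bXi_j\|_{\F}$: you read it off the $\psi_2$ bound asserted in the proof of Lemma~\ref{supp:lem:E1-Fnorm}, whereas the paper integrates the tail bound of Lemma~\ref{supp:lem:subgaussian-vector-norm}; the two routes are equivalent.
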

\begin{proof}[Proof of Lemma~\ref{supp:lem:cost-bound}]
    Denote
    \[
        \bT_k :=\bXi_k\PV+\widebar{\bXi}\PVC,
    \]
    and
    \[
        \bR_k:=-\widebar{\bE}_{0}\PVC+\bigl(\hW^{(k)}-\widebar{\bW}\bigr)(\hPV-\PV).
    \]
    Define
    \[
        \Delta:=\Bigl(\E\|\hW^{(k)}-\widebar{\bW}\|_{\F}^4\Bigr)^{1/4}
    \]
    and
    \[
        \varepsilon_V:=\Bigl(\E\|\hPV-\PV\|_{\op}^4\Bigr)^{1/4}.
    \]

    Using $(a+b)^2\le 2a^2+2b^2$, together with $\|\widebar{\bE}_{0}\PVC\|_{\F}\le 2B\delta$, we obtain
    \[
        \|\bR_k\|_{\F}^2 \le 8B^2\delta^2 +2\|\hW^{(k)}-\widebar{\bW}\|_{\F}^2\,\|\hPV-\PV\|_{\op}^2.
    \]
    Taking expectations and applying Cauchy--Schwarz give
    \[
    \begin{aligned}
        \mathcal{E}_{\mathrm{cost}}^{(k)}	&\le 8B^2\delta^2 +2\Bigl(\E\|\hW^{(k)}-\widebar{\bW}\|_{\F}^4\Bigr)^{1/2}
        \Bigl(\E\|\hPV-\PV\|_{\op}^4\Bigr)^{1/2} \\
        &=8B^2\delta^2+2\Delta^2\varepsilon_V^2.
    \end{aligned}
    \]

    It remains to bound $\Delta_{-k}$ and $\varepsilon_V$.
    Write
    \[
        \hW^{(k)}-\widebar{\bW} =		\Bigl(\bW^{(k)}-\frac{1}{|\mathcal{C}|}\sum_{j\in\mathcal{C}}\bW^{(j)}\Bigr) 	+ \bigl(\bXi_k-\widebar{\bXi}\bigr).
    \]
    By Minkowski's inequality,
    \[
        \Delta \le
    \Bigl\|\bW^{(k)}-\frac{1}{|\mathcal{C}|}\sum_{j\in\mathcal{C}}\bW^{(j)}\Bigr\|_{\F}+\Bigl(\E\|\bXi_k-\widebar{\bXi}\|_{\F}^4\Bigr)^{1/4}.
    \]
    Since $\bW^{(j)}=\bW_0+\bB^{(j)}\bA^{(j)}$, $\|\bA^{(j)}\|_{\op}=1$, and $\|\bB^{(j)}\|_{\F}\le B$ for every benign client $j$,
    \[
        \Bigl\|\bW^{(k)}-\frac{1}{|\mathcal{C}|}\sum_{j\in\mathcal{C}}\bW^{(j)}\Bigr\|_{\F}
    \le
    \frac{1}{|\mathcal{C}|}\sum_{j\in\mathcal{C}}\|\bW^{(k)}-\bW^{(j)}\|_{\F}
    \le 2B.
    \]
    Moreover, by Assumption~5, the vector $\vec(\bXi_j)\in\mathbb{R}^{pq}$ is mean-zero and satisfies
    \[
    \sup_{u\in\mathbb{S}^{pq-1}}\|\langle u,\vec(\bXi_j)\rangle\|_{\psi_2}\le \tau,\qquad j\in[K].
    \]
    Applying Lemma~\ref{supp:lem:subgaussian-vector-norm} with $m=pq$ yields an absolute constant $C_0>0$ such that
    \[
        \pr \bigl(\|\bXi_j\|_{\F}>C_0\tau(\sqrt{pq}+\sqrt{t})\bigr)\le 2e^{-t}, \quad t\ge 0.
    \]
    By the tail-integral identity for nonnegative random variables, after enlarging $C_0$ if necessary, this implies
    \[
        \Bigl(\E\|\bXi_j\|_{\F}^4\Bigr)^{1/4}\le C_0\tau\sqrt{pq},\quad j\in[K].
    \]
    Using Minkowski again, together with
    \[
        \Bigl(\E\|\widebar{\bXi}\|_{\F}^4\Bigr)^{1/4} \le \frac{1}{|\mathcal{C}|}\sum_{j\in\mathcal{C}} \Bigl(\E\|\bXi_j\|_{\F}^4\Bigr)^{1/4} \le C_0\tau\sqrt{pq},
    \]
    we obtain
    \[
        \Bigl(\E\|\bXi_k-\widebar{\bXi}\|_{\F}^4\Bigr)^{1/4}\le 2C_0\tau\sqrt{pq}.
    \]
    Therefore,
    \begin{equation}\label{supp:eq:Delta-k-bound}
        \Delta\le 2B+2C_0\tau\sqrt{pq}\le C_{\Delta}(B+\tau\sqrt{pq})
    \end{equation}
    for $C_{\Delta}:=2\max\{1,C_0\}$.
    By Corollary~\ref{supp:cor:PV-noisy-L4},
    \[
        \varepsilon_V\le C_VK^{-1/2}
    \]
    for all sufficiently large $K$.
    Consequently,
    \begin{equation}\label{supp:eq:zk-bound}
        \mathcal{E}_{\mathrm{cost}}^{(k)}\le 8B^2\delta^2+2C_{\Delta}^2C_V^2(B+\tau\sqrt{pq})^2K^{-1}.
    \end{equation}
    This completes the proof.
\end{proof}

\begin{proof}[Proof of Theorem~6 of the main paper]
    By the proof of Theorem~5, we know
    \begin{equation}\label{supp:eq:gain-tmp}
        \begin{aligned}
            \E\|\tW^{(k)}-\bW^{(k)}\|_{\F}^2 &\le (\sqrt{1-\phi_k\theta_k}+\sqrt{\varphi_k})^2 \mathcal E^{(k)}\\
            & \le 2(1-\phi_k\theta_k + \varphi_k)\mathcal E^{(k)}
        \end{aligned}
    \end{equation}
    where the second inequality uses  $(a+b)^2\le 2a^2+2b^2$.
    So it suffices to bound $1-\phi_k\theta_k $ and $\varphi_k$ separately.

    Let $\bxi_j:=\vec(\bXi_j)$.
    Since $\PV$ and $\PVC$ are orthogonal projectors, vectorization gives
    \[
    \|\bXi_j\PV\|_{\F}^2 = \bxi_j^\top(\PV\otimes I_q)\bxi_j,
    \qquad
    \|\bXi_j\PVC\|_{\F}^2 = \bxi_j^\top(\PVC\otimes I_q)\bxi_j.
    \]
    By Assumption~5, $\E\bxi_j=\bzero$. Hence
    \[
    \EV^{(j)} = \tr\{(\PV\otimes I_q)\bSigma_j\},
    \qquad
    \EVC^{(j)} =\tr\{(\PVC\otimes I_q)\bSigma_j\}.
    \]
    The projectors $\PV\otimes I_q$ and $\PVC\otimes I_q$ have ranks $qr$ and $q(p-r)$, respectively. Assumption~6 of the main paper therefore implies, for every $j\in\mathcal C$,
    \[
    \kappa_{-}qr\tau^2 \le \EV^{(j)} \le \kappa_{+}qr\tau^2,
    \]
    and
    \[
    \kappa_{-}q(p-r)\tau^2 \le \EVC^{(j)} \le \kappa_{+}q(p-r)\tau^2.
    \]
    In particular, $\EVC^{(k)}>0$ and
    \[
    \kappa_- qp\tau^2
    \le
    \mathcal E^{(k)}
    =
    \EV^{(k)}+\EVC^{(k)}
    \le
    \kappa_+ qp\tau^2.
    \]

    Using the notation
    \[
    \theta_k=\frac{\EVC^{(k)}}{\mathcal E^{(k)}},
    \quad
    \phi_k=\frac{\mathcal H^{(k)}}{\EVC^{(k)}},
    \quad
    \varphi_k=\frac{\mathcal E_{\mathrm{cost}}^{(k)}}{\mathcal E^{(k)}},
    \]
    we have
    \[
    1-\theta_k
    =
    \frac{\EV^{(k)}}{\mathcal E^{(k)}}
    \le
    \frac{\EV^{(k)}}{\EVC^{(k)}}
    \le
    \frac{\kappa_+}{\kappa_-}\frac{r}{p-r},
    \]
    and
    \[
        1-\phi_k = \frac{1}{\EVC^{(k)}(\eta K)^2}\sum_{j\in\mathcal C}\EVC^{(j)} \le \frac{\kappa_+}{\kappa_-}\frac{1}{\eta K}.
    \]
    Thus, for all sufficiently large $K$,
    \begin{equation}\label{supp:eq:oracle-ratio-balanced}
        1-\phi_k\theta_k
        =
        (1-\theta_k)+\theta_k(1-\phi_k)
        \le
        C_1\left(\frac{r}{p-r}+\frac{1}{\eta K}\right)
    \end{equation}
    for a constant $C_1>0$ depending only on $\kappa_+/\kappa_-$.

    We next control the non-oracle cost $\varphi_k$. Lemma~\ref{supp:lem:cost-bound}, together with $\delta=O(K^{-1/2})$ and $B=O(\sqrt{qr})$, gives
    \[
    \mathcal E_{\mathrm{cost}}^{(k)}
    \le
    C_2\left\{\frac{qr}{K}
    +\frac{qr+\tau^2pq}{K}\right\}
    \]
    for a constant $C_2>0$.
    Dividing by the lower bound $\mathcal E^{(k)}\ge \kappa_-qp\tau^2$ yields
    \[
    \varphi_k
    \le
    C_3\left\{\frac{r}{pK\tau^2}+\frac{1}{K}\right\}.
    \]
    Using $K\tau^2\asymp 1$ and $\eta\le1$, this becomes
    \begin{equation}\label{supp:eq:cost-ratio-balanced}
        \varphi_k
        \le
        C_4\left(\frac{r}{p-r}+\frac{1}{\eta K}\right),
    \end{equation}
    after enlarging $C_4$.

    Combining \eqref{supp:eq:oracle-ratio-balanced} and \eqref{supp:eq:cost-ratio-balanced}, and \eqref{supp:eq:gain-tmp}, we have
    \[
    \E\|\tW^{(k)}-\bW^{(k)}\|_{\F}^2 \le \Bigl(\frac{c_0}{\eta K}+\frac{c_1r}{p-r}\Bigr)
    \E\|\hW^{(k)}-\bW^{(k)}\|_{\F}^2
    \]
    for constants $c_0,c_1>0$.

    It remains to justify that Condition~(34) in the main text, or equivalently,
    \begin{equation}\label{supp:eq:refine-condition3}
        \sqrt{1-\phi_k\theta_k} + \sqrt{\varphi_k} <\sqrt{1-\zeta_k\theta_k},
    \end{equation}
    is automatically satisfiable.
    By \eqref{supp:eq:oracle-ratio-balanced} and \eqref{supp:eq:cost-ratio-balanced}, we have $\sqrt{1-\phi_k\theta_k}+\sqrt{\varphi_k}=o(1)$ when $K\to\infty$ and $r/(p-r)\to0$.
    For all sufficiently large $K$,  set $\zeta_k:=(1-C_k)/\theta_k$ for any $(\sqrt{1-\phi_k\theta_k} + \sqrt{\varphi_k})^2 < C_k < 1$, then Condition~\eqref{supp:eq:refine-condition3} immediately holds.
    Since $C_k>1-\phi_k\theta_k$, we have $0<\zeta_k<\phi_k$.
    This completes the proof.
\end{proof}

\section{Additional details for the sequence-copying experiment}\label{supp:sec:supp-copying}
This section provides additional details on the sequence-copying experiment in Section~8 of the main paper.

\textbf{Copying tasks}. All sequences have length $T=64$ and are defined over a vocabulary of size $53$, consisting of $52$ letter tokens and one padding token. Given a sampled segment $u=(u_1,\ldots,u_L)$, the clean-copying task places two adjacent copies of $u$ in the sequence and pads the remaining positions. The fuzzy-copying task instead embeds the two copies in sampled background tokens; the three background lengths before, between, and after the copies are determined by sampling two distinct cut points among the available non-copy positions.

The model is trained autoregressively by next-token prediction from prefixes. Following \citep{song2025out}, evaluation uses masked next-token accuracy: positions in the first copy and the first three positions of the second copy are excluded, so the reported accuracy focuses on retrieval after a short burn-in period. In the homogeneous regime, evaluation is conducted on the global noisy-long distribution; in the heterogeneous regime, each benign client is evaluated on its own distribution. All results are averaged over $100$ Monte Carlo replicates.

\textbf{Backbone pretraining and local LoRA fine-tuning}. The common backbone is a Transformer with two self-attention blocks, hidden dimension $64$, one attention head per block, rotary positional encoding, residual connections, and layer normalization. It is pretrained for $5500$ Adam steps on a balanced mixture of fuzzy-copying tasks with $L\in\{5,\ldots,15\}$, power-law exponent $1.1$, batch size $64$, and learning rate $0.001$. The pretrained backbone is then frozen throughout the federated fine-tuning experiments.

In the local fine-tuning stage, each client inserts rank-$3$ LoRA adapters into the eight attention projections $\{\bW_q,\bW_k,\bW_v,\bW_o\}$ from the two Transformer blocks and into the final output layer. The LoRA $\bA$ factors are initialized from the same seed across clients, and only LoRA parameters are updated locally. Each client trains on $2000$ generated sequences for one epoch, using batch size $50$, learning rate $0.001$, LoRA scale $16$, and dropout $0.005$.

\textbf{Federated settings}. In the homogeneous regime, all benign clients share the same fuzzy-copying distribution, with power-law exponent $1.1$ and fixed segment length $L=16$. In the heterogeneous regime, each benign client's power-law exponent is redrawn from $[0.95,1.6]$, and its fixed segment length is redrawn from $\{10,\ldots,26\}$ in each replicate. In both regimes, the contaminated client contributes a model fine-tuned on a copying task mismatched to those of the benign clients.

In the refinement stage, aggregation is restricted to the eight attention matrices from the two Transformer blocks. The output layer is excluded from aggregation, although it is locally fine-tuned. CLAIR uses regularization parameters $(\lambda_L,\lambda_S)=(0.5,0.4)$ in the homogeneous regime and $(0.5,0.2)$ in the heterogeneous regime, with active-pair thresholds $0.5$ and $0.01$, respectively.

\clearpage
\bibliographystyle{plainnat}
\bibliography{paper}

@article{sheen2024implicit,
  title={Implicit regularization of gradient flow on one-layer softmax attention},
  author={Sheen, Heejune and Chen, Siyu and Wang, Tianhao and Zhou, Harrison H},
  journal={arXiv preprint arXiv:2403.08699},
  year={2024}
}

@article{klopp2019structured,
  title={Structured matrix estimation and completion},
  author={Klopp, Olga and Lu, Yu and Tsybakov, Alexandre B and Zhou, Harrison H},
  journal={Bernoulli},
  volume={25},
  number={4B},
  pages={3883--3911},
  year={2019},
  publisher={JSTOR}
}

@article{doss2023optimal,
  title={Optimal estimation of high-dimensional Gaussian location mixtures},
  author={Doss, Natalie and Wu, Yihong and Yang, Pengkun and Zhou, Harrison H},
  journal={The Annals of Statistics},
  volume={51},
  number={1},
  pages={62--95},
  year={2023},
  publisher={Institute of Mathematical Statistics}
}

@article{gu2025robust,
	title={Robust angle-based transfer learning in high dimensions},
	author={Gu, Tian and Han, Yi and Duan, Rui},
	journal={Journal of the Royal Statistical Society Series B: Statistical Methodology},
	volume={87},
	number={3},
	pages={723--745},
	year={2025},
	publisher={Oxford University Press UK}
}

@inproceedings{niu2024collaborative,
  title={Collaborative learning with shared linear representations: Statistical rates and optimal algorithms},
  author={Niu, Xiaochun and Su, Lili and Xu, Jiaming and Yang, Pengkun},
  booktitle={International Workshop on Federated Foundation Models in Conjunction with NeurIPS 2024},
  year={2024}
}

@inproceedings{zhang2024towards,
	title={Towards building the federatedgpt: Federated instruction tuning},
	author={Zhang, Jianyi and Vahidian, Saeed and Kuo, Martin and Li, Chunyuan and Zhang, Ruiyi and Yu, Tong and Wang, Guoyin and Chen, Yiran},
	booktitle={ICASSP 2024-2024 IEEE international conference on acoustics, speech and signal processing (ICASSP)},
	pages={6915--6919},
	year={2024},
	organization={IEEE}
}

@article{tian2024hydralora,
	title={Hydralora: An asymmetric lora architecture for efficient fine-tuning},
	author={Tian, Chunlin and Shi, Zhan and Guo, Zhijiang and Li, Li and Xu, Chengzhong},
	journal={Advances in Neural Information Processing Systems},
	volume={37},
	pages={9565--9584},
	year={2024}
}

@inproceedings{sun2024improving,
	title={Improving LoRA in Privacy-preserving Federated Learning},
	author={Sun, Youbang and Li, Zitao and Li, Yaliang and Ding, Bolin},
	booktitle={The Twelfth International Conference on Learning Representations}
}

@article{smith2017federated,
	title={Federated multi-task learning},
	author={Smith, Virginia and Chiang, Chao-Kai and Sanjabi, Maziar and Talwalkar, Ameet S},
	journal={Advances in neural information processing systems},
	volume={30},
	year={2017}
}

@article{fallah2020personalized,
	title={Personalized federated learning with theoretical guarantees: A model-agnostic meta-learning approach},
	author={Fallah, Alireza and Mokhtari, Aryan and Ozdaglar, Asuman},
	journal={Advances in neural information processing systems},
	volume={33},
	pages={3557--3568},
	year={2020}
}

@article{wang2024textit,
	title={Trans-LoRA: towards data-free Transferable Parameter Efficient Finetuning},
	author={Wang, Runqian and Ghosh, Soumya and Cox, David and Antognini, Diego and Oliva, Aude and Feris, Rogerio and Karlinsky, Leonid},
	journal={Advances in Neural Information Processing Systems},
	volume={37},
	pages={61217--61237},
	year={2024}
}

@inproceedings{tzeng2017adversarial,
	title={Adversarial discriminative domain adaptation},
	author={Tzeng, Eric and Hoffman, Judy and Saenko, Kate and Darrell, Trevor},
	booktitle={Proceedings of the IEEE conference on computer vision and pattern recognition},
	pages={7167--7176},
	year={2017}
}

@article{bousmalis2016domain,
	title={Domain separation networks},
	author={Bousmalis, Konstantinos and Trigeorgis, George and Silberman, Nathan and Krishnan, Dilip and Erhan, Dumitru},
	journal={Advances in neural information processing systems},
	volume={29},
	year={2016}
}

@article{tian2025learning,
	title={Learning from similar linear representations: Adaptivity, minimaxity, and robustness},
	author={Tian, Ye and Gu, Yuqi and Feng, Yang},
	journal={Journal of Machine Learning Research},
	volume={26},
	number={187},
	pages={1--125},
	year={2025}
}

@article{chua2021fine,
	title={How fine-tuning allows for effective meta-learning},
	author={Chua, Kurtland and Lei, Qi and Lee, Jason D},
	journal={Advances in Neural Information Processing Systems},
	volume={34},
	pages={8871--8884},
	year={2021}
}

@article{duan2023adaptive,
	title={Adaptive and robust multi-task learning},
	author={Duan, Yaqi and Wang, Kaizheng},
	journal={The Annals of Statistics},
	volume={51},
	number={5},
	pages={2015--2039},
	year={2023},
	publisher={Institute of Mathematical Statistics}
}

@inproceedings{liu2025lora,
	title={LoRA subtraction for drift-resistant space in exemplar-free continual learning},
	author={Liu, Xuan and Chang, Xiaobin},
	booktitle={Proceedings of the IEEE/CVF Conference on Computer Vision and Pattern Recognition},
	pages={15308--15318},
	year={2025}
}

@article{candes2012exact,
	title={Exact matrix completion via convex optimization},
	author={Candes, Emmanuel and Recht, Benjamin},
	journal={Communications of the ACM},
	volume={55},
	number={6},
	pages={111--119},
	year={2012},
	publisher={ACM New York, NY, USA}
}

@article{guo2025robust,
	title={Robust inference for federated meta-learning},
	author={Guo, Zijian and Li, Xiudi and Han, Larry and Cai, Tianxi},
	journal={Journal of the American Statistical Association},
	volume={120},
	number={551},
	pages={1695--1710},
	year={2025},
	publisher={Taylor \& Francis}
}

@article{kumar2023impact,
	title={The impact of adversarial attacks on federated learning: A survey},
	author={Kumar, Kummari Naveen and Mohan, Chalavadi Krishna and Cenkeramaddi, Linga Reddy},
	journal={IEEE Transactions on Pattern Analysis and Machine Intelligence},
	volume={46},
	number={5},
	pages={2672--2691},
	year={2023},
	publisher={IEEE}
}

@article{chen2023minimax,
	title={Minimax estimation for personalized federated learning: an alternative between FedAvg and local training?},
	author={Chen, Shuxiao and Zheng, Qinqing and Long, Qi and Su, Weijie J},
	journal={Journal of Machine Learning Research},
	volume={24},
	number={262},
	pages={1--59},
	year={2023}
}

@article{grattafiori2024llama,
	title={The llama 3 herd of models},
	author={Grattafiori, Aaron and Dubey, Abhimanyu and Jauhri, Abhinav and Pandey, Abhinav and Kadian, Abhishek and Al-Dahle, Ahmad and Letman, Aiesha and Mathur, Akhil and Schelten, Alan and Vaughan, Alex and others},
	journal={arXiv preprint arXiv:2407.21783},
	year={2024}
}

@article{comanici2025gemini,
	title={Gemini 2.5: Pushing the frontier with advanced reasoning, multimodality, long context, and next generation agentic capabilities},
	author={Comanici, Gheorghe and Bieber, Eric and Schaekermann, Mike and Pasupat, Ice and Sachdeva, Noveen and Dhillon, Inderjit and Blistein, Marcel and Ram, Ori and Zhang, Dan and Rosen, Evan and others},
	journal={arXiv preprint arXiv:2507.06261},
	year={2025}
}

@article{ji2026overview,
	title={An overview of large language models for statisticians},
	author={Ji, Wenlong and Yuan, Weizhe and Getzen, Emily and Cho, Kyunghyun and Jordan, Michael I and Mei, Song and Weston, Jason and Su, Weijie J and Xu, Jing and Zhang, Linjun},
	journal={The American Statistician},
	number={just-accepted},
	pages={1--106},
	year={2026},
	publisher={Taylor \& Francis}
}

@inproceedings{wang2025adaptive,
	title={Adaptive LoRA Experts Allocation and Selection for Federated Fine-Tuning},
	author={Wang, Lei and Bian, Jieming and Zhang, Letian and Xu, Jie},
	booktitle={The Thirty-ninth Annual Conference on Neural Information Processing Systems}
}

@inproceedings{houlsby2019parameter,
	title={Parameter-efficient transfer learning for NLP},
	author={Houlsby, Neil and Giurgiu, Andrei and Jastrzebski, Stanislaw and Morrone, Bruna and De Laroussilhe, Quentin and Gesmundo, Andrea and Attariyan, Mona and Gelly, Sylvain},
	booktitle={International conference on machine learning},
	pages={2790--2799},
	year={2019},
	organization={PMLR}
}

@article{xu2026parameter,
	title={Parameter-efficient fine-tuning methods for pretrained language models: A critical review and assessment},
	author={Xu, Lingling and Xie, Haoran and Qin, S Joe and Tao, Xiaohui and Wang, Fu Lee},
	journal={IEEE Transactions on Pattern Analysis and Machine Intelligence},
	year={2026},
	publisher={IEEE}
}

@article{vaswani2017attention,
	title={Attention is all you need},
	author={Vaswani, Ashish and Shazeer, Noam and Parmar, Niki and Uszkoreit, Jakob and Jones, Llion and Gomez, Aidan N and Kaiser, {\L}ukasz and Polosukhin, Illia},
	journal={Advances in neural information processing systems},
	volume={30},
	year={2017}
}

@inproceedings{guo2024selective,
	title={Selective aggregation for low-rank adaptation in federated learning},
	author={Guo, Pengxin and Zeng, Shuang and Wang, Yanran and Fan, Huijie and Wang, Feifei and Qu, Liangqiong},
	booktitle={13th International Conference on Learning Representations Iclr 2025},
	year={2025}
}

@article{wang2024flora,
	title={Flora: Federated fine-tuning large language models with heterogeneous low-rank adaptations},
	author={Wang, Ziyao and Shen, Zheyu and He, Yexiao and Sun, Guoheng and Wang, Hongyi and Lyu, Lingjuan and Li, Ang},
	journal={Advances in Neural Information Processing Systems},
	volume={37},
	pages={22513--22533},
	year={2024}
}

@inproceedings{bian2025lora,
	title={LoRA-FAIR: Federated LoRA fine-tuning with aggregation and initialization refinement},
	author={Bian, Jieming and Wang, Lei and Zhang, Letian and Xu, Jie},
	booktitle={Proceedings of the IEEE/CVF International Conference on Computer Vision},
	pages={3737--3746},
	year={2025}
}

@article{chandrasekaran2011rank,
	title={Rank-sparsity incoherence for matrix decomposition},
	author={Chandrasekaran, Venkat and Sanghavi, Sujay and Parrilo, Pablo A and Willsky, Alan S},
	journal={SIAM Journal on Optimization},
	volume={21},
	number={2},
	pages={572--596},
	year={2011},
	publisher={SIAM}
}

@article{hsu2011robust,
	title={Robust matrix decomposition with sparse corruptions},
	author={Hsu, Daniel and Kakade, Sham M and Zhang, Tong},
	journal={IEEE Transactions on Information Theory},
	volume={57},
	number={11},
	pages={7221--7234},
	year={2011},
	publisher={IEEE}
}

@article{xu2012robust,
	title={Robust PCA via Outlier Pursuit},
	author={Xu, Huan and CARAMANIS, Constantine and SANGHAVI, Sujay},
	journal={IEEE transactions on information theory},
	volume={58},
	number={5},
	pages={3047--3064},
	year={2012}
}

@book{beck2017first,
  title={First-order methods in optimization},
  author={Beck, Amir},
  year={2017},
  publisher={SIAM}
}

@article{song2025out,
  title={Out-of-distribution generalization via composition: a lens through induction heads in transformers},
  author={Song, Jiajun and Xu, Zhuoyan and Zhong, Yiqiao},
  journal={Proceedings of the National Academy of Sciences},
  volume={122},
  number={6},
  pages={e2417182122},
  year={2025},
  publisher={National Academy of Sciences}
}

@article{zhou2013tensor,
	title={Tensor regression with applications in neuroimaging data analysis},
	author={Zhou, Hua and Li, Lexin and Zhu, Hongtu},
	journal={Journal of the American Statistical Association},
	volume={108},
	number={502},
	pages={540--552},
	year={2013},
	publisher={Taylor \& Francis}
}

@article{candes2011robust,
	title={Robust principal component analysis?},
	author={Cand{\`e}s, Emmanuel J and Li, Xiaodong and Ma, Yi and Wright, John},
	journal={Journal of the ACM (JACM)},
	volume={58},
	number={3},
	pages={11},
	year={2011},
	publisher={ACM}
}

@article{wu2023brief,
  title={A brief overview of ChatGPT: The history, status quo and potential future development},
  author={Wu, Tianyu and He, Shizhu and Liu, Jingping and Sun, Siqi and Liu, Kang and Han, Qing-Long and Tang, Yang},
  journal={IEEE/CAA Journal of Automatica Sinica},
  volume={10},
  number={5},
  pages={1122--1136},
  year={2023},
  publisher={IEEE}
}

@inproceedings{mcmahan2017communication,
  title={Communication-efficient learning of deep networks from decentralized data},
  author={McMahan, Brendan and Moore, Eider and Ramage, Daniel and Hampson, Seth and y Arcas, Blaise Aguera},
  booktitle={Artificial intelligence and statistics},
  pages={1273--1282},
  year={2017},
  organization={PMLR}
}

@article{yuan2007dimension,
  title={Dimension reduction and coefficient estimation in multivariate linear regression},
  author={Yuan, Ming and Ekici, Ali and Lu, Zhaosong and Monteiro, Renato},
  journal={Journal of the Royal Statistical Society Series B: Statistical Methodology},
  volume={69},
  number={3},
  pages={329--346},
  year={2007},
  publisher={Oxford University Press}
}

@article{hu2022lora,
  title={Lora: Low-rank adaptation of large language models.},
  author={Hu, Edward J and Shen, Yelong and Wallis, Phillip and Allen-Zhu, Zeyuan and Li, Yuanzhi and Wang, Shean and Wang, Lu and Chen, Weizhu and others},
  journal={ICLR},
  volume={1},
  number={2},
  pages={3},
  year={2022}
}

@article{feng2022projected,
  title={Projected robust PCA with application to smooth image recovery},
  author={Feng, Long and Wang, Junhui},
  journal={Journal of Machine Learning Research},
  volume={23},
  number={249},
  pages={1--41},
  year={2022}
}

@book{vershynin2018high,
  title={High-dimensional probability: An introduction with applications in data science},
  author={Vershynin, Roman},
  volume={47},
  year={2018},
  publisher={Cambridge university press}
}

\end{document}